\definecolor{kagreen}{rgb}{0.0, 0.5, 0.0}
\tikzstyle{tikzfig}=[baseline=-0.25em,scale=0.5]
\tikzstyle{none}=[inner sep=0mm]
\newcommand{\tikzfig}[1]{%
{\tikzstyle{every picture}=[tikzfig]
\IfFileExists{#1.tikz}
  {\input{#1.tikz}}
  {%
    \IfFileExists{./figures/#1.tikz}
      {\input{./figures/#1.tikz}}
      {\tikz[baseline=-0.5em]{\node[draw=red,font=\color{red},fill=red!10!white] {\textit{#1}};}}%
  }}%
}
\tikzstyle{every loop}=[]
\tikzstyle{big box}=[fill=white, fill opacity=0, draw=black, shape=rectangle, minimum width=2.5cm, minimum height=3cm]
\tikzstyle{small box}=[fill=white, draw={rgb,255: red,171; green,171; blue,171}, shape=rectangle, minimum width=2cm, minimum height=1cm, tikzit shape=rectangle]
\tikzstyle{new edge style 0}=[->, draw=blue]
\tikzstyle{new edge style 1}=[draw=red, ->]
\tikzstyle{new edge style 2}=[draw={rgb,255: red,128; green,0; blue,128}, ->]
\tikzstyle{new edge style 3}=[->, dashed, draw=red, opacity=0.5]
\newcommand{\bc}[1]{\left\{{#1}\right\}}
\newcommand{\br}[1]{\left({#1}\right)}
\newcommand{\bs}[1]{\left[{#1}\right]}
\newcommand{\abs}[1]{\left| {#1} \right|}
\newcommand{\ip}[2]{\left\langle{#1},{#2}\right\rangle}
\newcommand{\norm}[1]{\left\| {#1} \right\|}
\renewcommand{\P}[1]{\mathbb{P}\bs{{#1}}}
\newcommand{\E}[1]{\mathbb{E}\bs{{#1}}}
\newcommand{\Ee}[2]{\underset{#1}{\mathbb{E}}\bs{{#2}}}
\newtheorem{lemma}{Lemma}
\newtheorem{theorem}{Theorem}
\newtheorem{fact}{Fact}
\newtheorem{corollary}{Corollary}
\DeclareMathOperator*{\argmin}{arg\,min}
\DeclareMathOperator*{\argmax}{arg\,max}
\newcommand{\newreptheorem}[2]{\newtheorem*{rep@#1}{\rep@title} 
	\newenvironment{rep#1}[1]{\def\rep@title{#2 \ref*{##1}}\begin{rep@#1}}{\end{rep@#1}}
}
\title{Robust Inverse Reinforcement Learning under Transition Dynamics Mismatch}
\author{%
  Luca Viano \\
  LIONS, EPFL \\
  \And
  Yu-Ting Huang \\
  EPFL \\
  \AND
  Parameswaran Kamalaruban\thanks{Correspondence to: Parameswaran Kamalaruban <\texttt{kparameswaran@turing.ac.uk}>} \\
  The Alan Turing Institute \\
  \And
  Adrian Weller \\
  University of Cambridge \\
  \& The Alan Turing Institute \\
  \And
  Volkan Cevher \\
  LIONS, EPFL \\
}
\begin{document}

\maketitle

\begin{abstract}
We study the inverse reinforcement learning (IRL) problem under a transition dynamics mismatch between the expert and the learner. Specifically, we consider the Maximum Causal Entropy (MCE) IRL learner model and provide a tight upper bound on the learner's performance degradation based on the $\ell_1$-distance between the transition dynamics of the expert and the learner. Leveraging insights from the Robust RL literature, we propose a robust MCE IRL algorithm, which is a principled approach to help with this mismatch. Finally, we empirically demonstrate the stable performance of our algorithm compared to the standard MCE IRL algorithm under transition dynamics mismatches in both finite and continuous MDP problems.
\end{abstract}


\section{Introduction}\label{sec:intro}

Recent advances in Reinforcement Learning (RL)~\cite{sutton2000policy,silver2014deterministic,schulman2015trust,schulman2017proximal} have demonstrated impressive performance in games~\cite{mnih2015human,silver2017mastering}, continuous control~\cite{lillicrap2015continuous}, and robotics~\cite{levine2016end}. Despite these successes, a broader application of RL in real-world domains is hindered by the difficulty of designing a proper reward function. Inverse Reinforcement Learning (IRL) addresses this issue by inferring a reward function from a given set of demonstrations of the desired behavior~\cite{russell1998learning,ng2000algorithms}. IRL has been extensively studied, and many algorithms have already been proposed~\cite{abbeel2004apprenticeship,ratliff2006maximum,ziebart2008maximum,syed2008game,boularias2011relative,osa2018algorithmic}.

Almost all IRL algorithms 
assume that the expert demonstrations are collected from the same environment as the one in which the IRL agent is trained. However, this assumption rarely holds in real world because of many possible factors identified by~\cite{dulacarnold2019challenges}. For example, consider an autonomous car that should learn by observing  expert demonstrations performed on another 
car with possibly different technical characteristics. There is often a mismatch between the learner and the expert's transition dynamics, resulting in poor performance that are critical in healthcare \cite{yu2019reinforcement} or autonomous driving \cite{kiran2020deep}. Indeed, the performance degradation of an IRL agent due to transition dynamics mismatch has 
been noted empirically~\cite{reddy2018you,Gong2020WhatII,gangwani2020stateonly,liu2019state}, but 
without theoretical guidance. 

To this end, our work first provides a theoretical study on the effect of such mismatch in the context of the infinite horizon Maximum Causal Entropy (MCE) IRL framework~\cite{ziebart2010modeling,ziebart2013principle,zhou2017infinite}. Specifically, we bound the potential decrease in the IRL learner's performance as a function of the $\ell_1$-distance between the expert and the learner's transition dynamics. We then propose a robust variant of the MCE IRL algorithm to effectively recover a reward function under transition dynamics mismatch, mitigating degradation. 
There is precedence to our robust IRL approach, such as~\cite{tessler2019action} that employs an adversarial training method to learn a robust policy against adversarial changes in the learner's environment. The novel idea of our work is to incorporate this method within our IRL context, by viewing the expert's transition dynamics as a perturbed version of the learner's one. 

Our robust MCE IRL algorithm leverages techniques from the robust RL literature~\cite{iyengar2005robust,nilim2005robust,pinto2017robust,tessler2019action}. A few recent works~\cite{reddy2018you,Gong2020WhatII,Herman2016InverseRL} attempt to infer the expert's transition dynamics from the demonstration set or via additional information, and then apply the standard IRL method to recover the reward function based on the learned dynamics. Still, the transition dynamics can be estimated only up to a certain accuracy, i.e., a mismatch between the learner's belief and the dynamics of the expert's environment remains. Our robust IRL approach can be incorporated into this research vein to further improve the IRL agent's performance.

To our knowledge, this is the first work that rigorously reconciles model-mismatch in IRL with only one shot access to the expert environment. We highlight the following contributions: 
\begin{enumerate}
\itemsep0em 
    \item We provide a tight upper bound for the suboptimality of an IRL learner that receives expert demonstrations from an MDP with different transition dynamics 
    compared to a learner that receives demonstrations from an MDP with the same transition dynamics 
    (Section~\ref{sec:mce-irl-tight-upper}). 
    \item We find suitable conditions under which a solution exists to the MCE IRL optimization problem with model mismatch (Section~\ref{sec:existence}).
    \item We propose a robust variant of the MCE IRL algorithm to learn a policy from expert demonstrations under transition dynamics mismatch (Section~\ref{sec:two_players_entropy}). 
    \item We demonstrate our method's robust performance compared to the standard MCE IRL in a broad set of experiments under both linear and non-linear reward settings (Section~\ref{sec:experiments}). 
    \item We extend our robust IRL method to the high dimensional continuous MDP setting with appropriate practical relaxations, and empirically demonstrate its effectiveness (Section~\ref{sec:experiments_continuous_control_main}). 
\end{enumerate}

\section{Problem Setup}\label{sec:Setup}

This section formalizes the IRL problem with an emphasis on the learner and expert environments. We use bold notation to represent vectors. A glossary of notation is given in Appendix~\ref{appendix:notation}.

\subsection{Environment and Reward} 

We formally represent the environment by a Markov decision process (MDP) $M_{\boldsymbol{\theta}} := \bc{\mathcal{S}, \mathcal{A}, T, \gamma, P_0, R_{\boldsymbol{\theta}}}$, parameterized by $\boldsymbol{\theta} \in \mathbb{R}^d$. The state and action spaces are denoted as $\mathcal{S}$ and $\mathcal{A}$, respectively. We assume that $\abs{\mathcal{S}}, \abs{\mathcal{A}} < \infty$. $T: \mathcal{S} \times \mathcal{S} \times \mathcal{A} \rightarrow [0, 1]$ represents the transition dynamics, i.e., $T(s'|s,a)$ is the probability of transitioning to state $s'$ by taking action $a$ from state $s$. The discount factor is given by $\gamma \in (0,1)$, and $P_0$ is the initial state distribution. We consider a linear reward function $R_{\boldsymbol{\theta}}:\mathcal{S} \rightarrow \mathbb{R}$ of the form $R_{\boldsymbol{\theta}}(s) = \ip{\boldsymbol{\theta}}{\boldsymbol{\phi}(s)}$, where $\boldsymbol{\theta} \in \mathbb{R}^d$ is the reward parameter, and $\boldsymbol{\phi}: \mathcal{S} \rightarrow \mathbb{R}^d$ is a feature map. We use a one-hot feature map $\boldsymbol{\phi}: \mathcal{S} \rightarrow \bc{0,1}^{\abs{\mathcal{S}}}$, where the $s^\text{th}$ element of $\boldsymbol{\phi}\br{s}$ is 1 and 0 elsewhere. Our results can be extended to any general feature map (see empirical evidence in Fig.~\ref{fig:low-dim-exp-short}), but we use this particular choice as a running example for concreteness. 

We focus on the state-only reward function since the state-action reward function is not that useful in the robustness context. Indeed, as~\cite{gangwani2020stateonly} pointed out, the actions to achieve a specific goal under different transition dynamics will not necessarily be the same and, consequently, should not be imitated. Analogously, in the IRL context, the reward for taking a particular action should not be recovered since the quality of that action depends on the transition dynamics. We denote an MDP without a reward function by $M = M_{\boldsymbol{\theta}} \backslash R_{\boldsymbol{\theta}} = \bc{\mathcal{S}, \mathcal{A}, T, \gamma, P_0}$. 

\subsection{Policy and Performance}

A policy $\pi: \mathcal{S} \rightarrow \Delta_\mathcal{A} $ is a mapping from a state to a probability distribution over actions. The set of all valid stochastic policies is denoted by $\Pi := \bc{ \pi : \sum_a \pi(a | s) = 1 , \forall s \in \mathcal{S};  
\pi(a | s) \geq 0 , \forall (s,a) \in \mathcal{S} \times \mathcal{A}}$. We are interested in two different performance measures of any policy $\pi$ acting in the MDP $M_{\boldsymbol{\theta}}$: (i) the expected discounted return $V^{\pi}_{M_{\boldsymbol{\theta}}} := \E{\sum^{\infty}_{t=0} \gamma^t R_{\boldsymbol{\theta}}\br{s_t} \mid \pi, M}$, and (ii) its entropy regularized variant $V^{\pi,\mathrm{soft}}_{M_{\boldsymbol{\theta}}} := \E{\sum^{\infty}_{t=0} \gamma^t \bc{R_{\boldsymbol{\theta}}\br{s_t} - \log \pi \br{a_t | s_t}} \mid \pi, M}$. The state occupancy measure of a policy $\pi$ in the MDP $M$ is defined as $\rho^{\pi}_{M}(s) := \br{1-\gamma} \sum_{t=0}^\infty \gamma^t \P{s_t = s \mid \pi, M}$, where $\P{s_t = s \mid \pi, M}$ denotes the probability of visiting the state $s$ after $t$ steps by following the policy $\pi$ in $M$. Note that $\rho^{\pi}_{M}(s)$ does not depend on the reward function. Let $\boldsymbol{\rho}^{\pi}_{M} \in \mathbb{R}^{\abs{\mathcal{S}}}$ be a vector whose $s^\text{th}$ element is $\rho^{\pi}_{M}(s)$. For the one-hot feature map $\boldsymbol{\phi}$, we have that $V^{\pi}_{M_{\boldsymbol{\theta}}} =\frac{1}{1 - \gamma} \sum_{s} \rho^{\pi}_{M} (s) R_{\boldsymbol{\theta}}(s) = \frac{1}{1 - \gamma} \ip{\boldsymbol{\theta}}{\boldsymbol{\rho}^{\pi}_{M}}$. A policy $\pi$ is \emph{optimal} for the MDP $M_{\boldsymbol{\theta}}$ if $\pi \in \argmax_{\pi'} V^{\pi'}_{M_{\boldsymbol{\theta}}}$, in which case we denote it by $\pi^*_{M_{\boldsymbol{\theta}}}$. Similarly, the \emph{soft-optimal} policy (always unique~\cite{geist2019regmdp}) in $M_{\boldsymbol{\theta}}$ is defined as $\pi^{\mathrm{soft}}_{M_{\boldsymbol{\theta}}} := \argmax_{\pi'} V^{\pi',\mathrm{soft}}_{M_{\boldsymbol{\theta}}}$ (see Appendix~\ref{appendix:sec2} for a parametric form of this policy). 

\subsection{Learner and Expert} 

\begin{wrapfigure}{r}{0.5\textwidth}
\vspace{-\intextsep}
\begin{tikzpicture}[scale=0.65, every node/.style={scale=0.75}]
	\begin{pgfonlayer}{nodelayer}
		\node [style=big box, label={above:Expert}] (0) at (4.5, 3.25) {};
		\node [style=big box, label={above:Learner}] (1) at (9.25, 3.25) {};
		\node [style=small box] (2) at (4.5, 4) {$M^{L}_{\boldsymbol{\theta^*}} : \pi^{*}_{M^{L}_{\boldsymbol{\theta^*}}}$};
		\node [style=small box] (3) at (4.5, 2.5) {$M^{E}_{\boldsymbol{\theta^*}} : \pi^{*}_{M^{E}_{\boldsymbol{\theta^*}}}$};
		\node [style=small box] (4) at (9.25, 4) {$M^{L}_{\boldsymbol{\theta^*}} \texttt{\textbackslash} R_{\boldsymbol{\theta^*}}$};
		\node [style=none, label={right: $\br{\boldsymbol{\theta_{L}}, \pi^{\mathrm{soft}}_{M^{L}_{\boldsymbol{\theta_{L}}}}}$}] (5) at (11.25, 3.75) {};
		\node [style=none, label={right: $\br{\boldsymbol{\theta_{E}}, \pi^{\mathrm{soft}}_{M^{L}_{\boldsymbol{\theta_{E}}}}}$}] (6) at (11.25, 2.75) {};
		\node [style=none, label={$\boldsymbol{\rho} = \boldsymbol{\rho}^{\pi^{*}_{M_{\boldsymbol{\theta^*}}^L}}_{M^L}$}] (7) at (7, 4) {};
		\node [style=none, label={$\boldsymbol{\rho} = \boldsymbol{\rho}^{\pi^{*}_{M_{\boldsymbol{\theta^*}}^E}}_{M^E}$}] (8) at (7, 2.35) {};
        \node [style=none] (9) at (10.47, 3.75) {};
        \node [style=none] (10) at (10.47, 2.75) {};
	\end{pgfonlayer}
	\begin{pgfonlayer}{edgelayer}
		\draw [style=new edge style 0] (2) to (4);
		\draw [style=new edge style 1] (3) to (4);
		\draw [style=new edge style 0] (9) to (5.center);
		\draw [style=new edge style 1] (10) to (6.center);
	\end{pgfonlayer}
\end{tikzpicture}
\vspace{-\intextsep}
\caption{An illustration of the IRL problem under transition dynamics mismatch: See Section 2.} 
\vspace{-\intextsep}
\label{fig:mismatch}
\end{wrapfigure}
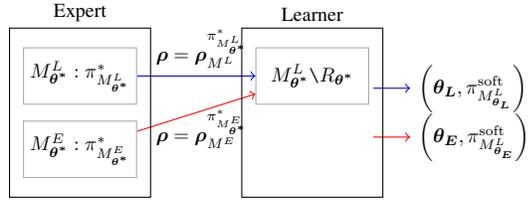

Our setting has two entities: a learner implementing the MCE IRL algorithm, and an expert. We consider two MDPs, $M_{\boldsymbol{\theta}}^L = \bc{\mathcal{S}, \mathcal{A}, T^L, \gamma, P_0, R_{\boldsymbol{\theta}}}$ and $M_{\boldsymbol{\theta}}^E = \bc{\mathcal{S}, \mathcal{A}, T^E, \gamma, P_0, R_{\boldsymbol{\theta}}}$, that differ only in the transition dynamics. The true reward parameter $\boldsymbol{\theta} = \boldsymbol{\theta^*}$ is known only to the expert. The expert provides demonstrations to the learner: (i) by following policy $\pi^{*}_{M_{\boldsymbol{\theta^*}}^E}$ in $M^E$ when there is a \emph{transition dynamics mismatch} between the learner and the expert, or (ii) by following policy $\pi^{*}_{M_{\boldsymbol{\theta^*}}^L}$ in $M^L$ otherwise. The learner always operates in the MDP $M^L$ and is not aware of the true reward parameter and of the expert dynamics $T^E$\footnote{The setting with $T^E$ known to the learner has been studied under the name of \emph{imitation learning across embodiments}~\cite{chen2016adversarial}.}, i.e., it only has access to $M_{\boldsymbol{\theta^*}}^L \backslash R_{\boldsymbol{\theta^*}}$. It learns a reward parameter $\boldsymbol{\theta}$ and the corresponding soft-optimal policy $\pi^{\mathrm{soft}}_{M_{\boldsymbol{\theta}}^L}$, based on the state occupancy measure $\boldsymbol{\rho}$ received from the expert. Here, $\boldsymbol{\rho}$ is either $\boldsymbol{\rho}^{\pi^{*}_{M_{\boldsymbol{\theta^*}}^E}}_{M^E}$ or $\boldsymbol{\rho}^{\pi^{*}_{M_{\boldsymbol{\theta^*}}^L}}_{M^L}$ depending on the case. Our results  can be extended to the stochastic estimate of $\boldsymbol{\rho}$ using  concentration inequalities
~\cite{abbeel2004apprenticeship}.  

Our learner model builds on the MCE IRL~\cite{ziebart2010modeling,ziebart2013principle,zhou2017infinite} framework that matches the expert's state occupancy measure $\boldsymbol{\rho}$. In particular, the learner policy is obtained by maximizing its causal entropy while matching the expert's state occupancy:
\begin{equation}
\max_{\pi \in \Pi} ~ \E{\sum_{t = 0}^\infty - \gamma^t \log \pi(a_t | s_t) \biggm| \pi, M^L} ~~ \text{subject to} ~~ \boldsymbol{\rho}^{\pi}_{M^L} = \boldsymbol{\rho} .
\label{opt_start}
\end{equation}
Note that this optimization problem only requires access to $M_{\boldsymbol{\theta}}^L \backslash R_{\boldsymbol{\theta}}$. The constraint in~\eqref{opt_start} follows from our choice of the one-hot feature map. We denote the optimal solution of the above problem by $\pi^{\mathrm{soft}}_{M_{\boldsymbol{\theta}}^L}$ with a corresponding reward parameter: (i) $\boldsymbol{\theta} = \boldsymbol{\theta_E}$, when we use $\boldsymbol{\rho}^{\pi^{*}_{M_{\boldsymbol{\theta^*}}^E}}_{M^E}$ as $\boldsymbol{\rho}$, or (ii) $\boldsymbol{\theta} = \boldsymbol{\theta_L}$, when we use $\boldsymbol{\rho}^{\pi^{*}_{M_{\boldsymbol{\theta^*}}^L}}_{M^L}$ as $\boldsymbol{\rho}$. Here, the parameters $\boldsymbol{\theta_E}$ and $\boldsymbol{\theta_L}$ are obtained by solving the corresponding dual problems of~\eqref{opt_start}. Finally, we are interested in the performance of the learner policy $\pi^{\mathrm{soft}}_{M_{\boldsymbol{\theta}}^L}$ in the MDP $M_{\boldsymbol{\theta^*}}^L$. Our problem setup is illustrated in Figure~\ref{fig:mismatch}.

\section{MCE IRL under Transition Dynamics Mismatch}
\label{sec:bounds}

This section analyses the MCE IRL learner's suboptimality when there is a transition dynamics mismatch between the expert and the learner, as opposed to an ideal learner without this mismatch. The proofs of the theoretical statements of this section can be found in Appendix~\ref{appendix:maxentirl}.

\subsection{Upper bound on the Performance Gap}
\label{sec:mce-irl-tight-upper}

First, we introduce an auxiliary lemma to be used later in our analysis. We define the distance between the two transition dynamics $T$ and $T'$, and the distance between the two policies $\pi$ and $\pi'$ as follows, respectively: $d_\mathrm{dyn} \br{T, T'} := \max_{s,a} \norm{T\br{\cdot \mid s,a} - T'\br{\cdot \mid s,a}}_1$, and $d_\mathrm{pol} \br{\pi, \pi'} := \max_{s} \norm{\pi(\cdot|s) - \pi'(\cdot|s)}_1$. Consider the two MDPs $M_{\boldsymbol{\theta}} = \bc{\mathcal{S}, \mathcal{A}, T, \gamma, P_0, R_{\boldsymbol{\theta}}}$ and $M'_{\boldsymbol{\theta}} = \bc{\mathcal{S}, \mathcal{A}, T', \gamma, P_0, R_{\boldsymbol{\theta}}}$. We assume that the reward function is bounded, i.e., $R_{\boldsymbol{\theta}} \br{s} \in \bs{R^{\mathrm{min}}_{\boldsymbol{\theta}},R^{\mathrm{max}}_{\boldsymbol{\theta}}}, \forall{s \in \mathcal{S}}$. Also, we define the following two constants: $\kappa_{\boldsymbol{\theta}} := \sqrt {\gamma \cdot \max \bc{ R^{\mathrm{max}}_{\boldsymbol{\theta}} + \log \abs{\mathcal{A}},  - \log \abs{\mathcal{A}} - R^{\mathrm{min}}_{\boldsymbol{\theta}}}}$ and $\abs{R_{\boldsymbol{\theta}}}^{\mathrm{max}} := \max \bc{\abs{R^{\mathrm{min}}_{\boldsymbol{\theta}}}, \abs{R^{\mathrm{max}}_{\boldsymbol{\theta}}}}$. 
\begin{lemma}
\label{thm:first_week}
Let $\pi := \pi^\mathrm{soft}_{M_{\boldsymbol{\theta}}}$ and $\pi' := \pi^\mathrm{soft}_{M'_{\boldsymbol{\theta}}}$ be the soft optimal policies for the MDPs $M_{\boldsymbol{\theta}}$ and $M'_{\boldsymbol{\theta}}$ respectively. Then, the distance between $\pi$ and $\pi'$ is bounded as follows: $d_\mathrm{pol} \br{\pi', \pi} \leq 2 \min \bc{\frac{\kappa_{\boldsymbol{\theta}} \sqrt{d_\mathrm{dyn} \br{T', T}}}{(1 - \gamma)}, \frac{\kappa^2_{\boldsymbol{\theta}} d_\mathrm{dyn} \br{T', T}}{(1 - \gamma)^2}}$.
\end{lemma}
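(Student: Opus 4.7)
I proceed in two stages: first, bound the $\ell_\infty$ distance between the soft value functions $V = V^\mathrm{soft}_{M_{\boldsymbol{\theta}}}$ and $V' = V^\mathrm{soft}_{M'_{\boldsymbol{\theta}}}$ using the soft Bellman equations, and second, convert that value gap into a policy distance via two complementary arguments that produce the two terms of the minimum.

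\textbf{Step 1 (value-function perturbation).} Both $V,V'$ satisfy soft Bellman equations of the form $V(s) = R_{\boldsymbol{\theta}}(s) + \log \sum_a \exp\!\bigl(\gamma \sum_{s'} T(s'\mid s,a)\, V(s')\bigr)$, and analogously with $T'$. Using that LogSumExp is $1$-Lipschitz in $\ell_\infty$, and that each row of $T-T'$ sums to zero (so $V$ may be re-centered without changing its inner product against that row), subtracting the two Bellman equations yields $\bigl|V(s)-V'(s)\bigr| \leq \gamma \max_a \bigl(\tfrac{1}{2} d_\mathrm{dyn}(T,T') \cdot \mathrm{span}(V) + \|V-V'\|_\infty \bigr)$. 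A contraction-style rearrangement then gives $\|V-V'\|_\infty \leq \frac{\gamma \cdot \mathrm{span}(V) \cdot d_\mathrm{dyn}(T,T')}{2(1-\gamma)}$. I bound $\mathrm{span}(V)$ using the variational form $V(s) = \mathbb{E}_\pi[\sum_t \gamma^t (R_{\boldsymbol{\theta}}(s_t) - \log \pi(a_t\mid s_t))]$, whose per-step reward-plus-entropy lies in $[R^\mathrm{min}_{\boldsymbol{\theta}},\, R^\mathrm{max}_{\boldsymbol{\theta}}+\log|\mathcal{A}|]$; with careful centering, the asymmetric form of $\kappa^2_{\boldsymbol{\theta}} = \gamma \max\{R^\mathrm{max}_{\boldsymbol{\theta}}+\log|\mathcal{A}|,\ -\log|\mathcal{A}|-R^\mathrm{min}_{\boldsymbol{\theta}}\}$ emerges. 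The same decomposition applied to $Q(s,a) = R_{\boldsymbol{\theta}}(s) + \gamma \sum_{s'} T(s'\mid s,a) V(s')$ delivers $\|Q-Q'\|_\infty = \mathcal{O}\bigl(\kappa^2_{\boldsymbol{\theta}}\, d_\mathrm{dyn}(T,T')/(1-\gamma)^2\bigr)$.

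\textbf{Step 2 (from $Q$-gap to policy gap).} Since $\pi(a\mid s) = \exp(Q(s,a)-V(s))$ and analogously $\pi'$, I convert $\|Q-Q'\|_\infty$ into $\|\pi(\cdot\mid s)-\pi'(\cdot\mid s)\|_1$ in two ways. (a) \emph{Direct softmax Lipschitzness}: the log-ratio $\log(\pi/\pi')$ is pointwise bounded by $2\|Q-Q'\|_\infty$, and a $|e^x - 1|\leq |x|e^{|x|}$ argument summed against $\pi$ yields $\|\pi(\cdot\mid s)-\pi'(\cdot\mid s)\|_1 = \mathcal{O}(\|Q-Q'\|_\infty)$, producing the linear-in-$d_\mathrm{dyn}$ term of the minimum. (b) \emph{Pinsker}: a direct computation gives $\mathrm{KL}\bigl(\pi(\cdot\mid s)\,\|\,\pi'(\cdot\mid s)\bigr) = \mathbb{E}_{a\sim\pi}[(Q-Q')(s,a)] - (V-V')(s) \leq 2\|Q-Q'\|_\infty$, so Pinsker's inequality gives $\|\pi(\cdot\mid s)-\pi'(\cdot\mid s)\|_1 \leq \sqrt{2\,\mathrm{KL}} = \mathcal{O}\bigl(\kappa_{\boldsymbol{\theta}} \sqrt{d_\mathrm{dyn}(T,T')}/(1-\gamma)\bigr)$, yielding the square-root term. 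Taking the minimum of the two recovers the claimed bound.

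\textbf{Main obstacle.} The chief difficulty is constant tracking and matching the asymmetric form of $\kappa_{\boldsymbol{\theta}}$. The naive $\mathrm{span}(V^\mathrm{soft}) \leq (R^\mathrm{max}_{\boldsymbol{\theta}} + \log|\mathcal{A}| - R^\mathrm{min}_{\boldsymbol{\theta}})/(1-\gamma)$ is wasteful; the $\max$ inside $\kappa_{\boldsymbol{\theta}}$ suggests centering $V$ at $0$ and bounding its positive and negative excursions separately. Additionally, route (a) carries an implicit exponential in $\|Q-Q'\|_\infty$ that must be controlled, which is precisely why the Pinsker route (b), with its $\sqrt{\cdot}$ dependence, is needed when $d_\mathrm{dyn}$ is large---explaining the $\min$ structure of the bound.
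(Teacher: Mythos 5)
Your Step 1 and your Pinsker route (b) essentially reproduce the paper's argument for the square-root term: the paper also bounds $\max_{s,a}\abs{\log (Z'_{a|s}/Z_{a|s})}$ (i.e.\ $\norm{Q-Q'}_\infty$) by $\frac{\kappa^2_{\boldsymbol{\theta}}}{(1-\gamma)^2}d_\mathrm{dyn}\br{T',T}$ via a soft-Bellman contraction, and then applies Pinsker. (One small caveat: the paper bounds $\max_{s}\abs{\log Z_s}$, i.e.\ $\norm{V}_\infty$, not $\mathrm{span}(V)$; your $\tfrac12\,d_\mathrm{dyn}\cdot\mathrm{span}(V)$ refinement need not be dominated by $\max\bc{R^{\mathrm{max}}_{\boldsymbol{\theta}}+\log\abs{\mathcal{A}},\,-\log\abs{\mathcal{A}}-R^{\mathrm{min}}_{\boldsymbol{\theta}}}/(1-\gamma)$ when $\log\abs{\mathcal{A}}$ is large relative to the asymmetry of the reward range, so "careful centering" alone does not obviously recover $\kappa_{\boldsymbol{\theta}}$; the cruder sup-norm bound does.)

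The genuine gap is in route (a), which is supposed to deliver the second, linear-in-$d_\mathrm{dyn}$ term of the minimum. Your $\abs{e^x-1}\leq \abs{x}e^{\abs{x}}$ argument gives $\norm{\pi(\cdot|s)-\pi'(\cdot|s)}_1 \leq 2\delta e^{2\delta}$ with $\delta=\norm{Q-Q'}_\infty$, and the factor $e^{2\delta}$ is not controlled: $\delta$ scales like $\kappa^2_{\boldsymbol{\theta}}d_\mathrm{dyn}/(1-\gamma)^2$ and can be arbitrarily large. Your closing remark suggests this is harmless because route (b) takes over "when $d_\mathrm{dyn}$ is large," but a bound of the form $\min\bc{A,B}$ requires \emph{both} $A$ and $B$ to be valid upper bounds for all parameter values, so the linear term is simply not established by your argument. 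The paper closes this hole differently: using the log-sum inequality it derives the self-referential bound $D_{\mathrm{KL}}\br{\pi'(\cdot|s),\pi(\cdot|s)} \leq \norm{\pi'(\cdot|s)-\pi(\cdot|s)}_1 \cdot \max_a \abs{\log (Z'_{a|s}/Z_{a|s})}$, and then a single Pinsker step yields $\norm{\pi'-\pi}_1^2 \leq 2\norm{\pi'-\pi}_1\delta$, i.e.\ $\norm{\pi'-\pi}_1\leq 2\delta$ with no exponential; the two terms of the $\min$ come from bounding the right-hand side of this one inequality in two ways (using $\norm{\pi'-\pi}_1\leq 2$ versus dividing through by $\norm{\pi'-\pi}_1$). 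Alternatively, your route (a) can be repaired by replacing the pointwise exponential estimate with the softmax mean-value/Jacobian bound $\norm{\mathrm{softmax}(u)-\mathrm{softmax}(v)}_1\leq 2\norm{u-v}_\infty$, which gives the linear term with the correct constant; as written, however, the proposal does not prove the stated lemma.
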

\looseness-1The above result is obtained by bounding the KL divergence between the two soft optimal policies, and involves a non-standard derivation compared to the well-established performance difference theorems in the literature (see Appendix~\ref{app:proof-soft-lemma}). The lemma above bounds the maximum total variation distance between two soft optimal policies obtained by optimizing the same reward under different transition dynamics. It serves as a prerequisite result for our later theorems (Theorem~\ref{thm:regret_different_experts} for soft optimal experts and Theorem~\ref{thm:new-reward-transfer-bound}). In addition, it may be a result of independent interest for entropy regularized MDP.

Now, we turn to our objective. Let $\pi_1 := \pi^{\mathrm{soft}}_{M_{\boldsymbol{\theta_L}}^L}$ be the policy returned by the MCE IRL algorithm when there is no transition dynamics mismatch. Similarly, let $\pi_2 := \pi^{\mathrm{soft}}_{M_{\boldsymbol{\theta_E}}^L}$ be the policy returned by the MCE IRL algorithm when there is a mismatch. Note that $\pi_1$ and $\pi_2$ are the corresponding solutions to the optimization problem~\eqref{opt_start}, when $\boldsymbol{\rho} \gets \boldsymbol{\rho}^{\pi^{*}_{M_{\boldsymbol{\theta^*}}^L}}_{M^L}$ and $\boldsymbol{\rho} \gets \boldsymbol{\rho}^{\pi^{*}_{M_{\boldsymbol{\theta^*}}^E}}_{M^E}$, respectively. The following theorem bounds the performance degradation of the policy $\pi_2$ compared to the policy $\pi_1$ in the MDP $M_{\boldsymbol{\theta^*}}^L$, where the learner operates on:

\begin{theorem}
\label{thm:regret_different_experts}
The performance gap between the policies $\pi_1$ and $\pi_2$ on the MDP $M_{\boldsymbol{\theta^*}}^L$ is bounded as follows: $\abs{V^{\pi_1}_{M_{\boldsymbol{\theta^*}}^L} - V^{\pi_2}_{M_{\boldsymbol{\theta^*}}^L}} ~\leq~ \frac{\gamma \cdot \abs{R_{\boldsymbol{\theta^*}}}^{\mathrm{max}}}{(1 - \gamma)^2} \cdot d_\mathrm{dyn} \br{T^L, T^E}$.
\end{theorem}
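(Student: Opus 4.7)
My plan is to use the MCE IRL feasibility constraint to rewrite both values as expected returns of \emph{expert} policies, which collapses the statement into a simulation-lemma bound on the optimal values of two MDPs that differ only in their dynamics. By the occupancy-matching constraint in~\eqref{opt_start}, $\boldsymbol{\rho}^{\pi_1}_{M^L} = \boldsymbol{\rho}^{\pi^{*}_{M_{\boldsymbol{\theta^*}}^L}}_{M^L}$ and $\boldsymbol{\rho}^{\pi_2}_{M^L} = \boldsymbol{\rho}^{\pi^{*}_{M_{\boldsymbol{\theta^*}}^E}}_{M^E}$ (assuming feasibility of the latter, handled in Section~\ref{sec:existence}). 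Since the features are one-hot, $V^{\pi}_{M_{\boldsymbol{\theta^*}}^L} = \frac{1}{1-\gamma}\ip{\boldsymbol{\theta^*}}{\boldsymbol{\rho}^{\pi}_{M^L}}$, whence $V^{\pi_1}_{M_{\boldsymbol{\theta^*}}^L} = V^{\pi^{*}_{M_{\boldsymbol{\theta^*}}^L}}_{M_{\boldsymbol{\theta^*}}^L}$ and $V^{\pi_2}_{M_{\boldsymbol{\theta^*}}^L} = V^{\pi^{*}_{M_{\boldsymbol{\theta^*}}^E}}_{M_{\boldsymbol{\theta^*}}^E}$. So the target reduces to bounding the gap between the optimal returns of $M_{\boldsymbol{\theta^*}}^L$ and $M_{\boldsymbol{\theta^*}}^E$.

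\textbf{Simulation lemma and lifting to optima.} For any fixed policy $\pi$, I would derive the standard per-policy simulation bound $\norm{V^{\pi}_{M_{\boldsymbol{\theta^*}}^L} - V^{\pi}_{M_{\boldsymbol{\theta^*}}^E}}_\infty \leq \frac{\gamma \abs{R_{\boldsymbol{\theta^*}}}^{\mathrm{max}}}{(1-\gamma)^2} \, d_\mathrm{dyn}(T^L, T^E)$ by subtracting the Bellman evaluation equations state-wise, isolating a $\gamma$-contraction term and a cross term of the form $\sum_{s'} (T^L - T^E)(s'|s,a) \, V^{\pi}_{M_{\boldsymbol{\theta^*}}^E}(s')$, bounding this cross term by $\frac{\abs{R_{\boldsymbol{\theta^*}}}^{\mathrm{max}}}{1-\gamma} \cdot d_\mathrm{dyn}(T^L,T^E)$ via H\"older and $\norm{V^{\pi}_{M_{\boldsymbol{\theta^*}}^E}}_\infty \leq \abs{R_{\boldsymbol{\theta^*}}}^{\mathrm{max}}/(1-\gamma)$, and solving the resulting fixed-point inequality; taking expectation under $P_0$ preserves the bound. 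To lift it to optimal values I use the cross-policy trick: if $V^{\pi^{*}_{M_{\boldsymbol{\theta^*}}^L}}_{M_{\boldsymbol{\theta^*}}^L} \geq V^{\pi^{*}_{M_{\boldsymbol{\theta^*}}^E}}_{M_{\boldsymbol{\theta^*}}^E}$, the gap is at most $V^{\pi^{*}_{M_{\boldsymbol{\theta^*}}^L}}_{M_{\boldsymbol{\theta^*}}^L} - V^{\pi^{*}_{M_{\boldsymbol{\theta^*}}^L}}_{M_{\boldsymbol{\theta^*}}^E}$ (because $\pi^{*}_{M_{\boldsymbol{\theta^*}}^L}$ is only suboptimal in $M^E$), and the reverse case is symmetric; combining with the per-policy simulation bound yields the theorem.

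\textbf{Main obstacle.} The analytical content here is light; the subtle step is conceptual, namely recognising that the one-hot feature map forces the MCE IRL constraint to match the expert's state occupancy \emph{exactly}, which collapses the learner-vs-learner comparison into an expert-optimal-vs-expert-optimal comparison across the two dynamics. Once this reduction is made, Lemma~\ref{thm:first_week} is actually not needed for this theorem (it becomes relevant only for finer results involving soft-optimal experts), and the remainder is a textbook simulation-lemma argument combined with a $\gamma$-contraction.
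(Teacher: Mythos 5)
Your proposal is correct and follows essentially the same route as the paper: both reduce the claim, via the occupancy-matching constraints and linearity of the value in $\boldsymbol{\rho}$ under the one-hot feature map, to bounding $\abs{V^{\pi^{*}_{M_{\boldsymbol{\theta^*}}^L}}_{M_{\boldsymbol{\theta^*}}^L} - V^{\pi^{*}_{M_{\boldsymbol{\theta^*}}^E}}_{M_{\boldsymbol{\theta^*}}^E}}$. The only difference is that the paper closes this last step by citing Theorem~7 of~\cite{zhang2020multi}, whereas you re-derive that result inline via the per-policy simulation bound plus the cross-policy optimality argument — which is exactly the standard proof of the cited theorem, so the two are interchangeable.
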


The above result is obtained from the optimality conditions of the problem~\eqref{opt_start}, and using Theorem~7 from~\cite{zhang2020multi}. In Section~\ref{sec:robust-mce-irl-bound}, we show that the above bound is indeed tight. When the expert policy is soft-optimal, we can use Lemma~\ref{thm:first_week} and Simulation Lemma~\cite{kearns1998near,even2003approximate} to obtain an upper bound on the performance gap (see Appendix~\ref{app:mce-irl-upper-bound}). For an application of Theorem~\ref{thm:regret_different_experts}, consider an IRL learner that first learns a simulator of the expert environment, and then matches the expert behavior in the simulator. In this case, our upper bound provides an estimate (sufficient condition) of the accuracy required for the simulator.

\subsection{Existence of Solution under Mismatch} 
\label{sec:existence}

The proof of the existence of a unique solution to the optimization problem~\eqref{opt_start}, presented in~\cite{bloem2014infinite}, relies on the fact that both expert and learner environments are the same. This assumption implies that the expert policy is in the feasible set that is consequently non-empty. Theorem~\ref{thm:occ_states} presented in this section poses a condition under which we can ensure that the feasible set is non-empty when the expert and learner environments are not the same. 

Given $M^L$ and $\boldsymbol{\rho}$, we define the following quantities 
useful for stating our theorem. We define, for each state $s \in \mathcal{S}$, the probability flow matrix $\boldsymbol{F}(s) \in \mathbb{R}^{\abs{\mathcal{S}} \times \abs{\mathcal{A}} }$ as follows: $\left[\boldsymbol{F}(s)\right]_{i,j} ~:=~ \rho(s) T^L_{s_i,s,a_j}$, where $T^L_{s_i, s, a_j} := T^L(s_i|s,a_j)$ for $i= 1,\ldots,\abs{\mathcal{S}}$ and $j= 1,\ldots,\abs{\mathcal{A}}$. Let $\boldsymbol{B}(s) \in \mathbb{R}^{\abs{\mathcal{S}} \times \abs{\mathcal{A}} }$ be 
a row matrix that contains only ones in row $s$ and zero elsewhere. Then, we define the matrix $\boldsymbol{T} \in \mathbb{R}^{2\abs{\mathcal{S}} \times \abs{\mathcal{S}}\abs{\mathcal{A}}}$ by stacking the probability flow and the row matrices as follows: $\boldsymbol{T} ~:=~ \begin{bmatrix}
\boldsymbol{F}(s_1) & \boldsymbol{F}(s_2) & \dots & \boldsymbol{F}(s_{\abs{\mathcal{S}}}) \\
\boldsymbol{B}(s_1) & \boldsymbol{B}(s_2) & \dots & \boldsymbol{B}(s_{\abs{\mathcal{S}}}) \\
\end{bmatrix}$. In addition, we define the vector $\boldsymbol{v} \in \mathbb{R}^{2\abs{\mathcal{S}}}$ as follows: $\boldsymbol{v}_i = \rho(s_i) - (1 - \gamma)P_0(s_i)$ if $i \leq \abs{\mathcal{S}}$, and $1$ otherwise.


\begin{theorem}
\label{thm:occ_states}
The feasible set of the optimization problem~\eqref{opt_start} is non-empty iff the rank of the matrix $\boldsymbol{T}$ is equal to the rank of the augmented matrix $(\boldsymbol{T} | \boldsymbol{v})$. 
\end{theorem}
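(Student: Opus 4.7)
The plan is to recast the feasibility of~\eqref{opt_start} as the solvability of the linear system $\boldsymbol{T}\boldsymbol{x}=\boldsymbol{v}$ and then appeal to the Rouch\'e--Capelli theorem. First I would encode a candidate policy $\pi\in\Pi$ as the vector $\boldsymbol{x}\in\mathbb{R}^{|\mathcal{S}||\mathcal{A}|}$ whose entry at position $(k,j)$ is $\pi(a_j\mid s_k)$, organised into the same per-state $|\mathcal{A}|$-blocks used to define the columns of $\boldsymbol{T}$. I would then translate each of the two defining conditions of feasibility into a block of rows of $\boldsymbol{T}\boldsymbol{x}=\boldsymbol{v}$. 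Using the Bellman-flow identity $\rho(s)=(1-\gamma)P_0(s)+\gamma\sum_{s',a'}T^L(s\mid s',a')\rho(s')\pi(a'\mid s')$, the occupancy-matching condition $\boldsymbol{\rho}^\pi_{M^L}=\boldsymbol{\rho}$, read off at each $s_i$, is precisely the $i$-th row of the top block of $\boldsymbol{T}$ paired with $v_i=\rho(s_i)-(1-\gamma)P_0(s_i)$. Similarly, the stochasticity constraint $\sum_a\pi(a\mid s_i)=1$ is exactly the $i$-th row of the bottom block $\boldsymbol{B}(s_i)$ paired with $v_{|\mathcal{S}|+i}=1$.

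Once this dictionary between policies and vectors is in place, I would invoke the Rouch\'e--Capelli theorem: the linear system $\boldsymbol{T}\boldsymbol{x}=\boldsymbol{v}$ admits a real solution if and only if $\mathrm{rank}(\boldsymbol{T})=\mathrm{rank}((\boldsymbol{T}\mid\boldsymbol{v}))$. The ``only if'' direction of the theorem then falls out directly: any feasible $\pi$ produces an $\boldsymbol{x}$ solving $\boldsymbol{T}\boldsymbol{x}=\boldsymbol{v}$, forcing the two ranks to agree. For the ``if'' direction, rank equality returns some vector $\boldsymbol{x}$ satisfying $\boldsymbol{T}\boldsymbol{x}=\boldsymbol{v}$, which by the dictionary simultaneously realises the Bellman-flow matching with $\boldsymbol{\rho}$ and the per-state normalization $\sum_a x_{(i,a)}=1$.

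The main obstacle is that Rouch\'e--Capelli only delivers a real-valued solution, whereas membership in $\Pi$ additionally demands the non-negativity $x_{(k,j)}\geq 0$. To close the ``if'' direction I would need to exploit the structural features of $\boldsymbol{T}$: the top block $\boldsymbol{F}$ is entrywise non-negative, $\boldsymbol{\rho}$ is itself a probability vector, and the bottom block enforces a per-state simplex constraint. I would handle this either by invoking Farkas' lemma specialised to the block structure of $\boldsymbol{T}$, or by constructing a non-negative solution directly from a state-action occupancy measure $\mu(s,a)=\rho(s)\pi(a\mid s)$ that realises the Bellman-flow equations, thereby translating the existence of such a $\mu$ back into the rank condition on $\boldsymbol{T}$. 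Formalising this non-negativity step, and verifying that the simplex constraint together with the block structure of $\boldsymbol{T}$ precludes spurious signed solutions, is where the main technical work lies.
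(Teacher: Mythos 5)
Your route is essentially the same as the paper's: the published proof also encodes a candidate policy as the stacked vector $\boldsymbol{\pi}^L \in \mathbb{R}^{\abs{\mathcal{S}}\abs{\mathcal{A}}}$, identifies the top $\abs{\mathcal{S}}$ rows of $\boldsymbol{T}\boldsymbol{\pi}^L = \boldsymbol{v}$ with the Bellman flow constraints $\rho(s) - (1-\gamma)P_0(s) = \gamma \sum_{s',a'} \rho(s')\pi(a'|s')T^L(s|s',a')$ and the bottom $\abs{\mathcal{S}}$ rows with the per-state normalization, and then invokes Rouch\'e--Capelli. So your ``dictionary'' and both directions of the equivalence match the paper exactly.

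The one substantive point is the non-negativity issue you raise at the end, and you are right to worry about it: Rouch\'e--Capelli certifies only a real solution of $\boldsymbol{T}\boldsymbol{x} = \boldsymbol{v}$, and an affine solution set satisfying the per-state sum-to-one rows need not intersect the non-negative orthant, so the ``if'' direction does not follow from rank equality alone. Notably, the paper's own proof does not address this either --- it applies Rouch\'e--Capelli and stops, implicitly treating $\Pi$ as cut out by the equality constraints only. Your proposal therefore reproduces the published argument while being more candid about where it is incomplete; if you wanted to actually close the gap, the cleanest path is the one you sketch last: work directly with a state-action occupancy measure $\mu(s,a) = \rho(s)\pi(a|s) \geq 0$, express feasibility as the existence of a non-negative solution to the flow and marginalization equations, and then characterize that existence via Farkas' lemma (or an LP feasibility argument) rather than via a pure rank condition. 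One further small point worth checking in a fully rigorous write-up: having found a valid $\pi$ satisfying the flow equation with the target $\boldsymbol{\rho}$ substituted on both sides, one should note that $I - \gamma P_\pi^\top$ is invertible for $\gamma < 1$, so the flow equation has a unique fixed point and hence $\boldsymbol{\rho}^{\pi}_{M^L} = \boldsymbol{\rho}$ indeed follows; the paper leaves this implicit as well.
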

The proof of the above theorem leverages the fact that the Bellman flow constraints~\cite{boularias2011relative} must hold for any policy in an MDP. This requirement leads to the formulation of a linear system whose solutions set corresponds to the feasible set of~\eqref{opt_start}. The Rouché-Capelli theorem \cite{shafarevich2014linear}[Theorem~2.38] states that the solutions set is non-empty if and only if the condition in Theorem~\ref{thm:occ_states} holds. We note that the construction of the matrix $\boldsymbol{T}$ does not assume any restriction on the MDP structure since it leverages only on the Bellman flow constraints. Theorem ~\ref{thm:occ_states} allows us to develop a robust MCE IRL scheme in Section~\ref{sec:two_players_entropy} by ensuring the absence of duality gap. To this end, the following corollary provides a simple sufficient condition for the existence of a solution under transition dynamics mismatch. 
\begin{corollary}
\label{corollary}
Let $\abs{\mathcal{A}} > 1$. Then, a sufficient condition for the non-emptiness of the feasible set of the optimization problem~\eqref{opt_start} is given by $\boldsymbol{T}$ being full rank. 
\end{corollary}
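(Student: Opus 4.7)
The plan is to deduce the corollary from Theorem~\ref{thm:occ_states} by a short rank comparison, exploiting the shape of $\boldsymbol{T}$. Since $\boldsymbol{T} \in \mathbb{R}^{2\abs{\mathcal{S}} \times \abs{\mathcal{S}}\abs{\mathcal{A}}}$ and the hypothesis $\abs{\mathcal{A}} > 1$ gives $\abs{\mathcal{S}}\abs{\mathcal{A}} \geq 2\abs{\mathcal{S}}$, the matrix has at least as many columns as rows. Consequently ``$\boldsymbol{T}$ is full rank'' means $\mathrm{rank}(\boldsymbol{T}) = \min(2\abs{\mathcal{S}}, \abs{\mathcal{S}}\abs{\mathcal{A}}) = 2\abs{\mathcal{S}}$, i.e.\ $\boldsymbol{T}$ has full row rank.

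Next I would invoke the standard sandwich on the rank of the augmented matrix $(\boldsymbol{T}\mid\boldsymbol{v}) \in \mathbb{R}^{2\abs{\mathcal{S}} \times (\abs{\mathcal{S}}\abs{\mathcal{A}}+1)}$. On the one hand, $(\boldsymbol{T}\mid\boldsymbol{v})$ still has $2\abs{\mathcal{S}}$ rows, so $\mathrm{rank}(\boldsymbol{T}\mid\boldsymbol{v}) \leq 2\abs{\mathcal{S}}$. On the other hand, $\boldsymbol{T}$ is obtained from $(\boldsymbol{T}\mid\boldsymbol{v})$ by deleting the final column, hence $\mathrm{rank}(\boldsymbol{T}\mid\boldsymbol{v}) \geq \mathrm{rank}(\boldsymbol{T}) = 2\abs{\mathcal{S}}$. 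Combining these yields $\mathrm{rank}(\boldsymbol{T}\mid\boldsymbol{v}) = 2\abs{\mathcal{S}} = \mathrm{rank}(\boldsymbol{T})$, which is exactly the condition of Theorem~\ref{thm:occ_states}. Applying that theorem then gives the non-emptiness of the feasible set of~\eqref{opt_start}.

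There is essentially no obstacle here: the corollary is a direct bookkeeping consequence of Theorem~\ref{thm:occ_states} once the dimensions of $\boldsymbol{T}$ are read off. The only mild point worth being explicit about in the write-up is why $\abs{\mathcal{A}} > 1$ is needed, namely to guarantee $\abs{\mathcal{S}}\abs{\mathcal{A}} \geq 2\abs{\mathcal{S}}$ so that ``$\boldsymbol{T}$ full rank'' forces the row rank to be $2\abs{\mathcal{S}}$ (if $\abs{\mathcal{A}} = 1$, the matrix would be tall and ``full rank'' would only mean full column rank $\abs{\mathcal{S}}$, which would not be enough to conclude $\mathrm{rank}(\boldsymbol{T}) = \mathrm{rank}(\boldsymbol{T}\mid\boldsymbol{v})$ by the submatrix argument alone).
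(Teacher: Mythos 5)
Your proof is correct and follows essentially the same route as the paper: with $\abs{\mathcal{A}} > 1$ the matrix $\boldsymbol{T}$ is at least as wide as it is tall, so full rank means $\mathrm{rank}(\boldsymbol{T}) = 2\abs{\mathcal{S}}$, and the augmented matrix's rank is sandwiched between $\mathrm{rank}(\boldsymbol{T})$ and the row count $2\abs{\mathcal{S}}$, giving the rank equality required by Theorem~\ref{thm:occ_states}. You are in fact slightly more explicit than the paper about the lower bound $\mathrm{rank}(\boldsymbol{T}\mid\boldsymbol{v}) \geq \mathrm{rank}(\boldsymbol{T})$, which the paper leaves implicit.
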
 

\subsection{Reward Transfer under Mismatch}
\label{sec:mismatch-learning}

Consider a class $\mathcal{M}$ of MDPs such that it contains both the learner and the expert environments, i.e., $M^L, M^E \in \mathcal{M}$ (see Figure~\ref{fig:MDPschematics}). We are given the expert's state occupancy measure $\boldsymbol{\rho} = \boldsymbol{\rho}^{\pi^{*}_{M_{\boldsymbol{\theta^*}}^E}}_{M^E}$; but the expert's policy $\pi^{*}_{M_{\boldsymbol{\theta^*}}^E}$ and the MDP $M^E$ are unknown. Further, we assume that every MDP $M \in \mathcal{M}$ satisfies the condition in Theorem~\ref{thm:occ_states}. 

We aim to find a policy $\pi^L$ that performs well in the MDP $M_{\boldsymbol{\theta^*}}^L$, i.e., $V^{\pi^L}_{M_{\boldsymbol{\theta^*}}^L}$ is high. To this end, we can choose any MDP $M^{\mathrm{train}} \in \mathcal{M}$, and solve the MCE IRL problem~\eqref{opt_start} with the constraint given by $\boldsymbol{\rho} = \boldsymbol{\rho}^{\pi}_{M^{\mathrm{train}}}$. Then, we always obtain a reward parameter $\boldsymbol{\theta}^\mathrm{train}$ s.t. $\boldsymbol{\rho} = \boldsymbol{\rho}^{\pi^{\mathrm{soft}}_{M^{\mathrm{train}}_{\boldsymbol{\theta}^\mathrm{train}}}}_{M^{\mathrm{train}}}$, since $M^{\mathrm{train}}$ satisfies the condition in Theorem~\ref{thm:occ_states}. We can use this reward parameter $\boldsymbol{\theta}^\mathrm{train}$ to learn a good policy $\pi^L$ in the MDP $M_{\boldsymbol{\theta}^\mathrm{train}}^L$, i.e., $\pi^L := \pi^*_{M_{\boldsymbol{\theta}^\mathrm{train}}^L}$ or $\pi^L := \pi^{\mathrm{soft}}_{M_{\boldsymbol{\theta}^\mathrm{train}}^L}$. Using Lemma~\ref{thm:first_week}, we obtain a bound on the performance gap between $\pi^L$ and $\pi_1 := \pi^{\mathrm{soft}}_{M_{\boldsymbol{\theta_L}}^L}$ (see Theorem~\ref{thm:new-reward-transfer-bound} in Appendix~\ref{app:reward-transfer}).

\begin{wrapfigure}{r}{0.5\textwidth}
\vspace{-\intextsep}
\includegraphics[width=0.5\textwidth]{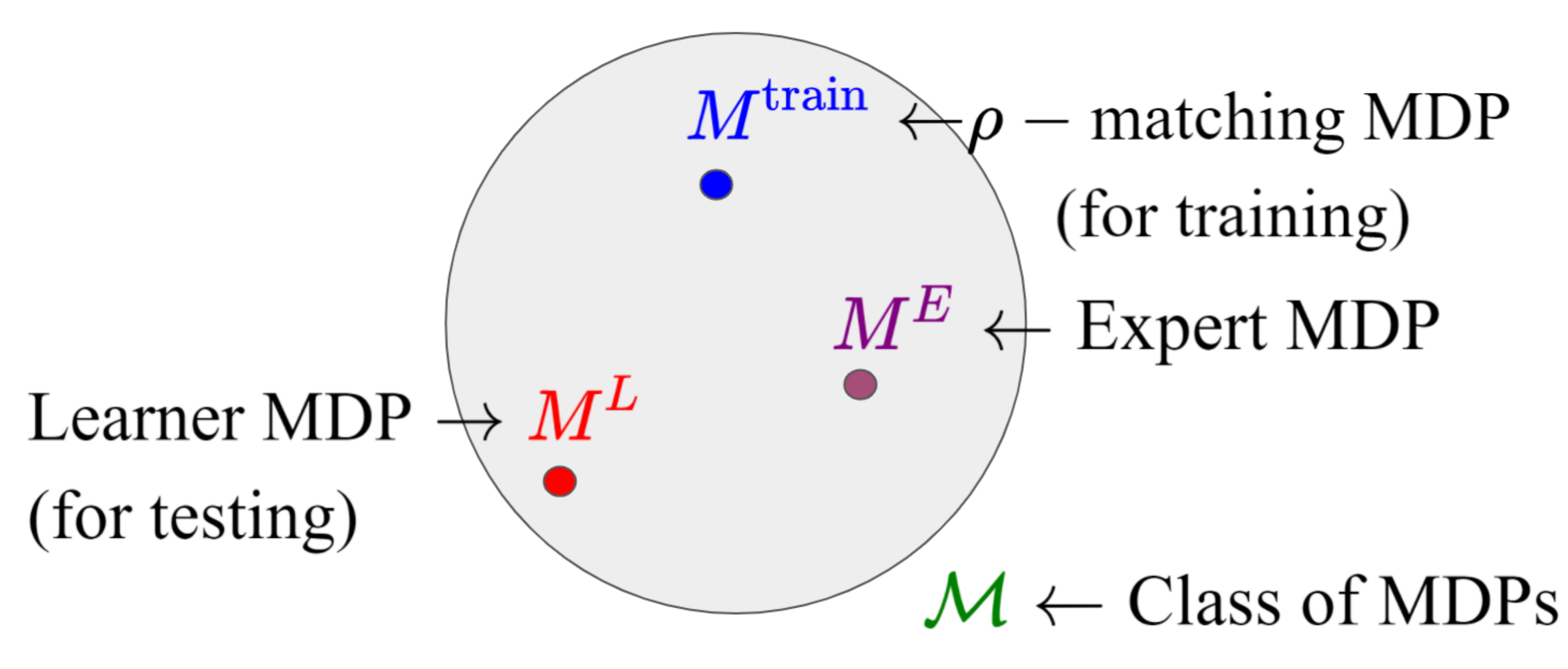}
\vspace{-\intextsep}
\caption{Illustrative example of learning a policy $\pi^L$ to act in one MDP $M^L$, given the expert occupancy measure $\boldsymbol{\rho}$. }
\vspace{-\intextsep}
\label{fig:MDPschematics}
\end{wrapfigure}

However, there are two problems with this approach: (i) it requires access to multiple environments $M^{\mathrm{train}}$, and (ii) unless $M^{\mathrm{train}}$ happened to be closer to the expert's MDP $M^E$, we cannot recover the true intention of the expert. Since the MDP $M^E$ is unknown, one cannot compare the different reward parameters $\boldsymbol{\theta}^\mathrm{train}$'s obtained with different MDPs $M^{\mathrm{train}}$'s. Thus, with 
$\boldsymbol{\theta}^\mathrm{train}$, it is impossible to ensure that the performance of $\pi^L$ is high in the MDP $M_{\boldsymbol{\theta^*}}^L$. Instead, we try to learn a robust policy $\pi^L$ over the class $\mathcal{M}$, while aligning with the expert's occupancy measure $\boldsymbol{\rho}$, and acting only in $M^L$. By doing this, we ensure that $\pi^L$ performs reasonably well on any MDP $M_{\boldsymbol{\theta^*}} \in \mathcal{M}$ including $M_{\boldsymbol{\theta^*}}^L$. We further build upon this idea in the next section.

\section{Robust MCE IRL via Two-Player Markov Game}
\label{sec:two_players_entropy}

\subsection{Robust MCE IRL Formulation}

This section focuses on recovering a learner policy via MCE IRL framework in a robust manner, under transition dynamics mismatch, i.e., $\boldsymbol{\rho} = \boldsymbol{\rho}^{\pi^{\mathrm{soft}}_{M_{\boldsymbol{\theta^*}}^E}}_{M^E}$ in Eq.~\eqref{opt_start}. In particular, our learner policy matches the expert state occupancy measure $\boldsymbol{\rho}$ under the most adversarial transition dynamics belonging to a set described as follows for a given $\alpha > 0$: $\mathcal{T}^{L,\alpha} := \bc{ \alpha T^L + (1 - \alpha) \bar{T}, \forall \bar{T} \in \Delta_{\mathcal{S} \mid \mathcal{S},\mathcal{A}}}$, where $\Delta_{\mathcal{S} \mid \mathcal{S},\mathcal{A}}$ is the set of all the possible transition dynamics $T: \mathcal{S} \times \mathcal{S} \times \mathcal{A} \rightarrow \bs{0,1}$. Note that the set $\mathcal{T}^{L,\alpha}$ is equivalent to the $(s,a)$-rectangular uncertainty set~\cite{iyengar2005robust} centered around $T^L$, i.e., $\mathcal{T}^{L,\alpha} = \bc{T: d_\mathrm{dyn} \br{T, T^L} \leq 2 (1-\alpha)}$. We need this set $\mathcal{T}^{L,\alpha}$ for establishing the equivalence between robust MDP and action-robust MDP formulations. The action-robust MDP formulation allows us to learn a robust policy while accessing only the MDP $M^L$. 

We define a class of MDPs as follows: $\mathcal{M}^{L,\alpha} ~:=~ \bc{\bc{\mathcal{S}, \mathcal{A}, T^{L,\alpha}, \gamma, P_0}, \forall T^{L,\alpha} \in \mathcal{T}^{L,\alpha}}$. Then, based on the discussions in Section~\ref{sec:mismatch-learning}, we propose the following robust MCE IRL problem:  
\begin{equation}
\max_{\pi^{\mathrm{pl}} \in \Pi} \min_{M \in \mathcal{M}^{L,\alpha}} ~ \E{\sum_{t = 0}^\infty - \gamma^t \log \pi^{\mathrm{pl}}(a_t | s_t) \biggm| \pi^{\mathrm{pl}}, M} ~~ \text{subject to} ~~ \boldsymbol{\rho}^{\pi^{\mathrm{pl}}}_{M} = \boldsymbol{\rho}
\label{eq:robust-mce-irl-primal-form}
\end{equation}
The corresponding dual problem is given by:
\begin{equation}
\min_{\boldsymbol{\theta}} \max_{\pi^{\mathrm{pl}} \in \Pi} \min_{M \in \mathcal{M}^{L,\alpha}}   \E{\sum_{t = 0}^{\infty} - \gamma^t \log \pi^{\mathrm{pl}}(a_t | s_t) \biggm| \pi^{\mathrm{pl}}, M} +  \boldsymbol{\theta}^\top \br{\boldsymbol{\rho}^{\pi^{\mathrm{pl}}}_{M} - \boldsymbol{\rho}} \label{eq:robust-mce-irl-dual-form}
\end{equation}
In the dual problem, for any $\boldsymbol{\theta}$, we attempt to learn a robust policy over the class $\mathcal{M}^{L,\alpha}$ with respect to the entropy regularized reward function. The parameter $\boldsymbol{\theta}$ plays the role of aligning the learner's policy with the expert's occupancy measure via constraint satisfaction. 

\subsection{Existence of Solution}
\label{sec:exist-robust-sol}

We start by formulating the IRL problem for any MDP $M^{L,\alpha} \in \mathcal{M}^{L,\alpha}$, with transition dynamics $T^{L,\alpha} = \alpha T^L + (1 - \alpha) \bar{T} \in \mathcal{T}^{L,\alpha}$, as follows:
\begin{equation}
\max_{\pi^{\mathrm{pl}} \in \Pi} ~ \E{\sum_{t = 0}^{\infty} - \gamma^t \log \pi^{\mathrm{pl}}(a_t | s_t) \biggm| \pi^{\mathrm{pl}} , M^{L,\alpha}} ~~ \text{subject to} ~~ \boldsymbol{\rho}^{\pi^{\mathrm{pl}}}_{M^{L,\alpha}} = \boldsymbol{\rho}
\label{primal_start_2}
\end{equation}
By introducing the Lagrangian vector $\boldsymbol{\theta} \in \mathbb{R}^{\abs{\mathcal{S}}}$, we get: 
\begin{equation} 
\max_{\pi^{\mathrm{pl}} \in \Pi} ~~ \E{\sum_{t = 0}^{\infty} - \gamma^t \log \pi^{\mathrm{pl}}(a_t | s_t) \biggm| \pi^{\mathrm{pl}} , M^{L,\alpha}} + \boldsymbol{\theta}^\top \br{\boldsymbol{\rho}^{\pi^{\mathrm{pl}}}_{M^{L,\alpha}} - \boldsymbol{\rho}} \label{primal_relaxed_2}
\end{equation}
For any fixed $\boldsymbol{\theta}$, the problem \eqref{primal_relaxed_2} is feasible since $\Pi$ is a closed and bounded set. We define $U(\boldsymbol{\theta})$ as the value of the program \eqref{primal_relaxed_2} for a given $\boldsymbol{\theta}$. By weak duality, $U(\boldsymbol{\theta})$ provides an upper bound on the optimization problem \eqref{primal_start_2}. Consequently, we introduce the dual problem aiming to find the value of $\boldsymbol{\theta}$ corresponding to the lowest upper bound, which can be written as 
\begin{equation}
 \min_{\boldsymbol{\theta}} U(\boldsymbol{\theta}) := \max_{\pi^{\mathrm{pl}} \in \Pi} ~ \E{\sum_{t = 0}^{\infty} - \gamma^t \log \pi^{\mathrm{pl}}(a_t | s_t) \biggm| \pi^{\mathrm{pl}} , M^{L,\alpha}} + \boldsymbol{\theta}^\top \br{\boldsymbol{\rho}^{\pi^{\mathrm{pl}}}_{M^{L,\alpha}} - \boldsymbol{\rho}}. \label{dual_2}
\end{equation}
Given $\boldsymbol{\theta}$, we define  $\pi^{\mathrm{pl}, \ast} := \pi^\mathrm{soft}_{M^{L,\alpha}_{\boldsymbol{\theta}}}$. Due to~\cite{geist2019regmdp}[Theorem~1], for any fixed $M^{L,\alpha}_{\boldsymbol{\theta}}$, the policy $\pi^{\mathrm{pl}, \ast}$ exists and it is unique. We can compute the gradient\footnote{In Appendix~\ref{sec:gradient}, we proved that this is indeed the gradient update under the transition dynamics mismatch.} $\nabla_{\boldsymbol{\theta}} U = \boldsymbol{\rho}^{\pi^{\mathrm{pl}, \ast}}_{M^{L,\alpha}} - \boldsymbol{\rho}$, and update the parameter via gradient descent: $\boldsymbol{\theta} \gets \boldsymbol{\theta} - \nabla_{\boldsymbol{\theta}} U$. Note that, if the condition in Theorem~\ref{thm:occ_states} holds, the feasible set of~\eqref{primal_start_2} is non-empty. Then, according to~\cite{bloem2014infinite}[Lemma~2], there is no duality gap between the programs~\eqref{primal_start_2} and \eqref{dual_2}. Based on these observations, we argue that the program~\eqref{eq:robust-mce-irl-primal-form} is well-posed and admits a unique solution. 

\subsection{Solution via Markov Game}
\label{sec:sol-markov-game}

\begin{algorithm}[t]
	\caption{Robust MCE IRL via Markov Game}
	\label{alg:MaxEntIRL}
	\begin{spacing}{0.8}
	\begin{algorithmic}
	    \STATE \textbf{Input:} opponent strength $1-\alpha$
	    \STATE \textbf{Initialize:} player policy $\pi^{\mathrm{pl}}$, opponent policy $\pi^{\mathrm{op}}$, and parameter $\boldsymbol{\theta}$
	    
		\WHILE{not converged}
		\STATE compute $\boldsymbol{\rho}^{\alpha \pi^{\mathrm{pl}} + (1 - \alpha) \pi^{\mathrm{op}}}_{ M^L}$  by dynamic programming~\cite{bloem2014infinite}[Section V.C].
		
		\STATE update $\boldsymbol{\theta}$ with Adam~\cite{kingma2014adam} using the gradient $\br{\boldsymbol{\rho}^{\alpha \pi^{\mathrm{pl}} + (1 - \alpha) \pi^{\mathrm{op}}}_{ M^L} - \boldsymbol{\rho}}$. 
		
		\STATE use Algorithm~\ref{alg:TwoplayersDynProg} with $R = R_{\boldsymbol{\theta}}$ to update $\pi^{\mathrm{pl}}$ and $\pi^{\mathrm{op}}$ s.t. they solve the problem~\eqref{objective}. 
    \ENDWHILE
    \STATE \textbf{Output:} player policy $\pi^{\mathrm{pl}}$
	\end{algorithmic}
	\end{spacing}
\end{algorithm}

In the following, we outline a method (see Algorithm~\ref{alg:MaxEntIRL}) to solve the robust MCE IRL dual problem~\eqref{eq:robust-mce-irl-dual-form}. To this end, for any given $\boldsymbol{\theta}$, we need to solve the inner max-min problem of~\eqref{eq:robust-mce-irl-dual-form}. First, we express the entropy term $\E{\sum_{t = 0}^{\infty} - \gamma^t \log \pi^{\mathrm{pl}}(a_t | s_t) \big| \pi^{\mathrm{pl}}, M}$ as follows:
\[
\sum_{s \in \mathcal{S}} \rho^{\pi^{\mathrm{pl}}}_{M}(s) \sum_{a \in \mathcal{A}} \bc{-\pi^{\mathrm{pl}}(a|s) \log \pi^{\mathrm{pl}}(a|s)} ~=~ \sum_{s \in \mathcal{S}} \rho^{\pi^{\mathrm{pl}}}_{M}(s) H^{\pi^{\mathrm{pl}}}\br{A \mid S = s} ~=~ \br{\boldsymbol{H}^{\pi^{\mathrm{pl}}}}^\top \boldsymbol{\rho}^{\pi^{\mathrm{pl}}}_{M}, 
\]
where $\boldsymbol{H}^{\pi^{\mathrm{pl}}} \in \mathbb{R}^{\abs{\mathcal{S}}}$ a vector whose $s^\text{th}$ element is the entropy of the player policy given the state $s$. Since the quantity $\boldsymbol{H}^{\pi^{\mathrm{pl}}} + \boldsymbol{\theta}$ depends only on the states, to solve the dual problem, we can utilize the equivalence between the \emph{robust MDP}~\cite{iyengar2005robust,nilim2005robust} formulation and the \emph{action-robust MDP}~\cite{pinto2017robust,tessler2019action,kamalaruban2020robust} formulation shown in~\cite{tessler2019action}. We can interpret the minimization over the environment class as the minimization over a set of opponent policies that with probability $1 - \alpha$ take control of the agent and perform the worst possible move from the current agent state. Indeed, interpreting $\br{\boldsymbol{H}^{\pi^{\mathrm{pl}}} +\boldsymbol{\theta}}^\top \boldsymbol{\rho}^{\pi^{\mathrm{pl}}}_{M}$ as an entropy regularized value function, i.e., $\boldsymbol{\theta}$ as a reward parameter, we can write:
\begin{align}
\max_{\pi^{\mathrm{pl}} \in \Pi} \min_{M \in \mathcal{M}^{L,\alpha}} \br{\boldsymbol{H}^{\pi^{\mathrm{pl}}} +\boldsymbol{\theta}}^\top \boldsymbol{\rho}^{\pi^{\mathrm{pl}}}_{M} ~=~& \max_{\pi^{\mathrm{pl}} \in \Pi} \min_{\bar{T}}  \E{G \bigm| \pi^{\mathrm{pl}}, P_0, \alpha T^L + (1 - \alpha) \bar{T}} \label{eq:T_minimizer}\\
~\leq~&\max_{\pi^{\mathrm{pl}} \in \Pi} \min_{\pi^{\mathrm{op}} \in \Pi} \E{G \bigm| \alpha \pi^{\mathrm{pl}} + (1 - \alpha) \pi^{\mathrm{op}}, M^L} , \label{equivalence_new}
\end{align}
where $G := \sum_{t=0}^{\infty} \gamma^t \bc{R_{\boldsymbol{\theta}}(s_t) + H^{\pi^{\mathrm{pl}}}\br{A \mid S = s_t}}$. 
The above inequality holds due to the derivation in section 3.1 of \cite{tessler2019action}. Further details are in Appendix~\ref{app:details-equivalence-new}.

Finally, we can formulate the problem~\eqref{equivalence_new} as a two-player zero-sum Markov game~\cite{littman1994markov} with transition dynamics given by $T^{\mathrm{two},L,\alpha}(s' | s, a^{\mathrm{pl}}, a^{\mathrm{op}}) = \alpha T^L(s' | s, a^{\mathrm{pl}}) + (1 - \alpha) T^L(s' | s, a^{\mathrm{op}})$, where $a^{\mathrm{pl}}$ is an action chosen according to the player policy and $a^{\mathrm{op}}$ according to the opponent policy. Note that the opponent is restricted to take the worst possible action from the state of the player, i.e., there is no additional state variable for the opponent. As a result, we reach a two-player Markov game with a regularization term for the player as follows:
\begin{equation}
\argmax_{\pi^{\mathrm{pl}} \in \Pi} \min_{\pi^{\mathrm{op}} \in \Pi} \E{G \bigm| \pi^{\mathrm{pl}}, \pi^{\mathrm{op}}, M^{\mathrm{two},L,\alpha}} , 
\label{objective}
\end{equation}
where $M^{\mathrm{two},L,\alpha} = \bc{\mathcal{S}, \mathcal{A}, \mathcal{A}, T^{\mathrm{two},L,\alpha}, \gamma, P_0, R_{\boldsymbol{\theta}}}$ is the two-player MDP associated with the above game. The repetition of the action space $\mathcal{A}$ denotes the fact that player and adversary share the same action space. Inspired from~\cite{grau2018balancing}, we propose a dynamic programming approach to find the player and opponent policies (see Algorithm~\ref{alg:TwoplayersDynProg} in Appendix~\ref{app:softQ}). 

\subsection{Performance Gap of Robust MCE IRL}
\label{sec:robust-mce-irl-bound}

Let $\pi^\mathrm{pl}$ be the policy returned by our Algorithm~\ref{alg:MaxEntIRL} when there is a transition dynamics mismatch. Recall that $\pi_1 := \pi^{\mathrm{soft}}_{M_{\boldsymbol{\theta_L}}^L}$ is the policy recovered without this mismatch. Then, we obtain the following upper-bound\footnote{This bound is worst than the one given in Theorem \ref{thm:regret_different_experts}. When the condition in Theorem~\ref{thm:occ_states} does not hold, the robust MCE IRL achieves a tighter bound than the MCE IRL for a proper choice of $\alpha$ (see Appendix~\ref{app:robust-mce-irl-upper-infeasible}).} for the performance gap of our algorithm via the triangle inequality:
\begin{theorem}
\label{thm:new-robust-mce-irl-bound}
The performance gap between the policies $\pi_1$ and $\pi^\mathrm{pl}$ on the MDP $M_{\boldsymbol{\theta^*}}^L$ is bounded as follows: $\abs{V^{\pi_1}_{M_{\boldsymbol{\theta^*}}^L} - V^{\pi^\mathrm{pl}}_{M_{\boldsymbol{\theta^*}}^L}} ~\leq~ \frac{\abs{R_{\boldsymbol{\theta^*}}}^{\mathrm{max}}}{(1 - \gamma)^2} \cdot \bc{\gamma \cdot d_\mathrm{dyn} \br{T^L, T^E} + 2 \cdot (1 - \alpha)}$. 
\end{theorem}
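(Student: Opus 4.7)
The plan is to derive the bound by the triangle inequality, inserting the non-robust MCE IRL policy $\pi_2 := \pi^{\mathrm{soft}}_{M_{\boldsymbol{\theta_E}}^L}$ as an intermediate:
\begin{align*}
\bigl|V^{\pi_1}_{M_{\boldsymbol{\theta^*}}^L} - V^{\pi^\mathrm{pl}}_{M_{\boldsymbol{\theta^*}}^L}\bigr|
~\leq~ \bigl|V^{\pi_1}_{M_{\boldsymbol{\theta^*}}^L} - V^{\pi_2}_{M_{\boldsymbol{\theta^*}}^L}\bigr|
~+~ \bigl|V^{\pi_2}_{M_{\boldsymbol{\theta^*}}^L} - V^{\pi^\mathrm{pl}}_{M_{\boldsymbol{\theta^*}}^L}\bigr|.
\end{align*}
The first term is handled immediately by Theorem~\ref{thm:regret_different_experts}, contributing the $\gamma \cdot d_\mathrm{dyn}(T^L,T^E)$ part of the claimed bound. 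So the real work is bounding the second term by $\frac{2(1-\alpha)\,\abs{R_{\boldsymbol{\theta^*}}}^{\mathrm{max}}}{(1-\gamma)^2}$.

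To do this, I would introduce the mixture policy $\pi_{\mathrm{mix}} := \alpha\,\pi^{\mathrm{pl}} + (1-\alpha)\,\pi^{\mathrm{op}}$ produced at convergence of Algorithm~\ref{alg:MaxEntIRL}. The constraint satisfied at convergence (zero gradient for $\boldsymbol{\theta}$) is $\boldsymbol{\rho}^{\pi_{\mathrm{mix}}}_{M^L} = \boldsymbol{\rho} = \boldsymbol{\rho}^{\pi^{*}_{M_{\boldsymbol{\theta^*}}^E}}_{M^E}$, which is exactly the same occupancy-matching constraint satisfied by $\pi_2$ in $M^L$. Because the reward uses the one-hot feature map, the value is linear in the state occupancy, so
\[
V^{\pi_{\mathrm{mix}}}_{M_{\boldsymbol{\theta^*}}^L}
~=~ \tfrac{1}{1-\gamma}\,\ip{\boldsymbol{\theta^*}}{\boldsymbol{\rho}^{\pi_{\mathrm{mix}}}_{M^L}}
~=~ \tfrac{1}{1-\gamma}\,\ip{\boldsymbol{\theta^*}}{\boldsymbol{\rho}^{\pi_2}_{M^L}}
~=~ V^{\pi_2}_{M_{\boldsymbol{\theta^*}}^L}.
\]
Thus the second term reduces to $\bigl|V^{\pi^{\mathrm{pl}}}_{M_{\boldsymbol{\theta^*}}^L} - V^{\pi_{\mathrm{mix}}}_{M_{\boldsymbol{\theta^*}}^L}\bigr|$, i.e., a pure policy-gap comparison inside the fixed MDP $M^L_{\boldsymbol{\theta^*}}$.

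Now I would invoke a standard performance-difference / value-difference lemma applied to $\pi^{\mathrm{pl}}$ and $\pi_{\mathrm{mix}}$ in $M^L_{\boldsymbol{\theta^*}}$. Since $\pi^{\mathrm{pl}} - \pi_{\mathrm{mix}} = (1-\alpha)(\pi^{\mathrm{pl}} - \pi^{\mathrm{op}})$, one directly gets $d_\mathrm{pol}(\pi^{\mathrm{pl}}, \pi_{\mathrm{mix}}) \leq 2(1-\alpha)$. Combining with the usual bound $\bigl|V^{\pi}_{M} - V^{\pi'}_{M}\bigr| \leq \tfrac{\abs{R}^{\max}}{(1-\gamma)^2}\, d_\mathrm{pol}(\pi,\pi')$ (derived from the value-difference identity, bounding the $Q$-function by $\tfrac{\abs{R}^{\max}}{1-\gamma}$ and applying Hölder on $\sum_a (\pi-\pi')(a|s)\,Q(s,a)$) yields the desired $\tfrac{2(1-\alpha)\,\abs{R_{\boldsymbol{\theta^*}}}^{\mathrm{max}}}{(1-\gamma)^2}$.

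The main obstacle is justifying the equality $\boldsymbol{\rho}^{\pi_{\mathrm{mix}}}_{M^L}=\boldsymbol{\rho}$: this relies on the Section~\ref{sec:exist-robust-sol} discussion that, under the hypothesis of Theorem~\ref{thm:occ_states}, strong duality holds for~\eqref{eq:robust-mce-irl-primal-form}/\eqref{eq:robust-mce-irl-dual-form}, so the constraint of the primal (via the action-robust reformulation, with effective player policy $\pi_{\mathrm{mix}}$ acting in $M^L$) is met at optimum. Once that is in place, the rest is a short chain: triangle inequality, Theorem~\ref{thm:regret_different_experts}, occupancy-matching identity, and the classical policy-perturbation value bound.
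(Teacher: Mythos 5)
Your proposal is correct and is essentially the paper's own argument in a slightly regrouped form: the paper writes a four-term triangle inequality through $\pi^{*}_{M_{\boldsymbol{\theta^*}}^L}$, $\pi^{*}_{M_{\boldsymbol{\theta^*}}^E}$, and the mixture $\alpha\pi^{\mathrm{pl}}+(1-\alpha)\pi^{\mathrm{op}}$, with two terms vanishing by occupancy matching, and your two-term split through $\pi_2$ collapses to the same chain because Theorem~\ref{thm:regret_different_experts} is proved by exactly that decomposition and $V^{\pi_2}_{M_{\boldsymbol{\theta^*}}^L}=V^{\alpha\pi^{\mathrm{pl}}+(1-\alpha)\pi^{\mathrm{op}}}_{M_{\boldsymbol{\theta^*}}^L}$ follows from the shared constraint $\boldsymbol{\rho}^{\pi_2}_{M^L}=\boldsymbol{\rho}=\boldsymbol{\rho}^{\alpha\pi^{\mathrm{pl}}+(1-\alpha)\pi^{\mathrm{op}}}_{M^L}$. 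Your final step (value gap $\le \frac{\abs{R_{\boldsymbol{\theta^*}}}^{\mathrm{max}}}{(1-\gamma)^2}d_\mathrm{pol}$, then $d_\mathrm{pol}\br{\alpha\pi^{\mathrm{pl}}+(1-\alpha)\pi^{\mathrm{op}},\pi^{\mathrm{pl}}}\le 2(1-\alpha)$) is also the paper's, which routes the occupancy-to-policy $\ell_1$ bound through Lemma~A.1 of the cited reference rather than the value-difference identity you sketch.
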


\begin{wrapfigure}{r}{0.5\textwidth}
\vspace{-\intextsep}
\includegraphics[width=0.5\textwidth]{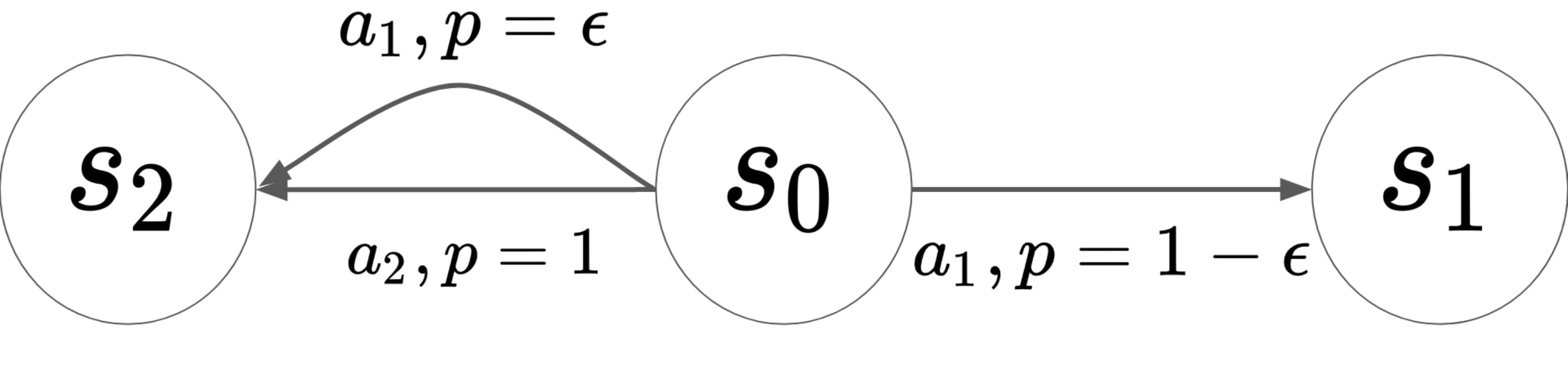}
\vspace{-\intextsep}
\caption{Constructive example to study the performance gap of Algorithm~\ref{alg:MaxEntIRL} and the MCE IRL.}
\vspace{-\intextsep}
\label{fig:worst_case_mdp}
\end{wrapfigure}

However, we now provide a constructive example, in which, by choosing the appropriate value for $\alpha$, the performance gap of our Algorithm~\ref{alg:MaxEntIRL} vanishes. In contrast, the performance gap of the standard MCE IRL is proportional to the mismatch. Note that our Algorithm~\ref{alg:MaxEntIRL} with $\alpha = 1$ corresponds to the standard MCE-IRL algorithm. 

Consider a reference MDP $M^{\br{\epsilon}} = \bc{\mathcal{S}, \mathcal{A}, T^{\br{\epsilon}}, \gamma, P_0}$ with variable $\epsilon$ (see Figure~\ref{fig:worst_case_mdp}). The state space is $\mathcal{S} = \bc{s_0, s_1, s_1}$, where $s_1$ and $s_2$ are absorbing states. The action space is $\mathcal{A} = \bc{ a_1, a_2}$ and the initial state distribution is $P_0\br{s_0} = 1$. The transition dynamics is defined as: $T^{\br{\epsilon}}(s_1| s_0, a_1) = 1 - \epsilon$, $T^{\br{\epsilon}}(s_2| s_0, a_1) = \epsilon$, $T^{\br{\epsilon}}(s_1| s_0, a_2) = 0$, and $T^{\br{\epsilon}}(s_2| s_0, a_2) = 1$. The true reward function is given by: $R_{\boldsymbol{\theta^*}}\br{s_0} = 0$, $R_{\boldsymbol{\theta^*}}\br{s_1} = 1$, and $R_{\boldsymbol{\theta^*}}\br{s_2} = -1$. We define the learner and the expert environment as: $M^L := M^{\br{0}}$ and $M^L := M^{\br{\epsilon_E}}$. Note that the distance between the two transition dynamics is $d_\mathrm{dyn} \br{T^L, T^E} = 2 \epsilon_E$. Let $\pi^\mathrm{pl}$ and $\pi_2 := \pi^{\mathrm{soft}}_{M_{\boldsymbol{\theta_E}}^L}$ be the policies returned by Algorithm~\ref{alg:MaxEntIRL} and the MCE IRL algorithm, under the above mismatch. Recall that $\pi_1$ is the policy recovered by the MCE IRL algorithm without this mismatch. Then, the following holds: 

\begin{theorem}
\label{theorem-tightness}
For this example, the performance gap of Algorithm~\ref{alg:MaxEntIRL} vanishes by choosing $\alpha = 1 - \frac{d_\mathrm{dyn} (T^L, T^E)}{2}$, i.e., $\abs{V^{\pi_1}_{M_{\boldsymbol{\theta^*}}^L} - V^{\pi^\mathrm{pl}}_{M_{\boldsymbol{\theta^*}}^L}} = 0$. Whereas, the performance gap of the standard MCE IRL is given by: $\abs{V^{\pi_1}_{M^L_{\boldsymbol{\theta^*}}} - V^{\pi_2}_{M^L_{\boldsymbol{\theta^*}}}} ~=~ \frac{\gamma}{1 - \gamma} \cdot d_\mathrm{dyn} (T^L, T^E)$.
\end{theorem}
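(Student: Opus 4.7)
The plan is a direct case analysis on the three-state example, computing both sides of the claimed equality in closed form. First I would determine the expert occupancy measures: in $M^L$ the optimal expert plays $a_1$ deterministically and hits $s_1$ forever, giving $\boldsymbol{\rho}^{\pi^{*}_{M^L_{\boldsymbol{\theta^*}}}}_{M^L} = (1-\gamma,\, \gamma,\, 0)$; in $M^E$ the expected return of $a_1$ is $1-2\epsilon_E > -1$, so $a_1$ remains optimal, but the noisy transition yields $\boldsymbol{\rho}^{\pi^{*}_{M^E_{\boldsymbol{\theta^*}}}}_{M^E} = (1-\gamma,\, \gamma(1-\epsilon_E),\, \gamma\epsilon_E)$. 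Since $T^L$ and $T^E$ differ only at $(s_0, a_1)$, we have $d_\mathrm{dyn}(T^L, T^E) = 2\epsilon_E$, so the prescribed robustness level is $\alpha = 1 - \epsilon_E$.

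For the standard MCE IRL gap, because $s_2$ is reachable in $M^L$ only via $a_2$ at $s_0$, the feasibility constraint of \eqref{opt_start} with the mismatched occupancy pins down $\pi_2(a_2|s_0) = \epsilon_E$, which yields $V^{\pi_2}_{M^L_{\boldsymbol{\theta^*}}} = \frac{\gamma}{1-\gamma}(1-2\epsilon_E)$ by direct summation. The no-mismatch benchmark $\pi_1$ must reproduce $(1-\gamma,\, \gamma,\, 0)$, which forces $\pi_1(a_2|s_0) = 0$ and thus $V^{\pi_1}_{M^L_{\boldsymbol{\theta^*}}} = \frac{\gamma}{1-\gamma}$. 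Subtracting gives the stated $\frac{\gamma}{1-\gamma} \cdot d_\mathrm{dyn}(T^L, T^E)$.

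For the robust MCE IRL piece I would analyze the two-player game~\eqref{equivalence_new} at $s_0$ with $\alpha = 1-\epsilon_E$. The adversary's best response against a player who aims for the rewarding state $s_1$ is to push mass to $s_2$, i.e.\ $\pi^{\mathrm{op}}(a_2|s_0) = 1$, and the player's best response against that adversary is $\pi^{\mathrm{pl}}(a_1|s_0) = 1$. The mixture $\alpha \pi^{\mathrm{pl}} + (1-\alpha) \pi^{\mathrm{op}}$ executed under $T^L$ then produces exactly the occupancy $(1-\gamma,\, \gamma(1-\epsilon_E),\, \gamma\epsilon_E) = \boldsymbol{\rho}^{\pi^{*}_{M^E_{\boldsymbol{\theta^*}}}}_{M^E}$, so feasibility of~\eqref{eq:robust-mce-irl-primal-form} is satisfied. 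Deploying the learned $\pi^{\mathrm{pl}}$ alone in the actual $M^L$ (no adversary at deployment time) gives $V^{\pi^{\mathrm{pl}}}_{M^L_{\boldsymbol{\theta^*}}} = \frac{\gamma}{1-\gamma} = V^{\pi_1}_{M^L_{\boldsymbol{\theta^*}}}$, which closes the gap.

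The main obstacle is that enforcing $\pi(a_2|s_0) = 0$ exactly via a soft-optimal policy requires the dual parameter $\boldsymbol{\theta}$ to diverge in the $a_1$ direction at $s_0$, so both $\pi_1$ and the player component of $\pi^{\mathrm{pl}}$ are realized only as limits. I would handle this by approximating the deterministic occupancies by $\delta$-perturbed stochastic targets, solving the strictly feasible MCE IRL and robust MCE IRL programs at each $\delta$, and passing $\delta \to 0$ using continuity of the value function in the policy; Theorem~\ref{thm:occ_states} certifies feasibility of every intermediate program. A minor auxiliary check is that the saddle of the entropy-regularized game on this MDP is indeed $(a_1, a_2)$, which follows from monotonicity of the regularized value in the joint probability mass placed on $s_1$.
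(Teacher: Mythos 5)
Your proposal is correct and follows essentially the same route as the paper's proof: compute the expert occupancy measures in $M^E$, use the occupancy-matching constraint to pin down $\pi_2$ (for $\alpha=1$) and the mixture $\alpha\pi^{\mathrm{pl}}+(1-\alpha)\pi^{\mathrm{op}}$ with the opponent playing $a_2$ (for $\alpha=1-\epsilon_E$), and then evaluate the deployed policies in $M^L_{\boldsymbol{\theta^*}}$. The one place you go beyond the paper is the $\delta$-perturbation limit argument for realizing the deterministic occupancy targets via soft-optimal policies; the paper silently works with the matching policies directly (the values depend only on the matched occupancies, so the conclusion is unaffected), but your extra care there is legitimate.
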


\section{Experiments}
\label{sec:experiments}

This section demonstrates the superior performance of our Algorithm~\ref{alg:MaxEntIRL} compared to the standard MCE IRL algorithm, when there is a transition dynamics mismatch between the expert and the learner. All the missing figures and hyper-parameter details are reported in Appendix~\ref{appendix:experiments}. 


\textbf{Setup.} 
Let $M^{\mathrm{ref}}_{\boldsymbol{\theta^*}} = \br{\mathcal{S}, \mathcal{A}, T^{\mathrm{ref}}, \gamma, P_0, R_{\boldsymbol{\theta^*}}}$ be a reference MDP. Given a \emph{learner noise} $\epsilon_L \in \bs{0,1}$, we introduce a learner MDP without reward function as $M^{L,\epsilon_L} = \br{\mathcal{S}, \mathcal{A}, T^{L, \epsilon_L}, \gamma, P_0}$, where $T^{L, \epsilon_L} \in \Delta_{\mathcal{S}|\mathcal{S},\mathcal{A}}$ is defined as $T^{L, \epsilon_L} := (1 - \epsilon_L) T^{\mathrm{ref}} + \epsilon_L \bar{T}$ with $\bar{T} \in \Delta_{\mathcal{S}|\mathcal{S},\mathcal{A}}$. Similarly, given an \emph{expert noise} $\epsilon_E \in \bs{0,1}$, we define an expert MDP $M^{E,\epsilon_E}_{\boldsymbol{\theta
^*}} = \br{\mathcal{S}, \mathcal{A}, T^{E, \epsilon_E}, \gamma, P_0, R_{\boldsymbol{\theta^*}}}$, where $T^{E, \epsilon_E} \in \Delta_{\mathcal{S}|\mathcal{S},\mathcal{A}}$ is defined as $T^{E, \epsilon_E} := (1 - \epsilon_E) T^{\mathrm{ref}} + \epsilon_E \bar{T}$ with $\bar{T} \in \Delta_{\mathcal{S}|\mathcal{S},\mathcal{A}}$. Note that a pair $(\epsilon_E,\epsilon_L)$ corresponds to an IRL problem under dynamics mismatch, where the expert acts in the MDP $M^{E,\epsilon_E}_{\boldsymbol{\theta
^*}}$ and the learner in $M^{L,\epsilon_L}$. In our experiments, we set $T^{\mathrm{ref}}$ to be deterministic, and  $\bar{T}$ to be uniform. Then, one can easily show that $d_\mathrm{dyn} \br{T^{L,\epsilon_L}, T^{E, \epsilon_E}} = 2\br{1 - \frac{1}{\abs{\mathcal{S}}}}\abs{\epsilon_L - \epsilon_E}$. The learned policies are evaluated in the MDP $M^{L, \epsilon_L}_{\boldsymbol{\theta^*}}$, i.e., $M^{L, \epsilon_L}$ endowed with the true reward function $R_{\boldsymbol{\theta^*}}$.

\textbf{Baselines.}
We are not aware of any comparable prior IRL work that exactly matches our setting: (i) only one shot access to the expert environment, and (ii) do not explicitly model the expert environment. Note that Algorithm 2 in~\cite{chen2016adversarial} requires online access to $T^E$ (or the expert environment) to empirically estimate the gradient for every (time step) adversarial expert policy $\check{\pi}^*$, whereas we do not access the expert environment after obtaining a batch of demonstrations, i.e., $\boldsymbol{\rho}$. Thus, for each pair $(\epsilon_E,\epsilon_L)$, we compare the performance of the following: (i) our robust MCE IRL algorithm with different values of $\alpha \in \bc{0.8, 0.85, 0.9, 0.95}$, (ii) the standard MCE IRL algorithm, and (iii) the ideal baseline that utilizes the knowledge of the true reward function, i.e, $\pi^*_{M^{L,\epsilon_L}_{\boldsymbol{\theta^*}}}$.

\begin{figure*}[t] 
\centering
\vspace{-\intextsep}
\begin{subfigure}{0.24\textwidth}
\includegraphics[width=\linewidth]{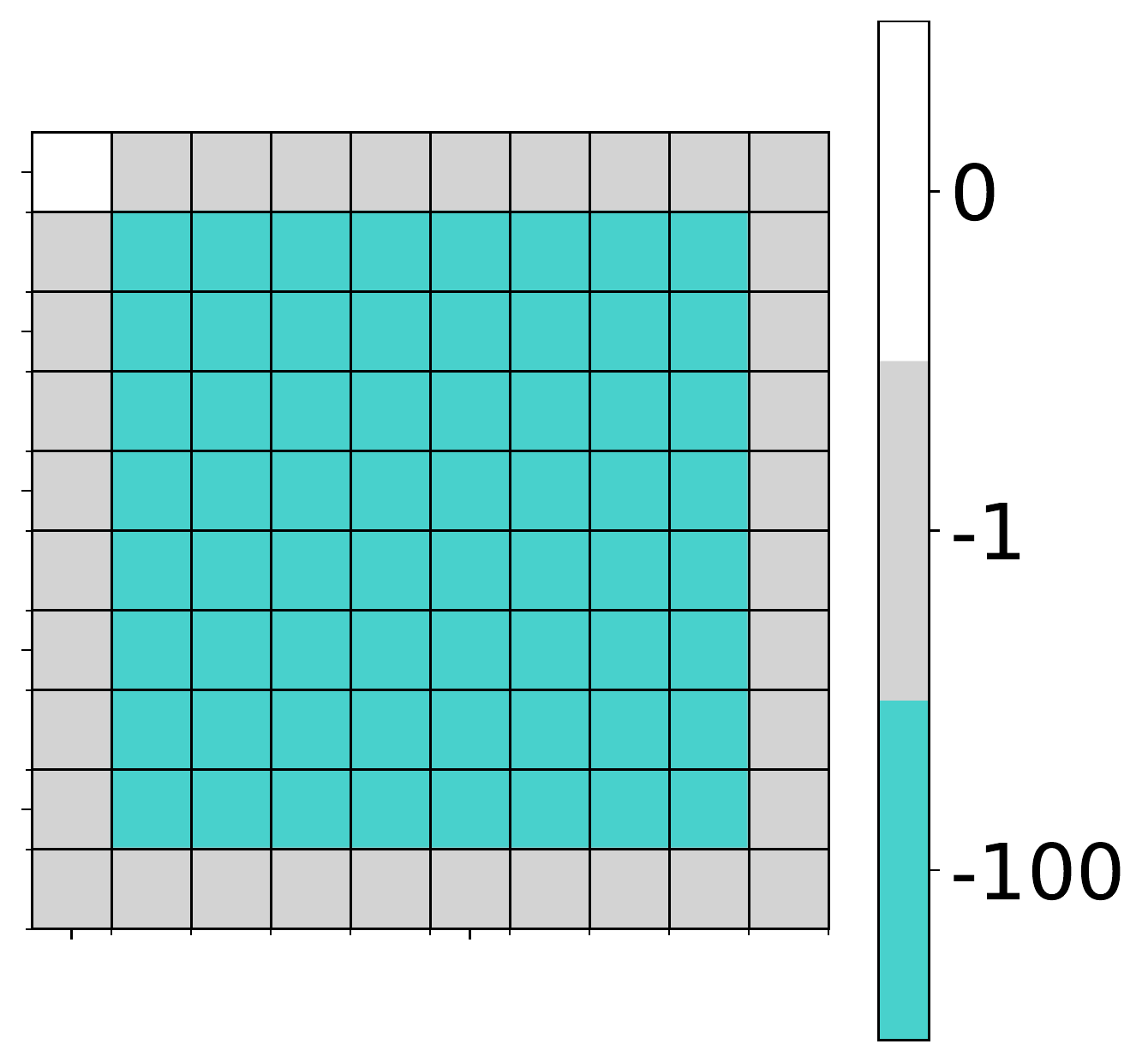}
\caption{\textsc{GridWorld-1}} \label{fig:grid1}
\end{subfigure}\hspace*{\fill}
\begin{subfigure}{0.24\textwidth}
\includegraphics[width=\linewidth]{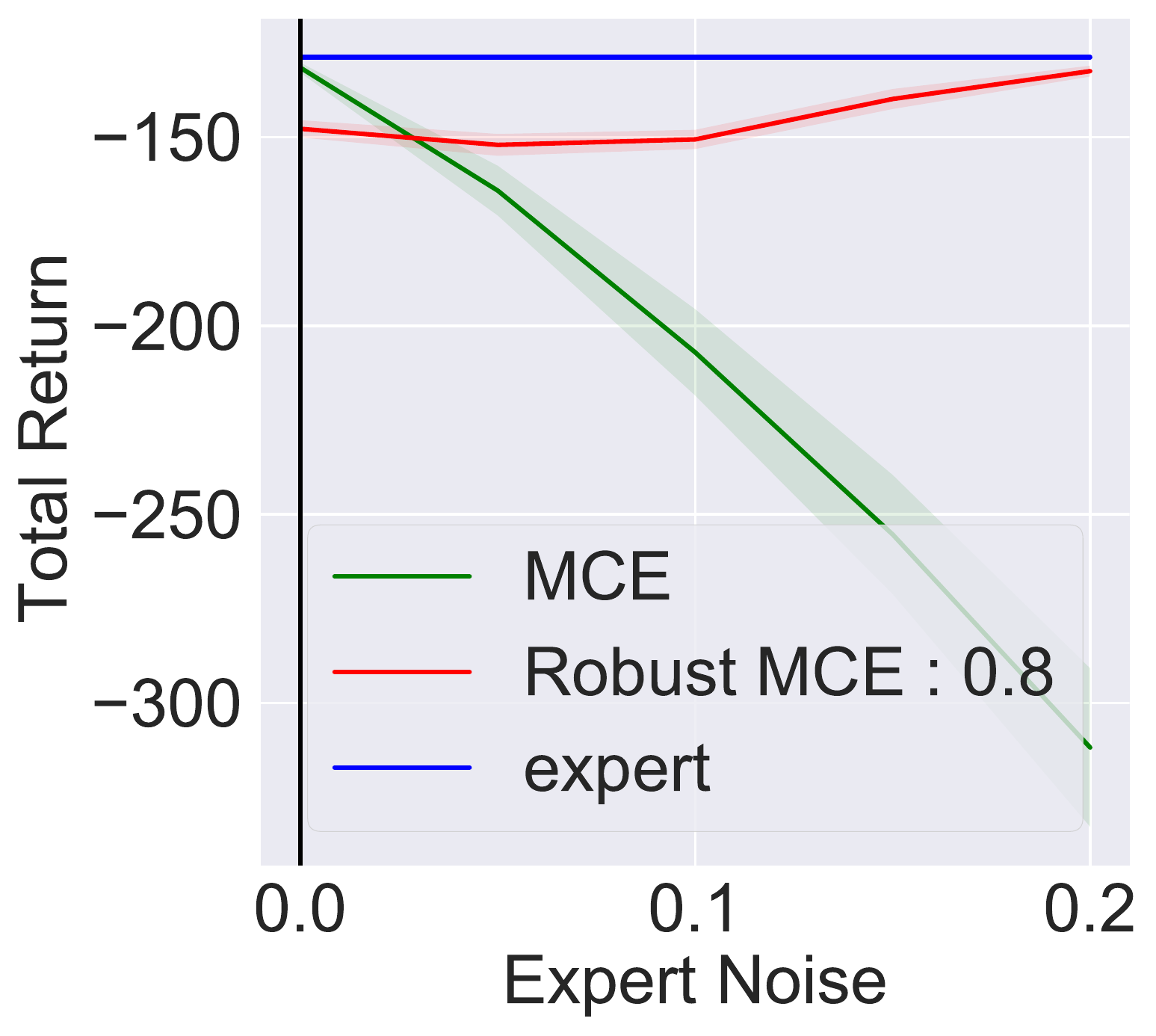}
\caption{\textsc{GrW} $\epsilon_L = 0$} \label{fig:grid1_noise_0}
\end{subfigure}\hspace*{\fill}
\begin{subfigure}{0.24\textwidth}
\includegraphics[width=\linewidth]{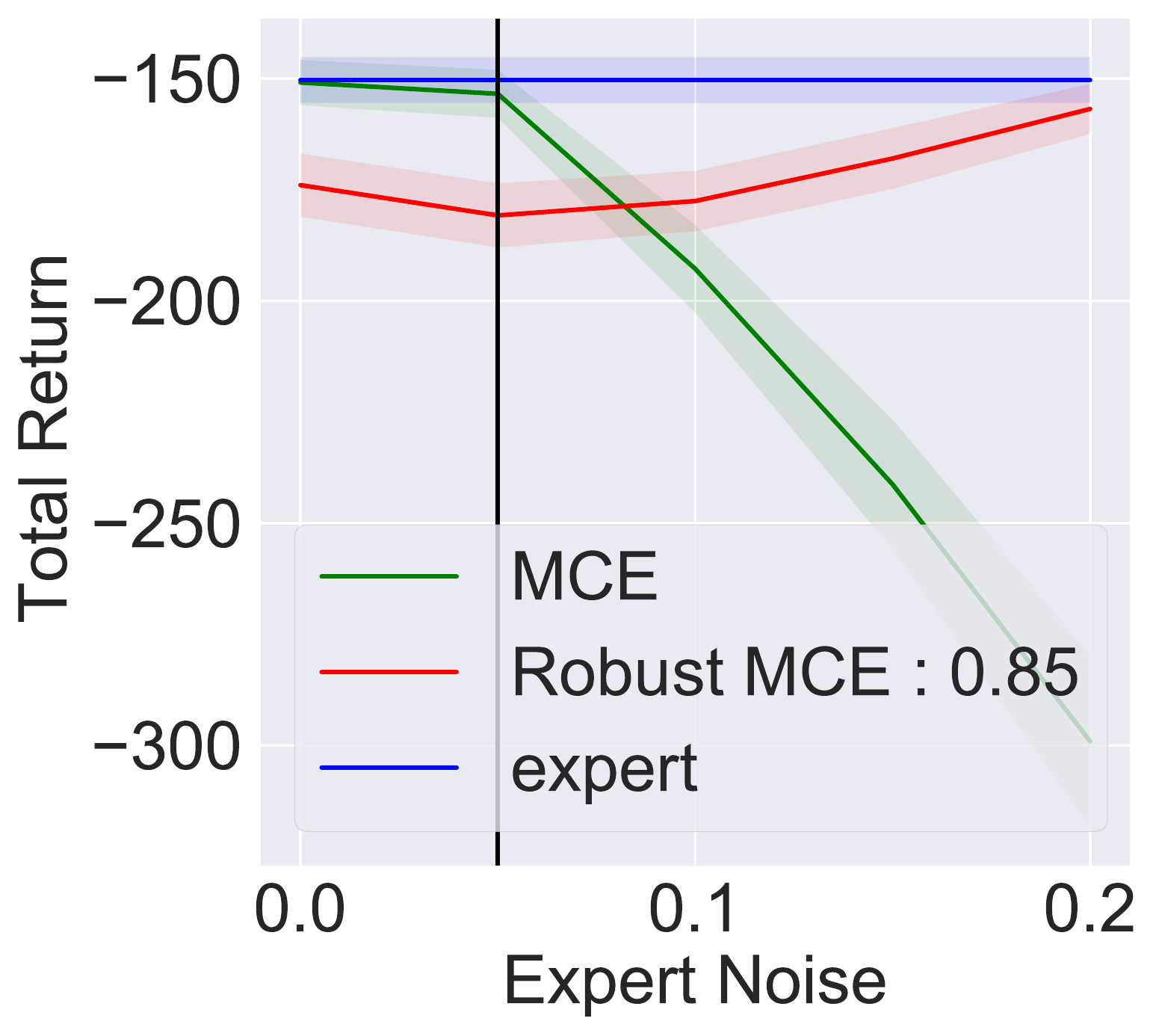}
\caption{\textsc{GrW} $\epsilon_L = 0.05$} \label{c}
\end{subfigure}\hspace*{\fill}
\begin{subfigure}{0.24\textwidth}
\includegraphics[width=\linewidth]{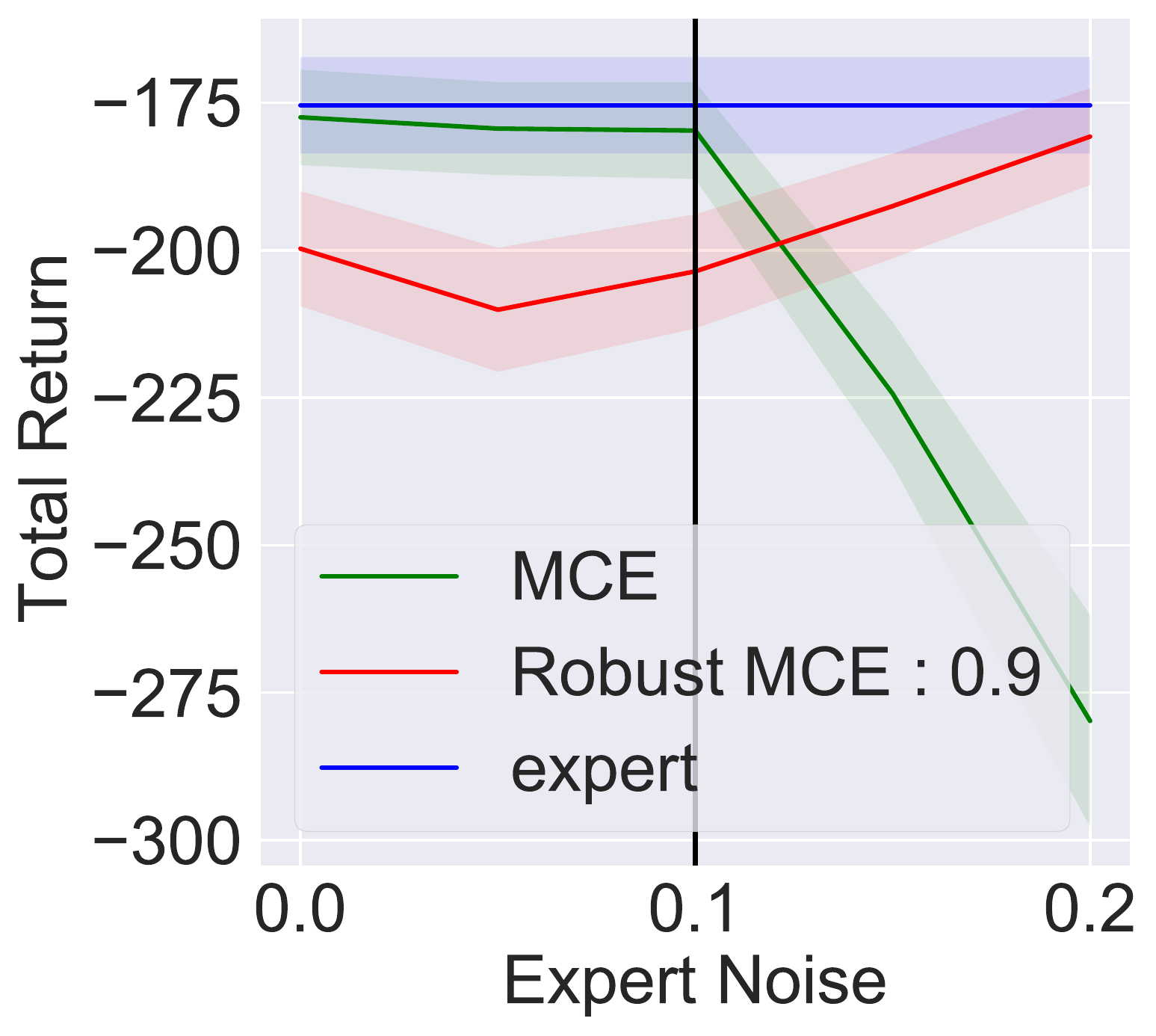}
\caption{\textsc{GrW} $\epsilon_L = 0.1$} \label{d}
\end{subfigure}
\medskip
\begin{subfigure}{0.24\textwidth}
\includegraphics[width=\linewidth]{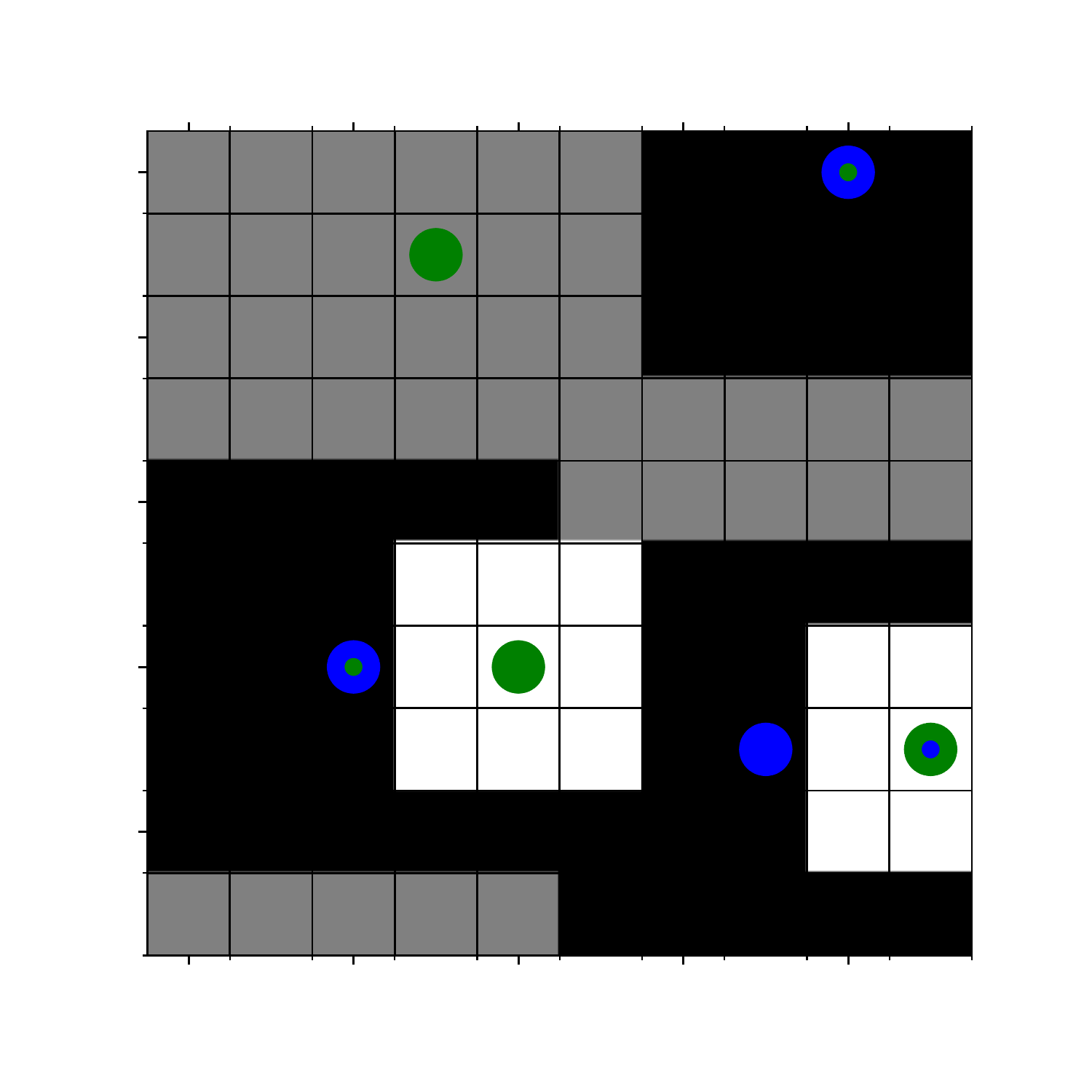}
\caption{\textsc{ObjectWorld}} \label{fig:obj_world_main}
\end{subfigure}\hspace*{\fill}
\begin{subfigure}{0.24\textwidth}
\includegraphics[width=\linewidth]{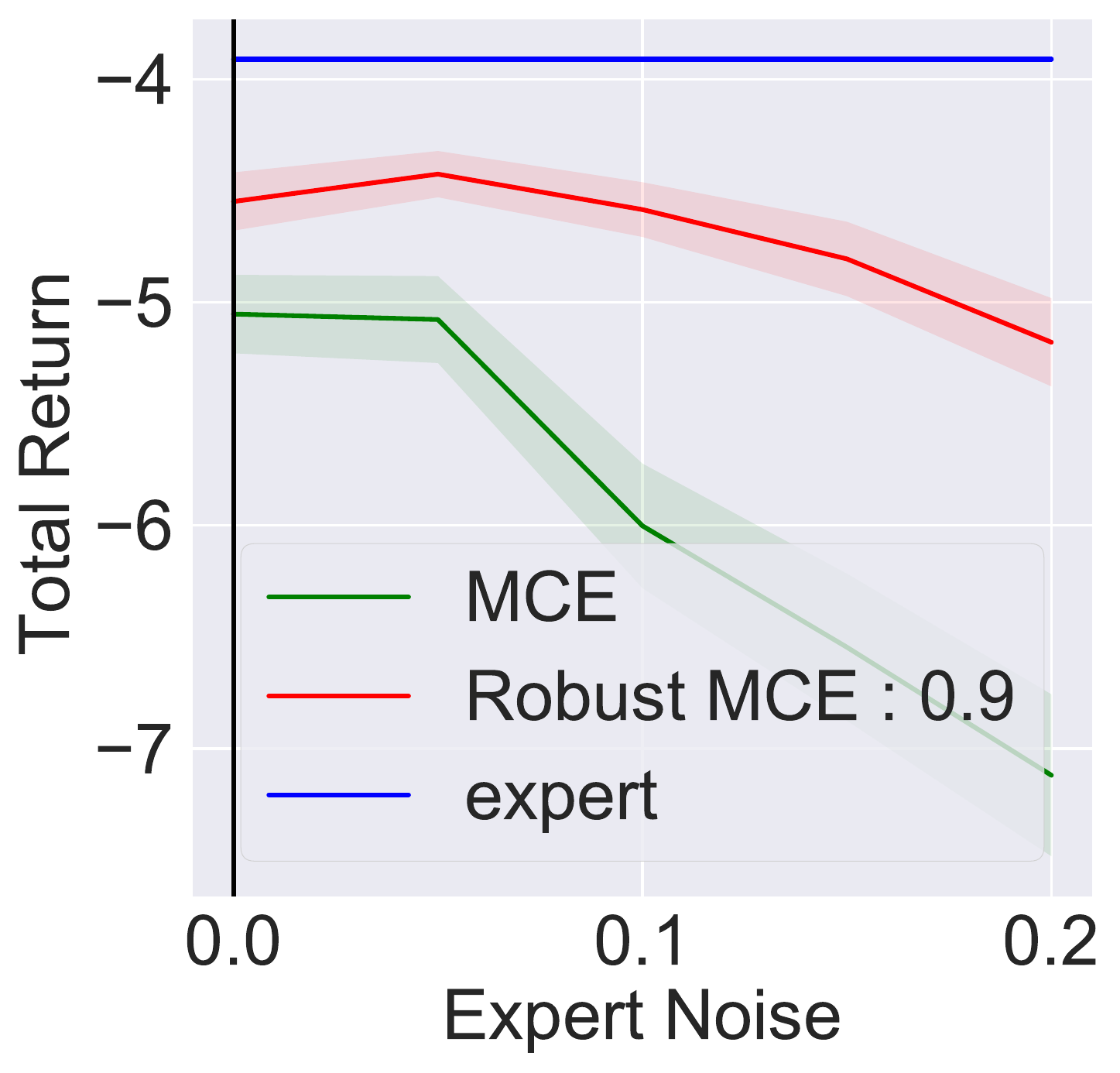}
\caption{\textsc{ObW} $\epsilon_L = 0$} \label{f}
\end{subfigure}\hspace*{\fill}
\begin{subfigure}{0.24\textwidth}
\includegraphics[width=\linewidth]{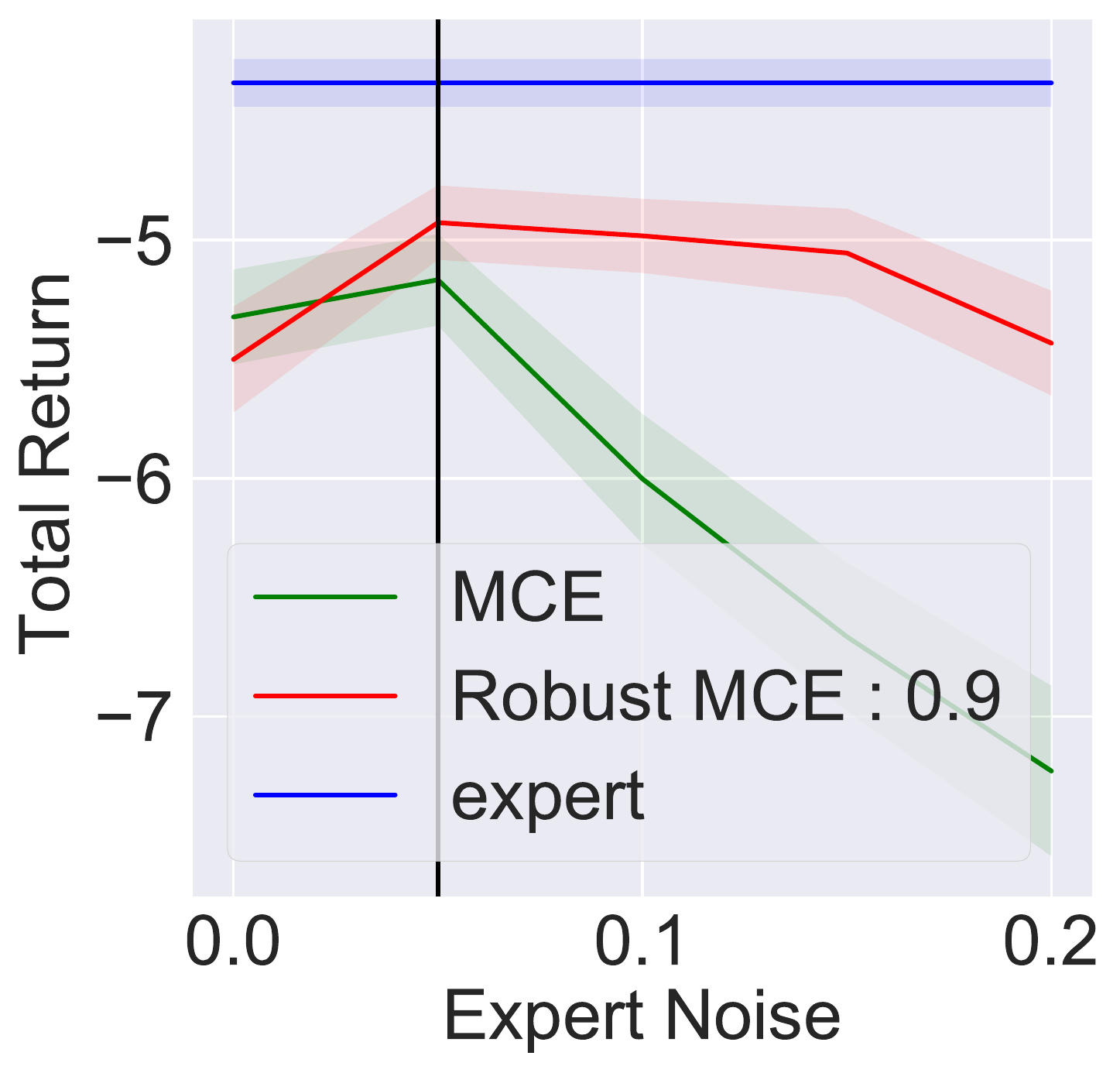}
\caption{\textsc{ObW} $\epsilon_L = 0.05$} \label{g}
\end{subfigure}\hspace*{\fill}
\begin{subfigure}{0.24\textwidth}
\includegraphics[width=\linewidth]{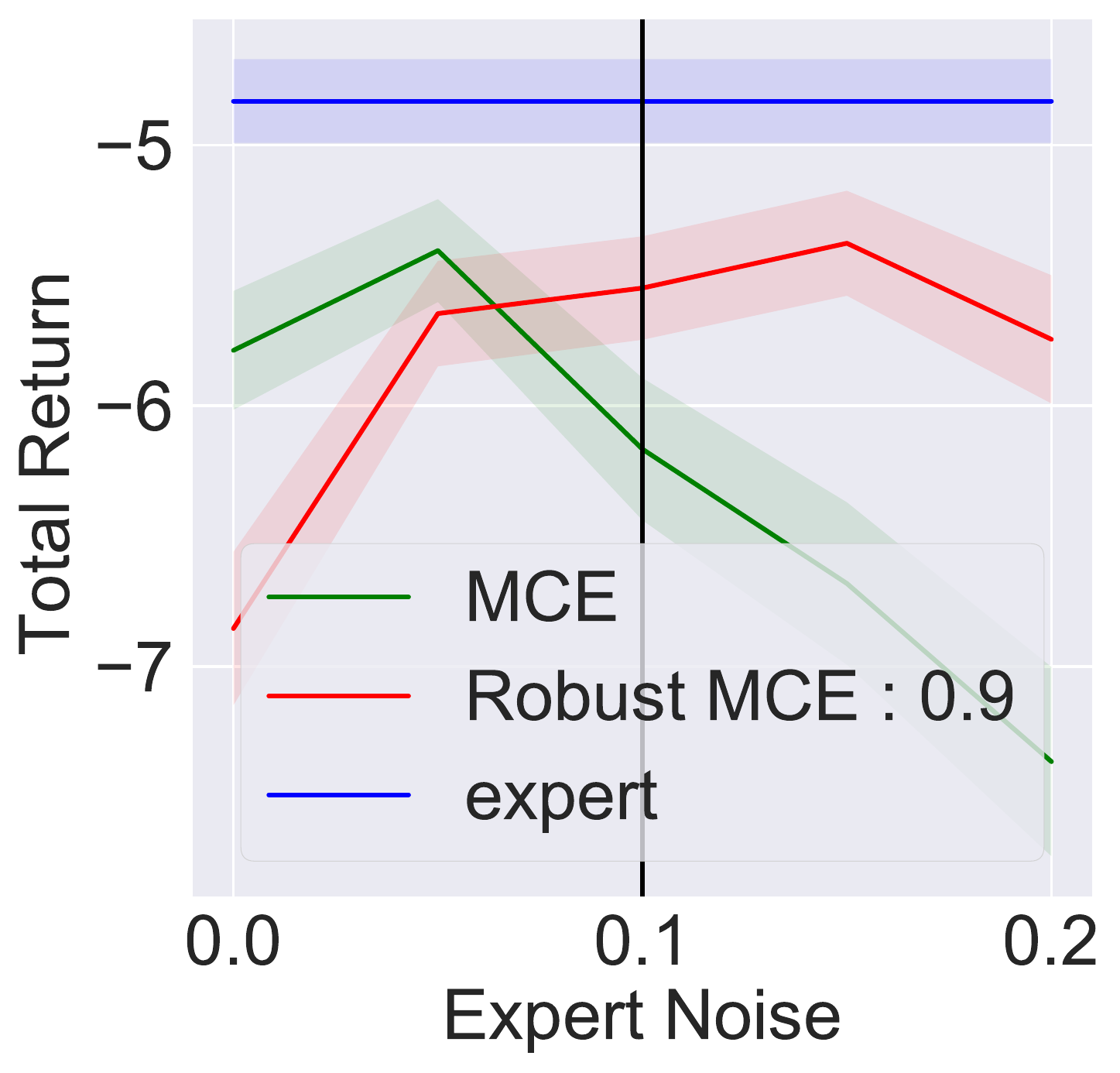}
\caption{\textsc{ObW} $\epsilon_L = 0.1$} \label{h}
\end{subfigure}
\caption{Comparison of the performance our Algorithm~\ref{alg:MaxEntIRL} against the baselines, under different levels of mismatch: $\br{\epsilon_E, \epsilon_L} \in \bc{0.0, 0.05, 0.1, 0.15, 0.2} \times \bc{ 0.0, 0.05, 0.1}$. Each plot corresponds to a fixed leaner environment $M^{L,\epsilon_L}$ with $\epsilon_L \in \bc{ 0.0, 0.05, 0.1}$. The values of $\alpha$ used for Algorithm~\ref{alg:MaxEntIRL} are reported in the legend. The vertical line indicates the position of the learner environment in the x-axis. We abbreviated the environment names as \textsc{GrW}, and \textsc{ObW}. Note that our Robust MCE IRL outperforms standard MCE IRL when the expert noise increases along the x-axis. At the same time, Robust MCE IRL might perform slightly worse in the low expert noise regime. This observation aligns with the overly conservative nature of robust training methods.}
\vspace{-\intextsep}
\label{fig:main_paper_best_alpha}
\end{figure*}

\textbf{Environments.}  
We consider four \textsc{GridWorld} environments and an \textsc{ObjectWorld}~\cite{levine2011nonlinear} environment. All of them are $N \times N$ grid, where a cell represents a state. There are four actions per state, corresponding to steps in one of the four cardinal directions; $T^{\mathrm{ref}}$ is defined accordingly. \textsc{GridWorld} environments are endowed with a linear reward function $R_{\boldsymbol{\theta^*}}(s) = \ip{\boldsymbol{\theta^*}}{\boldsymbol{\phi}(s)}$, where $\boldsymbol{\phi}$ is a one-hot feature map. The entries $\boldsymbol{\theta^*_s}$ of the parameter $\boldsymbol{\theta^*}$ for each state $s \in \mathcal{S}$ are shown in Figures~\ref{fig:grid1},~\ref{fig:grid2},~\ref{fig:grid3},~and~\ref{fig:grid4}. \textsc{ObjectWorld} is endowed with a non-linear reward function, determined by the distance of the agent to the objects that are randomly placed in the environment. Each object has an outer and an inner color; however, only the former plays a role in determining the reward while the latter serves as a distractor. The reward is $-2$ in positions within three cells to an outer blue object (black areas of Figure~\ref{fig:obj_world_main}), $0$ if they are also within two cells from an outer green object (white areas), and $-1$ otherwise (gray areas). We shift the rewards originally proposed by~\cite{levine2011nonlinear} to non-positive values, and we randomly placed the goal state in a white area. We also modify the reward features by augmenting them with binary features indicating whether the goal state has been reached. These changes simplify the application of the MCE IRL algorithm in the infinite horizon setting. For this non-linear reward setting, we used the deep MCE IRL algorithm from~\cite{wulfmeier2015maximum}, where the reward function is parameterized by a neural network. 


\textbf{Results.} 
\looseness-1In Figure~\ref{fig:main_paper_best_alpha}, we have presented the results for two of the environments, and the complete results can be found in Figure~\ref{fig:all_gridworld_best_alpha}. Also, in Figure~\ref{fig:main_paper_best_alpha}, we have reported the results of our algorithm with the best performing value of $\alpha$; and the performance of our algorithm with different values of $\alpha$ are presented in Figure~\ref{fig:gridworld_diff_alpha}. In all the plots, every point in the x-axis corresponds to a pair $\br{\epsilon_E, \epsilon_L}$. For example, consider Figure~\ref{fig:grid1_noise_0}, for a fixed learner environment $M^{L,\epsilon_L}$ with $\epsilon_L = 0$, and different expert environments $M^{E,\epsilon_E}$ by varying $\epsilon_E$ along the x-axis. Note that, in this figure, the distance $d_\mathrm{dyn} \br{T^{L,\epsilon_L}, T^{E, \epsilon_E}} \propto \abs{\epsilon_L - \epsilon_E}$ increases along the x-axis. For each pair $\br{\epsilon_E, \epsilon_L}$, in the y-axis, we present the performance of the learned polices in the MDP $M^{L, \epsilon_L}_{\boldsymbol{\theta^*}}$, i.e., $V^\pi_{M^{L, \epsilon_L}_{\boldsymbol{\theta^*}}}$. In alignment with our theory, the performance of the standard MCE IRL algorithm degrades along the x-axis. Whereas, our Algorithm~\ref{alg:MaxEntIRL} resulted in robust performance (even closer to the ideal baseline) across different levels of mismatch. These results confirm the efficacy of our method under mismatch. However, one has to carefully choose the value of $1-\alpha$ (s.t. $T^{E, \epsilon_E} \in \mathcal{T}^{L, \alpha}$): (i) underestimating it would lead to a linear decay in the performance, similar to the MCE IRL, (ii) overestimating it would also slightly hinder the performance, and (iii) given a rough estimate $\widehat T^E$ of the expert dynamics, choosing $1-\alpha \approx \frac{d_\mathrm{dyn} \br{T^L, \widehat T^E}}{2}$ would lead to better performance in practice. The potential drop in the performance of our Robust MCE IRL method under the low expert noise regime (see Figures~\ref{c},~\ref{d},~and~\ref{h}) can be related to the overly conservative nature of robust training. See Appendix~\ref{app:alpha-choice} for more discussion on the choice of $1-\alpha$. In addition, we have tested our method on a setting with low-dimensional feature mapping $\boldsymbol{\phi}$, where we observed significant improvement over the standard MCE IRL (see Appendix~\ref{app:low-dim-exp}).  

\section{Extension to Continuous MDP Setting}
\label{sec:experiments_continuous_control_main}

In this section, we extend our ideas to the continuous MDP setting, i.e., the environments with continuous state and action spaces. In particular, we implement a robust variant of the Relative Entropy IRL (RE IRL)~\cite{boularias2011relative} algorithm (see Algorithm~\ref{alg:RobustREIRL} in Appendix~\ref{sec:experiments_continuous_control}). We cannot use the dynamic programming approach to find the player and opponent policies in the continuous MDP setting. Therefore, we solve the two-player Markov game in a model-free manner using the policy gradient methods (see Algorithm~\ref{alg:TwoplayersPolicyGrad} in Appendix~\ref{sec:experiments_continuous_control}).

We evaluate the performance of our Robust RE IRL method on a continuous gridworld environment that we called \textsc{GaussianGrid}. The details of the environment and the experimental setup are given in Appendix~\ref{sec:experiments_continuous_control}. The results are reported in Figure~\ref{fig:gaussian-gridworld}, where we notice that our Robust RE IRL method outperforms standard RE IRL.

\begin{figure}[t!] 
\centering
\vspace{-\intextsep}
\begin{subfigure}{0.24\textwidth}
\includegraphics[width=0.99\linewidth]{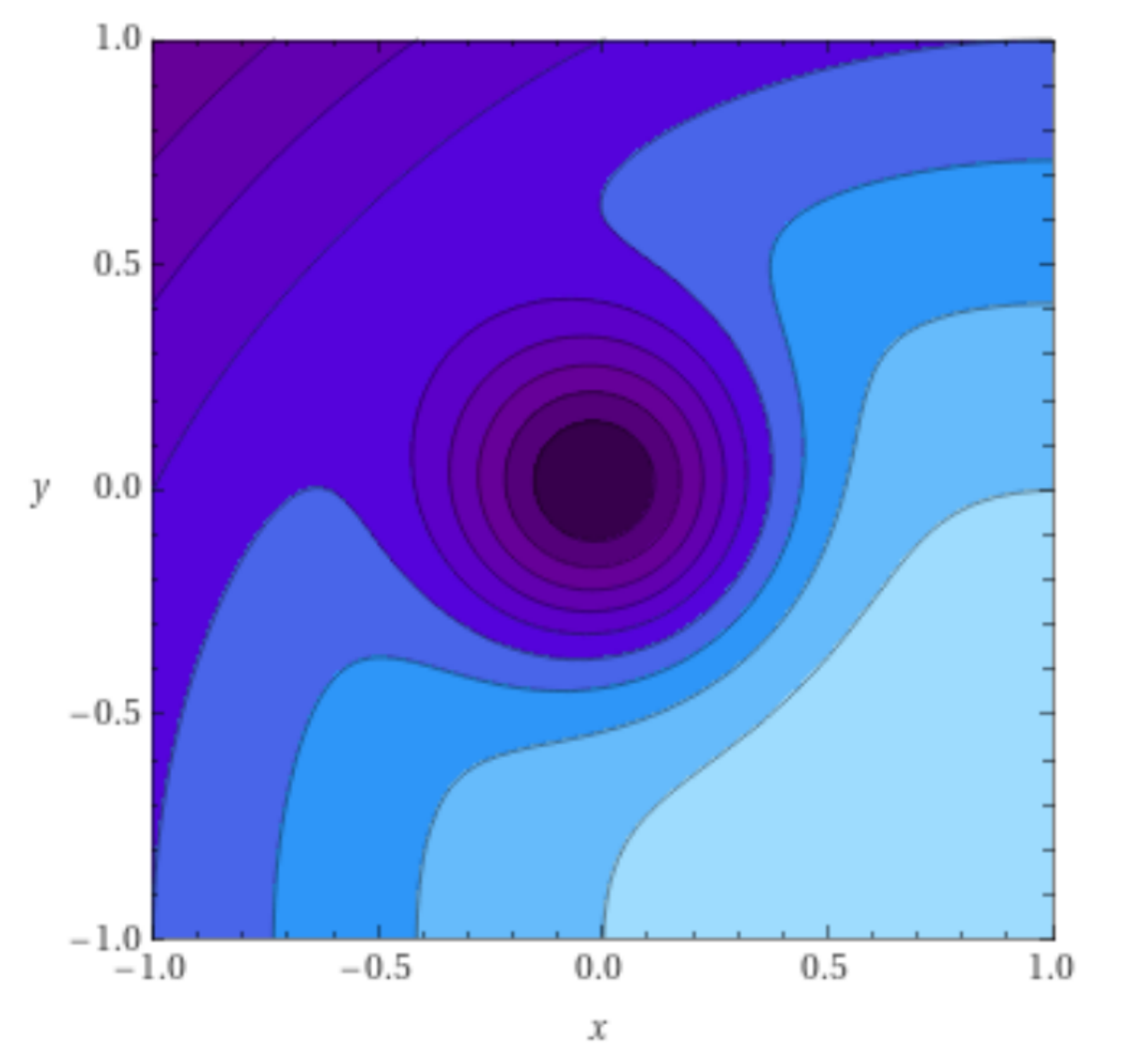}
\caption{\textsc{GaussianGrid}} \label{fig:gauss_grid}
\end{subfigure}
\begin{subfigure}{0.24\textwidth}
\includegraphics[width=\linewidth]{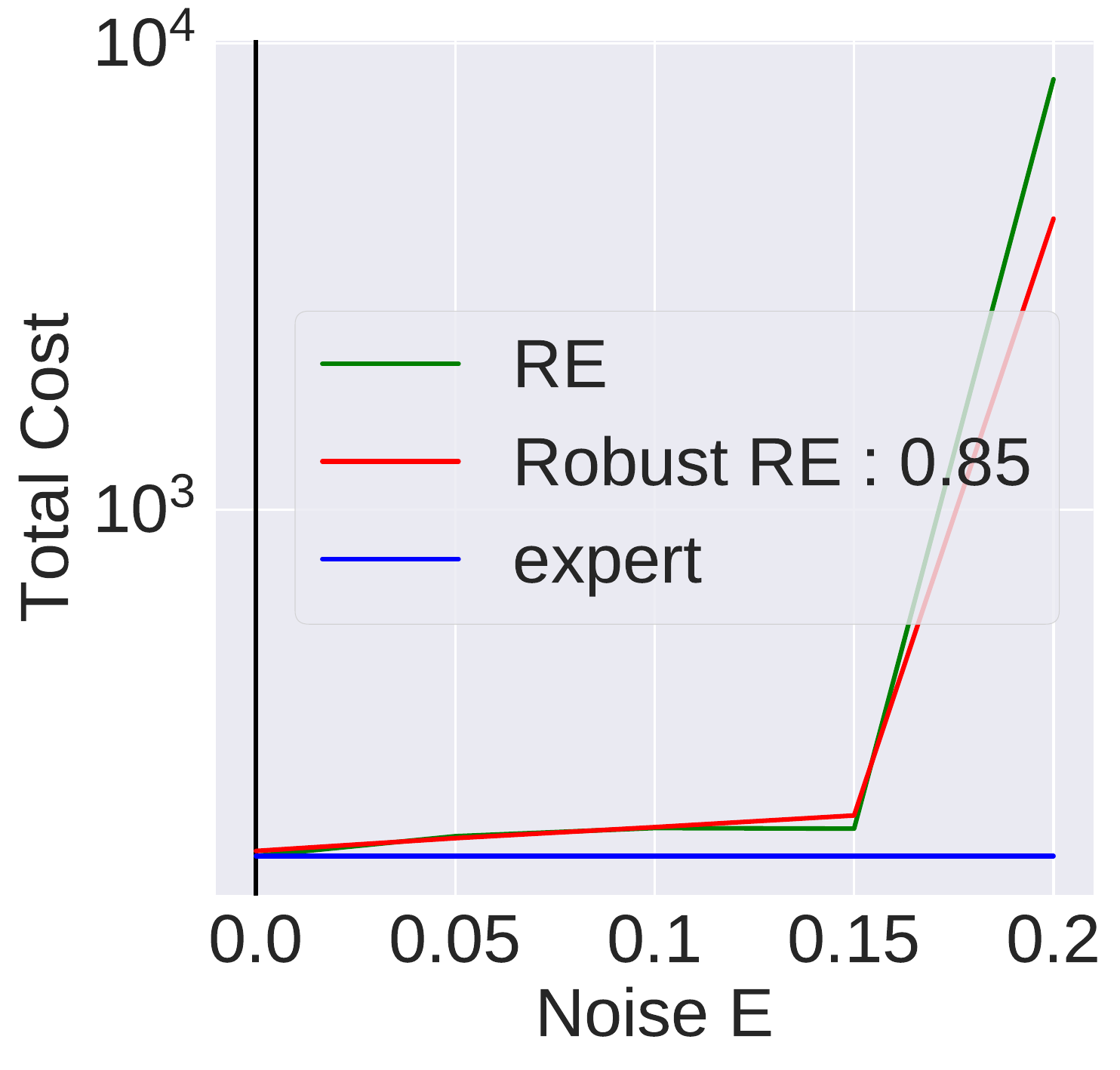}
\caption{$M^{L,\epsilon_L}$ with $\epsilon_L = 0$} \label{fig:gauss_gridl0.0}
\end{subfigure}\hspace*{\fill}
\begin{subfigure}{0.24\textwidth}
\includegraphics[width=\linewidth]{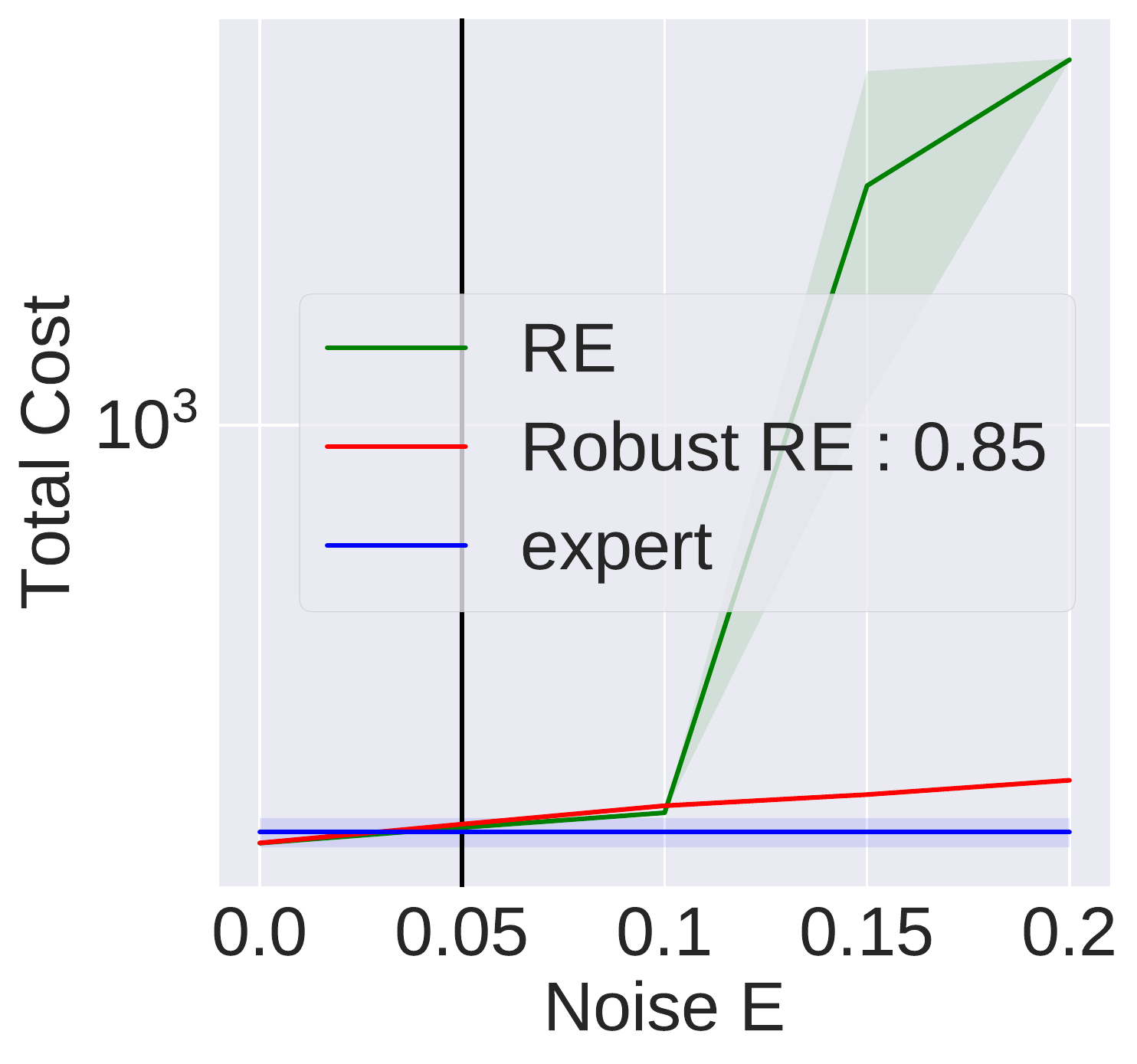}
\caption{$M^{L,\epsilon_L}$ with $\epsilon_L = 0.05$} \label{fig:gauss_gridl0.05}
\end{subfigure}\hspace*{\fill}
\begin{subfigure}{0.24\textwidth}
\includegraphics[width=\linewidth]{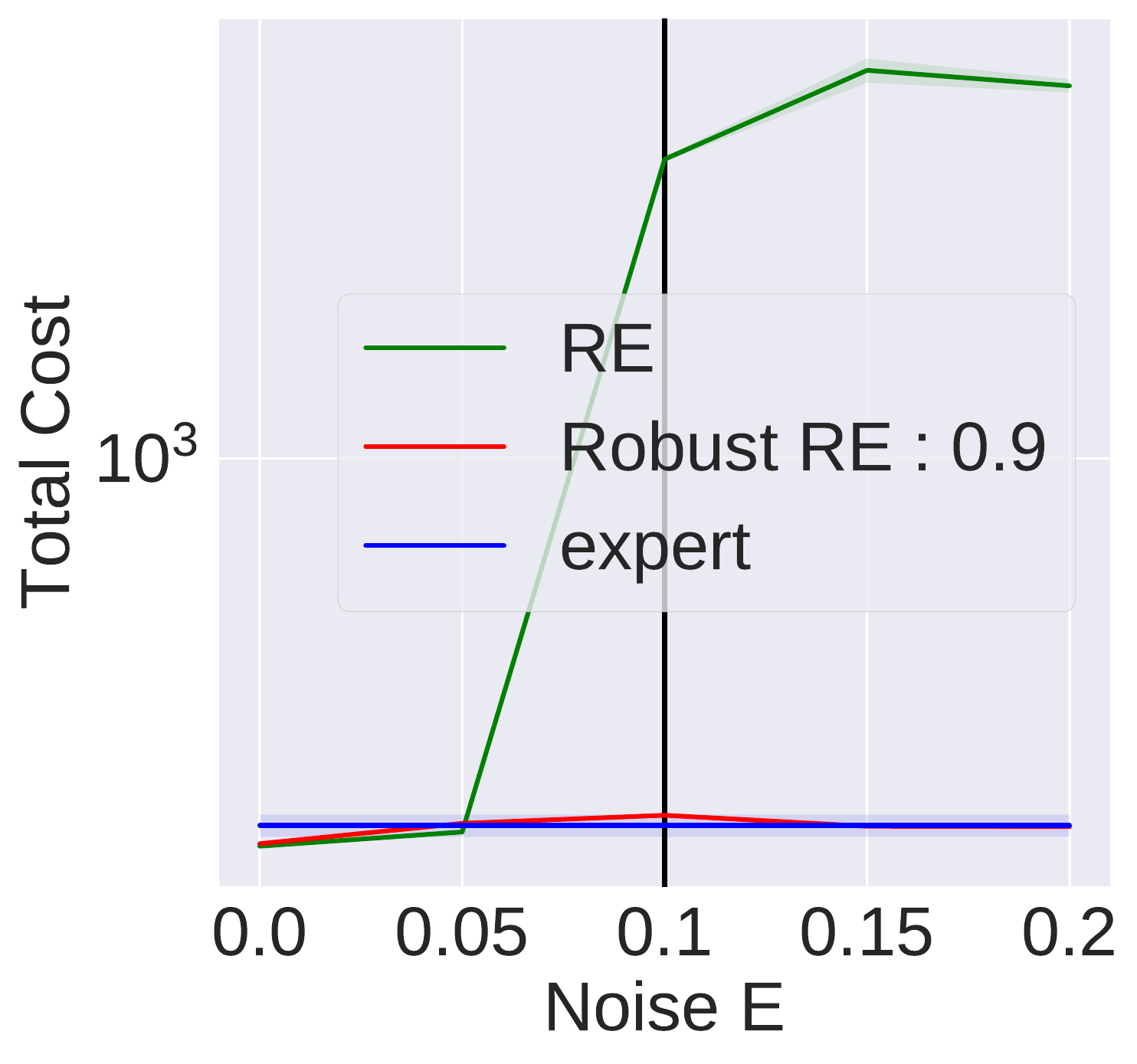}
\caption{$M^{L,\epsilon_L}$ with $\epsilon_L = 0.1$} \label{fig:gauss_gridl0.1}
\end{subfigure}
\caption{Comparison of the performance our Robust RE IRL (Algorithm~\ref{alg:RobustREIRL}) against the standard RE IRL, under different levels of mismatch: $\br{\epsilon_E, \epsilon_L} \in \bc{0.0, 0.05, 0.1, 0.15, 0.2} \times \bc{ 0.0, 0.05, 0.1}$. Each plot corresponds to a fixed leaner environment $M^{L,\epsilon_L}$ with $\epsilon_L \in \bc{ 0.0, 0.05, 0.1}$. The values of $\alpha$ used for Algorithm~\ref{alg:RobustREIRL} are reported in the legend. The vertical line indicates the position of the learner environment in the x-axis. The results are averaged across $5$ seeds.} 
\vspace{-\intextsep}
\label{fig:gaussian-gridworld}
\end{figure}

\section{Related Work}\label{sec:relatedwork}

In the context of forward RL, there are works that build on the robust MDP framework~\cite{iyengar2005robust,nilim2005robust,wiesemann2013robust}, for example,~\cite{shashua2017deeprobust,peng2018sim,mankowitz2019robust}. However, our work is closer to the line of work that leverages on the equivalence between action-robust and robust MDPs~\cite{morimoto2005robust,doyle2013feedback,pinto2017robust,tessler2019action,kamalaruban2020robust}. To our knowledge, this is the first work to adapt the robust RL methods in the IRL context. Other works study the IRL problem under a mismatch between the learner and the expert's worldviews~\cite{haug2018teaching,tschiatschek2019learner}. However, these works do not consider the dynamics mismatch.

Generative Adversarial Imitation Learning (GAIL)~\cite{ho2016generative} and its variants are IRL methods that use a GAN-based reward to align the distribution of the state-action pairs between the expert and the learner. When there is a transition dynamics mismatch, the expert's actions are not quite useful for imitation. \cite{torabi2018generative,sun2019provably}~have considered state only distribution matching when the expert actions are not observable. Building on these works, \cite{gangwani2020stateonly,liu2019state} have studied the imitation learning problem under transition dynamics mismatch. These works propose model-alignment based imitation learning algorithms in the high dimensional settings to address the dynamics mismatch. Finally, our work has the following important differences with AIRL~\cite{fu2018learning}. In AIRL, the learner has access to the expert environment during the training phase, i.e., there is no transition dynamics mismatch during the training phase but only at test time. In contrast, we consider a different setting where the learner can not access the expert environment during the training phase. In addition, AIRL requires input demonstrations containing both states and actions, while our algorithm requires state-only demonstrations.

\section{Conclusions}\label{sec:conclusions}

In this work, we theoretically analyze the MCE IRL algorithm under the transition dynamics mismatch: (i) we derive necessary and sufficient conditions for the existence of solution, and (ii) we provide a tight upper bound on the performance degradation. We propose a robust MCE IRL algorithm and empirically demonstrate its significant improvement over the standard MCE IRL under dynamics mismatch. Even though our Algorithm~\ref{alg:MaxEntIRL} is not essentially different from the standard robust RL methods, it poses additional theoretical challenges in the IRL context compared to the RL setup. In particular, we have proved: (i) the existence of solution for the robust MCE IRL formulation, and (ii) the performance gap improvement of our algorithm compared to the non-robust MCE IRL in a constructive example. We present empirical results for the settings not covered by our theory: MDPs with non-linear reward function and continuous state and action spaces. 


\section*{Code Repository}

\url{https://github.com/lviano/RobustMCE_IRL/tree/master/robustIRLcode}

\begin{ack}
This project has received funding from the European Research Council (ERC) under the European Union's Horizon 2020 research and innovation programme (grant agreement n° 725594 - time-data). Research was sponsored by the Army Research Office and was accomplished under Grant Number W911NF-19-1-0404, by the Department of the Navy, Office of Naval Research (ONR)  under a grant number N62909-17-1-2111 and by Hasler Foundation Program: Cyber Human Systems (project number 16066).

Parameswaran Kamalaruban acknowledges support from The Alan Turing Institute. He carried out part of this work while at LIONS, EPFL.

Adrian Weller acknowledges support from a Turing AI Fellowship under grant EP/V025379/1, The Alan Turing Institute, and the Leverhulme Trust via CFI.
\end{ack}


\bibliographystyle{unsrt}
\bibliography{robust-irl}

\begin{thebibliography}{10}

\bibitem{sutton2000policy}
Richard~S Sutton, David~A McAllester, Satinder~P Singh, and Yishay Mansour.
\newblock Policy gradient methods for reinforcement learning with function
  approximation.
\newblock In {\em Advances in Neural Information Processing Systems
  ({NeurIPS})}, 2000.

\bibitem{silver2014deterministic}
David Silver, Guy Lever, Nicolas Heess, Thomas Degris, Daan Wierstra, and
  Martin Riedmiller.
\newblock Deterministic policy gradient algorithms.
\newblock In {\em Proc.\ Intl Conf.\ on Machine Learning ({ICML})}, 2014.

\bibitem{schulman2015trust}
John Schulman, Sergey Levine, Pieter Abbeel, Michael Jordan, and Philipp
  Moritz.
\newblock Trust region policy optimization.
\newblock In {\em Proc.\ Intl Conf.\ on Machine Learning ({ICML})}, 2015.

\bibitem{schulman2017proximal}
John Schulman, Filip Wolski, Prafulla Dhariwal, Alec Radford, and Oleg Klimov.
\newblock Proximal policy optimization algorithms.
\newblock {\em arXiv preprint arXiv:1707.06347}, 2017.

\bibitem{mnih2015human}
Volodymyr Mnih, Koray Kavukcuoglu, David Silver, Andrei~A Rusu, Joel Veness,
  Marc~G Bellemare, Alex Graves, Martin Riedmiller, Andreas~K Fidjeland, Georg
  Ostrovski, et~al.
\newblock Human-level control through deep reinforcement learning.
\newblock {\em Nature}, 2015.

\bibitem{silver2017mastering}
David Silver, Julian Schrittwieser, Karen Simonyan, Ioannis Antonoglou, Aja
  Huang, Arthur Guez, Thomas Hubert, Lucas Baker, Matthew Lai, Adrian Bolton,
  et~al.
\newblock Mastering the game of go without human knowledge.
\newblock {\em Nature}, 2017.

\bibitem{lillicrap2015continuous}
Timothy~P Lillicrap, Jonathan~J Hunt, Alexander Pritzel, Nicolas Heess, Tom
  Erez, Yuval Tassa, David Silver, and Daan Wierstra.
\newblock Continuous control with deep reinforcement learning.
\newblock In {\em Proc.\ Intl Conf.\ on Learning Representations ({ICLR})},
  2016.

\bibitem{levine2016end}
Sergey Levine, Chelsea Finn, Trevor Darrell, and Pieter Abbeel.
\newblock End-to-end training of deep visuomotor policies.
\newblock {\em The Journal of Machine Learning Research}, 2016.

\bibitem{russell1998learning}
Stuart Russell.
\newblock Learning agents for uncertain environments.
\newblock In {\em Proc.\ Conf.\ on Learning Theory ({COLT})}, 1998.

\bibitem{ng2000algorithms}
Andrew~Y Ng and Stuart Russell.
\newblock Algorithms for inverse reinforcement learning.
\newblock In {\em Proc.\ Intl Conf.\ on Machine Learning ({ICML})}, 2000.

\bibitem{abbeel2004apprenticeship}
Pieter Abbeel and Andrew~Y Ng.
\newblock Apprenticeship learning via inverse reinforcement learning.
\newblock In {\em Proc.\ Intl Conf.\ on Machine Learning ({ICML})}, 2004.

\bibitem{ratliff2006maximum}
Nathan~D Ratliff, J~Andrew Bagnell, and Martin~A Zinkevich.
\newblock Maximum margin planning.
\newblock In {\em Proc.\ Intl Conf.\ on Machine Learning ({ICML})}, 2006.

\bibitem{ziebart2008maximum}
Brian~D Ziebart, Andrew~L Maas, J~Andrew Bagnell, and Anind~K Dey.
\newblock Maximum entropy inverse reinforcement learning.
\newblock In {\em Proc.\ {AAAI} Conference on Artificial Intelligence}, 2008.

\bibitem{syed2008game}
Umar Syed and Robert~E Schapire.
\newblock A game-theoretic approach to apprenticeship learning.
\newblock In {\em Advances in Neural Information Processing Systems
  ({NeurIPS})}, 2008.

\bibitem{boularias2011relative}
Abdeslam Boularias, Jens Kober, and Jan Peters.
\newblock Relative entropy inverse reinforcement learning.
\newblock In {\em Proc.\ Intl Conf.\ on Artificial Intelligence and Statistics
  ({AISTATS})}, 2011.

\bibitem{osa2018algorithmic}
T~Osa, J~Pajarinen, G~Neumann, JA~Bagnell, P~Abbeel, and J~Peters.
\newblock An algorithmic perspective on imitation learning.
\newblock {\em Foundations and Trends in Robotics}, 2018.

\bibitem{dulacarnold2019challenges}
Gabriel Dulac-Arnold, Daniel Mankowitz, and Todd Hester.
\newblock Challenges of real-world reinforcement learning.
\newblock {\em arXiv preprint arXiv:1904.12901}, 2019.

\bibitem{yu2019reinforcement}
Chao Yu, Jiming Liu, and Shamim Nemati.
\newblock Reinforcement learning in healthcare: A survey.
\newblock {\em arXiv preprint arXiv:1908.08796}, 2019.

\bibitem{kiran2020deep}
B~Ravi Kiran, Ibrahim Sobh, Victor Talpaert, Patrick Mannion, Ahmad A~Al
  Sallab, Senthil Yogamani, and Patrick P{\'e}rez.
\newblock Deep reinforcement learning for autonomous driving: A survey.
\newblock {\em arXiv preprint arXiv:2002.00444}, 2020.

\bibitem{reddy2018you}
Sid Reddy, Anca Dragan, and Sergey Levine.
\newblock Where do you think you're going?: Inferring beliefs about dynamics
  from behavior.
\newblock In {\em Advances in Neural Information Processing Systems
  ({NeurIPS})}, 2018.

\bibitem{Gong2020WhatII}
Ze~Gong and Yu~Zhang.
\newblock What is it you really want of me? generalized reward learning with
  biased beliefs about domain dynamics.
\newblock In {\em Proc.\ {AAAI} Conference on Artificial Intelligence}, 2020.

\bibitem{gangwani2020stateonly}
Tanmay Gangwani and Jian Peng.
\newblock State-only imitation with transition dynamics mismatch.
\newblock In {\em Proc.\ Intl Conf.\ on Learning Representations ({ICLR})},
  2020.

\bibitem{liu2019state}
Fangchen Liu, Zhan Ling, Tongzhou Mu, and Hao Su.
\newblock State alignment-based imitation learning.
\newblock In {\em Proc.\ Intl Conf.\ on Learning Representations ({ICLR})},
  2020.

\bibitem{ziebart2010modeling}
Brian~D Ziebart.
\newblock {\em Modeling purposeful adaptive behavior with the principle of
  maximum causal entropy}.
\newblock PhD thesis, Carnegie Mellon University, 2010.

\bibitem{ziebart2013principle}
Brian~D Ziebart, J~Andrew Bagnell, and Anind~K Dey.
\newblock The principle of maximum causal entropy for estimating interacting
  processes.
\newblock {\em IEEE Transactions on Information Theory}, 2013.

\bibitem{zhou2017infinite}
Zhengyuan Zhou, Michael Bloem, and Nicholas Bambos.
\newblock Infinite time horizon maximum causal entropy inverse reinforcement
  learning.
\newblock {\em IEEE Transactions on Automatic Control}, 2017.

\bibitem{tessler2019action}
Chen Tessler, Yonathan Efroni, and Shie Mannor.
\newblock Action robust reinforcement learning and applications in continuous
  control.
\newblock In {\em Proc.\ Intl Conf.\ on Machine Learning ({ICML})}, 2019.

\bibitem{iyengar2005robust}
Garud~N Iyengar.
\newblock Robust dynamic programming.
\newblock {\em Mathematics of Operations Research}, 2005.

\bibitem{nilim2005robust}
Arnab Nilim and Laurent El~Ghaoui.
\newblock Robust control of markov decision processes with uncertain transition
  matrices.
\newblock {\em Operations Research}, 2005.

\bibitem{pinto2017robust}
Lerrel Pinto, James Davidson, Rahul Sukthankar, and Abhinav Gupta.
\newblock Robust adversarial reinforcement learning.
\newblock In {\em Proc.\ Intl Conf.\ on Machine Learning ({ICML})}, 2017.

\bibitem{Herman2016InverseRL}
Michael Herman, Tobias Gindele, J{\"o}rg Wagner, Felix Schmitt, and Wolfram
  Burgard.
\newblock Inverse reinforcement learning with simultaneous estimation of
  rewards and dynamics.
\newblock In {\em Proc.\ Intl Conf.\ on Artificial Intelligence and Statistics
  ({AISTATS})}, 2016.

\bibitem{geist2019regmdp}
Matthieu Geist, Bruno Scherrer, and Olivier Pietquin.
\newblock A theory of regularized markov decision processes.
\newblock In {\em Proc.\ Intl Conf.\ on Machine Learning ({ICML})}, 2019.

\bibitem{chen2016adversarial}
Xiangli Chen, Mathew Monfort, Brian~D Ziebart, and Peter Carr.
\newblock Adversarial inverse optimal control for general imitation learning
  losses and embodiment transfer.
\newblock In {\em Proc.\ Conf.\ on Uncertainty in Artificial Intelligence
  ({UAI})}, 2016.

\bibitem{zhang2020multi}
Amy Zhang, Shagun Sodhani, Khimya Khetarpal, and Joelle Pineau.
\newblock Multi-task reinforcement learning as a hidden-parameter block mdp.
\newblock {\em arXiv preprint arXiv:2007.07206}, 2020.

\bibitem{kearns1998near}
Michael~J. Kearns and Satinder~P. Singh.
\newblock Near-optimal reinforcement learning in polynominal time.
\newblock In {\em Proc.\ Intl Conf.\ on Machine Learning ({ICML})}, 1998.

\bibitem{even2003approximate}
Eyal Even-Dar and Yishay Mansour.
\newblock Approximate equivalence of {Markov} decision processes.
\newblock In {\em Learning Theory and Kernel Machines}. 2003.

\bibitem{bloem2014infinite}
Michael Bloem and Nicholas Bambos.
\newblock Infinite time horizon maximum causal entropy inverse reinforcement
  learning.
\newblock In {\em IEEE Conference on Decision and Control}, 2014.

\bibitem{shafarevich2014linear}
I.R. Shafarevich, A.O. Remizov, D.P. Kramer, and L.~Nekludova.
\newblock {\em Linear Algebra and Geometry}.
\newblock Springer Berlin Heidelberg, 2014.

\bibitem{kingma2014adam}
Diederik~P Kingma and Jimmy Ba.
\newblock Adam: A method for stochastic optimization.
\newblock In {\em Proc.\ Intl Conf.\ on Learning Representations ({ICLR})},
  2015.

\bibitem{kamalaruban2020robust}
Parameswaran Kamalaruban, Yu-Ting Huang, Ya-Ping Hsieh, Paul Rolland, Cheng
  Shi, and Volkan Cevher.
\newblock Robust reinforcement learning via adversarial training with langevin
  dynamics.
\newblock In {\em Advances in Neural Information Processing Systems
  ({NeurIPS})}, 2020.

\bibitem{littman1994markov}
Michael~L Littman.
\newblock Markov games as a framework for multi-agent reinforcement learning.
\newblock In {\em Machine learning proceedings}. 1994.

\bibitem{grau2018balancing}
Jordi Grau-Moya, Felix Leibfried, and Haitham Bou-Ammar.
\newblock Balancing two-player stochastic games with soft q-learning.
\newblock In {\em Proc.\ Intl Joint Conf.\ on Artificial Intelligence
  ({IJCAI})}, 2018.

\bibitem{levine2011nonlinear}
Sergey Levine, Zoran Popovic, and Vladlen Koltun.
\newblock Nonlinear inverse reinforcement learning with gaussian processes.
\newblock In {\em Advances in Neural Information Processing Systems
  ({NeurIPS})}, 2011.

\bibitem{wulfmeier2015maximum}
Markus Wulfmeier, Peter Ondruska, and Ingmar Posner.
\newblock Maximum entropy deep inverse reinforcement learning.
\newblock {\em arXiv preprint arXiv:1507.04888}, 2015.

\bibitem{wiesemann2013robust}
Wolfram Wiesemann, Daniel Kuhn, and Ber{\c{c}} Rustem.
\newblock Robust markov decision processes.
\newblock {\em Mathematics of Operations Research}, 2013.

\bibitem{shashua2017deeprobust}
Shirli~Di{-}Castro Shashua and Shie Mannor.
\newblock Deep robust kalman filter.
\newblock {\em arXiv preprint arXiv:1703.02310}, 2017.

\bibitem{peng2018sim}
Xue~Bin Peng, Marcin Andrychowicz, Wojciech Zaremba, and Pieter Abbeel.
\newblock Sim-to-real transfer of robotic control with dynamics randomization.
\newblock In {\em IEEE international conference on robotics and automation
  (ICRA)}, 2018.

\bibitem{mankowitz2019robust}
Daniel~J Mankowitz, Nir Levine, Rae Jeong, Abbas Abdolmaleki, Jost~Tobias
  Springenberg, Yuanyuan Shi, Jackie Kay, Todd Hester, Timothy Mann, and Martin
  Riedmiller.
\newblock Robust reinforcement learning for continuous control with model
  misspecification.
\newblock In {\em Proc.\ Intl Conf.\ on Learning Representations ({ICLR})},
  2020.

\bibitem{morimoto2005robust}
Jun Morimoto and Kenji Doya.
\newblock Robust reinforcement learning.
\newblock {\em Neural computation}, 2005.

\bibitem{doyle2013feedback}
John~C Doyle, Bruce~A Francis, and Allen~R Tannenbaum.
\newblock {\em Feedback control theory}.
\newblock Courier Corporation, 2013.

\bibitem{haug2018teaching}
Luis Haug, Sebastian Tschiatschek, and Adish Singla.
\newblock Teaching inverse reinforcement learners via features and
  demonstrations.
\newblock In {\em Advances in Neural Information Processing Systems
  ({NeurIPS})}, 2018.

\bibitem{tschiatschek2019learner}
Sebastian Tschiatschek, Ahana Ghosh, Luis Haug, Rati Devidze, and Adish Singla.
\newblock Learner-aware teaching: Inverse reinforcement learning with
  preferences and constraints.
\newblock In {\em Advances in Neural Information Processing Systems
  ({NeurIPS})}, 2019.

\bibitem{ho2016generative}
Jonathan Ho and Stefano Ermon.
\newblock Generative adversarial imitation learning.
\newblock In {\em Advances in Neural Information Processing Systems
  ({NeurIPS})}, 2016.

\bibitem{torabi2018generative}
Faraz Torabi, Garrett Warnell, and Peter Stone.
\newblock Generative adversarial imitation from observation.
\newblock {\em arXiv preprint arXiv:1807.06158}, 2018.

\bibitem{sun2019provably}
Wen Sun, Anirudh Vemula, Byron Boots, and J~Andrew Bagnell.
\newblock Provably efficient imitation learning from observation alone.
\newblock In {\em Proc.\ Intl Conf.\ on Machine Learning ({ICML})}, 2019.

\bibitem{fu2018learning}
Justin Fu, Katie Luo, and Sergey Levine.
\newblock Learning robust rewards with adverserial inverse reinforcement
  learning.
\newblock In {\em Proc.\ Intl Conf.\ on Learning Representations ({ICLR})},
  2018.

\bibitem{agarwal2019reinforcement}
Alekh Agarwal, Nan Jiang, Sham~M Kakade, and Wen Sun.
\newblock Reinforcement {L}earning: {T}heory and {A}lgorithms, 2019.

\bibitem{boularias2010bootstrapping}
Abdeslam Boularias and Brahim Chaib-Draa.
\newblock Bootstrapping apprenticeship learning.
\newblock In {\em Advances in Neural Information Processing Systems
  ({NeurIPS})}, 2010.

\bibitem{sun2018dual}
Wen Sun, Geoffrey~J Gordon, Byron Boots, and J~Andrew Bagnell.
\newblock Dual policy iteration.
\newblock In {\em Advances in Neural Information Processing Systems
  ({NeurIPS})}, 2018.

\bibitem{parameswaran2019interactive}
Parameswaran Kamalaruban, Rati Devidze, Volkan Cevher, and Adish Singla.
\newblock Interactive teaching algorithms for inverse reinforcement learning.
\newblock In {\em Proc.\ Intl Joint Conf.\ on Artificial Intelligence
  ({IJCAI})}, 2019.

\end{thebibliography}

\newpage
\appendix


\section{Appendix structure}
\label{app:structure}
Here, we provide an overview on the organization of the appendix:
\begin{itemize}
\item Appendix~\ref{appendix:remarks} summarizes the scope and contributions of the paper.
\item Appendix~\ref{appendix:notation} provides a glossary of notation.
\item Appendix~\ref{appendix:sec2} provides further details of Section~\ref{sec:Setup}. In particular, we show that the expected feature count with one-hot feature map is proportional to the state occupancy measure.  
\item Appendix~\ref{appendix:maxentirl} provides further details of Section~\ref{sec:bounds}. In particular:
\begin{enumerate}
\item In Appendix~\ref{app:proof-soft-lemma}, we provide the proof of Lemma~\ref{thm:first_week} (performance difference between two soft optimal policies).
\item In Appendix~\ref{app:mce-irl-upper-bound}, we provide the proof of Theorem~\ref{thm:regret_different_experts} (performance gap of MCE IRL under model mismatch).
\item In Appendix~\ref{app:state-only-reward}, we explain why state-action reward function is not useful under model mismatch.
\item In Appendix~\ref{app:occ_states}, we provide the proof of Theorem~\ref{thm:occ_states} (existence of solution for MCE IRL under model mismatch).
\item In Appendix~\ref{app:reward-transfer}, we study the performance gap of the reward transfer strategy explained in Section~\ref{sec:mismatch-learning}.
\end{enumerate}
\item Appendix~\ref{app:solution-more-details} provides further details of Section~\ref{sec:two_players_entropy}. In particular:
\begin{enumerate}
\item In Appendix~\ref{sec:gradient}, we derive the gradient update for MCE IRL under model mismatch. 
\item In Appendix~\ref{app:softQ}, we present Algorithm~\ref{alg:TwoplayersDynProg}, with theoretical support, to solve the Markov Game in Section~\ref{sec:sol-markov-game}.
\item In Appendix~\ref{app:robust-mce-irl-upper}, we provide the proof of Theorem~\ref{thm:new-robust-mce-irl-bound} (performance gap of Algorithm~\ref{alg:MaxEntIRL} under model mismatch).
\item In Appendix~\ref{app:robust-mce-irl-upper-infeasible}, we study the performance gap of Algorithm~\ref{alg:MaxEntIRL} under model mismatch in the infeasible case (when exact occupancy measure matching is not possible). 
\item In Appendix~\ref{app:tightness-proof}, we provide the proof of Theorem~\ref{theorem-tightness} (constructive example comparing MCE IRL and Algorithm~\ref{alg:MaxEntIRL}).
\end{enumerate}
\item Appendix~\ref{appendix:experiments} provides further details of Section~\ref{sec:experiments}. In particular:
\begin{enumerate}
\item In Appendix~\ref{app:hyper-figs}, we report all the hyperparameter details, and present the figures mentioned in the main text.
\item In Appendix~\ref{app:low-dim-exp}, we demonstrate superior performance of Algorithm~\ref{alg:MaxEntIRL} on a low-dimensional feature setting.
\item In Appendix~\ref{app:alpha-choice}, we study the impact of the opponent strength parameter $1-\alpha$ on Robust MCE IRL.
\end{enumerate}
\item Appendix~\ref{sec:experiments_continuous_control} provides further details of Section~\ref{sec:experiments_continuous_control_main}. In particular, we present a high-dimensional continuous control extension of our robust IRL method, and demonstrates its efficacy on a domain with continuous state and spaces under dynamics mismatch.
\end{itemize}

\newpage
\section{Scope and Contributions}
\label{appendix:remarks}

Our work is intended to:
\begin{enumerate}
\item provide a theoretical investigation of the transition dynamics mismatch issue in the standard MCE IRL formulation, including:
\begin{enumerate}
\item an upper bound on the performance gap due to dynamics mismatch (Theorem~\ref{thm:regret_different_experts}) + the tightness of the bound (Theorem~\ref{theorem-tightness})
\item existence of solution under dynamics mismatch (Theorem~\ref{thm:occ_states})
\end{enumerate}
\item illustrate the issues with the reward transfer scheme under transition dynamics mismatch (Theorem~\ref{thm:new-reward-transfer-bound} + Lemma~\ref{thm:first_week}; see Section~\ref{sec:mismatch-learning}, and Appendix~\ref{app:reward-transfer})
\item understand the role of robust RL methods in mitigating the mismatch issue
\begin{enumerate}
\item validity (existence of solution using Theorem~\ref{thm:occ_states}) of the robust MCE IRL formulation (see Section~\ref{sec:exist-robust-sol})
\item an upper bound on the performance gap of robust MCE IRL (Theorem~\ref{thm:new-robust-mce-irl-bound}) + improvement over standard MCE IRL (Theorem~\ref{thm:occ_states})
\item an upper bound on the performance gap of robust MCE IRL when exact occupancy measure matching is not possible (Theorem~\ref{thm:new-robust-mce-irl-bound-infeasible})
\item different effect of over and underestimating the robustness parameter alpha (see Appendix~\ref{app:alpha-choice})
\end{enumerate}
\item empirically validate our claims in a setting (finite MDP) without theory-practice gap (see Section~\ref{sec:experiments}, and Appendix~\ref{appendix:experiments})
\item extend our robust IRL method to the high dimensional continuous MDP setting with appropriate practical relaxations, and empirically demonstrate its effectiveness (see Appendix~\ref{sec:experiments_continuous_control}).
\end{enumerate}


\section{Glossary of Notation}
\label{appendix:notation}

We have carefully developed the notation based on the best practices prescribed by the RL theory community~\cite{agarwal2019reinforcement}, and do not want to compromise its rigorous nature. To help the reader, we provide a glossary of notation. 

\begin{table}[h!]
\centering
\begin{tabular}{l|l}
\hline
$\pi^{*}_{M_{\boldsymbol{\theta^*}}^L}$ & optimal policy in the MDP $M_{\boldsymbol{\theta^*}}^L = \bc{\mathcal{S}, \mathcal{A}, T^L, \gamma, P_0, R_{\boldsymbol{\theta^*}}}$ \\
$\pi^{*}_{M_{\boldsymbol{\theta^*}}^E}$ & optimal policy in the MDP $M_{\boldsymbol{\theta^*}}^E = \bc{\mathcal{S}, \mathcal{A}, T^E, \gamma, P_0, R_{\boldsymbol{\theta^*}}}$ \\
$\boldsymbol{\rho}^{\pi^{*}_{M_{\boldsymbol{\theta^*}}^L}}_{M^L}$ & state occupancy measure of $\pi^{*}_{M_{\boldsymbol{\theta^*}}^L}$ in the MDP $M^L = \bc{\mathcal{S}, \mathcal{A}, T^L, \gamma, P_0}$ \\
$\boldsymbol{\rho}^{\pi^{*}_{M_{\boldsymbol{\theta^*}}^E}}_{M^E}$ & state occupancy measure of $\pi^{*}_{M_{\boldsymbol{\theta^*}}^E}$ in the MDP $M^E = \bc{\mathcal{S}, \mathcal{A}, T^E, \gamma, P_0}$ \\
$\boldsymbol{\theta_L}$ & reward parameter recovered when there is no transition dynamics mismatch \\
$\boldsymbol{\theta_E}$ & reward parameter recovered under transition dynamics mismatch \\
$\pi_1 = \pi^{\mathrm{soft}}_{M^{L}_{\boldsymbol{\theta_{L}}}}$ & soft optimal policy in the MDP $M_{\boldsymbol{\theta_{L}}}^L = \bc{\mathcal{S}, \mathcal{A}, T^L, \gamma, P_0, R_{\boldsymbol{\theta_{L}}}}$ \\ 
$\pi_2 = \pi^{\mathrm{soft}}_{M^{L}_{\boldsymbol{\theta_{E}}}}$ & soft optimal policy in the MDP $M_{\boldsymbol{\theta_{E}}}^L = \bc{\mathcal{S}, \mathcal{A}, T^L, \gamma, P_0, R_{\boldsymbol{\theta_{E}}}}$ \\
$V^{\pi_1}_{M_{\boldsymbol{\theta^*}}^L}$ & total expected return of $\pi_1$ in the MDP $M_{\boldsymbol{\theta^*}}^L = \bc{\mathcal{S}, \mathcal{A}, T^L, \gamma, P_0, R_{\boldsymbol{\theta^*}}}$ \\
$V^{\pi_2}_{M_{\boldsymbol{\theta^*}}^L}$ & total expected return of $\pi_2$ in the MDP $M_{\boldsymbol{\theta^*}}^L = \bc{\mathcal{S}, \mathcal{A}, T^L, \gamma, P_0, R_{\boldsymbol{\theta^*}}}$ \\
$\boldsymbol{\rho}^{\pi^{\mathrm{pl}}}_{M^{L,\alpha}}$ & state occupancy measure of $\pi^{\mathrm{pl}}$ in the MDP $M^{L,\alpha} = \bc{\mathcal{S}, \mathcal{A}, T^{L,\alpha}, \gamma, P_0}$ \\
$\boldsymbol{\rho}^{\alpha \pi^{\mathrm{pl}} + (1 - \alpha) \pi^{\mathrm{op}}}_{ M^L}$ & state occupancy measure of $\alpha \pi^{\mathrm{pl}} + (1 - \alpha) \pi^{\mathrm{op}}$ in the MDP $M^{L} = \bc{\mathcal{S}, \mathcal{A}, T^{L}, \gamma, P_0}$ \\ \hline
\end{tabular}
\vspace{2mm}
\caption{A glossary of notation.}
\label{tab:glossary}
\end{table}

\newpage
\section{Further Details of Section~\ref{sec:Setup}}
\label{appendix:sec2}

An optimal policy $\pi^*_{M_{\boldsymbol{\theta}}}$ in the MDP $M_{\boldsymbol{\theta}}$ satisfies the following \emph{Bellman optimality equations} for all the state-action pairs $(s,a) \in \mathcal{S}\times\mathcal{A}$:
\begin{align*}
\pi^*_{M_{\boldsymbol{\theta}}}(s) ~=~& \argmax_a Q^{*}_{M_{\boldsymbol{\theta}}}(s,a) \nonumber \\
Q^{*}_{M_{\boldsymbol{\theta}}}(s,a) ~=~& R_{\boldsymbol{\theta}}(s) + \gamma \sum_{s'}T(s'|s,a) V^{*}_{M_{\boldsymbol{\theta}}}(s') \nonumber \\
V^{*}_{M_{\boldsymbol{\theta}}}(s) ~=~& \max_a Q^{*}_{M_{\boldsymbol{\theta}}}(s,a)~
 \end{align*}
 
The soft-optimal policy $\pi^{\mathrm{soft}}_{M_{\boldsymbol{\theta}}}$ in the MDP $M_{\boldsymbol{\theta}}$ satisfies the following \emph{soft Bellman optimality equations} for all the state-action pairs $(s,a) \in \mathcal{S}\times\mathcal{A}$:
 \begin{align*}
\pi^{\mathrm{soft}}_{M_{\boldsymbol{\theta}}}(a|s) ~=~& \exp \br{Q^{ \mathrm{soft}}_{M_{\boldsymbol{\theta}}}(s,a) - V^{\mathrm{soft}}_{M_{\boldsymbol{\theta}}}(s)} \nonumber \\
Q^{\mathrm{soft}}_{M_{\boldsymbol{\theta}}}(s,a) ~=~& R_{\boldsymbol{\theta}}(s) + \gamma \sum_{s'}T(s'|s,a) V^{\mathrm{soft}}_{M_{\boldsymbol{\theta}}}(s') \nonumber \\
V^{\mathrm{soft}}_{M_{\boldsymbol{\theta}}}(s) ~=~& \log \sum_a \exp Q^{\mathrm{soft}}_{M_{\boldsymbol{\theta}}}(s,a)
\end{align*}

The expected feature count of a policy $\pi$ in the MDP $M$ is defined as $\bar{\boldsymbol{\phi}}^{\pi}_{M} := \Ee{\pi, M}{\sum^{\infty}_{t=0} \gamma^t \boldsymbol{\phi}(s_t)}$. 

\begin{fact}
If $\forall s \in \mathcal{S}$, $\boldsymbol{\phi}(s) \in \mathbb{R}^{\abs{\mathcal{S}}}$ is a one-hot vector with only the element in position $s$ being $1$, then the expected feature count of a policy $\pi$ in the MDP $M$ is proportional to its state occupancy measure vector in the MDP $M$. 
\end{fact}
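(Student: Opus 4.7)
The plan is to verify this componentwise by unpacking both definitions and identifying the proportionality constant explicitly. Concretely, I would start by writing the $s$-th entry of $\bar{\boldsymbol{\phi}}^{\pi}_{M}$ using linearity of expectation:
\[
\bigl[\bar{\boldsymbol{\phi}}^{\pi}_{M}\bigr]_s \;=\; \mathbb{E}_{\pi,M}\!\left[\sum_{t=0}^{\infty} \gamma^t \bigl[\boldsymbol{\phi}(s_t)\bigr]_s\right] \;=\; \sum_{t=0}^{\infty} \gamma^t \,\mathbb{E}_{\pi,M}\!\bigl[\,\bigl[\boldsymbol{\phi}(s_t)\bigr]_s\bigr],
\]
where the interchange is justified by Tonelli (all summands are non-negative and bounded since $\boldsymbol{\phi}(s_t) \in \{0,1\}^{|\mathcal{S}|}$, and $\sum_t \gamma^t < \infty$).

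Next, I would use the one-hot assumption. By definition, $[\boldsymbol{\phi}(s_t)]_s = \mathbbm{1}\{s_t = s\}$, so $\mathbb{E}_{\pi,M}\!\bigl[[\boldsymbol{\phi}(s_t)]_s\bigr] = \mathbb{P}[s_t = s \mid \pi, M]$. Substituting and then comparing to the definition of the state occupancy measure $\rho^{\pi}_{M}(s) = (1-\gamma)\sum_{t=0}^{\infty} \gamma^t \,\mathbb{P}[s_t = s \mid \pi, M]$, one immediately reads off
\[
\bigl[\bar{\boldsymbol{\phi}}^{\pi}_{M}\bigr]_s \;=\; \frac{1}{1-\gamma}\,\rho^{\pi}_{M}(s).
\]
Since this holds componentwise, I would conclude $\bar{\boldsymbol{\phi}}^{\pi}_{M} = \frac{1}{1-\gamma}\,\boldsymbol{\rho}^{\pi}_{M}$, establishing the claimed proportionality with explicit constant $\frac{1}{1-\gamma}$.

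There is no real obstacle here; the argument is essentially a definition-chase. The only thing to be careful about is (i) the justification of the sum/expectation swap (handled by non-negativity and the geometric bound), and (ii) explicitly identifying the proportionality constant, which is useful for consistency with the identity $V^{\pi}_{M_{\boldsymbol{\theta}}} = \frac{1}{1-\gamma}\langle \boldsymbol{\theta}, \boldsymbol{\rho}^{\pi}_{M}\rangle$ stated in Section~\ref{sec:Setup}, since $V^{\pi}_{M_{\boldsymbol{\theta}}} = \langle \boldsymbol{\theta}, \bar{\boldsymbol{\phi}}^{\pi}_{M}\rangle$ under the one-hot map.
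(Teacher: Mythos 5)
Your proof is correct and follows essentially the same route as the paper's: both reduce to the indicator identity $[\boldsymbol{\phi}(s_t)]_s = \mathbbm{1}\bs{s_t = s}$, exchange the sum and expectation, and read off the constant $\frac{1}{1-\gamma}$ from the definition of $\rho^{\pi}_{M}$. The only cosmetic difference is that the paper carries out the computation as a vector identity via the decomposition $\boldsymbol{\phi}(s_t) = \sum_{s}\boldsymbol{\phi}(s)\mathbbm{1}\bs{s = s_t}$ before specializing to the one-hot map, whereas you work componentwise from the start; your explicit Tonelli justification for the interchange is a small added nicety.
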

\begin{proof}
For any $M, \pi$, we have:
\begin{align*}
\bar{\boldsymbol{\phi}}^{\pi}_{M} &~=~ \Ee{\pi, M}{\sum^{\infty}_{t=0} \gamma^t \boldsymbol{\phi}(s_t)} \\
&~=~ \Ee{\pi, M}{\sum^{\infty}_{t=0} \gamma^t \sum_{s \in \mathcal{S}} \boldsymbol{\phi}(s)\mathbbm{1}\bs{s = s_t}} \\
&~=~ \sum_{s \in \mathcal{S}} \boldsymbol{\phi}(s) \Ee{\pi, M}{\sum^{\infty}_{t=0} \gamma^t \mathbbm{1}\bs{s = s_t}} \\
&~=~ \sum_{s \in \mathcal{S}} \boldsymbol{\phi}(s) \sum^{\infty}_{t=0} \gamma^t \Ee{\pi, M}{\mathbbm{1}\bs{s = s_t}} \\
&~=~ \frac{1}{1 - \gamma} \sum_{s \in \mathcal{S}} \rho^{\pi}_{M}(s) \boldsymbol{\phi}(s)
\end{align*}
For the one-hot feature map, ignoring the normalizing factor, the above sum of vectors can be written as follows: 
\begin{equation*}
\bs{\rho^{\pi}_{M}(s_1), \rho^{\pi}_{M}(s_2), \cdots}^\top ~=~ \boldsymbol{\rho}^{\pi}_{M} .
\end{equation*}
\end{proof}
Leveraging on this fact, we formulate the MCE IRL problem~\eqref{opt_start} with the state occupancy measure $\boldsymbol{\rho}$ match rather than the usual expected feature count match. Note that if the occupancy measure match is attained, then the match of any expected feature count is also attained.

\newpage
\section{Further Details of Section~\ref{sec:bounds}}
\label{appendix:maxentirl}

\subsection{Proof of Lemma~\ref{thm:first_week}}
\label{app:proof-soft-lemma}

\begin{proof}
The soft-optimal policy of the MDP $M'_{\boldsymbol{\theta}}$ satisfies the following soft Bellman optimality equations:
\begin{align}
\pi'(a|s) ~=~& \frac{Z'_{a|s}}{Z'_s}
\label{MaxEntPolicyL} \\
\log Z'_s ~=~& \log \sum_a  Z'_{a|s} \nonumber \\
\log Z'_{a|s} ~=~& R_{\boldsymbol{\theta}}(s) + \gamma \sum_{s^\prime} T'(s^\prime| a, s) \log Z'_{s^\prime} \label{MaxEntQL}
\end{align}
Analogously, the soft-optimal policy of the MDP $M_{\boldsymbol{\theta}}$ satisfies the following soft Bellman optimality equations:
\begin{align}
\pi(a|s) ~=~& \frac{Z_{a|s}}{Z_s}
\label{MaxEntPolicyE} \\
\log Z_s ~=~& \log \sum_a  Z_{a|s} \nonumber \\
\log Z_{a|s} ~=~& R_{\boldsymbol{\theta}}(s) + \gamma \sum_{s^\prime} T(s^\prime| a, s) \log Z_{s^\prime} \label{MaxEntQE}
\end{align}

For any $s \in \mathcal{S}$, we have:
\begin{align}
D_{\mathrm{KL}}\br{\pi'(\cdot|s), \pi(\cdot|s)} &~=~ \sum_{a} \pi'(a|s) \log \frac{\pi'(a|s)}{\pi(a|s)} \nonumber \\
    &~=~ \sum_{a} \frac{Z'_{a|s}}{Z'_s} \br{\log \frac{Z'_{a|s}}{Z_{a|s}}+ \log \frac{Z_{s}}{Z'_{s}}} \nonumber \\
    &~=~ \sum_{a} \frac{Z'_{a|s}}{Z'_s} \log \frac{Z'_{a|s}}{Z_{a|s}} + \log \frac{Z_{s}}{Z'_{s}} \label{KL_bound}
\end{align}
By using the log-sum inequality on the term depending on the states only:
\begin{align}
\log \frac{Z_{s}}{Z'_{s}} &~=~ \underbrace{\sum_{a} \frac{Z_{a|s}}{Z_s}}_{1} \log \frac{Z_{s}}{Z'_{s}} \nonumber \\ &~=~  \sum_{a} \frac{Z_{a|s}}{Z_s} \log \frac{\sum_a Z_{a|s}}{\sum_a Z'_{a|s}} \nonumber \\
 &~\leq~ \frac{1}{Z_s}  \sum_{a} Z_{a|s}\log \frac{Z_{a|s}}{ Z'_{a|s}} \label{log-sum}
 \end{align}
Consequently, replacing \eqref{log-sum} in \eqref{KL_bound}, and using the definitions \eqref{MaxEntPolicyE} and \eqref{MaxEntPolicyL}, we have:
\begin{align}
     D_{\mathrm{KL}}\br{\pi'(\cdot|s), \pi(\cdot|s)}  &~\leq~ \sum_{a}\br{ \frac{Z'_{a|s}}{Z'_s} - \frac{Z_{a|s}}{Z_s} }\log \frac{Z'_{a|s}}{Z_{a|s}} \nonumber \\
     &~=~ \sum_{a}\br{ \pi'(a|s) - \pi(a|s)}\log \frac{Z'_{a|s}}{Z_{a|s}} \nonumber \\
     &~\leq~ \sum_{a}\abs{ \pi'(a|s) - \pi(a|s)} \cdot \abs{\log \frac{Z'_{a|s}}{Z_{a|s}}} \nonumber \\
     &~\leq~ \sum_{a'}\abs{ \pi'(a'|s) - \pi(a'|s)} \cdot \max_a \abs{\log \frac{Z'_{a|s}}{Z_{a|s}}} \nonumber \\
     &~=~ \norm{ \pi'(\cdot|s) - \pi(\cdot|s)}_1 \cdot \max_a \abs{\log \frac{Z'_{a|s}}{Z_{a|s}}}  \nonumber 
\end{align} 
Then, by taking $\max$ over $s$, we have:
 \begin{equation}
     \max_s D_{\mathrm{KL}}\br{\pi'(\cdot|s), \pi(\cdot|s)} ~\leq~ \max_{s} \norm{ \pi'(\cdot|s) - \pi(\cdot|s)}_1 \cdot \max_{s,a}  \abs{\log \frac{Z'_{a|s}}{Z_{a|s}}} \label{LogToBound}
 \end{equation}
Further, we exploit the following fact:
 \begin{equation}
     \max_{s,a}\abs{\log \frac{Z'_{a|s}}{Z_{a|s}}} ~=~ \max \bc{ \log \frac{Z'_{\bar{a}|\bar{s}}}{Z_{\bar{a}|\bar{s}}},  \log \frac{Z_{\underline{a}|\underline{s}}}{Z'_{\underline{a}|\underline{s}}}} \label{SplitMax} ,
 \end{equation}
where we adopted the following notation:
\begin{align}
     (\bar{s},\bar{a}) &~=~ \argmax_{s,a} \log \frac{Z'_{a|s}}{Z_{a|s}} \label{a_high}  \\
     (\underline{s},\underline{a}) &~=~ \argmin_{s,a} \log \frac{Z'_{a|s}}{Z_{a|s}}    \label{a_low} 
\end{align}
At this point, we can bound separately the two arguments of the max in \eqref{SplitMax}. Starting from \eqref{a_high}:
\begin{align*}
     \log \frac{Z'_{\bar{a}|\bar{s}}}{Z_{\bar{a}|\bar{s}}} &~=~ \log Z'_{\bar{a}|\bar{s}} - \log Z_{\bar{a}|\bar{s}} \\
     &~=~ \underbrace{R_{\boldsymbol{\theta}}(\bar{s}) - R_{\boldsymbol{\theta}}(\bar{s})}_{0} + \gamma \bc{\sum_{s^\prime} T'(s^\prime|\bar{s},\bar{a})\log Z'_{s^\prime} - T(s^\prime|\bar{s},\bar{a})\log Z_{s^\prime}} \\
     &~=~ \gamma \bc{\sum_{s^\prime} T'(s^\prime|\bar{s},\bar{a})\log \frac{Z'_{s^\prime}}{Z_{s^\prime}} + \br{T'(s^\prime|\bar{s},\bar{a}) - T(s^\prime|\bar{s},\bar{a})}\log Z_{s^\prime}} \\ &~\leq~ \gamma \bc{\sum_{s^\prime} T'(s^\prime|\bar{s},\bar{a}) \br{\sum_{a} \pi'(a|s^\prime) \log \frac{Z'_{a|s^\prime}}{Z_{a|s^\prime}}} + \br{T'(s^\prime|\bar{s},\bar{a}) - T(s^\prime|\bar{s},\bar{a})}\log Z_{s^\prime}} \\
     &~\leq~ \gamma \log \frac{Z'_{\bar{a}|\bar{s}}}{Z_{\bar{a}|\bar{s}}} + \gamma \sum_{s^\prime} \br{T'(s^\prime|\bar{s},\bar{a}) - T(s^\prime|\bar{s},\bar{a})}\log Z_{s^\prime}
\end{align*}
By rearranging the terms, we get:
\begin{align}
        \log \frac{Z'_{\bar{a}|\bar{s}}}{Z_{\bar{a}|\bar{s}}} &~\leq~ \frac{\gamma}{1-\gamma} \cdot \sum_{s^\prime} \br{T'(s^\prime|\bar{s},\bar{a}) - T(s^\prime|\bar{s},\bar{a})}\log Z_{s^\prime} \nonumber \\
        &~\leq~  \frac{\gamma}{1-\gamma} \cdot \sum_{s^\prime} \abs{T'(s^\prime|\bar{s},\bar{a}) - T(s^\prime|\bar{s},\bar{a})} \cdot \abs{ \log Z_{s^\prime}} \nonumber \\
        &~\leq~ \frac {\gamma}{1-\gamma} \cdot \max_{s^\prime}\abs{ \log Z_{s^\prime}} \cdot \sum_{s^\prime} \abs{T'(s^\prime|\bar{s},\bar{a}) - T(s^\prime|\bar{s},\bar{a})} \label{argmaxBound}
\end{align}
Then, with analogous calculations for the second argument of the max operator in \eqref{SplitMax}, we have
\begin{align*}
         \log \frac{Z_{\underline{a}|\underline{s}}}{Z'_{\underline{a}|\underline{s}}} &~=~ \log Z_{\underline{a}|\underline{s}} 
         - \log Z'_{\underline{a}|\underline{s}} \\
     &~=~ \underbrace{R_{\boldsymbol{\theta}}(\underline{s}) - R_{\boldsymbol{\theta}}(\underline{s})}_{0} + \gamma \bc{\sum_{s^\prime} T(s^\prime|\underline{s},\underline{a})\log Z_{s^\prime} - T'(s^\prime|\underline{s},\underline{a})\log Z'_{s^\prime}} \\
     &~=~ \gamma \bc{\sum_{s^\prime} T(s^\prime|\underline{s},\underline{a})\log \frac{Z_{s^\prime}}{Z'_{s^\prime}} + \br{T(s^\prime|\underline{s},\underline{a})- T'(s^\prime|\underline{s},\underline{a})}\log Z'_{s^\prime}} \\
     &~\leq~ \gamma \log \frac{Z_{\underline{a}|\underline{s}}}{Z'_{\underline{a}|\underline{s}}} + \gamma \sum_{s^\prime}  \br{T(s^\prime|\underline{s},\underline{a})- T'(s^\prime|\underline{s},\underline{a})}\log Z'_{s^\prime}
\end{align*}
It follows that:
\begin{align}
     \log \frac{Z_{\underline{a}|\underline{s}}}{Z'_{\underline{a}|\underline{s}}} &~\leq~ \frac{\gamma}{1 - \gamma} \cdot \sum_{s^\prime}  \br{T(s^\prime|\underline{s},\underline{a})- T'(s^\prime|\underline{s},\underline{a})}\log Z'_{s^\prime} \nonumber \\
     &~\leq~  \frac{\gamma}{1 - \gamma} \cdot \sum_{s^\prime}  \abs{T(s^\prime|\underline{s},\underline{a})- T'(s^\prime|\underline{s},\underline{a})} \cdot \abs{\log Z'_{s^\prime}} \nonumber \\
     &~\leq~ \frac{\gamma}{1 - \gamma} \cdot \max_{s^\prime} \abs{\log Z'_{s^\prime}} \cdot \sum_{s^\prime}  \abs{T(s^\prime|\underline{s},\underline{a})- T'(s^\prime|\underline{s},\underline{a})} \label{argminBound}
\end{align}
We can plug in the bounds obtained in \eqref{argminBound} and \eqref{argmaxBound} in \eqref{SplitMax}:
\begin{equation}
      \max_{s,a}\abs{\log \frac{Z'_{a|s}}{Z_{a|s}}} ~\leq~ \frac{\gamma}{1 - \gamma} \cdot \max \bc{\max_{s^\prime} \abs{\log Z_{s^\prime}}, \max_{s^\prime} \abs{\log Z'_{s^\prime}}} \cdot  \max_{s,a} \sum_{s^\prime}  \abs{T'(s^\prime|s,a)- T(s^\prime|s,a)} \label{Main}
\end{equation}
 It still remains to bound the term $\max \bc{\max_{s^\prime} \abs{\log Z_{s^\prime}}, \max_{s^\prime} \abs{\log Z'_{s^\prime}}}$. It can be done by a splitting procedure similar to the one in \eqref{SplitMax}. Indeed:
 \begin{equation}
     \max_{s^\prime} \abs{\log Z_{s^\prime}} ~=~ \max \bc{ \log Z_{\bar{s}},  \log \frac{1}{Z_{\underline{s}}} }
     \label{SplitMax2}
 \end{equation}
 where, changing the previous definitions of $\bar{s}$ and $\underline{s}$, we set:
 \begin{align}
     \bar{s} &~=~ \argmax_{s} \log Z_{s} \label{argmax} \\
      \underline{s} &~=~ \argmin_{s} \log Z_{s} \label{argmin}
 \end{align}
Starting from the first term in \eqref{SplitMax2} and applying \eqref{MaxEntQE}:
\begin{align}
    \log Z_{\bar{s}} &~=~ \log \sum_{a}  Z_{a|\bar{s}} \nonumber \\
    &~\leq~ \log \br{\abs{\mathcal{A}} \max_{a} Z_{a|\bar{s}}} \nonumber \\
    &~=~  \log \abs{\mathcal{A}} +   \log \max_{a} Z_{a|\bar{s}} \nonumber \\
    &~=~ \log \abs{\mathcal{A}} +  \max_{a} \log Z_{a|\bar{s}}  \label{BarBound}
\end{align}
where the last equality follows from the fact that $\log$ is a monotonically increasing function.
Furthermore, \eqref{argmax} implies that $\log Z_{s^\prime} \leq \log Z_{\bar{s}}, \quad \forall s^\prime \in \mathcal{S}$:
\begin{align}
    \max_{a} \log Z_{a|\bar{s}} &~\leq~  \max_{a} \br{R_{\boldsymbol{\theta}}(\bar{s}) + \gamma \log Z_{\bar{s}} \sum_{s^\prime} T(s^\prime|\bar{s},a)} \nonumber \\
    &~\leq~ R^{\mathrm{max}}_{\boldsymbol{\theta}} + \gamma \log Z_{\bar{s}} \label{ABarBound}
    \end{align}
In the last inequality we have used the quantity $R^{\mathrm{max}}_{\boldsymbol{\theta}}$ that satisfies $R_{\boldsymbol{\theta}}(s) \leq R^{\mathrm{max}}_{\boldsymbol{\theta}}, \quad \forall s \in \mathcal{S}$. In a similar fashion, we will use $R^{\mathrm{min}}_{\boldsymbol{\theta}}$ such that $R_{\boldsymbol{\theta}}(s)  \geq R^{\mathrm{min}}_{\boldsymbol{\theta}}, \quad \forall s \in \mathcal{S}$.
Finally, plugging \eqref{ABarBound} into \eqref{BarBound}, we get:
\begin{equation}
    \log Z_{\bar{s}} ~\leq~ \frac{R^{\mathrm{max}}_{\boldsymbol{\theta}} + \log \abs{\mathcal{A}}}{1 - \gamma} \label{SBound1}
\end{equation}
We can proceed bounding the second argument of the max operator in \eqref{SplitMax2}. To this scope, we observe that $\sum_{a} \frac{1}{\abs{\mathcal{A}}} = 1$, and, then, we apply the log-sum inequality as follows:
\begin{align}
    \log \frac{1}{Z_{\underline{s}}} &~=~ \sum_{a} \frac{1}{\abs{\mathcal{A}}} \log \frac{ \sum_{a} \frac{1}{\abs{\mathcal{A}}}}{\sum_{a}Z_{a|\underline{s}}} \nonumber \\
    &~\leq~  \sum_{a} \frac{1}{\abs{\mathcal{A}}} \log \frac{ \frac{1}{\abs{\mathcal{A}}}}{Z_{a|\underline{s}}}  \nonumber \\
    &~=~ \log \frac {1}{\abs{\mathcal{A}}} + \sum_{a} \frac {1}{\abs{\mathcal{A}}} \log \frac{1}{Z_{a|\underline{s}}} \nonumber \\
    &~\leq~ \log \frac {1}{\abs{\mathcal{A}}} + \max_{a} \log \frac{1}{Z_{a|\underline{s}}} \label{UnderBound}
\end{align}
Similarly to \eqref{ABarBound}, we can apply one step of the soft Bellman equation to bound the term $ \log \frac{1}{Z_{a|\underline{s}}} $:
\begin{align}
    \log \frac{1}{Z_{a|\underline{s}}} &~=~ - \log Z_{a|\underline{s}} \nonumber \\
    &~=~ - R_{\boldsymbol{\theta}}(\underline{s}) - \gamma \sum_{s^\prime} T(s^\prime|\underline{s},a) \log Z_{s^\prime} \nonumber \\
    &~=~ - R_{\boldsymbol{\theta}}(\underline{s}) + \gamma \sum_{s^\prime} T(s^\prime|\underline{s},a) \log \frac{1}{Z_{s^\prime}} \nonumber \\
    &~\leq~ - R^{\mathrm{min}}_{\boldsymbol{\theta}} + \gamma  \log \frac{1}{Z_{\underline{s}}} \underbrace{\sum_{s^\prime} T(s^\prime|\underline{s},a)}_{1} \label{AUnderBound}
\end{align}
where in the last inequality we used \eqref{argmin}, $R_{\boldsymbol{\theta}}(s)  \geq R^{\mathrm{min}}_{\boldsymbol{\theta}}, \quad \forall s \in \mathcal{S}$. Since the upper bound in \eqref{AUnderBound} does not depend on $a$, we have:
\begin{equation}
     \max_{a} \log \frac{1}{Z_{a|\underline{s}}} ~\leq~  - R^{\mathrm{min}}_{\boldsymbol{\theta}} + \gamma  \log \frac{1}{Z_{\underline{s}}} \label{AUnderBound2}
\end{equation}
Replacing \eqref{AUnderBound2} into \eqref{UnderBound}, we have:
\begin{equation*}
        \log \frac{1}{Z_{\underline{s}}} ~\leq~ \log \frac {1}{\abs{\mathcal{A}}} - R^{\mathrm{min}}_{\boldsymbol{\theta}} + \gamma  \log \frac{1}{Z_{\underline{s}}}
\end{equation*}
and, consequently:
\begin{equation}
    \log \frac{1}{Z_{\underline{s}}} ~\leq~ \frac{- \log \abs{\mathcal{A}} - R^{\mathrm{min}}_{\boldsymbol{\theta}}}{1 - \gamma} \label{SBound2}
\end{equation}
Finally, using \eqref{SBound1} and \eqref{SBound2} in \eqref{SplitMax2}:
\begin{equation}
     \max_{s^\prime}  \abs{\log Z_{s^\prime}} ~\leq~ \frac{1}{1 - \gamma} \cdot \max \bc{ R^{\mathrm{max}}_{\boldsymbol{\theta}} + \log \abs{\mathcal{A}},  - \log \abs{\mathcal{A}} - R^{\mathrm{min}}_{\boldsymbol{\theta}}} \label{logBound}
\end{equation}
In addition, one can notice that the bound \eqref{logBound} holds also for $\max_{s^\prime}  \abs{\log Z'_{s^\prime}}$:
\begin{equation*}
     \max_{s^\prime}  \abs{\log Z'_{s^\prime}} ~\leq~ \frac{1}{1 - \gamma} \cdot \max \bc{ R^{\mathrm{max}}_{\boldsymbol{\theta}} + \log \abs{\mathcal{A}},  - \log \abs{\mathcal{A}} - R^{\mathrm{min}}_{\boldsymbol{\theta}}}
\end{equation*}
Thus, we can finally replace \eqref{logBound} in \eqref{Main} that gives:
\begin{equation}
      \max_{s,a}\abs{\log \frac{Z'_{a|s}}{Z_{a|s}}} ~\leq~ \frac{\gamma}{\br{1 - \gamma}^2} \cdot
       \max \bc{ R^{\mathrm{max}}_{\boldsymbol{\theta}} + \log \abs{\mathcal{A}},  - \log \abs{\mathcal{A}} - R^{\mathrm{min}}_{\boldsymbol{\theta}}} \cdot \max_{s,a}
 \sum_{s^\prime}  \abs{T'(s^\prime|s,a)- T(s^\prime|s,a)}
 \label{five}
\end{equation}
We can now go back through the inequality chain to eventually state the bound in the Theorem. First, plugging in \eqref{five} into \eqref{LogToBound} gives:
\begin{equation}
\max_s D_{\mathrm{KL}}\br{\pi'(\cdot|s), \pi(\cdot|s)} ~\leq~ \frac{\max_s  \norm{ \pi'(\cdot|s) - \pi(\cdot|s)}_1 \cdot \kappa_{\boldsymbol{\theta}}^2}{(1 - \gamma)^2} \cdot d_\mathrm{dyn} \br{T', T}
 \label{six}
\end{equation}
First, by using Pinsker's inequality and the fact that $\max_s  \norm{ \pi'(\cdot|s) - \pi(\cdot|s)}_1 \leq 2$, we get:
\[
\max_{s} \norm{\pi'(\cdot|s) - \pi(\cdot|s)}_1 ~\leq~ \sqrt{2 \max_s D_{\mathrm{KL}}\br{\pi'(\cdot|s), \pi(\cdot|s)}} ~\leq~ \frac{2 \cdot \kappa_{\boldsymbol{\theta}}}{(1 - \gamma)} \cdot \sqrt{d_\mathrm{dyn} \br{T', T}} 
\]
Similarly, by using Pinsker's inequality, we get:
\[
\max_{s} \norm{\pi'(\cdot|s) - \pi(\cdot|s)}_1 ~\leq~ \sqrt{2 \max_s D_{\mathrm{KL}}\br{\pi'(\cdot|s), \pi(\cdot|s)}} ~\leq~ \frac{ \kappa_{\boldsymbol{\theta}}}{(1 - \gamma)} \cdot \sqrt{2 \max_{s} \norm{\pi'(\cdot|s) - \pi(\cdot|s)}_1 d_\mathrm{dyn} \br{T', T}} 
\]
Thus, we have:
\[
\max_{s} \norm{\pi'(\cdot|s) - \pi(\cdot|s)}_1 ~\leq~ \frac{ 2 \cdot \kappa^2_{\boldsymbol{\theta}}}{(1 - \gamma)^2} \cdot d_\mathrm{dyn} \br{T', T}
\]
Finally, we get:
\[
d_\mathrm{pol} \br{\pi', \pi} ~\leq~ 2 \min \bc{\frac{\kappa_{\boldsymbol{\theta}} \cdot \sqrt{d_\mathrm{dyn} \br{T', T}}}{(1 - \gamma)}, \frac{\kappa^2_{\boldsymbol{\theta}} \cdot d_\mathrm{dyn} \br{T', T}}{(1 - \gamma)^2}}
\]
\end{proof}

\subsection{Proof of Theorem~\ref{thm:regret_different_experts}}
\label{app:mce-irl-upper-bound}

\begin{proof}
Consider the following:
\begin{align*}
\abs{V^{\pi_1}_{M_{\boldsymbol{\theta^*}}^L} - V^{\pi_2}_{M_{\boldsymbol{\theta^*}}^L}} ~\leq~& \abs{V^{\pi_1}_{M_{\boldsymbol{\theta^*}}^L} - V^{\pi^{*}_{M_{\boldsymbol{\theta^*}}^L}}_{M_{\boldsymbol{\theta^*}}^L}} + \abs{V^{\pi^{*}_{M_{\boldsymbol{\theta^*}}^L}}_{M_{\boldsymbol{\theta^*}}^L} - V^{\pi^{*}_{M_{\boldsymbol{\theta^*}}^E}}_{M_{\boldsymbol{\theta^*}}^E}} + \abs{V^{\pi^{*}_{M_{\boldsymbol{\theta^*}}^E}}_{M_{\boldsymbol{\theta^*}}^E} - V^{\pi_2}_{M_{\boldsymbol{\theta^*}}^L}} \\
~=~& \frac{1}{1-\gamma} \abs{\ip{\boldsymbol{\theta^*}}{\boldsymbol{\rho}^{\pi_1}_{M^L} - \boldsymbol{\rho}^{\pi^{*}_{M_{\boldsymbol{\theta^*}}^L}}_{M^L}}} + \abs{V^{\pi^{*}_{M_{\boldsymbol{\theta^*}}^L}}_{M_{\boldsymbol{\theta^*}}^L} - V^{\pi^{*}_{M_{\boldsymbol{\theta^*}}^E}}_{M_{\boldsymbol{\theta^*}}^E}} + \frac{1}{1-\gamma} \abs{\ip{\boldsymbol{\theta^*}}{\boldsymbol{\rho}^{\pi^{*}_{M_{\boldsymbol{\theta^*}}^E}}_{M^E} - \boldsymbol{\rho}^{\pi_2}_{M^L}}} \\
~=~& \abs{V^{\pi^{*}_{M_{\boldsymbol{\theta^*}}^L}}_{M_{\boldsymbol{\theta^*}}^L} - V^{\pi^{*}_{M_{\boldsymbol{\theta^*}}^E}}_{M_{\boldsymbol{\theta^*}}^E}} \\
~\leq~& \frac{\gamma \cdot \abs{R_{\boldsymbol{\theta^*}}}^{\mathrm{max}}}{(1 - \gamma)^2} \cdot d_\mathrm{dyn} \br{T^L, T^E}
\end{align*}
The first and third terms vanish, since:
\begin{enumerate}
\item $\pi_1$ is the optimal (thus feasible) solution to the optimization problem~\eqref{opt_start} with $\boldsymbol{\rho} \gets \boldsymbol{\rho}^{\pi^{*}_{M_{\boldsymbol{\theta^*}}^L}}_{M^L}$, and 
\item $\pi_2$ is the optimal (thus feasible) solution to the optimization problem~\eqref{opt_start} with $\boldsymbol{\rho} \gets \boldsymbol{\rho}^{\pi^{*}_{M_{\boldsymbol{\theta^*}}^E}}_{M^E}$.
\end{enumerate}
The last inequality is obtained from the Bellman optimality condition (see Theorem~7 in~\cite{zhang2020multi}). 
\end{proof}

For completeness, we restate Theorem~7 in~\cite{zhang2020multi} adapting the notation to our framework and considering bounded rewards instead of normalized rewards as in~\cite{zhang2020multi}.
\begin{theorem}[Theorem~7 in~\cite{zhang2020multi}]
Consider two MDPs $M_1 = \{ \mathcal{S}, \mathcal{A}, T_1, \gamma, P_0, R \}$ and $M_2 = \{ \mathcal{S}, \mathcal{A}, T_2, \gamma, P_0, R \}$ with bounded reward function $|R| \leq |R|^{\mathrm{max}} $ and policies $\pi^*_1$ optimal in $M_1$ and $\pi^*_2$ optimal in $M_2$. Then, we have that:
\begin{equation}
    |V^{\pi^*_1}_{M_1} - V^{\pi^*_2}_{M_2}| \leq \frac{\gamma \cdot \abs{R}^{\mathrm{max}}}{(1 - \gamma)^2} \cdot d_\mathrm{dyn} \br{T_1, T_2} .
\end{equation}
\end{theorem}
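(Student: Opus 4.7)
The plan is to compute every quantity appearing in the bound in closed form for the three-state, two-action example, since the MDP is small enough that the occupancy-matching constraint uniquely pins down the policies involved. First I would describe the expert side. Since $s_1,s_2$ are absorbing and the true reward gives $R_{\boldsymbol{\theta^*}}(s_1)=1>-1=R_{\boldsymbol{\theta^*}}(s_2)$, the optimal policy $\pi^{*}_{M^{E}_{\boldsymbol{\theta^*}}}$ in $M^{E}$ must play $a_1$ at $s_0$ (because $a_1$ yields expected discounted return $\tfrac{\gamma(1-2\epsilon_E)}{1-\gamma}$ while $a_2$ yields $\tfrac{-\gamma}{1-\gamma}$, and $1-2\epsilon_E>-1$ for $\epsilon_E<1$). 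A direct computation of $\rho^{\pi^{*}_{M^{E}_{\boldsymbol{\theta^*}}}}_{M^{E}}$ then gives $\bigl(1-\gamma,\ \gamma(1-\epsilon_E),\ \gamma\epsilon_E\bigr)$. By the same reasoning in $M^{L}$ (deterministic, $\epsilon_L=0$), the no-mismatch expert occupancy is $\bigl(1-\gamma,\ \gamma,\ 0\bigr)$, and since the Bellman flow constraint in $M^{L}$ has a unique feasible policy attaining this occupancy, $\pi_1$ is forced to satisfy $\pi_1(a_1\mid s_0)=1$, yielding $V^{\pi_1}_{M^{L}_{\boldsymbol{\theta^*}}}=\tfrac{\gamma}{1-\gamma}$.

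Next I would handle the standard MCE IRL side. Under mismatch, $\pi_2$ must match the expert occupancy from $M^{E}$ while acting in the deterministic $M^{L}$. Writing $\pi_2(a_1\mid s_0)=p$, the induced occupancy is $\bigl(1-\gamma,\ \gamma p,\ \gamma(1-p)\bigr)$, so the equation $\gamma p=\gamma(1-\epsilon_E)$ uniquely gives $p=1-\epsilon_E$. Evaluating $\pi_2$ in $M^{L}_{\boldsymbol{\theta^*}}$ yields
\[
V^{\pi_2}_{M^{L}_{\boldsymbol{\theta^*}}} \;=\; (1-\epsilon_E)\cdot\tfrac{\gamma}{1-\gamma} + \epsilon_E\cdot\tfrac{-\gamma}{1-\gamma} \;=\; \tfrac{\gamma(1-2\epsilon_E)}{1-\gamma},
\]
and subtracting gives $\bigl|V^{\pi_1}-V^{\pi_2}\bigr|=\tfrac{2\gamma\epsilon_E}{1-\gamma}=\tfrac{\gamma}{1-\gamma}\,d_\mathrm{dyn}(T^L,T^E)$, using $d_\mathrm{dyn}(T^L,T^E)=2\epsilon_E$.

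For the robust algorithm I would invoke the action-robust reformulation of Section~\ref{sec:sol-markov-game}: at a fixed point of the outer Lagrangian, the recovered reward $R_{\boldsymbol{\theta}}$ must incentivize the player towards $s_1$ (otherwise matching the nonzero mass on $s_1$ is impossible), so the opponent's best response at $s_0$ is deterministically $a_2$. Writing $\pi^{\mathrm{pl}}(a_1\mid s_0)=p$, the mixture policy $\alpha\pi^{\mathrm{pl}}+(1-\alpha)\pi^{\mathrm{op}}$ reaches $s_1$ with probability $\alpha p$ and $s_2$ with probability $1-\alpha p$, producing occupancy $\bigl(1-\gamma,\ \gamma\alpha p,\ \gamma(1-\alpha p)\bigr)$ in $M^{L}$. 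Setting this equal to the expert's $\rho$ gives $\alpha p=1-\epsilon_E$; with $\alpha=1-\tfrac{d_\mathrm{dyn}(T^L,T^E)}{2}=1-\epsilon_E$ this forces $p=1$. Finally, $\pi^{\mathrm{pl}}$ alone in $M^{L}_{\boldsymbol{\theta^*}}$ plays $a_1$ at $s_0$ and reaches $s_1$, so $V^{\pi^{\mathrm{pl}}}_{M^{L}_{\boldsymbol{\theta^*}}}=\tfrac{\gamma}{1-\gamma}=V^{\pi_1}_{M^{L}_{\boldsymbol{\theta^*}}}$, giving a vanishing gap.

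The main obstacle is justifying the step "$p=1$" within the soft/regularized MCE framework: a strictly soft-optimal player policy cannot be exactly deterministic at finite Lagrangian $\boldsymbol{\theta}$, so I need to argue that the Bellman-flow constraint has a unique feasible policy which the Lagrangian update drives towards as $R_{\boldsymbol{\theta}}(s_1)-R_{\boldsymbol{\theta}}(s_2)$ grows, and that the value $V^{\pi^{\mathrm{pl}}}_{M^{L}_{\boldsymbol{\theta^*}}}$ is continuous in $p$ so the limiting value equals $\tfrac{\gamma}{1-\gamma}$. A secondary subtlety is ruling out the alternative where the opponent plays $a_1$ at a Lagrangian fixed point: this can be excluded because that choice would force the matching equation $\alpha p+(1-\alpha)=1-\epsilon_E$, yielding $p=-1<0$ when $\alpha=1-\epsilon_E$, which is infeasible—hence the opponent must play $a_2$ as assumed.
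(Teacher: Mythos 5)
Your proposal does not prove the statement at hand. The statement is a \emph{universal} upper bound: for \emph{every} pair of MDPs $M_1,M_2$ sharing $\mathcal{S},\mathcal{A},\gamma,P_0,R$ and for their respective optimal policies, $\abs{V^{\pi^*_1}_{M_1} - V^{\pi^*_2}_{M_2}} \leq \frac{\gamma \abs{R}^{\mathrm{max}}}{(1-\gamma)^2} d_\mathrm{dyn}(T_1,T_2)$. What you have written is an exact computation inside the single three-state, two-action example of Figure~\ref{fig:worst_case_mdp}, together with an analysis of the MCE IRL policy $\pi_2$ and the robust player policy $\pi^{\mathrm{pl}}$ under occupancy matching. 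That is the content of Theorem~\ref{theorem-tightness} (the constructive tightness example), not of the quoted result. A closed-form verification in one instance can at most show that the bound cannot be improved below $\frac{\gamma}{1-\gamma}\, d_\mathrm{dyn}(T_1,T_2)$ in the worst case; it says nothing about whether the inequality holds for arbitrary state spaces, rewards, and transition kernels. Moreover, most of your argument (the Lagrangian fixed point, the opponent's best response, the feasibility of $\alpha p = 1-\epsilon_E$) concerns the robust IRL algorithm, which does not even appear in the statement being proved.

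For the record, the paper does not prove this statement either; it imports it verbatim as Theorem~7 of~\cite{zhang2020multi} and only restates it with adapted notation. If you wanted to supply a self-contained proof, the standard route is: assume without loss of generality $V^{\pi^*_1}_{M_1} \geq V^{\pi^*_2}_{M_2}$; use optimality of $\pi^*_2$ in $M_2$ to write $V^{\pi^*_1}_{M_1} - V^{\pi^*_2}_{M_2} \leq V^{\pi^*_1}_{M_1} - V^{\pi^*_1}_{M_2}$, i.e., reduce to evaluating a \emph{fixed} policy in two different dynamics; then apply the Simulation Lemma~\cite{kearns1998near,even2003approximate}, which gives $\abs{V^{\pi}_{M_1} - V^{\pi}_{M_2}} \leq \frac{\gamma \abs{R}^{\mathrm{max}}}{(1-\gamma)^2} d_\mathrm{dyn}(T_1,T_2)$ for any fixed $\pi$; the symmetric case is handled identically. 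None of these steps appear in your proposal.
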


When the expert policy is soft-optimal, we use Lemma~\ref{thm:first_week} and Simulation Lemma~\cite{kearns1998near,even2003approximate} to obtain the following bound on the performance gap:

\begin{align*}
\abs{V^{\pi_1}_{M_{\boldsymbol{\theta^*}}^L} - V^{\pi_2}_{M_{\boldsymbol{\theta^*}}^L}} ~\leq~& \abs{V^{\pi_1}_{M_{\boldsymbol{\theta^*}}^L} - V^{\pi^{\mathrm{soft}}_{M_{\boldsymbol{\theta^*}}^L}}_{M_{\boldsymbol{\theta^*}}^L}} + \abs{V^{\pi^{\mathrm{soft}}_{M_{\boldsymbol{\theta^*}}^L}}_{M_{\boldsymbol{\theta^*}}^L} - V^{\pi^{\mathrm{soft}}_{M_{\boldsymbol{\theta^*}}^E}}_{M_{\boldsymbol{\theta^*}}^E}} + \abs{V^{\pi^{\mathrm{soft}}_{M_{\boldsymbol{\theta^*}}^E}}_{M_{\boldsymbol{\theta^*}}^E} - V^{\pi_2}_{M_{\boldsymbol{\theta^*}}^L}} \\
~=~& \abs{\ip{\boldsymbol{\theta^*}}{\boldsymbol{\rho}^{\pi_1}_{M^L} - \boldsymbol{\rho}^{\pi^{\mathrm{soft}}_{M_{\boldsymbol{\theta^*}}^L}}_{M^L}}} + \abs{V^{\pi^{\mathrm{soft}}_{M_{\boldsymbol{\theta^*}}^L}}_{M_{\boldsymbol{\theta^*}}^L} - V^{\pi^{\mathrm{soft}}_{M_{\boldsymbol{\theta^*}}^E}}_{M_{\boldsymbol{\theta^*}}^E}} + \abs{\ip{\boldsymbol{\theta^*}}{\boldsymbol{\rho}^{\pi^{\mathrm{soft}}_{M_{\boldsymbol{\theta^*}}^E}}_{M^E} - \boldsymbol{\rho}^{\pi_2}_{M^L}}} \\
~=~& \abs{V^{\pi^{\mathrm{soft}}_{M_{\boldsymbol{\theta^*}}^L}}_{M_{\boldsymbol{\theta^*}}^L} - V^{\pi^{\mathrm{soft}}_{M_{\boldsymbol{\theta^*}}^E}}_{M_{\boldsymbol{\theta^*}}^E}} \\
~\leq~& \abs{V^{\pi^{\mathrm{soft}}_{M_{\boldsymbol{\theta^*}}^L}}_{M_{\boldsymbol{\theta^*}}^L} - V^{\pi^{\mathrm{soft}}_{M_{\boldsymbol{\theta^*}}^L}}_{M_{\boldsymbol{\theta^*}}^E}} + \abs{V^{\pi^{\mathrm{soft}}_{M_{\boldsymbol{\theta^*}}^L}}_{M_{\boldsymbol{\theta^*}}^E} - V^{\pi^{\mathrm{soft}}_{M_{\boldsymbol{\theta^*}}^E}}_{M_{\boldsymbol{\theta^*}}^E}} \\
~\leq~& \frac{\gamma \cdot \abs{R_{\boldsymbol{\theta^*}}}^{\mathrm{max}}}{(1 - \gamma)^2} \cdot d_\mathrm{dyn} \br{T^L, T^E} + \frac{2 \cdot \kappa_{\boldsymbol{\theta^*}} \cdot \abs{R_{\boldsymbol{\theta^*}}}^{\mathrm{max}}}{(1 - \gamma)^3} \cdot \sqrt{d_\mathrm{dyn} \br{T^L, T^E}}
\end{align*}

\subsection{Proof of Theorem~\ref{thm:occ_states}}
\label{app:existence pi star}

\subsubsection{Impossibility to match the State-action Occupancy Measure}
\label{app:state-only-reward}

We overload the notation $\rho^{\pi}_{M}$ to denote the state-action occupancy measure as well, which is defined as follows:
\begin{equation*}
\rho^{\pi}_{M}(s,a) ~:=~ \pi(a|s) \rho^{\pi}_{M}(s) .
\end{equation*}
Before proving the theorem, we show that finding the policy $\pi^L$ whose state-action occupancy measure matches the state-action visitation frequency $\boldsymbol{\rho}$ of the expert policy\footnote{In this proof, the expert policy is denoted by $\pi^E$. In the specific case of our paper, it stands for either $\pi^*_{M_{\boldsymbol{\theta}}^L}$ or $\pi^*_{M_{\boldsymbol{\theta}}^E}$. However, the result holds for every valid expert policy.} $\pi^E$ is impossible in case of model mismatch. 
Consider:
\begin{align*}
\rho(s,a) ~=~& \rho^{\pi^L}_{M^L}(s,a) \\
\rho(s) \pi^E(a|s) ~=~& \rho^{\pi^L}_{M^L}(s) \pi^L(a|s) \\
\pi^L(a|s) ~=~& \pi^E(a|s) \frac{\rho(s)}{\rho^{\pi^L}_{M^L}(s)}
\end{align*}
Notice that the policy $\pi^L$ is normalized only if we require that $\frac{\rho(s)}{\rho^{\pi^L}_{M^L}(s)} = 1$.
This implies that $\pi^L(s|a) = \pi^E(s|a)$. However, the same policy can not induce the same state occupancy measure under different transition dynamics, it follows that $\frac{\rho(s)}{\rho^{\pi^L}_{M^L}(s)} \neq 1$. We reached a contradiction that allows us to conclude that $\pi^L$ can match the state-action occupancy measure only in absence of model mismatch. Therefore, when there is a model mismatch, the feasible set of~\eqref{opt_start} would be empty if state-action occupancy measures were used in posing the constraint. In addition, even if the two environments were the same, only the expert policy would have been in the feasible set because there exists an injective mapping from state-action visitation frequencies to policies as already noted in~\cite{bloem2014infinite, boularias2010bootstrapping}.

\subsubsection{Theorem Proof} \label{app:occ_states}

\begin{proof}
If there exists a policy $\pi^L$ that matches the expert state occupancy measure $\boldsymbol{\rho}$ in the environment $M^L$, the Bellman flow constraints~\cite{boularias2010bootstrapping} lead to the following equation for each state $s \in \mathcal{S}$:
\begin{equation}
  \rho(s) - (1 - \gamma)P_0(s) ~=~ \gamma \sum_{s^\prime , a^\prime} \rho (s^\prime) \pi^L(a^\prime|s^\prime) T^L(s | s^\prime, a^\prime)  \label{LE}
\end{equation}
This can be seen by writing the Bellman flow constraints for the expert policy $\pi^E$ with transition dynamics $T^E$, and for the policy $\pi^L$ with transition dynamics $T^L$:
\begin{align}
\rho(s) - (1 - \gamma)P_0(s) ~=~& \gamma \sum_{s^\prime , a^\prime} \rho (s^\prime) \pi^E(a^\prime|s^\prime) T^E(s | s^\prime, a^\prime) \label{E}\\
\rho^{\pi^L}_{M^L}(s) - (1 - \gamma)P_0(s) ~=~& \gamma \sum_{s^\prime , a^\prime} \rho^{\pi^L}_{M^L} (s^\prime) \pi^L(a^\prime|s^\prime) T^L(s | s^\prime, a^\prime) \label{L}
\end{align}
By definition of $\pi^L$, the two occupancy measures are equal, so we can equate the LHS of \eqref{E} to the RHS of \eqref{L}, obtaining:
\begin{equation*}
\rho(s) - (1 - \gamma)P_0(s) ~=~ \gamma \sum_{s^\prime , a^\prime} \rho^{\pi^L}_{M^L} (s^\prime) \pi^L(a^\prime|s^\prime) T^L(s | s^\prime, a^\prime)
\end{equation*}
Finally, replacing $\rho$ in the RHS, one obtains the equation in \eqref{LE}. In addition, for each state we have the condition on the normalization of the policy:
\begin{equation*}
1 ~=~ \sum_a \pi^L(a|s) ,
\quad \forall s \in \mathcal{S}
\end{equation*}
All these conditions can be seen as an underdetermined system with $2 \abs{\mathcal{S}}$ equations ($\abs{\mathcal{S}}$ for normalization, and $\abs{\mathcal{S}}$ for the Bellman flow constraints). The unknown is the policy $\pi^\ast$ represented by the $\abs{S}\abs{A}$ entries of the vector $\boldsymbol{\pi^\ast}$, formally defined in \eqref{pi}.

We introduce the matrix $\boldsymbol{T}$. In the first $\abs{\mathcal{S}}$ rows, the entry in the $s^{\mathrm{th}}$ row and $\br{s' \abs{\mathcal{A}} + a'}^\mathrm{th}$ column is the element $\rho (s^\prime) T^L(s | s^\prime, a^\prime)$. In the last $\abs{\mathcal{S}}$ rows, the entries are instead given by $1$ from position $s'\abs{\mathcal{A}}$ to position $s'\abs{\mathcal{A}} + \abs{\mathcal{A}}$. These rows of the matrix serves to impose the normalization condition for each possible state. A clearer block structure representation is given in Section~\ref{sec:existence}.

We can thus write the underdetermined system as:
\begin{equation}
    \begin{bmatrix}
     \boldsymbol{\rho} - (1 - \gamma)\boldsymbol{P}_{0} \\
        \boldsymbol{1}_{\abs{\mathcal{S}}}
    \end{bmatrix}
 ~=~ \boldsymbol{T} \boldsymbol{\pi}^L ,
 \label{undersystem}
\end{equation}
where the left hand side is a vector whose first $\abs{\mathcal{S}}$ positions are the element-wise difference between the state occupancy measure and the initial probability distribution for each state, and the second half are all ones. Recognising that this matches the vector $\boldsymbol{v}$ described in Section~\ref{sec:existence}, we can rewrite the system as:
\begin{equation}
     \boldsymbol{v}
 ~=~ \boldsymbol{T} \boldsymbol{\pi}^L
\end{equation}
The right hand side is instead written using the matrix $\boldsymbol{T}$, and the unknown matching policy vector $\boldsymbol{\pi}^L$.
A direct application of the Rouché-Capelli theorem gives that a linear system admits solutions if and only if the rank of the coeffient matrix is equal to the rank of the coefficient matrix augmented with the known vector. In our case it is:
\begin{equation}
    \mathrm{rank}\br{\boldsymbol{T}} = \mathrm{rank}\br{\boldsymbol{T}|\boldsymbol{v} }
    \label{iff_cond}
\end{equation}

This fact limits the class of perturbation in the dynamics that can be considered still achieving perfect matching.
Corollary ~\ref{corollary} follows because in the case of determined or underdetermined system, i.e. when $\abs{\mathcal{A}} > 1$, the matrix $\boldsymbol{T}$ has rank no larger than $\min(2\abs{\mathcal{S}}, \abs{\mathcal{S}}\abs{\mathcal{A}}) = 2\abs{\mathcal{S}}$  that is the number of rows of the matrix.
It follows that under this assumption, $\boldsymbol{T}$ is full rank when its rank is equal to $2\abs{\mathcal{S}}$. The augmented matrix $(\boldsymbol{T}|\boldsymbol{v})$ will also have a rank upper bounded by $\min(2\abs{\mathcal{S}}, \abs{\mathcal{S}}\abs{\mathcal{A}} + 1) = 2\abs{\mathcal{S}}$ since it has constructed adding one column.
This implies that, when $\boldsymbol{T}$ is full rank, equation \eqref{iff_cond} holds.

\paragraph{Block Representation of the Matching Policy Vector $\boldsymbol{\pi}^L$.}
For each state $s \in \mathcal{S}$, we can define a local matching policy vector $\boldsymbol{\pi}^L(s) \in \mathbb{R}^{\abs{\mathcal{A}}}$ as:
\begin{equation*}
    \boldsymbol{\pi}^L(s) ~=~ \begin{bmatrix}
    \pi(a_1| s) \\
    \pi(a_2| s) \\
    \vdots \\
    \pi(a_{\abs{\mathcal{A}}}| s)
    \end{bmatrix}
\end{equation*}

Then, the matching policy vector $\boldsymbol{\pi}^L \in \mathbb{R}^{\abs{\mathcal{S}}\abs{\mathcal{A}}}$ is given by the vertical stacking of the local matching vectors:

\begin{equation}
    \boldsymbol{\pi}^L ~=~ \begin{bmatrix}
    \boldsymbol{\pi^L}(s_1) \\
    \boldsymbol{\pi^L}(s_2) \\   
    \vdots \\
    \boldsymbol{\pi^L}(s_{\abs{\mathcal{S}}}) \\ 
    \end{bmatrix}
    \label{pi}
\end{equation}
\end{proof}

\subsection{Upper bound for the Reward Transfer Strategy}
\label{app:reward-transfer}

Let $\pi^L$ be the policy obtained from the reward transfer strategy explained in Section~\ref{sec:mismatch-learning}, and $\pi_1 := \pi^{\mathrm{soft}}_{M_{\boldsymbol{\theta_L}}^L}$.

\begin{theorem}
\label{thm:new-reward-transfer-bound}
The performance gap between the policies $\pi_1$ and $\pi^L$ on the MDP $M_{\boldsymbol{\theta^*}}^L$ is bounded as follows:
\begin{align*}
& \abs{V^{\pi_1}_{M_{\boldsymbol{\theta^*}}^L} - V^{\pi^L}_{M_{\boldsymbol{\theta^*}}^L}} \\
~\leq~& \frac{\abs{R_{\boldsymbol{\theta^*}}}^{\mathrm{max}}}{(1 - \gamma)^2} \cdot \bc{\gamma \cdot d_\mathrm{dyn} \br{T^L, T^E} + \frac{2 \cdot \kappa_{\boldsymbol{\theta}^\mathrm{train}}}{1 - \gamma} \cdot \sqrt{d_\mathrm{dyn} \br{T^\mathrm{train}, T^L}} + \gamma \cdot d_\mathrm{dyn} \br{T^{\mathrm{train}}, T^L} + d_\mathrm{pol} \br{\pi_4, \pi^L}}
\end{align*}
\end{theorem}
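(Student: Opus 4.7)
The plan is to triangulate the performance gap $\abs{V^{\pi_1}_{M^L_{\boldsymbol{\theta^*}}} - V^{\pi^L}_{M^L_{\boldsymbol{\theta^*}}}}$ through a chain of carefully chosen intermediate soft-optimal policies, following the template of Theorem~\ref{thm:regret_different_experts} but inserting extra ``bridges'' that handle the training-MDP detour. First I would introduce $\pi_2 := \pi^{\mathrm{soft}}_{M^L_{\boldsymbol{\theta_E}}}$, the policy from Section~\ref{sec:mce-irl-tight-upper} which satisfies $\boldsymbol{\rho}^{\pi_2}_{M^L} = \boldsymbol{\rho}$ by construction; then $\pi_3 := \pi^{\mathrm{soft}}_{M^{\mathrm{train}}_{\boldsymbol{\theta}^{\mathrm{train}}}}$, which by the definition of $\boldsymbol{\theta}^{\mathrm{train}}$ satisfies $\boldsymbol{\rho}^{\pi_3}_{M^{\mathrm{train}}} = \boldsymbol{\rho}$; and $\pi_4 := \pi^{\mathrm{soft}}_{M^L_{\boldsymbol{\theta}^{\mathrm{train}}}}$, the soft-optimal policy in the learner MDP under the transferred reward (so that $\pi^L$, taken to be either the soft- or hard-optimal of $M^L_{\boldsymbol{\theta}^{\mathrm{train}}}$, is ``close'' to $\pi_4$).

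With these intermediates, I would split the gap via the triangle inequality:
\begin{align*}
\abs{V^{\pi_1}_{M^L_{\boldsymbol{\theta^*}}} - V^{\pi^L}_{M^L_{\boldsymbol{\theta^*}}}} \leq{}& \abs{V^{\pi_1}_{M^L_{\boldsymbol{\theta^*}}} - V^{\pi_2}_{M^L_{\boldsymbol{\theta^*}}}} + \abs{V^{\pi_2}_{M^L_{\boldsymbol{\theta^*}}} - V^{\pi_3}_{M^{\mathrm{train}}_{\boldsymbol{\theta^*}}}} + \abs{V^{\pi_3}_{M^{\mathrm{train}}_{\boldsymbol{\theta^*}}} - V^{\pi_3}_{M^L_{\boldsymbol{\theta^*}}}} \\
& + \abs{V^{\pi_3}_{M^L_{\boldsymbol{\theta^*}}} - V^{\pi_4}_{M^L_{\boldsymbol{\theta^*}}}} + \abs{V^{\pi_4}_{M^L_{\boldsymbol{\theta^*}}} - V^{\pi^L}_{M^L_{\boldsymbol{\theta^*}}}} .
\end{align*}
Call these terms $(A)$--$(E)$. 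Term $(A)$ is exactly what Theorem~\ref{thm:regret_different_experts} controls, giving $\frac{\gamma\abs{R_{\boldsymbol{\theta^*}}}^{\mathrm{max}}}{(1-\gamma)^2}d_\mathrm{dyn}(T^L,T^E)$. Term $(B)$ telescopes to zero: by the one-hot linearization, $V^{\pi_2}_{M^L_{\boldsymbol{\theta^*}}} = \frac{1}{1-\gamma}\ip{\boldsymbol{\theta^*}}{\boldsymbol{\rho}^{\pi_2}_{M^L}} = \frac{1}{1-\gamma}\ip{\boldsymbol{\theta^*}}{\boldsymbol{\rho}} = \frac{1}{1-\gamma}\ip{\boldsymbol{\theta^*}}{\boldsymbol{\rho}^{\pi_3}_{M^{\mathrm{train}}}} = V^{\pi_3}_{M^{\mathrm{train}}_{\boldsymbol{\theta^*}}}$. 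Term $(C)$ is the same policy evaluated under two dynamics with a common reward, so a simulation-lemma style argument (of the Theorem~7 of~\cite{zhang2020multi} flavor used already in the proof of Theorem~\ref{thm:regret_different_experts}) bounds it by $\frac{\gamma\abs{R_{\boldsymbol{\theta^*}}}^{\mathrm{max}}}{(1-\gamma)^2}d_\mathrm{dyn}(T^{\mathrm{train}},T^L)$. For Term $(D)$, $\pi_3$ and $\pi_4$ are both soft-optimal for the common reward $R_{\boldsymbol{\theta}^{\mathrm{train}}}$ but under $T^{\mathrm{train}}$ and $T^L$ respectively, so Lemma~\ref{thm:first_week} (taking the square-root branch of the min) gives $d_\mathrm{pol}(\pi_3,\pi_4) \leq \frac{2\kappa_{\boldsymbol{\theta}^{\mathrm{train}}}}{1-\gamma}\sqrt{d_\mathrm{dyn}(T^{\mathrm{train}},T^L)}$; combining with the standard same-dynamics bound $\abs{V^{\pi}_{M^L_{\boldsymbol{\theta^*}}} - V^{\pi'}_{M^L_{\boldsymbol{\theta^*}}}} \leq \frac{\abs{R_{\boldsymbol{\theta^*}}}^{\mathrm{max}}}{(1-\gamma)^2}d_\mathrm{pol}(\pi,\pi')$ yields the $\frac{2\kappa_{\boldsymbol{\theta}^{\mathrm{train}}}\abs{R_{\boldsymbol{\theta^*}}}^{\mathrm{max}}}{(1-\gamma)^3}\sqrt{d_\mathrm{dyn}(T^{\mathrm{train}},T^L)}$ term. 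Term $(E)$ is a final application of the same policy-to-value bound, giving $\frac{\abs{R_{\boldsymbol{\theta^*}}}^{\mathrm{max}}}{(1-\gamma)^2}d_\mathrm{pol}(\pi_4,\pi^L)$. Summing the five contributions and factoring $\frac{\abs{R_{\boldsymbol{\theta^*}}}^{\mathrm{max}}}{(1-\gamma)^2}$ yields exactly the stated inequality.

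The main obstacle is selecting the right chain: the nontrivial observation is that $\pi_2$ and $\pi_3$ both match the \emph{same} $\boldsymbol{\rho}$ (merely in different MDPs), which makes Term $(B)$ cost nothing and localizes the training-MDP dependence into Terms $(C)$ and $(D)$—a dynamics-transfer of $\pi_3$ from $M^{\mathrm{train}}$ to $M^L$ and the soft-optimality wobble between $\pi_3$ and $\pi_4$. The secondary care is to pick the correct branch of Lemma~\ref{thm:first_week} to produce the $\sqrt{d_\mathrm{dyn}(T^{\mathrm{train}},T^L)}$ factor with constant $\frac{2\kappa_{\boldsymbol{\theta}^{\mathrm{train}}}}{1-\gamma}$ rather than its linear counterpart.
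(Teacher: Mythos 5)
Your proof is correct and follows essentially the same route as the paper's: a triangle-inequality chain through $\pi_3$ and $\pi_4$ in which the occupancy-matching hops cost nothing, with Theorem~\ref{thm:regret_different_experts} (equivalently Theorem~7 of~\cite{zhang2020multi}) handling the expert--learner mismatch, the Simulation Lemma handling the dynamics transfer, and Lemma~\ref{thm:first_week} combined with the occupancy-to-policy bound (Lemma~A.1 of~\cite{sun2018dual}) handling the $\pi_3$--$\pi_4$ wobble. The only cosmetic difference is the order of the middle hops: the paper compares $\pi_3$ and $\pi_4$ in $M^{\mathrm{train}}$ and then transfers $\pi_4$ to $M^L$, whereas you transfer $\pi_3$ to $M^L$ first and compare there; the constants come out identically either way.
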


\begin{proof}
We define $\pi_3 := \pi^{\mathrm{soft}}_{M^{\mathrm{train}}_{\boldsymbol{\theta}^\mathrm{train}}}$ and $\pi_4 := \pi^{\mathrm{soft}}_{M_{\boldsymbol{\theta}^\mathrm{train}}^L}$. First, consider the following:
\begin{align*}
\abs{V^{\pi_3}_{M_{\boldsymbol{\theta^*}}^{\mathrm{train}}} - V^{\pi_4}_{M_{\boldsymbol{\theta^*}}^{\mathrm{train}}}} &~=~ \frac{1}{1 - \gamma} \cdot \abs{\sum_{s} \bc{\rho^{\pi_3}_{M^{\mathrm{train}}} (s) - \rho^{\pi_4}_{M^{\mathrm{train}}} (s)} R_{\boldsymbol{\theta^*}}(s)} \\
&~\leq~ \frac{1}{1 - \gamma} \cdot  \sum_{s} \abs{\rho^{\pi_3}_{M^{\mathrm{train}}} (s) - \rho^{\pi_4}_{M^{\mathrm{train}}} (s)} \cdot \abs{ R_{\boldsymbol{\theta^*}}(s)} \\
&~\leq~ \frac{\abs{R_{\boldsymbol{\theta^*}}}^{\mathrm{max}}}{1 - \gamma} \cdot \sum_{s} \abs{\rho^{\pi_3}_{M^{\mathrm{train}}} (s) - \rho^{\pi_4}_{M^{\mathrm{train}}} (s)}  \\
&~=~ \frac{\abs{R_{\boldsymbol{\theta^*}}}^{\mathrm{max}}}{1 - \gamma} \cdot  \norm{\boldsymbol{\rho}^{\pi_3}_{M^{\mathrm{train}}} - \boldsymbol{\rho}^{\pi_4}_{M^{\mathrm{train}}}}_1 \\
&~\stackrel{\mathrm{a}}{\leq}~ \frac{\abs{R_{\boldsymbol{\theta^*}}}^{\mathrm{max}}}{(1 - \gamma)^2} \cdot  d_\mathrm{pol} \br{\pi_3, \pi_4} \\
&~\stackrel{\mathrm{b}}{\leq}~ \frac{2 \cdot \kappa_{\boldsymbol{\theta}^\mathrm{train}} \cdot \abs{R_{\boldsymbol{\theta^*}}}^{\mathrm{max}}}{(1 - \gamma)^3} \cdot \sqrt{d_\mathrm{dyn} \br{T^\mathrm{train}, T^L}} ,
\end{align*}
where $\mathrm{a}$ is due to Lemma A.1 in~\cite{sun2018dual}, and $\mathrm{b}$ is due to Lemma~\ref{thm:first_week}. Then, consider the following:
\begin{align*}
& \abs{V^{\pi_1}_{M_{\boldsymbol{\theta^*}}^L} - V^{\pi^L}_{M_{\boldsymbol{\theta^*}}^L}} \\
~\leq~& \abs{V^{\pi_1}_{M_{\boldsymbol{\theta^*}}^L} - V^{\pi^{*}_{M_{\boldsymbol{\theta^*}}^L}}_{M_{\boldsymbol{\theta^*}}^L}} + \abs{V^{\pi^{*}_{M_{\boldsymbol{\theta^*}}^L}}_{M_{\boldsymbol{\theta^*}}^L} - V^{\pi^{*}_{M_{\boldsymbol{\theta^*}}^E}}_{M_{\boldsymbol{\theta^*}}^E}} + \abs{V^{\pi^{*}_{M_{\boldsymbol{\theta^*}}^E}}_{M_{\boldsymbol{\theta^*}}^E} - V^{\pi_3}_{M_{\boldsymbol{\theta^*}}^{\mathrm{train}}}} + \\& \abs{V^{\pi_3}_{M_{\boldsymbol{\theta^*}}^{\mathrm{train}}} - V^{\pi_4}_{M_{\boldsymbol{\theta^*}}^{\mathrm{train}}}} + \abs{V^{\pi_4}_{M_{\boldsymbol{\theta^*}}^{\mathrm{train}}} - V^{\pi_4}_{M_{\boldsymbol{\theta^*}}^{L}}} + \abs{V^{\pi_4}_{M_{\boldsymbol{\theta^*}}^{L}} - V^{\pi^L}_{M_{\boldsymbol{\theta^*}}^{L}}} \\
~\stackrel{\mathrm{a}}{=}~& \abs{V^{\pi^{*}_{M_{\boldsymbol{\theta^*}}^L}}_{M_{\boldsymbol{\theta^*}}^L} - V^{\pi^{*}_{M_{\boldsymbol{\theta^*}}^E}}_{M_{\boldsymbol{\theta^*}}^E}} + \abs{V^{\pi_3}_{M_{\boldsymbol{\theta^*}}^{\mathrm{train}}} - V^{\pi_4}_{M_{\boldsymbol{\theta^*}}^{\mathrm{train}}}} + \abs{V^{\pi_4}_{M_{\boldsymbol{\theta^*}}^{\mathrm{train}}} - V^{\pi_4}_{M_{\boldsymbol{\theta^*}}^{L}}} + \abs{V^{\pi_4}_{M_{\boldsymbol{\theta^*}}^{L}} - V^{\pi^L}_{M_{\boldsymbol{\theta^*}}^{L}}} \\
~\leq~& \abs{V^{\pi^{*}_{M_{\boldsymbol{\theta^*}}^L}}_{M_{\boldsymbol{\theta^*}}^L} - V^{\pi^{*}_{M_{\boldsymbol{\theta^*}}^E}}_{M_{\boldsymbol{\theta^*}}^E}} + \abs{V^{\pi_3}_{M_{\boldsymbol{\theta^*}}^{\mathrm{train}}} - V^{\pi_4}_{M_{\boldsymbol{\theta^*}}^{\mathrm{train}}}} + \abs{V^{\pi_4}_{M_{\boldsymbol{\theta^*}}^{\mathrm{train}}} - V^{\pi_4}_{M_{\boldsymbol{\theta^*}}^{L}}} + \frac{\abs{R_{\boldsymbol{\theta^*}}}^{\mathrm{max}}}{(1 - \gamma)^2} \cdot  d_\mathrm{pol} \br{\pi_4, \pi^L} \\
~\stackrel{\mathrm{b}}{\leq}~& \frac{\gamma \cdot \abs{R_{\boldsymbol{\theta^*}}}^{\mathrm{max}}}{(1 - \gamma)^2} \cdot d_\mathrm{dyn} \br{T^L, T^E} + \abs{V^{\pi_3}_{M_{\boldsymbol{\theta^*}}^{\mathrm{train}}} - V^{\pi_4}_{M_{\boldsymbol{\theta^*}}^{\mathrm{train}}}} + \abs{V^{\pi_4}_{M_{\boldsymbol{\theta^*}}^{\mathrm{train}}} - V^{\pi_4}_{M_{\boldsymbol{\theta^*}}^{L}}} + \frac{\abs{R_{\boldsymbol{\theta^*}}}^{\mathrm{max}}}{(1 - \gamma)^2} \cdot  d_\mathrm{pol} \br{\pi_4, \pi^L} \\
~\stackrel{\mathrm{c}}{\leq}~& \frac{\gamma \cdot \abs{R_{\boldsymbol{\theta^*}}}^{\mathrm{max}}}{(1 - \gamma)^2} \cdot d_\mathrm{dyn} \br{T^L, T^E} + \abs{V^{\pi_3}_{M_{\boldsymbol{\theta^*}}^{\mathrm{train}}} - V^{\pi_4}_{M_{\boldsymbol{\theta^*}}^{\mathrm{train}}}} + \frac{\gamma \cdot \abs{R_{\boldsymbol{\theta^*}}}^{\mathrm{max}}}{(1 - \gamma)^2} \cdot d_\mathrm{dyn} \br{T^{\mathrm{train}}, T^L} + \frac{\abs{R_{\boldsymbol{\theta^*}}}^{\mathrm{max}}}{(1 - \gamma)^2} \cdot  d_\mathrm{pol} \br{\pi_4, \pi^L} \\
~\leq~& \frac{\abs{R_{\boldsymbol{\theta^*}}}^{\mathrm{max}}}{(1 - \gamma)^2} \cdot \bc{\gamma \cdot d_\mathrm{dyn} \br{T^L, T^E} + \frac{2 \cdot \kappa_{\boldsymbol{\theta}^\mathrm{train}}}{1 - \gamma} \cdot \sqrt{d_\mathrm{dyn} \br{T^\mathrm{train}, T^L}} + \gamma \cdot d_\mathrm{dyn} \br{T^{\mathrm{train}}, T^L} + d_\mathrm{pol} \br{\pi_4, \pi^L}} ,
\end{align*}
where $\mathrm{a}$ is due to the fact that $\boldsymbol{\rho}^{\pi_1}_{M^L} = \boldsymbol{\rho}^{\pi^{*}_{M_{\boldsymbol{\theta^*}}^L}}_{M^L}$ and $\boldsymbol{\rho}^{\pi^{*}_{M_{\boldsymbol{\theta^*}}^E}}_{M^E} = \boldsymbol{\rho}^{\pi_3}_{M^{\mathrm{train}}}$; $\mathrm{b}$ is due to Theorem~7 in~\cite{zhang2020multi}; and $\mathrm{c}$ is due to Simulation Lemma~\cite{kearns1998near,even2003approximate}. 
\end{proof}

When $M^\mathrm{train} = M^E$ and $\pi^L = \pi_4$, the above bound simplifies to:
\[
\abs{V^{\pi_1}_{M_{\boldsymbol{\theta^*}}^L} - V^{\pi^L}_{M_{\boldsymbol{\theta^*}}^L}} ~\leq~ \frac{2 \cdot \abs{R_{\boldsymbol{\theta^*}}}^{\mathrm{max}}}{(1 - \gamma)^2} \cdot \bc{\gamma \cdot d_\mathrm{dyn} \br{T^L, T^E} + \frac{\kappa_{\boldsymbol{\theta}^\mathrm{E}}}{1 - \gamma} \cdot \sqrt{d_\mathrm{dyn} \br{T^L, T^E}}} .
\]

\newpage
\section{Further Details of Section~\ref{sec:two_players_entropy}}
\label{app:solution-more-details}

\subsection{Relation between Robust MDP and Markov Games}
\label{app:details-equivalence-new}
This section gives a proof for the inequality in equation~\eqref{equivalence_new}:
\begin{proof}
We first introduce the set:
\begin{equation*}
    \underline{\mathcal{T}}^{L,\alpha} = \bc{T \quad | \quad T(s^\prime|s,a) = \alpha T^L(s^\prime|s,a) + (1 - \alpha) \bar{T}(s^\prime|s), \quad \bar{T}(s^\prime|s) = \sum_a \pi(a|s)T(s^\prime| s,a), \forall \pi \in \Delta_{A|S} }
\end{equation*}
Clearly, it holds that: $\underline{\mathcal{T}}^{L,\alpha} \subset \mathcal{T}^{L,\alpha}$ that implies:
\begin{equation*}
\max_{\pi^{\mathrm{pl}} \in \Pi} \min_{T \in \mathcal{T}^{L,\alpha}}  \E{G \bigm| \pi^{\mathrm{pl}}, P_0, T} \leq
\max_{\pi^{\mathrm{pl}} \in \Pi} \min_{T \in \underline{\mathcal{T}}^{L,\alpha}}  \E{G \bigm| \pi^{\mathrm{pl}}, P_0, T}
\end{equation*}
Finally, from~\cite[Section 3.1]{tessler2019action} we have:
\begin{equation*}
    \max_{\pi^{\mathrm{pl}} \in \Pi} \min_{T \in \underline{\mathcal{T}}^{L,\alpha}}  \E{G \bigm| \pi^{\mathrm{pl}}, P_0, T} = \max_{\pi^{\mathrm{pl}} \in \Pi} \min_{\pi^{\mathrm{op}} \in \Pi} \E{G \bigm| \alpha \pi^{\mathrm{pl}} + (1 - \alpha) \pi^{\mathrm{op}}, M^L}
\end{equation*}
We conclude that:
\begin{equation*}
    \max_{\pi^{\mathrm{pl}} \in \Pi} \min_{T \in \mathcal{T}^{L,\alpha}}  \E{G \bigm| \pi^{\mathrm{pl}}, P_0, T} \leq \max_{\pi^{\mathrm{pl}} \in \Pi} \min_{\pi^{\mathrm{op}} \in \Pi} \E{G \bigm| \alpha \pi^{\mathrm{pl}} + (1 - \alpha) \pi^{\mathrm{op}}, M^L}
\end{equation*}
Therefore the inequality in~\eqref{equivalence_new} holds.
\end{proof}
A natural question is whether the tightness of the bound can be controlled. An affirmative answer come from the following theorem relying on Lemma~\ref{thm:first_week}.
\begin{theorem}
Let $T^*$ be a saddle point when the $\mathrm{min}$ acts over the set $\mathcal{T}^{L,\alpha}$ and $\underline{T}^*$ be a saddle point when the $\mathrm{min}$ acts over the set $\underline{\mathcal{T}}^{L,\alpha}$. Then, the following holds:
\begin{align*}
\max_{\pi^{\mathrm{pl}} \in \Pi} \min_{\pi^{\mathrm{op}} \in \Pi} \E{G \bigm| \alpha \pi^{\mathrm{pl}} + (1 - \alpha) \pi^{\mathrm{op}}, M^L} -& \max_{\pi^{\mathrm{pl}} \in \Pi} \min_{T \in \mathcal{T}^{L,\alpha}}  \E{G \bigm| \pi^{\mathrm{pl}}, P_0, T} \\ &\leq \frac{2|R_{\theta}^{\mathrm{max}}|}{(1 - \gamma)^2} \min \bc{\frac{\kappa_{\theta} \sqrt{d(T^*,\underline{T}^*)}}{(1 - \gamma)}, \frac{\kappa^2_{\theta} d(T^*,\underline{T}^*)}{(1 - \gamma)^2}}
\end{align*}
\end{theorem}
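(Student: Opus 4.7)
The idea is to identify the gap with a difference of soft-optimal values and then invoke Lemma~\ref{thm:first_week} on the two induced soft-optimal policies.

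First I would use the saddle-point structure to collapse each max-min into a concrete soft-optimal value. By Theorem~1 of~\cite{geist2019regmdp}, for every transition kernel $T$ the soft-optimal policy $\pi^{\mathrm{soft}}_{M_\theta(T)}$ of $M_\theta(T):=\bc{\mathcal{S},\mathcal{A},T,\gamma,P_0,R_\theta}$ is unique, and the saddle inequality forces the player to attain this policy against the adversary's optimal choice. Setting $\pi_U:=\pi^{\mathrm{soft}}_{M_\theta(\underline{T}^*)}$ and $\pi_T:=\pi^{\mathrm{soft}}_{M_\theta(T^*)}$, the equivalence in Appendix~\ref{app:details-equivalence-new} turns the action-robust term into a robust value over $\underline{\mathcal{T}}^{L,\alpha}$, and the left-hand side of the theorem rewrites as
\[
V^{\pi_U,\mathrm{soft}}_{M_\theta(\underline{T}^*)} - V^{\pi_T,\mathrm{soft}}_{M_\theta(T^*)},
\]
which is non-negative because $\underline{\mathcal{T}}^{L,\alpha}\subset\mathcal{T}^{L,\alpha}$.

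Second, since $\pi_U$ and $\pi_T$ are soft-optimal policies for MDPs sharing the reward $R_\theta$ and differing only in their dynamics $\underline{T}^*$ and $T^*$, Lemma~\ref{thm:first_week} applies verbatim and gives
\[
d_\mathrm{pol}(\pi_U,\pi_T) \;\leq\; 2\min\!\bc{\tfrac{\kappa_\theta\sqrt{d(T^*,\underline{T}^*)}}{1-\gamma},\; \tfrac{\kappa_\theta^{\,2}\,d(T^*,\underline{T}^*)}{(1-\gamma)^2}}.
\]

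Third I would convert this policy-distance bound into a value-gap bound carrying the prefactor $\abs{R_\theta}^{\mathrm{max}}/(1-\gamma)^2$. Using soft-optimality of $\pi_T$ at $T^*$, the gap is upper-bounded by $V^{\pi_U,\mathrm{soft}}_{M_\theta(\underline{T}^*)} - V^{\pi_U,\mathrm{soft}}_{M_\theta(T^*)}$. Writing each soft value in occupancy-measure form $V^{\pi,\mathrm{soft}}_{M_\theta(T)} = (1-\gamma)^{-1}\ip{R_\theta+\boldsymbol{H}^{\pi}}{\boldsymbol{\rho}^{\pi}_T}$, I would pair the two policies on a common transition by exploiting the saddle identities on both sides (the player's optimality identity peels off the entropy contribution, and the adversary's optimality identity absorbs the dynamics mismatch), then invoke Lemma~A.1 of~\cite{sun2018dual} to bound the resulting $\ell_1$ occupancy-measure difference by $(1-\gamma)^{-1}d_\mathrm{pol}(\pi_U,\pi_T)$, and close with H\"older's inequality against $\abs{R_\theta}^{\mathrm{max}}$. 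Substituting the Step~2 bound then yields the claim.

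The main obstacle is this third step: unlike the non-regularised setting where Theorem~7 of~\cite{zhang2020multi} would deliver a $d_\mathrm{dyn}$-linear bound directly, obtaining the clean $d_\mathrm{pol}$-based constant $\abs{R_\theta}^{\mathrm{max}}/(1-\gamma)^2$ requires careful bookkeeping of the two saddle identities so that neither a $\log\abs{\mathcal{A}}$ from the entropy nor a residual $d_\mathrm{dyn}$ term leaks into the bound. Once that cancellation is executed, composing with Step~2 produces the stated min-form inequality.
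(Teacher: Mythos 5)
Your plan follows the same skeleton as the paper's proof: use the inclusion $\underline{\mathcal{T}}^{L,\alpha}\subset\mathcal{T}^{L,\alpha}$ together with the equivalence of Appendix~\ref{app:details-equivalence-new} to rewrite the left-hand side as $\max_{\pi}\E{G \mid \pi, P_0, \underline{T}^*}-\max_{\pi}\E{G \mid \pi, P_0, T^*}$, convert that value gap into $\frac{\abs{R_{\boldsymbol{\theta}}}^{\mathrm{max}}}{(1-\gamma)^2}\, d_\mathrm{pol}\br{\pi^{\mathrm{soft}}_{\underline{T}^*},\pi^{\mathrm{soft}}_{T^*}}$ by the occupancy-measure argument used in Theorem~\ref{thm:new-reward-transfer-bound}, and close with Lemma~\ref{thm:first_week}. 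Your Steps~1 and~2 coincide with the paper and are fine.

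The genuine gap is in your Step~3, and your specific reduction makes it unfixable as written. By invoking soft-optimality of $\pi_T$ at $T^*$ you upper-bound the gap by $V^{\pi_U,\mathrm{soft}}_{M_{\boldsymbol{\theta}}(\underline{T}^*)}-V^{\pi_U,\mathrm{soft}}_{M_{\boldsymbol{\theta}}(T^*)}$, the value of the \emph{same} policy $\pi_U$ under two \emph{different} dynamics. The relevant occupancy difference is then $\norm{\boldsymbol{\rho}^{\pi_U}_{\underline{T}^*}-\boldsymbol{\rho}^{\pi_U}_{T^*}}_1$, which is controlled by $d_\mathrm{dyn}\br{\underline{T}^*,T^*}$ via the Simulation Lemma, not by $d_\mathrm{pol}\br{\pi_U,\pi_T}$; Lemma~A.1 of~\cite{sun2018dual} bounds the occupancy gap of two \emph{different} policies in a \emph{common} MDP and simply does not apply to the quantity you have reduced to. Worse, after this reduction $\pi_T$ no longer appears anywhere in your expression, so no amount of ``bookkeeping of saddle identities'' can reintroduce $d_\mathrm{pol}\br{\pi_U,\pi_T}$. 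The paper avoids this by keeping both soft-optimal policies in play and bounding $\max_{\pi}\E{G \mid \pi, P_0, \underline{T}^*}-\max_{\pi}\E{G \mid \pi, P_0, T^*}$ directly with the two-policies-one-MDP occupancy comparison of Theorem~\ref{thm:new-reward-transfer-bound}. Your side remark that the entropy in $G$ could leak a $\log\abs{\mathcal{A}}$ into the constant is a legitimate concern about that conversion (the stated bound carries only $\abs{R_{\boldsymbol{\theta}}}^{\mathrm{max}}$), but the primary failure of your write-up is the misapplied lemma on a same-policy, different-dynamics gap.
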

\begin{proof}
\begin{align*}
    \max_{\pi^{\mathrm{pl}} \in \Pi} \min_{\pi^{\mathrm{op}} \in \Pi} &\E{G \bigm| \alpha \pi^{\mathrm{pl}} + (1 - \alpha) \pi^{\mathrm{op}}, M^L} - \max_{\pi^{\mathrm{pl}} \in \Pi} \min_{T \in \mathcal{T}^{L,\alpha}}  \E{G \bigm| \pi^{\mathrm{pl}}, P_0, T} \\ &= \max_{\pi^{\mathrm{pl}} \in \Pi} \min_{T \in \underline{\mathcal{T}}^{L,\alpha}}  \E{G \bigm| \pi^{\mathrm{pl}}, P_0, T} - \max_{\pi^{\mathrm{pl}} \in \Pi} \min_{T \in \mathcal{T}}^{L,\alpha}  \E{G \bigm| \pi^{\mathrm{pl}}, P_0, T} \\ &= \max_{\pi^{\mathrm{pl}} \in \Pi} \E{G \bigm| \pi^{\mathrm{pl}}, P_0, \underline{T}^*} - \max_{\pi^{\mathrm{pl}} \in \Pi}\E{G \bigm| \pi^{\mathrm{pl}}, P_0, T^*} \\ & \leq \frac{|R_\theta|^{\mathrm{max}}}{(1 - \gamma)^2} d_{\mathrm{pol}}\br{\pi^{\mathrm{soft}}_{\underline{T}^*}, \pi^{\mathrm{soft}}_{T^*}} \leq \frac{2|R_{\theta}^{\mathrm{max}}|}{(1 - \gamma)^2} \min \bc{\frac{\kappa_{\theta} \sqrt{d(T^*,\underline{T}^*)}}{(1 - \gamma)}, \frac{\kappa^2_{\theta} d(T^*,\underline{T}^*)}{(1 - \gamma)^2}}
\end{align*}
Where the second last inequality holds with similar steps of the proof of Theorem~\ref{thm:new-reward-transfer-bound} and the last inequality applies thanks to Lemma~\ref{thm:first_week}.
\end{proof}

\subsection{Deriving Gradient-based Method from Worst-case Predictive Log-loss} \label{sec:gradient}

We consider again in this section the optimization problem given in \eqref{opt_start} with model mismatch, i.e., using $\boldsymbol{\rho}^{\pi^*_{M_{\boldsymbol{\theta^*}}^E}}_{M^E}$ as $\boldsymbol{\rho}$. The aim of this section is to give an alternative point of view on this program based on a proper adaptation of the worst-case predictive log-loss~\cite{ziebart2010modeling}[Corollary~6.3] to the model mismatch case. 

~\cite{ziebart2010modeling} proved that the maximum causal entropy policy satisfying the optimization constraints is also the distribution that minimizes the worst-case predictive log-loss. However, the proof leverages on the fact that learner and expert MDPs coincide, an assumption that fails in the scenario of our work. 

This section extends the result to the general case, where expert and learner MDP do not coincide, thanks to the two following contributions:  (i) we show that the MCE constrained maximization given in \eqref{primal_start_2} in the main text can be recast as a worst-case predictive log-loss constrained minimization and (ii) that this alternative problem leads to the same reward weights update found in the main text for the dual of the program \eqref{primal_start_2}.
We start reporting again the optimization problem of interest:
\begin{align}
\argmax_{\pi \in \Pi} \quad& \E{\sum_{t = 0}^\infty - \gamma^t \log \pi(a_t | s_t) \biggm| \pi, M^L} \label{primal_start}\\
\text{subject to} \quad& \boldsymbol{\rho}^{\pi^*_{M_{\boldsymbol{\theta^*}}^E}}_{M^E} ~=~ \boldsymbol{\rho}^{\pi}_{M^L}
\label{primal_end}
\end{align}
An alternative interpretation of the entropy is given by the following property:
\begin{equation*}
\E{\sum_{t = 0}^\infty - \gamma^t \log \pi(a_t| s_t) \biggm| \pi, M^L} ~=~ \inf_{\bar{\pi}} \E{\sum_{t = 0}^\infty - \gamma^t \log \bar{\pi}(a_t| s_t) \biggm| \pi, M^L} , \quad \forall \pi
\end{equation*}
Thus, it holds also for $\pi^{\mathrm{soft}}_{M_{\boldsymbol{\theta_E}}^L}$ solution of the primal optimization problem~\eqref{primal_start}-~\eqref{primal_end}, that exists if Theorem~\ref{thm:occ_states} is satisfied. In addition, to maintain the equivalence with the program~\eqref{primal_start}-\eqref{primal_end}, we restrict the $\inf$ search space to the feasible set of~\eqref{primal_start}-\eqref{primal_end} that we denote $\widetilde{\Pi}$. 
\begin{equation*}
    \E{\sum_{t = 0}^\infty - \gamma^t \log \pi^{\mathrm{soft}}_{M_{\boldsymbol{\theta_E}}^L}(a_t| s_t) \biggm| \pi^{\mathrm{soft}}_{M_{\boldsymbol{\theta_E}}^L}, M^L } ~=~ \inf_{\bar{\pi} \in \widetilde{\Pi}} \E{\sum_{t = 0}^\infty - \gamma^t \log \bar{\pi}(a_t| s_t) \biggm| \pi^{\mathrm{soft}}_{M_{\boldsymbol{\theta_E}}^L}, M^L}
\end{equation*}
Notice that since $\pi^{\mathrm{soft}}_{M_{\boldsymbol{\theta_E}}^L}$ is solution of the maximization problem, we can indicate the the previous equality as:
\begin{align}
\sup_{\tilde{\pi} \in \widetilde{\Pi}} \E{\sum_{t = 0}^\infty - \gamma^t \log \tilde{\pi}(a_t| s_t) \biggm| \tilde{\pi}, M^L} &~=~  \sup_{\tilde{\pi} \in \widetilde{\Pi}} \inf_{\bar{\pi} \in \widetilde{\Pi}} \E{\sum_{t = 0}^\infty - \gamma^t \log \bar{\pi}(a_t| s_t) \biggm| \tilde{\pi}, M^L} 
\label{log-loss}\\ \nonumber &~=~ \inf_{\bar{\pi} \in \widetilde{\Pi}} \sup_{\tilde{\pi} \in \widetilde{\Pi}} \E{\sum_{t = 0}^\infty - \gamma^t \log \bar{\pi}(a_t| s_t) \biggm| \tilde{\pi}, M^L}
\end{align}
The last equality follows by min-max equality that holds since the objective is convex in $\bar{\pi}$ and concave in $\tilde{\pi}$.
It is thus natural to interpret the quantity:
\begin{equation}
    c(\pi) = \E{\sum_{t = 0}^\infty - \gamma^t \log \pi(a_t| s_t) \biggm| \pi^{\mathrm{soft}}_{M_{\boldsymbol{\theta_E}}^L}, M^L}
    \label{cost}
\end{equation} 
as the cost function associated to the policy $\pi$ because, according to \eqref{log-loss}, this quantity is equivalent to the worst-case predictive log-loss among the policies of the feasible set $\widetilde{\Pi}$. It can be seen that the loss inherits the feasible set of the original MCE maximization problem as search space for the $\inf$ and $\sup$ operations. It follows that in case of model mismatch, the loss studied in~\cite{ziebart2010modeling}[Corollary~6.3] is modified because a different set must be used as search space for the $\inf$ and $\sup$.

In the following, we develop a gradient based method to minimize this cost and, thus, the worst case predictive log-loss.
\footnote{
If we used $\boldsymbol{\rho}^{\pi^{\mathrm{soft}}_{M_{\boldsymbol{\theta^*}}^L}}_{M^L}$ as $\boldsymbol{\rho}$, we would have obtained the cost
  $c(\pi) = \E{\sum_{t = 0}^\infty - \gamma^t \log \pi(a_t| s_t) \biggm| \pi^{\mathrm{soft}}_{M_{\boldsymbol{\theta^*}}^L}, M^L} $. In this case, the gradient is known see~\cite{parameswaran2019interactive}.
  }

Furthermore, we can already consider that $\pi$ belongs to the family of soft Bellman policies parametrized by the parameter $\boldsymbol{\theta}$ in the environment $M_{\boldsymbol{\theta}}^L$ because they are the family of distributions attaining maximum discounted causal entropy (see~\cite{bloem2014infinite}[~Lemma 3]). The cost is, in this case, expressed for the parameter $\boldsymbol{\theta}$:
\begin{equation}
    c(\boldsymbol{\theta}) = \E{\sum_{t = 0}^\infty - \gamma^t \log \pi^{\mathrm{soft}}_{M_{\boldsymbol{\theta}}^L}(a_t| s_t) \biggm| \pi^{\mathrm{soft}}_{M_{\boldsymbol{\theta_E}}^L}, M^L}
    \label{cost_softmax}
\end{equation} 
\begin{theorem} \label{thm:gradient}
If $\pi^{\mathrm{soft}}_{M_{\boldsymbol{\theta_E}}^L}$ exists, the gradient of the cost function given in \eqref{cost_softmax} is equal to:
\begin{equation*}
    \nabla_{\boldsymbol{\theta}} c(\boldsymbol{\theta}) = \sum_{s} \br{ \rho^{\pi^{\mathrm{soft}}_{M_{\boldsymbol{\theta}}^L}}_{M_L}(s) - \rho^{\pi^*_{M_{\boldsymbol{\theta^*}}^E}}_{M_E}(s)}\nabla_{\boldsymbol{\theta}} R_{\boldsymbol{\theta}}(s)
\end{equation*}
In addition, this result generalizes when the expectation in the cost function is taken with respect to any of the policies in the feasible set of the primal problem \eqref{primal_start}-\eqref{primal_end}. 
\end{theorem}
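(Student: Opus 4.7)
The strategy is to rewrite $c(\boldsymbol{\theta})$ as an expected soft value minus a linear-in-reward penalty, then differentiate each piece using an envelope/policy-gradient argument. By the soft Bellman equations, $-\log \pi^{\mathrm{soft}}_{M_{\boldsymbol{\theta}}^L}(a|s) = V^{\mathrm{soft}}_{M_{\boldsymbol{\theta}}^L}(s) - Q^{\mathrm{soft}}_{M_{\boldsymbol{\theta}}^L}(s,a)$, and $Q^{\mathrm{soft}}_{M_{\boldsymbol{\theta}}^L}(s,a) = R_{\boldsymbol{\theta}}(s) + \gamma \sum_{s'} T^L(s'|s,a) V^{\mathrm{soft}}_{M_{\boldsymbol{\theta}}^L}(s')$. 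Substituting inside~\eqref{cost_softmax} and using the tower property turns the sum into a telescope $\sum_t \gamma^t V^{\mathrm{soft}}_{M_{\boldsymbol{\theta}}^L}(s_t) - \gamma^{t+1}V^{\mathrm{soft}}_{M_{\boldsymbol{\theta}}^L}(s_{t+1})$, which collapses to $V^{\mathrm{soft}}_{M_{\boldsymbol{\theta}}^L}(s_0)$. Therefore
\begin{equation*}
c(\boldsymbol{\theta}) ~=~ \sum_{s_0} P_0(s_0)\, V^{\mathrm{soft}}_{M_{\boldsymbol{\theta}}^L}(s_0) ~-~ \frac{1}{1-\gamma}\sum_{s} \rho^{\pi^{\mathrm{soft}}_{M_{\boldsymbol{\theta_E}}^L}}_{M^L}(s)\, R_{\boldsymbol{\theta}}(s),
\end{equation*}
where the occupancy in the second term depends only on the sampling policy, not on $\boldsymbol{\theta}$.

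Next, I will differentiate each summand separately. The second is linear in $R_{\boldsymbol{\theta}}$ and contributes $-\frac{1}{1-\gamma}\sum_s \rho^{\pi^{\mathrm{soft}}_{M_{\boldsymbol{\theta_E}}^L}}_{M^L}(s) \nabla_{\boldsymbol{\theta}} R_{\boldsymbol{\theta}}(s)$. For the first, I use the variational identity $V^{\mathrm{soft}}_{M_{\boldsymbol{\theta}}^L}(s_0) = \max_{\pi \in \Pi} \E{\sum_{t} \gamma^t\br{R_{\boldsymbol{\theta}}(s_t) - \log \pi(a_t|s_t)} \mid \pi, M^L, s_0}$, whose unique maximizer is $\pi^{\mathrm{soft}}_{M_{\boldsymbol{\theta}}^L}$ by~\cite{geist2019regmdp}. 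Since the inner objective depends on $\boldsymbol{\theta}$ only via $R_{\boldsymbol{\theta}}$, the envelope theorem yields $\nabla_{\boldsymbol{\theta}} V^{\mathrm{soft}}_{M_{\boldsymbol{\theta}}^L}(s_0) = \Ee{\pi^{\mathrm{soft}}_{M_{\boldsymbol{\theta}}^L}, M^L, s_0}{\sum_t \gamma^t \nabla_{\boldsymbol{\theta}} R_{\boldsymbol{\theta}}(s_t)}$, which rewrites as $\frac{1}{1-\gamma}\sum_s \rho^{\pi^{\mathrm{soft}}_{M_{\boldsymbol{\theta}}^L}}_{M^L}(s)\nabla_{\boldsymbol{\theta}}R_{\boldsymbol{\theta}}(s)$ after averaging over $s_0 \sim P_0$.

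Summing the two contributions gives $\frac{1}{1-\gamma}\sum_s\br{\rho^{\pi^{\mathrm{soft}}_{M_{\boldsymbol{\theta}}^L}}_{M^L}(s) - \rho^{\pi^{\mathrm{soft}}_{M_{\boldsymbol{\theta_E}}^L}}_{M^L}(s)}\nabla_{\boldsymbol{\theta}}R_{\boldsymbol{\theta}}(s)$. Because $\pi^{\mathrm{soft}}_{M_{\boldsymbol{\theta_E}}^L}$ belongs to the feasible set $\widetilde{\Pi}$ of~\eqref{primal_start}-\eqref{primal_end} by construction (this is where the existence hypothesis, guaranteed by Theorem~\ref{thm:occ_states}, is invoked), the matching constraint $\boldsymbol{\rho}^{\pi^{\mathrm{soft}}_{M_{\boldsymbol{\theta_E}}^L}}_{M^L} = \boldsymbol{\rho}^{\pi^{*}_{M_{\boldsymbol{\theta^*}}^E}}_{M^E}$ holds, yielding the stated formula (the $\frac{1}{1-\gamma}$ factor is harmlessly absorbed into the learning rate). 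The generalization to an arbitrary sampling policy $\tilde{\pi} \in \widetilde{\Pi}$ is then immediate: the sampling policy only enters through the linear-in-reward term, and feasibility again forces its $M^L$-occupancy to coincide with $\boldsymbol{\rho}^{\pi^{*}_{M_{\boldsymbol{\theta^*}}^E}}_{M^E}$.

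The main obstacle is rigorously justifying the envelope step, which requires smoothness of $\boldsymbol{\theta}\mapsto V^{\mathrm{soft}}_{M_{\boldsymbol{\theta}}^L}$ and uniqueness of the maximizer; both follow from $V^{\mathrm{soft}}$ being the fixed point of a $\gamma$-contraction with smooth-in-$\boldsymbol{\theta}$ coefficients on a finite MDP, combined with the softmax formula for $\pi^{\mathrm{soft}}$. Equivalently, one can bypass envelope arguments by differentiating the soft Bellman recursion directly and invoking the per-state softmax score-function identity $\sum_a \pi_{\boldsymbol{\theta}}(a|s)\nabla_{\boldsymbol{\theta}}\log\pi_{\boldsymbol{\theta}}(a|s) = 0$ to cancel the entropy's explicit $\boldsymbol{\theta}$-dependence — this algebraic version is what I would carry out to keep the proof self-contained.
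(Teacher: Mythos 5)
Your proposal is correct and follows essentially the same route as the paper's proof: both expand $-\log\pi^{\mathrm{soft}}_{M_{\boldsymbol{\theta}}^L} = V^{\mathrm{soft}}_{M_{\boldsymbol{\theta}}^L} - Q^{\mathrm{soft}}_{M_{\boldsymbol{\theta}}^L}$, telescope via the soft Bellman recursion to obtain $c(\boldsymbol{\theta}) = \sum_{s_0}P_0(s_0)V^{\mathrm{soft}}_{M_{\boldsymbol{\theta}}^L}(s_0)$ minus a reward term that is linear in $\boldsymbol{\theta}$ with coefficients given by the sampling policy's occupancy measure, and then invoke feasibility to replace that occupancy with $\boldsymbol{\rho}^{\pi^{*}_{M_{\boldsymbol{\theta^*}}^E}}_{M^E}$. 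The only difference is cosmetic: where the paper cites prior work for $\nabla_{\boldsymbol{\theta}}\sum_{s_0}P_0(s_0)V^{\mathrm{soft}}_{M_{\boldsymbol{\theta}}^L}(s_0)$, you supply an envelope-theorem justification, which is a legitimate and self-contained way to fill that step.
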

Note that choosing one-hot features, we have $\nabla_{\boldsymbol{\theta}} c(\boldsymbol{\theta}) =   \boldsymbol{\rho}^{\pi^{\mathrm{soft}}_{M_{\boldsymbol{\theta}}^L}}_{M_L} - \boldsymbol{\rho}^{\pi^*_{M_{\boldsymbol{\theta^*}}^E}}_{M_E}$ as used in Section~\ref{sec:two_players_entropy}.

\paragraph{Uniqueness of the Solution.} The cost in equation \eqref{cost_softmax} is strictly convex in the soft max policy $\pi^{\mathrm{soft}}_{M_{\boldsymbol{\theta}}^L}$ because $- \log (\cdot)$ is a strictly convex function and the cost consists in a linear composition of these strictly convex functions. Thus the gradient descent converges to a unique soft optimal policy. In addition, the fact that for each possible $\boldsymbol{\theta}$, the quantity $\log \pi^{\mathrm{soft}}_{M_{\boldsymbol{\theta}}^L} = Q^{ \mathrm{soft}}_{M_{\boldsymbol{\theta}}}(s,a) - V^{\mathrm{soft}}_{M_{\boldsymbol{\theta}}}(s)$ is convex in $\boldsymbol{\theta}$ since the soft value functions ($Q^{ \mathrm{soft}}_{M_{\boldsymbol{\theta}}}(s,a)$ and $V^{\mathrm{soft}}_{M_{\boldsymbol{\theta}}}(s)$) are given by a sum of rewards that are linear in $\boldsymbol{\theta}$ and LogSumExp funtions that are convex. It follows that $\log \pi^{\mathrm{soft}}_{M_{\boldsymbol{\theta}}^L}$ is a composition of linear and convex functions for each state actions pairs. Consequently the cost given in \eqref{cost_softmax} is convex in $\boldsymbol{\theta}$. It follows that alternating an update of the parameter $\boldsymbol{\theta}$ using a gradient descent scheme based on the gradient given by Theorem~\ref{thm:gradient} with a derivation of the corresponding soft-optimal policy by Soft-Value-Iteration, one can converge to $\boldsymbol{\theta_E}$ whose corresponding soft optimal policy is $\pi^{\mathrm{soft}}_{M_{\boldsymbol{\theta_E}}^L}$. However, considering that the function LogSumExp is convex but not strictly convex there is no unique $\boldsymbol{\theta_E}$ corresponding to the soft optimal policy $\pi^{\mathrm{soft}}_{M_{\boldsymbol{\theta_E}}^L}$. 

\subsubsection{Proof of Theorem~\ref{thm:gradient}} \label{thm:gradient_proof}

\begin{proof}
We will make use of the following quantities:
\begin{itemize}
    \item $P^{\pi^{\mathrm{soft}}_{M_{\boldsymbol{\theta}}^L}}_t(s)$ defined as the probability of visiting state $s$ at time $t$ by the policy $\pi^{\mathrm{soft}}_{M_{\boldsymbol{\theta}}^L}$ acting in $M_{\boldsymbol{\theta}}^L$
    \item $P^{\pi^{\mathrm{soft}}_{M_{\boldsymbol{\theta}}^L}}_t(s, a)$ defined as the probability of visiting state $s$ and taking action $a$ from state $s$ at time $t$ by the policy $\pi^{\mathrm{soft}}_{M_{\boldsymbol{\theta}}^L}$ acting in $M_{\boldsymbol{\theta}}^L$
        \item $P^{\pi^{\mathrm{soft}}_{M_{\boldsymbol{\theta_E}}^L}}_t(s)$ defined as the probability of visiting state $s$ at time $t$ by the policy $\pi^{\mathrm{soft}}_{M_{\boldsymbol{\theta_E}}^L}$ acting in $M_{\boldsymbol{\theta}}^L$
    \item $P^{\pi^{\mathrm{soft}}_{M_{\boldsymbol{\theta_E}}^L}}_t(s, a)$ defined as the probability of visiting state $s$ and taking action $a$ from state $s$ at time $t$ by the policy $\pi^{\mathrm{soft}}_{M_{\boldsymbol{\theta_E}}^L}$ acting in $M_{\boldsymbol{\theta}}^L$
\end{itemize}
The cost can be rewritten as:
\begin{align}
    c(\boldsymbol{\theta}) ~=~ &- \sum^{\infty}_{t=0} \gamma^t \sum_{s \in \mathcal{S}} \sum_{a \in \mathcal{A}} P^{\pi^{\mathrm{soft}}_{M_{\boldsymbol{\theta_E}}^L}}_{t} (s,a) \log \pi^{\mathrm{soft}}_{M_{\boldsymbol{\theta}}^L}(a|s) \nonumber \\
    ~=~ &- \sum_{s \in \mathcal{S}} \sum_{a \in \mathcal{A}} P^{\pi^{\mathrm{soft}}_{M_{\boldsymbol{\theta_E}}^L}}_{0} (s,a) \br{ Q^{ \mathrm{soft}}_{M^L_{\boldsymbol{\theta}}}(s,a) - V^{ \mathrm{soft}}_{M^L_{\boldsymbol{\theta}}}(s)} \nonumber \\
    &- \sum_{s \in \mathcal{S}} \sum_{a \in \mathcal{A}} P^{\pi^{\mathrm{soft}}_{M_{\boldsymbol{\theta_E}}^L}}_{1} (s,a) \gamma \br{ Q^{ \mathrm{soft}}_{M^L_{\boldsymbol{\theta}}}(s,a) - V^{ \mathrm{soft}}_{M^L_{\boldsymbol{\theta}}}(s)} \nonumber \\
    &- \sum_{s \in \mathcal{S}} \sum_{a \in \mathcal{A}} P^{\pi^{\mathrm{soft}}_{M_{\boldsymbol{\theta_E}}^L}}_{2} (s,a) \gamma^2 \br{ Q^{ \mathrm{soft}}_{M^L_{\boldsymbol{\theta}}}(s,a) - V^{ \mathrm{soft}}_{M^L_{\boldsymbol{\theta}}}(s)} \nonumber \\ 
    &- \sum_{s \in \mathcal{S}} \sum_{a \in \mathcal{A}} P^{\pi^{\mathrm{soft}}_{M_{\boldsymbol{\theta_E}}^L}}_{3} (s,a) \gamma^3 \br{ Q^{ \mathrm{soft}}_{M^L_{\boldsymbol{\theta}}}(s,a) - V^{ \mathrm{soft}}_{M^L_{\boldsymbol{\theta}}}(s)} \nonumber \\
    & \dots \nonumber \\
    ~=~ & \sum_{s,a} P_0(s) \pi^{\mathrm{soft}}_{M_{\boldsymbol{\theta_E}}^L}(a|s) V^{ \mathrm{soft}}_{M^L_{\boldsymbol{\theta}}}(s)  \label{one}\\
    &- \sum_{s,a} P_0(s) \pi^{\mathrm{soft}}_{M_{\boldsymbol{\theta_E}}^L}(a|s) Q^{ \mathrm{soft}}_{M^L_{\boldsymbol{\theta}}}(s,a) + \gamma \sum_{s,a} P^{\pi^{\mathrm{soft}}_{M_{\boldsymbol{\theta_E}}^L}}_1(s) \pi^{\mathrm{soft}}_{M_{\boldsymbol{\theta_E}}^L}(a|s) V^{ \mathrm{soft}}_{M^L_{\boldsymbol{\theta}}}(s) \label{two}\\
    &- \gamma \sum_{s,a} P^{\pi^{\mathrm{soft}}_{M_{\boldsymbol{\theta_E}}^L}}_1(s) \pi^{\mathrm{soft}}_{M_{\boldsymbol{\theta_E}}^L}(a|s) Q^{ \mathrm{soft}}_{M^L_{\boldsymbol{\theta}}}(s,a) + \gamma^2 \sum_{s,a} P^{\pi^{\mathrm{soft}}_{M_{\boldsymbol{\theta_E}}^L}}_2(s) \pi^{\mathrm{soft}}_{M_{\boldsymbol{\theta_E}}^L}(a|s) V^{ \mathrm{soft}}_{M^L_{\boldsymbol{\theta}}}(s) \label{three} \\    
    &- \gamma^2 \sum_{s,a} P^{\pi^{\mathrm{soft}}_{M_{\boldsymbol{\theta_E}}^L}}_2(s) \pi^{\mathrm{soft}}_{M_{\boldsymbol{\theta_E}}^L}(a|s) Q^{ \mathrm{soft}}_{M^L_{\boldsymbol{\theta}}}(s,a) + \gamma^3 \sum_{s,a} P^{\pi^{\mathrm{soft}}_{M_{\boldsymbol{\theta_E}}^L}}_3(s) \pi^{\mathrm{soft}}_{M_{\boldsymbol{\theta_E}}^L}(a|s) V^{ \mathrm{soft}}_{M^L_{\boldsymbol{\theta}}}(s) \nonumber \\ 
    & \dots \nonumber
\end{align}

The gradient of the term in \eqref{one} has already been derived in~\cite{parameswaran2019interactive} and it is given by:
\begin{equation*}
    \nabla_{\boldsymbol{\theta}} \sum_{s,a} P_0(s) \pi^{\mathrm{soft}}_{M_{\boldsymbol{\theta_E}}^L}(a|s) V^{ \mathrm{soft}}_{M^L_{\boldsymbol{\theta}}}(s) ~=~     \nabla_{\boldsymbol{\theta}} \sum_{s} P_0(s) V^{ \mathrm{soft}}_{M^L_{\boldsymbol{\theta}}}(s) ~=~ \sum_{s,a} \rho^{\pi^{\mathrm{soft}}_{M_{\boldsymbol{\theta}}^L}}_{M_L}(s,a) \nabla_{\boldsymbol{\theta}} R_{\boldsymbol{\theta}}(s,a)
\end{equation*}
Now, we compute the gradient of the following terms starting from \eqref{two}. We notice that this term can be simplified as follows:
\begin{align*}
    &- \sum_{s,a} P_0(s) \pi^{\mathrm{soft}}_{M_{\boldsymbol{\theta_E}}^L}(a|s) Q^{ \mathrm{soft}}_{M^L_{\boldsymbol{\theta}}}(s,a) + \gamma \sum_{s,a} P^{\pi^{\mathrm{soft}}_{M_{\boldsymbol{\theta_E}}^L}}_1(s) \pi^{\mathrm{soft}}_{M_{\boldsymbol{\theta_E}}^L}(a|s) V^{ \mathrm{soft}}_{M^L_{\boldsymbol{\theta}}}(s) \\
    ~=~ &- \sum_{s,a} P_0(s) \pi^{\mathrm{soft}}_{M_{\boldsymbol{\theta_E}}^L}(a|s) \br{R_{\boldsymbol{\theta}}(s,a) + \gamma \sum_{s^\prime} T^L(s^\prime| s,a) V^{ \mathrm{soft}}_{M^L_{\boldsymbol{\theta}}}(s^\prime)} + \gamma \sum_{s,a} P^{\pi^{\mathrm{soft}}_{M_{\boldsymbol{\theta_E}}^L}}_1(s) \pi^{\mathrm{soft}}_{M_{\boldsymbol{\theta_E}}^L}(a|s) V^{ \mathrm{soft}}_{M^L_{\boldsymbol{\theta}}}(s) \\
    ~=~ &- \sum_{s,a} P_0(s) \pi^{\mathrm{soft}}_{M_{\boldsymbol{\theta_E}}^L}(a|s) R_{\boldsymbol{\theta}}(s,a) - \gamma \sum_{s^\prime} \sum_{s,a} T^L(s^\prime| s,a) P_0(s) \pi^{\mathrm{soft}}_{M_{\boldsymbol{\theta_E}}^L}(a|s) V^{ \mathrm{soft}}_{M^L_{\boldsymbol{\theta}}}(s^\prime) + \gamma \sum_{s} P^{\pi^{\mathrm{soft}}_{M_{\boldsymbol{\theta_E}}^L}}_1(s) V^{ \mathrm{soft}}_{M^L_{\boldsymbol{\theta}}}(s) \\
    ~=~ &- \sum_{s,a} P_0(s) \pi^{\mathrm{soft}}_{M_{\boldsymbol{\theta_E}}^L}(a|s) R_{\boldsymbol{\theta}}(s,a) - \gamma \sum_{s^\prime} P^{\pi^{\mathrm{soft}}_{M_{\boldsymbol{\theta_E}}^L}}_1(s^\prime) V^{ \mathrm{soft}}_{M^L_{\boldsymbol{\theta}}}(s^\prime) + \gamma \sum_{s} P^{\pi^{\mathrm{soft}}_{M_{\boldsymbol{\theta_E}}^L}}_1(s) V^{ \mathrm{soft}}_{M^L_{\boldsymbol{\theta}}}(s)
    \\
    ~=~ & - \sum_{s,a} P_0(s) \pi^{\mathrm{soft}}_{M_{\boldsymbol{\theta_E}}^L}(a|s) R_{\boldsymbol{\theta}}(s,a)
\end{align*}
With similar steps, all the terms except the first one are given by
\begin{equation*}
    - \sum^{\infty}_{t = 0} \sum_{s,a} P^{\pi^{\mathrm{soft}}_{M_{\boldsymbol{\theta_E}}^L}}_t(s,a) \gamma^t R_{\boldsymbol{\theta}}(s,a) ~=~ - \sum_{s,a} \rho^{\pi^{\mathrm{soft}}_{M_{\boldsymbol{\theta_E}}^L}}_{M^L} (s,a) R_{\boldsymbol{\theta}}(s,a)
\end{equation*}
If the reward is state only, then and we can marginalize the sum over the action and then exploiting the fact that $\pi^{\mathrm{soft}}_{M_{\boldsymbol{\theta_E}}^L}$ is in the feasible set of the primal problem \eqref{primal_start}-\eqref{primal_end}:
\begin{equation*}
    - \sum^{\infty}_{t = 0} \sum_{s,a} P^{\pi^{\mathrm{soft}}_{M_{\boldsymbol{\theta_E}}^L}}_t(s,a) \gamma^t R_{\boldsymbol{\theta}}(s) ~=~ - \sum_{s} \rho^{\pi^{\mathrm{soft}}_{M_{\boldsymbol{\theta_E}}^L}}_{M^L} (s) R_{\boldsymbol{\theta}}(s) ~=~ - \sum_{s} \rho^{\pi^*_{M_{\boldsymbol{\theta^*}}^E}}_{M^E} (s) R_{\boldsymbol{\theta}}(s)
\end{equation*}

It follows that the gradient of all the terms but the first term \eqref{one} is given by:
\begin{equation*}
    - \sum_{s} \rho^{\pi^*_{M_{\boldsymbol{\theta^*}}^E}}_{M^E} (s) R_{\boldsymbol{\theta}}(s)
\end{equation*}
Finally, the proof is concluded by summing the latest result to the gradient of \eqref{one} that gives:
\begin{equation*}
    \nabla_{\boldsymbol{\theta}} c(\boldsymbol{\theta}) ~=~ \sum_{s} \br{ \rho^{\pi^{\mathrm{soft}}_{M_{\boldsymbol{\theta}}^L}}_{M^L}(s) - \rho^{\pi^*_{M_{\boldsymbol{\theta^*}}^E}}_{M^E}(s)}\nabla_{\boldsymbol{\theta}} R_{\boldsymbol{\theta}}(s)
\end{equation*}

It can be noticed that the computation of this gradient exploits only the fact that $\pi^{\mathrm{soft}}_{M_{\boldsymbol{\theta_E}}^L}$ is in the primal feasible set and not the fact that it maximizes the discounted causal entropy. It follows that all the policies in the primal feasible set share this gradient. This means that this gradients aim to move the learner policy towards the primal feasible set while the causal entropy is then maximized by Soft-Value-Iteration. 
\end{proof}

\subsection{Solving the Two-Player Markov Game}
\label{app:softQ}

\begin{algorithm}[h]
    \caption{Value Iteration for Two-Player Markov Game}
    \label{alg:TwoplayersDynProg}
    \begin{spacing}{0.8}
    \begin{algorithmic}
    \STATE \textbf{Initialize:} $Q(s, a^{\mathrm{pl}}, a^{\mathrm{op}}) \gets 0$, $V(s) \gets 0$
        \WHILE{not converged}
            \FOR {$s \in \mathcal{S}$}
                \FOR {$(a^{\mathrm{pl}},a^{\mathrm{op}}) \in \mathcal{A} \times \mathcal{A}$}
                    \STATE update joint Q-function as follows:
                    \begin{equation}
                        Q(s, a^{\mathrm{pl}}, a^{\mathrm{op}}) ~=~ R(s) + \gamma \sum_{s'} T^{\mathrm{two},L,\alpha} (s'|s, a^{\mathrm{pl}}, a^{\mathrm{op}}) V(s')
                        \label{SGstep1}
                    \end{equation}
                \ENDFOR
            \STATE update joint V-function as follows:
                \begin{equation}
                V(s) ~=~ \log \sum_{a^{\mathrm{pl}}} \exp \br{\min_{a^{\mathrm{op}}} Q(s, a^{\mathrm{pl}}, a^{\mathrm{op}})}
                \label{saddle_point_update}
                \end{equation}
            \ENDFOR
        \ENDWHILE
        \STATE compute the marginal Q values for player and opponent, for all $(s, a^{\mathrm{pl}}, a^{\mathrm{op}}) \in \mathcal{S} \times \mathcal{A} \times \mathcal{A}$:
            \begin{align*}
            Q^{\mathrm{pl}}(s, a^{\mathrm{pl}}) ~=~& \min_{a^{\mathrm{op}}} Q(s, a^{\mathrm{pl}}, a^{\mathrm{op}}) \quad \text{and} \\
            Q^{\mathrm{op}}(s, a^{\mathrm{op}}) ~=~& \log \sum_{a^{\mathrm{pl}}} \exp{Q(s, a^{\mathrm{pl}}, a^{\mathrm{op}})}
            \end{align*}
        \STATE compute the player (soft-max) and opponent (greedy) policies, for all $(s, a^{\mathrm{pl}}, a^{\mathrm{op}}) \in \mathcal{S} \times \mathcal{A} \times \mathcal{A}$:
        \begin{align*}
            \pi^{\mathrm{pl}}(a^{\mathrm{pl}}|s) ~=~& \frac{\exp Q^{\mathrm{pl}}(s, a^{\mathrm{pl}})}{\sum_{a'}{Q^{\mathrm{pl}}(s, a')}} \quad \text{and} \\ 
            \pi^{\mathrm{op}}(a^{\mathrm{op}}|s) ~=~& \mathbbm{1}\bs{a^{\mathrm{op}} \in \argmin_{a'} Q^{\mathrm{op}}(s, a')}
            \label{opp_policy}
            \end{align*}
        \STATE \textbf{Output:} player policy $\pi^{\mathrm{pl}}$, opponent policy $\pi^{\mathrm{op}}$
    \end{algorithmic}
    \end{spacing}
\end{algorithm}

Here, we prove that the optimization problem in \eqref{objective} can be solved by the Algorithm~\ref{alg:TwoplayersDynProg}.
First of all, one can rewrite \eqref{objective} as:
\begin{equation*}
\Ee{s \sim P_0}{\E{\sum_{t=0}^{\infty} \gamma^t \bc{R_{\boldsymbol{\theta}}(s_t) + H^{\pi^{\mathrm{pl}}}\br{A \mid S = s_t}} \biggm| \pi^{\mathrm{pl}}, \pi^{\mathrm{op}}, M^{\mathrm{two},L,\alpha}, s_0 = s} }
\end{equation*}
The quantity inside the expectation over $P_0$ is usually known as free energy, and for each state $s \in \mathcal{S}$, it is equal to:
\begin{equation*}
    F(\pi^{\mathrm{pl}}, \pi^{\mathrm{op}},s) ~=~ \E{\sum_{t=0}^{\infty} \gamma^t \bc{R_{\boldsymbol{\theta}}(s_t) + H^{\pi^{\mathrm{pl}}}\br{A \mid S = s_t}} \biggm| \pi^{\mathrm{pl}}, \pi^{\mathrm{op}}, M^{\mathrm{two},L,\alpha}, s_0 = s}
\end{equation*}
Separating the first term of the sum over temporal steps, one can observe a recursive relation that is useful for the development of the algorithm:
\begin{align*}
    &F(\pi^{\mathrm{pl}}, \pi^{\mathrm{op}},s) \\ ~=~ 
    &R_{\boldsymbol{\theta}}(s) + H^{\pi^{\mathrm{pl}}}(A|S = s) \\ ~+~
    &\Ee{a^{\mathrm{pl}} \sim \pi^{\mathrm{pl}}, a^{\mathrm{op}} \sim \pi^{\mathrm{op}}}{\Ee{s^\prime \sim T^{\mathrm{two}, L,\alpha}(\cdot|s, a^{\mathrm{pl}}, a^{\mathrm{op}})}{\E{\sum_{t=1}^{\infty} \gamma^t \bc{R_{\boldsymbol{\theta}}(s_t) + H^{\pi^{\mathrm{pl}}}\br{A \mid S = s_t}} \biggm| \pi^{\mathrm{pl}}, \pi^{\mathrm{op}}, M^{\mathrm{two},L,\alpha}, s_1 = s^\prime}}} \\
    ~=~ 
    &R_{\boldsymbol{\theta}}(s) + H^{\pi^{\mathrm{pl}}}(A|S = s) \\ ~+~&
    \gamma \Ee{a^{\mathrm{pl}} \sim \pi^{\mathrm{pl}}, a^{\mathrm{op}} \sim \pi^{\mathrm{op}}}{\Ee{s^\prime \sim T^{\mathrm{two}, L,\alpha}(\cdot|s, a^{\mathrm{pl}}, a^{\mathrm{op}})}{\E{\sum_{t=0}^{\infty} \gamma^t \bc{R_{\boldsymbol{\theta}}(s_t) + H^{\pi^{\mathrm{pl}}}\br{A \mid S = s_t}} \biggm| \pi^{\mathrm{pl}}, \pi^{\mathrm{op}}, M^{\mathrm{two},L,\alpha}, s_0 = s^\prime}}} \\
    ~=~ &R_{\boldsymbol{\theta}}(s) + H^{\pi^{\mathrm{pl}}}(A|S = s) +
    \gamma \Ee{a^{\mathrm{pl}} \sim \pi^{\mathrm{pl}}, a^{\mathrm{op}} \sim \pi^{\mathrm{op}}}{\Ee{s^\prime \sim T^{\mathrm{two}, L,\alpha}(\cdot|s, a^{\mathrm{pl}}, a^{\mathrm{op}})}{F(\pi^{\mathrm{pl}}, \pi^{\mathrm{op}},s^\prime)}} \\
    ~=~ &\Ee{a^{\mathrm{pl}} \sim \pi^{\mathrm{pl}}, a^{\mathrm{op}} \sim \pi^{\mathrm{op}}}{ R_{\boldsymbol{\theta}}(s) - \log \pi^{\mathrm{pl}}(a^{\mathrm{pl}}|s) +
    \gamma \Ee{s^\prime \sim T^{\mathrm{two}, L,\alpha}(\cdot|s, a^{\mathrm{pl}}, a^{\mathrm{op}})}{F(\pi^{\mathrm{pl}}, \pi^{\mathrm{op}},s^\prime)}} \\
\end{align*}
Then, our aim is to find the saddle point:
\begin{equation*}
    V(s) ~=~ \max_{\pi^{\mathrm{pl}}} \min_{\pi^{\mathrm{op}}} F(\pi^{\mathrm{pl}}, \pi^{\mathrm{op}}, s)
\end{equation*}
and the policies attaining it.
Define the joint quality function for a triplet $(s, a^{\mathrm{pl}}, a^{\mathrm{op}})$ as:
\begin{equation*}
    Q(s, a^{\mathrm{pl}}, a^{\mathrm{op}}) ~=~ R_{\boldsymbol{\theta}}(s) + \gamma \Ee{s^{\prime} \sim T(\cdot| s, a^{\mathrm{pl}}, a^{\mathrm{op}})}{V(s^{\prime})}
\end{equation*}
In a dynamic programming context, the previous equation gives the quality function based on the observed reward and the current estimate of the saddle point $V$.
This is done by step \eqref{SGstep1} in the Algorithm~\ref{alg:TwoplayersDynProg}. It remains now to motivate the update of the saddle point estimate $V$ in \eqref{saddle_point_update}. Consider:
\begin{align*}
    &\max_{\pi^{\mathrm{pl}}} \min_{\pi^{\mathrm{op}}} F(\pi^{\mathrm{pl}}, \pi^{\mathrm{op}}, s)  \\
    &~=~ \max_{\pi^{\mathrm{pl}}} \min_{\pi^{\mathrm{op}}} \Ee{a^{\mathrm{pl}} \sim \pi^{\mathrm{pl}}(\cdot|s), a^{\mathrm{op}} \sim \pi^{\mathrm{op}}(\cdot|s)}{Q(s, a^{\mathrm{pl}}, a^{\mathrm{op}}) -\log \pi^{\mathrm{pl}}(a^{\mathrm{pl}}|s)} \\
    &~=~ \max_{\pi^{\mathrm{pl}}} \min_{\pi^{\mathrm{op}}} \Ee{a^{\mathrm{pl}} \sim \pi^{\mathrm{pl}}(\cdot|s)}{ \Ee{a^{\mathrm{op}} \sim \pi^{\mathrm{op}}(\cdot|s)}{Q(s, a^{\mathrm{pl}}, a^{\mathrm{op}}) - \log \pi^{\mathrm{pl}}(a^{\mathrm{pl}}|s)| a^{\mathrm{pl}}}} \\
    &~=~ \max_{\pi^{\mathrm{pl}}} \Ee{a^{\mathrm{pl}} \sim \pi^{\mathrm{pl}}(\cdot|s)}{ \min_{\pi^{\mathrm{op}}}  \Ee{a^{\mathrm{op}} \sim \pi^{\mathrm{op}}(\cdot|s)}{Q(s, a^{\mathrm{pl}}, a^{\mathrm{op}}) -  \log \pi^{\mathrm{pl}}(a^{\mathrm{pl}}|s)| a^{\mathrm{pl}}}} \\
    &~=~ \max_{\pi^{\mathrm{pl}}} \Ee{a^{\mathrm{pl}} \sim \pi^{\mathrm{pl}}(\cdot|s)}{\underbrace{\min_{a^{\mathrm{op}}} Q(s, a^{\mathrm{pl}}, a^{\mathrm{op}})}_{Q^{\mathrm{pl}}(s, a^{\mathrm{pl}})} - \log \pi^{\mathrm{pl}}(a^{\mathrm{pl}}|s)} \\
    &~=~ \log \sum_{a^{\mathrm{pl}}} \exp{Q^{\mathrm{pl}}(s, a^{\mathrm{pl}})} ,
\end{align*}
where the second last equality follows choosing a greedy policy $\pi^{\mathrm{op}}$ that selects the opponent action that minimizes the joint quality function $Q(s, a^{\mathrm{pl}}, a^{\mathrm{op}})$.

The last equality is more involved and it is explained in the following lines:
\begin{align*}
    \Ee{a^{\mathrm{pl}} \sim \pi^{\mathrm{pl}}(\cdot|s)}{Q^{\mathrm{pl}}(s, a^{\mathrm{pl}}) - \log \pi^{\mathrm{pl}}(a^{\mathrm{pl}}|s)} ~=~ \sum_{a^{\mathrm{pl}}} \pi^{\mathrm{pl}}(a^{\mathrm{pl}}|s) \br {Q^{\mathrm{pl}}(s, a^{\mathrm{pl}}) -  \log \pi^{\mathrm{pl}}(a^{\mathrm{pl}}|s)}
\end{align*}
The latter expression is a strictly concave with respect to each decision variable $\pi(a|s)$. So if the derivative with respect to each decision variable $\pi^{\mathrm{pl}}(a^{\mathrm{pl}}|s)$ is zero, we have found the desired global maximum. The normalization is imposed once the maximum has been found. Taking the derivative for a particular decision variable, and equating to zero, we have:
\begin{equation*}
    \br {Q^{\mathrm{pl}}(s, a^{\mathrm{pl}}) -  \log \pi^{\mathrm{pl}}(a^{\mathrm{pl}}|s)} - 1 ~=~ 0
\end{equation*}
It follows that:
\begin{equation*}
    \pi^{\mathrm{pl}}(a|s) \propto \exp Q^{\mathrm{pl}}(s, a^{\mathrm{pl}})
\end{equation*}
and imposing the proper normalization, we obtain the maximizing policy $\pi^{\mathrm{pl}, \ast}$ with the form:
\begin{equation*}
    \pi^{\mathrm{pl}, \ast}(a^{\mathrm{pl}}|s) ~=~ \frac{\exp Q^{\mathrm{pl}}(s, a^{\mathrm{pl}})}{\sum_{a^{\mathrm{pl}}}\exp Q^{\mathrm{pl}}(s, a^{\mathrm{pl}})}
\end{equation*}
Finally, computing the expectation with respect to the maximizing policy:
\begin{align}
    &  \Ee{a^{\mathrm{pl}} \sim \pi^{\mathrm{pl}, \ast}(\cdot|s)}{{Q^{\mathrm{pl}}(s, a^{\mathrm{pl}})} - \log \pi^{\mathrm{pl}}(a^{\mathrm{pl}}|s)} \nonumber \\
    &~=~ \sum_{a^{\mathrm{pl}}} \pi^{\mathrm{pl}, \ast}(a^{\mathrm{pl}}|s) \br {Q^{\mathrm{pl}}(s, a^{\mathrm{pl}}) - \log \pi^{\mathrm{pl}, \ast}(a^{\mathrm{pl}}|s)} \nonumber \\
    &~=~ \sum_{a^{\mathrm{pl}}} \frac{\exp{Q^{\mathrm{pl}}(s, a^{\mathrm{pl}})}}{\sum_{a^{\mathrm{pl}}}\exp{Q^{\mathrm{pl}}(s, a^{\mathrm{pl}})}} \br {Q^{\mathrm{pl}}(s, a^{\mathrm{pl}}) - \log \frac{\exp{Q^{\mathrm{pl}}(s, a^{\mathrm{pl}})}}{\sum_{a^{\mathrm{pl}}}\exp{Q^{\mathrm{pl}}(s, a^{\mathrm{pl}})}}} \nonumber \\
    &~=~ \sum_{a^{\mathrm{pl}}} \frac{\exp{Q^{
    \mathrm{pl}}(s, a^{\mathrm{pl}})}}{\sum_{a^{\mathrm{pl}}}\exp{Q^{\mathrm{pl}}(s, a^{\mathrm{pl}})}} \br {Q^{\mathrm{pl}}(s, a^{\mathrm{pl}}) - Q^{\mathrm{pl}}(s, a^{\mathrm{pl}}) + \log \sum_{a^{\mathrm{pl}}}\exp{Q^{\mathrm{pl}}(s, a^{\mathrm{pl}})}} \nonumber \\
    &~=~ \sum_{a^{\mathrm{pl}}} \frac{\exp{Q^{\mathrm{pl}}(s, a^{\mathrm{pl}})}}{\sum_{a^{\mathrm{pl}}}\exp{Q^{\mathrm{pl}}(s, a^{\mathrm{pl}})}} \br {\log \sum_{a^{\mathrm{pl}}}\exp{Q^{\mathrm{pl}}(s, a^{\mathrm{pl}})}} \nonumber \\
    &~=~ \log \sum_{a^{\mathrm{pl}}}\exp{Q^{\mathrm{pl}}(s, a^{\mathrm{pl}})} \label{value_func}
\end{align}
Basically, we have shown that the optimization problem is solved when the player follows a soft-max policy with respect to the quality function $Q^{\mathrm{pl}}(a^{\mathrm{pl}}|s) = \min_{a^{\mathrm{op}}} Q(s, a^{\mathrm{pl}}, a^{\mathrm{op}})$. This explains the steps for the player policy in Algorithm~\ref{alg:TwoplayersDynProg}.
In addition, replacing the definition $Q^{\mathrm{pl}}(a^{\mathrm{pl}}|s) = \min_{a^{\mathrm{op}}} Q(s, a^{\mathrm{pl}}, a^{\mathrm{op}})$ in \eqref{value_func}, one gets the saddle point update \eqref{saddle_point_update} in Algorithm~\ref{alg:TwoplayersDynProg}.

We still need to proceed similarly to motivate the opponent policy derivation from the quality function \eqref{SGstep1}. To this end, we maximize with respect to the player before minimizing for the opponent, we have:
\begin{align*}
    &\min_{\pi^{\mathrm{op}}} \max_{\pi^{\mathrm{pl}}} F(\pi^{\mathrm{pl}}, \pi^{\mathrm{op}}, s) \\
    &~=~ \min_{\pi^{\mathrm{op}}} \max_{\pi^{\mathrm{pl}}} \Ee{a^{\mathrm{pl}} \sim \pi^{\mathrm{pl}}(\cdot|s), a^{\mathrm{op}} \sim \pi^{\mathrm{op}}(\cdot|s)}{Q(s, a^{\mathrm{pl}}, a^{\mathrm{op}}) - \log \pi^{\mathrm{pl}}(a^{\mathrm{pl}}|s)} \\
    &~=~ \min_{\pi^{\mathrm{op}}} \max_{\pi^{\mathrm{pl}}} \Ee{a^{\mathrm{op}} \sim \pi^{\mathrm{op}}(\cdot|s)}{ \Ee{a^{\mathrm{pl}} \sim \pi^{\mathrm{pl}}(\cdot|s)}{Q(s, a^{\mathrm{pl}}, a^{\mathrm{op}}) - \log \pi^{\mathrm{pl}}(a^{\mathrm{pl}}|s)| a^{\mathrm{op}}}} \\
    &~=~ \min_{\pi^{\mathrm{op}}} \Ee{a^{\mathrm{op}} \sim \pi^{\mathrm{op}}(\cdot|s)}{ \max_{\pi^{\mathrm{pl}}} \Ee{a^{\mathrm{pl}} \sim \pi^{\mathrm{pl}}(\cdot|s)}{Q(s, a^{\mathrm{pl}}, a^{\mathrm{op}}) - \log \pi^{\mathrm{pl}}(a^{\mathrm{pl}}|s)| a^{\mathrm{op}}}}
\end{align*}
The innermost maximization is solved again by observing that it is a concave function in the decision variables, normalizing one obtains the maximizer policy, and plugging that in the expectation gives the soft-max function with respect to the player action $a^{\mathrm{pl}}$. We define this function as the quality function of the opponent, because it is the amount of information that can be used by the opponent to decide its move.
\begin{equation*}
    Q^{\mathrm{op}}(s,a^{\mathrm{op}}) ~=~ \log \sum_{a^{\mathrm{pl}}} \exp Q(s, a^{\mathrm{pl}}, a^{\mathrm{op}})
\end{equation*}
It remains to face the external minimization with respect to the opponent policy. This is trivial, the opponent can simply act greedly  since it is not regularized :
\begin{equation*}
    \min_{\pi^{\mathrm{op}}} \Ee{a^{\mathrm{op}} \sim \pi^{\mathrm{op}}(\cdot|s)}{ Q^{\mathrm{op}}(s, a^{\mathrm{op}})}
~=~ \min_{a^{\mathrm{op}}} Q^{\mathrm{op}}(s, a^{\mathrm{op}})
\end{equation*}
This second part clarifies the updates relative to the opponent in Algorithm~\ref{alg:TwoplayersDynProg}.

Notice that the algorithm iterates in order to obtain a more and more precise estimate of the joint quality function $Q(s, a^{\mathrm{pl}}, a^{\mathrm{op}})$. When it converges, the quality functions for the player and the agent respectively are obtained, thanks to the transformations illustrated here and in the body of Algorithm~\ref{alg:TwoplayersDynProg}.

\subsection{Proof of Theorem~\ref{thm:new-robust-mce-irl-bound}}
\label{app:robust-mce-irl-upper}

\begin{proof}
Consider the following:
\begin{align*}
\abs{V^{\pi_1}_{M_{\boldsymbol{\theta^*}}^L} - V^{\pi^\mathrm{pl}}_{M_{\boldsymbol{\theta^*}}^L}} ~\leq~& \abs{V^{\pi_1}_{M_{\boldsymbol{\theta^*}}^L} - V^{\pi^{*}_{M_{\boldsymbol{\theta^*}}^L}}_{M_{\boldsymbol{\theta^*}}^L}} + \abs{V^{\pi^{*}_{M_{\boldsymbol{\theta^*}}^L}}_{M_{\boldsymbol{\theta^*}}^L} - V^{\pi^{*}_{M_{\boldsymbol{\theta^*}}^E}}_{M_{\boldsymbol{\theta^*}}^E}} + \abs{V^{\pi^{*}_{M_{\boldsymbol{\theta^*}}^E}}_{M_{\boldsymbol{\theta^*}}^E} - V^{\alpha \pi^\mathrm{pl} + (1-\alpha) \pi^\mathrm{op}}_{M_{\boldsymbol{\theta^*}}^L}} + \abs{V^{\alpha \pi^\mathrm{pl} + (1-\alpha) \pi^\mathrm{op}}_{M_{\boldsymbol{\theta^*}}^L} - V^{\pi^\mathrm{pl}}_{M_{\boldsymbol{\theta^*}}^L}} \\
~\stackrel{\mathrm{a}}{=}~& \abs{V^{\pi^{*}_{M_{\boldsymbol{\theta^*}}^L}}_{M_{\boldsymbol{\theta^*}}^L} - V^{\pi^{*}_{M_{\boldsymbol{\theta^*}}^E}}_{M_{\boldsymbol{\theta^*}}^E}} + \abs{V^{\alpha \pi^\mathrm{pl} + (1-\alpha) \pi^\mathrm{op}}_{M_{\boldsymbol{\theta^*}}^L} - V^{\pi^\mathrm{pl}}_{M_{\boldsymbol{\theta^*}}^L}} \\
~\stackrel{\mathrm{b}}{\leq}~& \frac{\gamma \cdot \abs{R_{\boldsymbol{\theta^*}}}^{\mathrm{max}}}{(1 - \gamma)^2} \cdot d_\mathrm{dyn} \br{T^L, T^E} + \abs{V^{\alpha \pi^\mathrm{pl} + (1-\alpha) \pi^\mathrm{op}}_{M_{\boldsymbol{\theta^*}}^L} - V^{\pi^\mathrm{pl}}_{M_{\boldsymbol{\theta^*}}^L}} \\
~\leq~& \frac{\gamma \cdot \abs{R_{\boldsymbol{\theta^*}}}^{\mathrm{max}}}{(1 - \gamma)^2} \cdot d_\mathrm{dyn} \br{T^L, T^E} + \frac{\abs{R_{\boldsymbol{\theta^*}}}^{\mathrm{max}}}{1 - \gamma} \cdot  \norm{\boldsymbol{\rho}^{\alpha \pi^\mathrm{pl} + (1-\alpha) \pi^\mathrm{op}}_{M^L} - \boldsymbol{\rho}^{\pi^\mathrm{pl}}_{M^L}}_1 \\
~\stackrel{\mathrm{c}}{\leq}~& \frac{\gamma \cdot \abs{R_{\boldsymbol{\theta^*}}}^{\mathrm{max}}}{(1 - \gamma)^2} \cdot d_\mathrm{dyn} \br{T^L, T^E} + \frac{\abs{R_{\boldsymbol{\theta^*}}}^{\mathrm{max}}}{(1 - \gamma)^2} \cdot \max_{s} \norm{\alpha \pi^\mathrm{pl} + (1-\alpha) \pi^\mathrm{op}(\cdot|s) - \pi^\mathrm{pl}(\cdot|s)}_1 \\
~=~& \frac{\gamma \cdot \abs{R_{\boldsymbol{\theta^*}}}^{\mathrm{max}}}{(1 - \gamma)^2} \cdot d_\mathrm{dyn} \br{T^L, T^E} + \frac{\abs{R_{\boldsymbol{\theta^*}}}^{\mathrm{max}}}{(1 - \gamma)^2} \cdot (1-\alpha) \cdot \max_{s} \norm{\pi^\mathrm{op}(\cdot|s) - \pi^\mathrm{pl}(\cdot|s)}_1 \\
~\leq~& \frac{\gamma \cdot \abs{R_{\boldsymbol{\theta^*}}}^{\mathrm{max}}}{(1 - \gamma)^2} \cdot d_\mathrm{dyn} \br{T^L, T^E} + \frac{\abs{R_{\boldsymbol{\theta^*}}}^{\mathrm{max}}}{(1 - \gamma)^2} \cdot (1-\alpha) \cdot 2
\end{align*}

where $\mathrm{a}$ is due to the fact that $\boldsymbol{\rho}^{\pi_1}_{M^L} = \boldsymbol{\rho}^{\pi^{*}_{M_{\boldsymbol{\theta^*}}^L}}_{M^L}$ and $\boldsymbol{\rho}^{\pi^{*}_{M_{\boldsymbol{\theta^*}}^E}}_{M^E} = \boldsymbol{\rho}^{\alpha \pi^\mathrm{pl} + (1-\alpha) \pi^\mathrm{op}}_{M^L}$; $\mathrm{b}$ is due to Theorem~7 in~\cite{zhang2020multi}; and $\mathrm{c}$ is due to Lemma A.1 in~\cite{sun2018dual}.

\end{proof}

\subsection{Suboptimality gap for the Robust MCE-IRL in the infeasible case}
\label{app:robust-mce-irl-upper-infeasible}


In the main text, we always assume that the condition of Theorem~\ref{thm:occ_states} holds. In that case, the problem~\eqref{opt_start} is feasible, and the performance gap guarantee of Robust MCE IRL provided by Theorem~\ref{thm:new-robust-mce-irl-bound} is weaker than that of the standard MCE IRL. Here, instead we consider the case where the condition of Theorem \ref{thm:occ_states} does not hold\footnote{It follows that the policy output by Algorithm~\ref{alg:MaxEntIRL} is not in the feasible set of the problem~\ref{opt_start}}. 

\begin{theorem}
\label{thm:new-robust-mce-irl-bound-infeasible}
When the condition in Theorem~\ref{thm:occ_states} does not hold, the performance gap between the policies $\pi_1$ and $\pi^\mathrm{pl}$ in the MDP $M_{\boldsymbol{\theta^*}}^L$ is bounded as follows:
\begin{align*}
\abs{V^{\pi_1}_{M_{\boldsymbol{\theta^*}}^L} - V^{\pi^\mathrm{pl}}_{M_{\boldsymbol{\theta^*}}^L}} ~\leq~& \frac{\gamma \cdot \abs{R_{\boldsymbol{\theta^*}}}^{\mathrm{max}}}{(1 - \gamma)^2} \cdot d_\mathrm{dyn} \br{T^L, T^E} + \\
& \frac{\gamma \cdot \abs{R_{\boldsymbol{\theta^*}}}^{\mathrm{max}}}{(1 - \gamma)^2}  2(1 - \alpha)^2 + \frac{ \abs{R_{\boldsymbol{\theta^*}}}^{\mathrm{max}}}{(1 - \gamma)^2} d_\mathrm{pol} \br{\pi^*_{M_{\boldsymbol{\theta^*}}^E}, \pi^{\mathrm{pl}}} + \\
& \frac{ \gamma \cdot \abs{R_{\boldsymbol{\theta^*}}}^{\mathrm{max}}}{(1 - \gamma)^2} \bs{\alpha \cdot d_\mathrm{dyn} \br{T^E, T^L} + (1 - \alpha) \cdot d_\mathrm{dyn} \br{T^E, T^*}} ,
\end{align*}
where $T^*$ minimizes \eqref{eq:T_minimizer}.
\end{theorem}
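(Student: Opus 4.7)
The plan is to mirror the outer triangle inequality used in the proof of Theorem~\ref{thm:new-robust-mce-irl-bound}, writing
\begin{align*}
\abs{V^{\pi_1}_{M^L_{\boldsymbol{\theta^*}}} - V^{\pi^{\mathrm{pl}}}_{M^L_{\boldsymbol{\theta^*}}}} ~\leq~ & \abs{V^{\pi_1}_{M^L_{\boldsymbol{\theta^*}}} - V^{\pi^{*}_{M^L_{\boldsymbol{\theta^*}}}}_{M^L_{\boldsymbol{\theta^*}}}} + \abs{V^{\pi^{*}_{M^L_{\boldsymbol{\theta^*}}}}_{M^L_{\boldsymbol{\theta^*}}} - V^{\pi^{*}_{M^E_{\boldsymbol{\theta^*}}}}_{M^E_{\boldsymbol{\theta^*}}}} \\
& + \abs{V^{\pi^{*}_{M^E_{\boldsymbol{\theta^*}}}}_{M^E_{\boldsymbol{\theta^*}}} - V^{\pi^{\mathrm{pl}}}_{M^L_{\boldsymbol{\theta^*}}}}.
\end{align*}
The first summand vanishes because $\pi_1$ exactly matches $\boldsymbol{\rho}^{\pi^{*}_{M^L_{\boldsymbol{\theta^*}}}}_{M^L}$ (the auxiliary problem defining $\pi_1$ has the expert acting in $M^L$, so feasibility is automatic there and was established in Section~\ref{sec:mce-irl-tight-upper}). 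The second summand is immediately bounded by Theorem~7 of~\cite{zhang2020multi}, yielding the $\frac{\gamma \cdot \abs{R_{\boldsymbol{\theta^*}}}^{\mathrm{max}}}{(1-\gamma)^2} \cdot d_\mathrm{dyn}\br{T^L, T^E}$ contribution. Up to this point, no use is made of feasibility, so these two pieces carry over verbatim from the proof of Theorem~\ref{thm:regret_different_experts}.

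The substance of the argument lies in bounding the third summand, where, unlike in Theorem~\ref{thm:new-robust-mce-irl-bound}, we cannot invoke $\boldsymbol{\rho}^{\pi^{*}_{M^E_{\boldsymbol{\theta^*}}}}_{M^E} = \boldsymbol{\rho}^{\pi^{\mathrm{pl}}}_{M^L}$ because the constraint set of~\eqref{eq:robust-mce-irl-primal-form} is empty. The natural device is to pass through the intermediate MDP $M^{L,\alpha,*}_{\boldsymbol{\theta^*}}$ endowed with the effective adversarial dynamics $T^{L,\alpha,*} := \alpha T^L + (1-\alpha) T^*$ against which $\pi^{\mathrm{pl}}$ is trained per~\eqref{eq:T_minimizer}. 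I would then split
\begin{align*}
\abs{V^{\pi^{*}_{M^E_{\boldsymbol{\theta^*}}}}_{M^E_{\boldsymbol{\theta^*}}} - V^{\pi^{\mathrm{pl}}}_{M^L_{\boldsymbol{\theta^*}}}} ~\leq~ & \abs{V^{\pi^{*}_{M^E_{\boldsymbol{\theta^*}}}}_{M^E_{\boldsymbol{\theta^*}}} - V^{\pi^{*}_{M^E_{\boldsymbol{\theta^*}}}}_{M^{L,\alpha,*}_{\boldsymbol{\theta^*}}}} + \abs{V^{\pi^{*}_{M^E_{\boldsymbol{\theta^*}}}}_{M^{L,\alpha,*}_{\boldsymbol{\theta^*}}} - V^{\pi^{\mathrm{pl}}}_{M^{L,\alpha,*}_{\boldsymbol{\theta^*}}}} \\
& + \abs{V^{\pi^{\mathrm{pl}}}_{M^{L,\alpha,*}_{\boldsymbol{\theta^*}}} - V^{\pi^{\mathrm{pl}}}_{M^L_{\boldsymbol{\theta^*}}}}
\end{align*}
and handle the three pieces independently. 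The first is the Simulation Lemma~\cite{kearns1998near,even2003approximate} applied to the fixed expert policy, together with the convexity inequality $d_\mathrm{dyn}(T^E, T^{L,\alpha,*}) \leq \alpha \cdot d_\mathrm{dyn}(T^E, T^L) + (1-\alpha) \cdot d_\mathrm{dyn}(T^E, T^*)$, which produces exactly the weighted-distance contribution of the statement. The middle term is a pure policy gap in a fixed MDP, handled by Lemma~A.1 of~\cite{sun2018dual}, delivering the $\frac{\abs{R_{\boldsymbol{\theta^*}}}^{\mathrm{max}}}{(1-\gamma)^2} \cdot d_\mathrm{pol}\br{\pi^{*}_{M^E_{\boldsymbol{\theta^*}}}, \pi^{\mathrm{pl}}}$ term. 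The third is once more the Simulation Lemma, now using $T^{L,\alpha,*} - T^L = (1-\alpha)\br{\bar T^* - T^L}$ together with the membership $T^{L,\alpha,*} \in \mathcal{T}^{L,\alpha}$ (which via Section~\ref{sec:two_players_entropy} gives $d_\mathrm{dyn}(T^{L,\alpha,*}, T^L) \leq 2(1-\alpha)$); composing the two $(1-\alpha)$ contributions yields the $\frac{\gamma \cdot \abs{R_{\boldsymbol{\theta^*}}}^{\mathrm{max}}}{(1-\gamma)^2} \cdot 2(1-\alpha)^2$ factor.

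The main obstacle is conceptual rather than algebraic: in the infeasible regime the clean identity $\boldsymbol{\rho}^{\alpha \pi^{\mathrm{pl}} + (1-\alpha)\pi^{\mathrm{op}}}_{M^L} = \boldsymbol{\rho}^{\pi^{*}_{M^E_{\boldsymbol{\theta^*}}}}_{M^E}$ that collapsed the analogous step in Theorem~\ref{thm:new-robust-mce-irl-bound} is no longer available, so the residual $d_\mathrm{pol}\br{\pi^{*}_{M^E_{\boldsymbol{\theta^*}}}, \pi^{\mathrm{pl}}}$ must persist in the final bound. A secondary subtlety is the choice of $M^{L,\alpha,*}$ as the bridging MDP: it is the unique intermediate that both permits a simulation step to $M^E$ with a controlled mixture distance (giving the $\alpha$/$(1-\alpha)$ weighted term) and a simulation step to $M^L$ whose kernel difference carries the extra factor of $(1-\alpha)$. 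Any other bridge (e.g.\ $M^E$ or $M^L$ themselves) would collapse the decomposition but forfeit one of the two $(1-\alpha)$ factors needed to reach the stated quadratic dependence.
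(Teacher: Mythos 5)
Your proof is correct and follows essentially the same route as the paper's: the same bridging dynamics $\alpha T^L + (1-\alpha)T^*$, the same appeals to Theorem~7 of~\cite{zhang2020multi} / the Simulation Lemma and to Lemma~A.1 of~\cite{sun2018dual}, and the same convexity step producing the $\alpha$/$(1-\alpha)$-weighted distance term and the $2(1-\alpha)^2$ factor. The only cosmetic difference is the order of the swaps inside the infeasibility term (you change the dynamics under the fixed expert policy first and then the policy inside $M^{L,\alpha,*}$, whereas the paper changes the policy inside $M^E$ first and then the dynamics under $\pi^{\mathrm{pl}}$), which yields identical bounds term by term.
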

\begin{proof}
\begin{align*}
\abs{V^{\pi_1}_{M_{\boldsymbol{\theta^*}}^L} - V^{\pi^\mathrm{pl}}_{M_{\boldsymbol{\theta^*}}^L}} ~\leq~& \abs{V^{\pi_1}_{M_{\boldsymbol{\theta^*}}^L} - V^{\pi^{*}_{M_{\boldsymbol{\theta^*}}^L}}_{M_{\boldsymbol{\theta^*}}^L}} + \abs{V^{\pi^{*}_{M_{\boldsymbol{\theta^*}}^L}}_{M_{\boldsymbol{\theta^*}}^L} - V^{\pi^{*}_{M_{\boldsymbol{\theta^*}}^E}}_{M_{\boldsymbol{\theta^*}}^E}} + \abs{V^{\pi^{*}_{M_{\boldsymbol{\theta^*}}^E}}_{M_{\boldsymbol{\theta^*}}^E} - V^{\alpha \pi^\mathrm{pl} + (1-\alpha) \pi^\mathrm{op}}_{M_{\boldsymbol{\theta^*}}^L}} + \abs{V^{\alpha \pi^\mathrm{pl} + (1-\alpha) \pi^\mathrm{op}}_{M_{\boldsymbol{\theta^*}}^L} - V^{\pi^\mathrm{pl}}_{M_{\boldsymbol{\theta^*}}^L}} \\
~\stackrel{\mathrm{a}}{=}~& \underbrace{\abs{V^{\pi^{*}_{M_{\boldsymbol{\theta^*}}^L}}_{M_{\boldsymbol{\theta^*}}^L} - V^{\pi^{*}_{M_{\boldsymbol{\theta^*}}^E}}_{M_{\boldsymbol{\theta^*}}^E}}}_{\text{Demonstration difference}} + \underbrace{\abs{V^{\alpha \pi^\mathrm{pl} + (1-\alpha) \pi^\mathrm{op}}_{M_{\boldsymbol{\theta^*}}^L} - V^{\pi^\mathrm{pl}}_{M_{\boldsymbol{\theta^*}}^L}}}_{\text{Transfer difference}} + \underbrace{\abs{V^{\pi^{*}_{M_{\boldsymbol{\theta^*}}^E}}_{M_{\boldsymbol{\theta^*}}^E} - V^{\alpha \pi^\mathrm{pl} + (1 - \alpha)\pi^\mathrm{op}}_{M_{\boldsymbol{\theta^*}}^L}}}_{\text{infeasibility error}} \\
\end{align*}
The Demonstration difference is bounded using Theorem~7 in~\cite{zhang2020multi}, i.e.
\begin{equation}
     \abs{V^{\pi^{*}_{M_{\boldsymbol{\theta^*}}^L}}_{M_{\boldsymbol{\theta^*}}^L} - V^{\pi^{*}_{M_{\boldsymbol{\theta^*}}^E}}_{M_{\boldsymbol{\theta^*}}^E}} \leq \frac{\gamma \cdot \abs{R_{\boldsymbol{\theta^*}}}^{\mathrm{max}}}{(1 - \gamma)^2} \cdot d_\mathrm{dyn} \br{T^L, T^E}
\end{equation}
The transfer error can be bound as:
\begin{align*}
    \abs{V^{\alpha \pi^\mathrm{pl} + (1-\alpha) \pi^\mathrm{op}}_{M_{\boldsymbol{\theta^*}}^L} - V^{\pi^\mathrm{pl}}_{M_{\boldsymbol{\theta^*}}^L}} ~\stackrel{\mathrm{a}}{=}~&\abs{V^{\pi^\mathrm{pl}}_{\boldsymbol{\theta}^*, \alpha T^L + (1 - \alpha) T^*} - V^{\pi^\mathrm{pl}}_{M_{\boldsymbol{\theta^*}}^L}} \\
    ~\stackrel{\mathrm{b}}{\leq}~& \frac{\gamma \cdot \abs{R_{\boldsymbol{\theta^*}}}^{\mathrm{max}}}{(1 - \gamma)^2} \cdot d_\mathrm{dyn} \br{\alpha T^L + (1 - \alpha)T^*, T^L} \\
    ~\stackrel{\mathrm{c}}{=}~& \frac{\gamma \cdot \abs{R_{\boldsymbol{\theta^*}}}^{\mathrm{max}}}{(1 - \gamma)^2}  (1 - \alpha)\cdot d_\mathrm{dyn} \br{ T^*, T^L} \\
    ~\stackrel{\mathrm{d}}{=}~& \frac{\gamma \cdot \abs{R_{\boldsymbol{\theta^*}}}^{\mathrm{max}}}{(1 - \gamma)^2}  2(1 - \alpha)^2
\end{align*}
Finally, for the infeasibility error
\begin{align*}
    \abs{V^{\pi^{*}_{M_{\boldsymbol{\theta^*}}^E}}_{M_{\boldsymbol{\theta^*}}^E} - V^{\alpha \pi^\mathrm{pl} + (1 - \alpha)\pi^\mathrm{op}}_{M_{\boldsymbol{\theta^*}}^L}}~\stackrel{\mathrm{a}}{=}~&\abs{V^{\pi^{*}_{M_{\boldsymbol{\theta^*}}^E}}_{M_{\boldsymbol{\theta^*}}^E} - V^{\pi^\mathrm{pl}}_{\boldsymbol{\theta}^*, \alpha T^L + (1 - \alpha) T^*}} \\
    ~\stackrel{\mathrm{b}}{\leq}~& \abs{V^{\pi^{*}_{M_{\boldsymbol{\theta^*}}^E}}_{M_{\boldsymbol{\theta^*}}^E} - V^{\pi^\mathrm{pl}}_{M_{\boldsymbol{\theta^*}}^E}} + \abs{V^{\pi^\mathrm{pl}}_{M_{\boldsymbol{\theta^*}}^E} - V^{\pi^\mathrm{pl}}_{\boldsymbol{\theta}^*, \alpha T^L + (1 - \alpha) T^*}} \\
    ~\stackrel{\mathrm{c}}{\leq}~&\frac{ \abs{R_{\boldsymbol{\theta^*}}}^{\mathrm{max}}}{(1 - \gamma)^2} d_\mathrm{pol} \br{\pi^*_{M_{\boldsymbol{\theta^*}}^E}, \pi^{\mathrm{pl}}} + \frac{ \gamma \cdot \abs{R_{\boldsymbol{\theta^*}}}^{\mathrm{max}}}{(1 - \gamma)^2} d_\mathrm{dyn} \br{T^E, \alpha T^L + (1 - \alpha) T^*}
    \\
    ~\stackrel{\mathrm{d}}{\leq}~&\frac{ \abs{R_{\boldsymbol{\theta^*}}}^{\mathrm{max}}}{(1 - \gamma)^2} d_\mathrm{pol} \br{\pi^*_{M_{\boldsymbol{\theta^*}}^E}, \pi^{\mathrm{pl}}} + \\& \frac{ \gamma \cdot \abs{R_{\boldsymbol{\theta^*}}}^{\mathrm{max}}}{(1 - \gamma)^2} \bs{\alpha \cdot d_\mathrm{dyn} \br{T^E, T^L} + (1 - \alpha) \cdot d_\mathrm{dyn} \br{T^E, T^*}}
\end{align*}
It can be seen that in case of MCE IRL $\alpha=1$, the infeasibility term can be bounded adding an additional term scaling linearly with the mismatch $d_{\mathrm{dyn}}(T^E, T^L)$, however when $\alpha<1$, the bound dependent on the linear combination of the mismatches $\alpha \cdot d_{\mathrm{dyn}}(T^E, T^L) + (1 - \alpha) \cdot d_{\mathrm{dyn}}(T^E, T^*) $ where $T^*$ is a minimizer of \eqref{eq:T_minimizer}. Therefore the bound is tighter for problems such that $d_{\mathrm{dyn}}(T^E, T^*) < d_{\mathrm{dyn}}(T^E, T^L)$. However, our bounds also explains that for $\alpha < 1$, we have nonzero bound on the transfer error that arises from the fact that the matching policy $\alpha \pi^{\mathrm{pl}} + (1-\alpha)\pi^{\mathrm{op}}$ is not equal to the evaluated policy $\pi^{\mathrm{pl}}$.
\end{proof}

The following corollary provides a value of $\alpha$ for which we can attain better bound on the performance gap of Robust MCE IRL.
\begin{corollary}
\label{thm:best_alpha_upper_bound}
When the condition in Theorem~\ref{thm:occ_states} does not hold, the upper bound on the performance gap between the policies $\pi_1$ and $\pi^\mathrm{pl}$ in the MDP $M_{\boldsymbol{\theta^*}}^L$ given in Theorem~\ref{thm:new-robust-mce-irl-bound-infeasible} is minimized for the following choice of $\alpha$:
\[
\alpha = \min \br{1, 1 - \frac{d_{\mathrm{dyn}}{(T^E, T^L)}}{4} + \frac{d_{\mathrm{dyn}}{(T^*, T^E)}}{4}} ,
\]
where $T^*$ minimizes \eqref{eq:T_minimizer}.
\end{corollary}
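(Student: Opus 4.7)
The plan is to treat the upper bound of Theorem~\ref{thm:new-robust-mce-irl-bound-infeasible} as a function of $\alpha$ on the interval $[0,1]$ and simply minimize it. First I would collect the $\alpha$-dependent terms. The demonstration-difference term and the policy-distance term $\frac{|R_{\boldsymbol{\theta^*}}|^{\max}}{(1-\gamma)^2} d_\mathrm{pol}(\pi^{*}_{M_{\boldsymbol{\theta^*}}^E}, \pi^{\mathrm{pl}})$ can be treated as constants with respect to $\alpha$ for the purpose of optimizing the displayed bound (the upper bound as written is viewed as a function of $\alpha$ via its explicit coefficients). After factoring out the common constant $\frac{\gamma \cdot |R_{\boldsymbol{\theta^*}}|^{\max}}{(1-\gamma)^2}$, minimizing the bound reduces to minimizing
\[
f(\alpha) \;:=\; 2(1-\alpha)^2 \;+\; \alpha \cdot d_\mathrm{dyn}(T^E, T^L) \;+\; (1-\alpha) \cdot d_\mathrm{dyn}(T^E, T^*)
\]
over $\alpha \in [0,1]$.

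Next I would solve this scalar convex problem. The function $f$ is a strictly convex quadratic in $\alpha$, so its unconstrained minimizer is the unique stationary point. Computing
\[
f'(\alpha) \;=\; -4(1-\alpha) \;+\; d_\mathrm{dyn}(T^E, T^L) \;-\; d_\mathrm{dyn}(T^E, T^*)
\]
and setting $f'(\alpha)=0$ gives
\[
\alpha^{\mathrm{uc}} \;=\; 1 \;-\; \frac{d_\mathrm{dyn}(T^E, T^L)}{4} \;+\; \frac{d_\mathrm{dyn}(T^E, T^*)}{4}.
\]
I then project this stationary point onto $[0,1]$. The lower bound is automatic: since the $\ell_1$-distance between any two transition kernels is at most $2$, we have $d_\mathrm{dyn}(T^E, T^L)/4 \leq 1/2$, hence $\alpha^{\mathrm{uc}} \geq 1/2 > 0$. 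The upper constraint $\alpha \leq 1$ is active precisely when $d_\mathrm{dyn}(T^E, T^*) \geq d_\mathrm{dyn}(T^E, T^L)$, in which case $f'(1) \leq 0$ and the constrained minimizer is $\alpha = 1$. Combining both regimes yields exactly the $\min(1, \cdot)$ expression claimed.

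The only delicate point, and the main obstacle I see, is the status of the term $d_\mathrm{pol}(\pi^{*}_{M_{\boldsymbol{\theta^*}}^E}, \pi^{\mathrm{pl}})$: strictly speaking $\pi^{\mathrm{pl}}$ is the output of Algorithm~\ref{alg:MaxEntIRL} and thus depends on $\alpha$, so a fully rigorous minimization of the \emph{true} performance gap would require control of this dependence. I would flag that the corollary is optimizing the explicit upper bound expression from Theorem~\ref{thm:new-robust-mce-irl-bound-infeasible} with that term treated as an $\alpha$-independent constant (consistent with its role there), which is sufficient because only the explicit $\alpha$-dependent coefficients enter the first-order condition. Once that convention is fixed, the remainder is the elementary constrained quadratic minimization sketched above.
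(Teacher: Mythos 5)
Your derivation is correct, and it is worth noting that the paper itself offers no proof of this corollary at all: it states the result and follows it only with an intuitive remark about how $\alpha$ should vary with the two distances. Your calculation — isolating the $\alpha$-dependent part of the bound as the convex quadratic $f(\alpha) = 2(1-\alpha)^2 + \alpha\, d_\mathrm{dyn}(T^E,T^L) + (1-\alpha)\, d_\mathrm{dyn}(T^E,T^*)$, solving $f'(\alpha)=0$ to get $\alpha^{\mathrm{uc}} = 1 - \tfrac{1}{4}d_\mathrm{dyn}(T^E,T^L) + \tfrac{1}{4}d_\mathrm{dyn}(T^E,T^*)$, and projecting onto $[0,1]$ using $d_\mathrm{dyn}\leq 2$ for the lower end and $f'(1)\leq 0$ for the upper end — is the natural (and essentially only) way to obtain the stated formula, and it reproduces it exactly. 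Your caveat about $d_\mathrm{pol}(\pi^*_{M^E_{\boldsymbol{\theta^*}}},\pi^{\mathrm{pl}})$ depending on $\alpha$ through the algorithm's output is well taken; one further subtlety in the same spirit, which neither you nor the paper addresses, is that $T^*$ itself is the minimizer over the set $\mathcal{T}^{L,\alpha}$ and therefore also depends on $\alpha$, so the displayed formula is really an implicit characterization rather than a closed form. That is a defect of the corollary as stated, not of your argument, which faithfully optimizes the explicit $\alpha$-dependence of the bound as written.
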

The suggested choice of $\alpha$ follows the intuition of having a decreasing $\alpha$ as the distance $d_{\mathrm{dyn}}{(T^E, T^L)}$ increases.
However, it should be closer to $1$ as the distance $d_{\mathrm{dyn}}{(T^E, T^*)}$ increases, i.e., a less powerful opponent should work better if the expert transition dynamics are not close to the worst ones (the ones that minimize \eqref{eq:T_minimizer}).

\subsection{Proof of Theorem~\ref{theorem-tightness}}
\label{app:tightness-proof}

\begin{proof}
For any policy $\pi$ acting in the expert environment $M^E$, we can compute the state occupancy measures, as follows:
\begin{align}
\rho^{\pi}_{M^E} (s_0) ~=~& 1 - \gamma \label{eq:occ_expert_start}\\
\rho^{\pi}_{M^E} (s_1) ~=~& (1 - \epsilon_E) \cdot \gamma \cdot \pi(a_1 | s_0) \\
\rho^{\pi}_{M^E} (s_2) ~=~& \epsilon_E \cdot \gamma \cdot \pi(a_1 | s_0) + \gamma \cdot \pi(a_2 | s_0) \label{eq:occ_expert_end}
\end{align}
Then, for the MDP $M_{\boldsymbol{\theta^*}}^E$ endowed with the true reward function $R_{\boldsymbol{\theta^*}}$, we have:
\begin{equation}
V^{\pi}_{M_{\boldsymbol{\theta^*}}^E} ~=~ \frac{\gamma}{1 - \gamma} \cdot \bc{2 \cdot (1 - \epsilon_E) \cdot \pi(a_1 | s_0) - 1} ,
\end{equation}
which is maximized when $\pi(a_1 | s_0) = 1$. Therefore, the optimal expert policy is given by: $\pi^*_{M^E_{\boldsymbol{\theta^*}}}(a_1 | s_0) = 1$ and $\pi^*_{M^E_{\boldsymbol{\theta^*}}}(a_2 | s_0) = 0$, with the corresponding optimal value $V^{\pi^*_{M^E_{\boldsymbol{\theta^*}}}}_{M_{\boldsymbol{\theta^*}}^E} = \frac{\gamma}{1 - \gamma} \cdot \br{1 - 2\epsilon_E}$.

On the learner side ($M^L$), Algorithm~\ref{alg:MaxEntIRL} converges when the occupancy measure of the mixture policy $\alpha \pi^{\mathrm{pl}} + (1 - \alpha) \pi^{\mathrm{op}}$ matches the expert's occupancy measure. First, we compute the occupancy measures for the mixture policy:
\begin{align*}
\rho^{\alpha \pi^{\mathrm{pl}} + (1 - \alpha) \pi^{\mathrm{op}}}_{M^L} (s_0) ~=~& 1 - \gamma \\
\rho^{\alpha \pi^{\mathrm{pl}} + (1 - \alpha) \pi^{\mathrm{op}}}_{M^L} (s_1) ~=~& \gamma \cdot \bc{\alpha \cdot \pi^{\mathrm{pl}}(a_1 | s_0) + (1 - \alpha) \cdot \pi^{\mathrm{op}}(a_1 | s_0)} \\
\rho^{\alpha \pi^{\mathrm{pl}} + (1 - \alpha) \pi^{\mathrm{op}}}_{M^L} (s_2) ~=~&  \gamma \cdot \bc{\alpha \cdot \pi^{\mathrm{pl}}(a_2 | s_0) + (1 - \alpha) \cdot \pi^{\mathrm{op}}(a_2 | s_0)}   
\end{align*}
Here, the worst-case opponent is given by $\pi^{\mathrm{op}}(a_1 | s_0) = 0$ and $\pi^{\mathrm{op}}(a_2 | s_0) = 1$. Note that the choice of the opponent does not rely on the unknown reward function. Instead, we choose as opponent the policy that takes the action leading to the state where the demonstrated occupancy measure is lower. Then, the above expressions reduce to:
\begin{align*}
\rho^{\alpha \pi^{\mathrm{pl}} + (1 - \alpha) \pi^{\mathrm{op}}}_{M^L} (s_0) ~=~& 1 - \gamma\\
\rho^{\alpha \pi^{\mathrm{pl}} + (1 - \alpha) \pi^{\mathrm{op}}}_{M^L} (s_1) ~=~& \gamma \cdot \alpha \cdot \pi^{\mathrm{pl}}(a_1 | s_0)\\
\rho^{\alpha \pi^{\mathrm{pl}} + (1 - \alpha) \pi^{\mathrm{op}}}_{M^L} (s_2) ~=~&  \gamma \cdot \bc{\alpha \cdot \pi^{\mathrm{pl}}(a_2 | s_0) + (1 - \alpha)}
\end{align*}

Now, we match the above occupancy measures with the expert occupancy measures (Eqs.~\eqref{eq:occ_expert_start}-\eqref{eq:occ_expert_end} with $\pi \gets \pi^*_{M^E_{\boldsymbol{\theta^*}}}$):
\begin{align*}
1 - \epsilon_E ~=~& \alpha \cdot \pi^{\mathrm{pl}}(a_1 | s_0)\\
\epsilon_E ~=~& \alpha \cdot \pi^{\mathrm{pl}}(a_2 | s_0) + (1 - \alpha) 
\end{align*}
Thus, we get: $\pi^{\mathrm{pl}}(a_1 | s_0) = \frac{1 - \epsilon_E}{\alpha}$ and $\pi^{\mathrm{pl}}(a_2 | s_0) = \frac{\alpha - (1 - \epsilon_E)}{\alpha}$. Note that $\pi^{\mathrm{pl}}$ is well-defined when $\alpha \geq 1 - \epsilon_E$. 

Given $\alpha \geq 1 - \epsilon_E$, the state occupancy measure of $\pi^{\mathrm{pl}}$ in the MDP $M^L$ is given by:
\begin{align*}
\rho^{\pi^{\mathrm{pl}}}_{M^L} (s_0) ~=~& 1 - \gamma \\
\rho^{\pi^{\mathrm{pl}}}_{M^L} (s_1) ~=~& \gamma \cdot \pi^{\mathrm{pl}}(a_1 | s_0) ~=~ \gamma \cdot \frac{1 - \epsilon_E}{\alpha} \\
\rho^{\pi^{\mathrm{pl}}}_{M^L} (s_2) ~=~&  \gamma \cdot \pi^{\mathrm{pl}}(a_2 | s_0) ~=~ \gamma \cdot \frac{\alpha - (1 - \epsilon_E)}{\alpha}
\end{align*}
Then, the expected return of $\pi^{\mathrm{pl}}$ in the MDP $M_{\boldsymbol{\theta^*}}^L$ is given by:
\begin{equation*}
V^{\pi^{\mathrm{pl}}}_{M_{\boldsymbol{\theta^*}}^L} ~=~ \frac{\gamma}{1 - \gamma} \cdot \frac{2 \cdot (1 - \epsilon_E) - \alpha}{\alpha} .
\end{equation*}
Consider the MCE IRL learner receiving the expert occupancy measure $\boldsymbol{\rho}$ from the learner environment $M^L$ itself, i.e., $\boldsymbol{\rho} = \boldsymbol{\rho}^{\pi^*_{M^L_{\boldsymbol{\theta^*}}}}_{M^L}$. Note that $\pi^*_{M^L_{\boldsymbol{\theta^*}}}(a_1 | s_0) = 1$, and $\pi^*_{M^L_{\boldsymbol{\theta^*}}}(a_2 | s_0) = 0$. In this case, the learner recovers a policy $\pi_1 := \pi^{\mathrm{soft}}_{M_{\boldsymbol{\theta_L}}^L}$ such that $\boldsymbol{\rho}^{\pi_1}_{M^L} = \boldsymbol{\rho}^{\pi^*_{M^L_{\boldsymbol{\theta^*}}}}_{M^L}$. Thus, we have $V^{\pi_1}_{M^L_{\boldsymbol{\theta^*}}} = V^{\pi^*_{M^L_{\boldsymbol{\theta^*}}}}_{M^L_{\boldsymbol{\theta^*}}} = \frac{\gamma}{1 - \gamma}$. Consequently, for this example, the performance gap is given by:
\[
\abs{V^{\pi_1}_{M^L_{\boldsymbol{\theta^*}}} - V^{\pi^{\mathrm{pl}}}_{M^L_{\boldsymbol{\theta^*}}}} ~=~ \abs{\frac{\gamma}{1 - \gamma} \cdot \bc{1 - \frac{2 \cdot (1 - \epsilon_E) - \alpha}{\alpha}}} ~=~ \frac{2 \cdot \gamma}{1 - \gamma} \cdot \abs{\frac{\alpha - (1 - \epsilon_E)}{\alpha}} .
\]

The following two cases are of particular interest:
\begin{itemize}
\item For $\alpha = 1 - \epsilon_E = 1 - \frac{d_\mathrm{dyn} \br{T^L, T^E}}{2}$, the performance gap vanishes. This indicates that our Algorithm~\ref{alg:MaxEntIRL} can recover the optimal performance even under dynamics mismatch. 
\item For $\alpha = 1$ (corresponding to the standard MCE IRL), the performance gap is given by:
\[
\abs{V^{\pi_1}_{M^L_{\boldsymbol{\theta^*}}} - V^{\pi^{\mathrm{pl}}}_{M^L_{\boldsymbol{\theta^*}}}} ~=~ \frac{2 \cdot \gamma \cdot \epsilon_E}{1 - \gamma} ~=~ \frac{\gamma}{1 - \gamma} \cdot d_\mathrm{dyn} \br{T^L, T^E} .
\]
\end{itemize}
\end{proof}

\newpage
\section{Further Details of Section~\ref{sec:experiments}}
\label{appendix:experiments}

\subsection{Hyperparameter Details and Additional Results}
\label{app:hyper-figs}

Here, we present the Figures~\ref{fig:all_gridworld_best_alpha},~and~\ref{fig:gridworld_diff_alpha}, mentioned in the main text. All the hyperparameter details are reported in Tables~\ref{tab:Adamtable},~\ref{tab:InfHorNonlinearAdamtable} and~\ref{tab:MDPtable}. We consider a uniform initial distribution $P_0$. For the performance evaluation of the learned policies, we compute the average reward of $1000 \times \abs{\mathcal{S}}$ trajectories; along with this mean, we have reported the SD as well. 

\subsection{Low Dimensional Features}
\label{app:low-dim-exp}

We consider a \textsc{GridWorld-L} environment with a low dimensional (of dimension 3) binary feature mapping $\boldsymbol{\phi}: \mathcal{S} \to \bc{0,1}^3$. For any state $s \in \mathcal{S}$, the first two entries of the vector $\boldsymbol{\phi}\br{s}$ are defined as follows: 
\[
\boldsymbol{\phi}\br{s}_i = \begin{cases}
1 & \text{the danger is of type-i in the state } s \\
0 & \text{otherwise}
\end{cases}
\]
Whereas, the last entry of the vector $\boldsymbol{\phi}\br{s} = 1$ for non-terminal states. The true reward function is given by $R_{\mathbf{w}}\br{s} = \ip{\mathbf{w}}{\boldsymbol{\phi}\br{s}}$, where $\mathbf{w} = \bs{-2, -6, -1}$. In this low dimensional setting, our Algorithm~\ref{alg:MaxEntIRL} significantly outperforms the standard MCE IRL algorithm (see Figures~\ref{fig:low-dim-exp-short},~and~\ref{fig:low-dim-exp-full}).  

\begin{figure*}[h!] 
\centering
\begin{subfigure}{0.24\textwidth}
\includegraphics[width=0.87\linewidth]{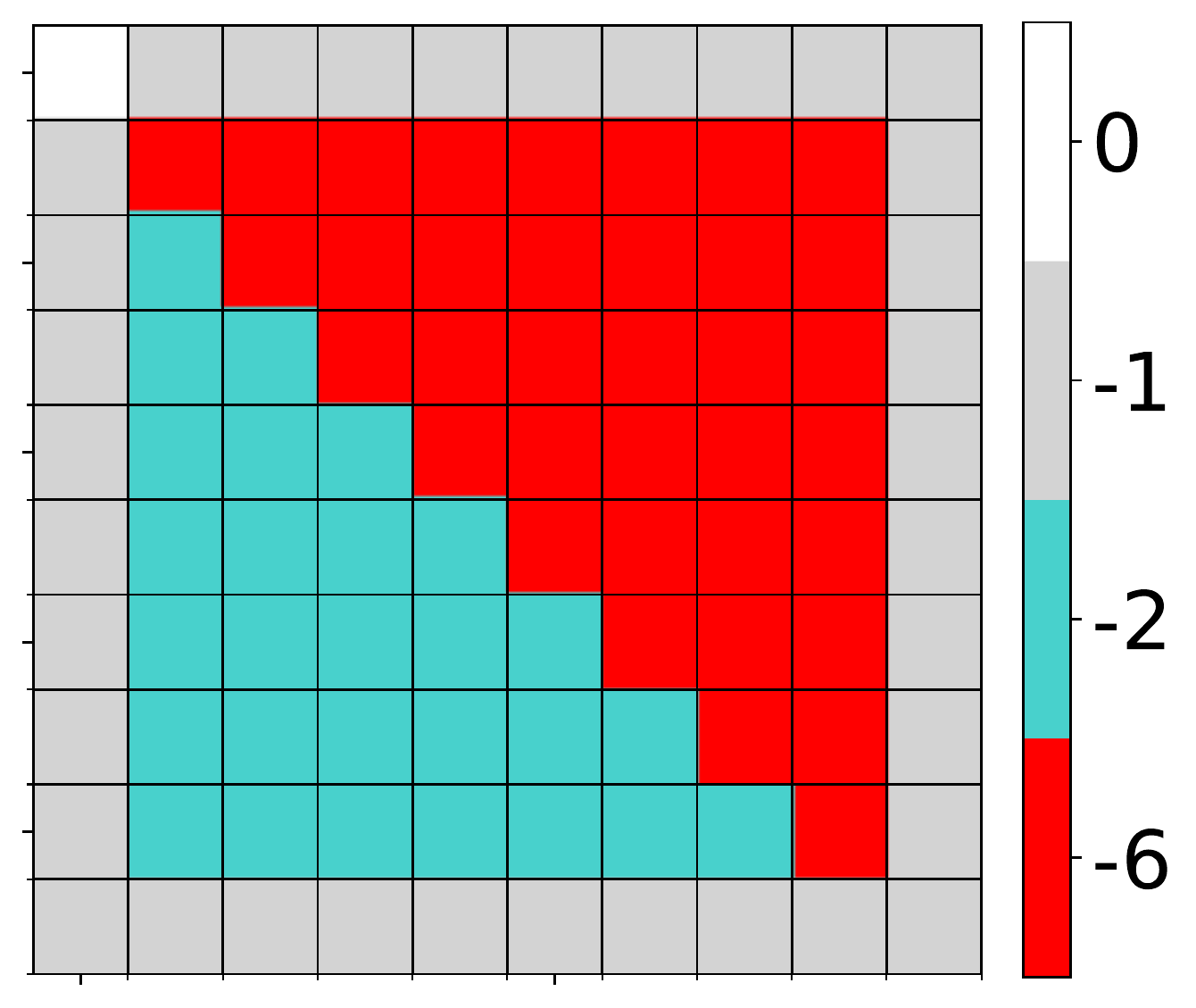}
\caption{\textsc{GridWorld-L}} \label{fig:trtdw1pres}
\end{subfigure}
\begin{subfigure}{0.24\textwidth}
\includegraphics[width=\linewidth]{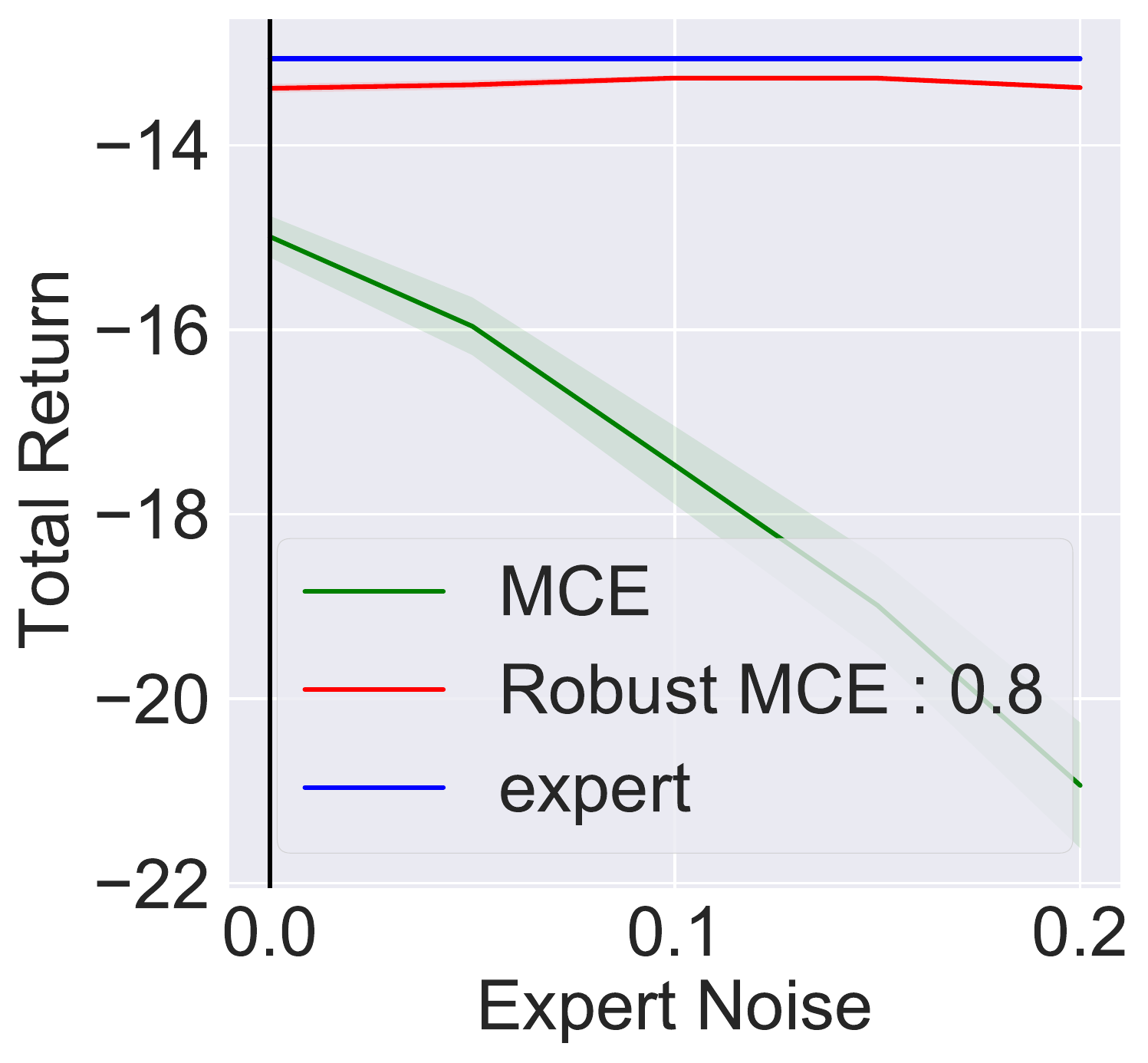}
\caption{$M^{L,\epsilon_L}$ with $\epsilon_L = 0$} \label{fig:etdw1l0.0pres}
\end{subfigure}\hspace*{\fill}
\begin{subfigure}{0.24\textwidth}
\includegraphics[width=\linewidth]{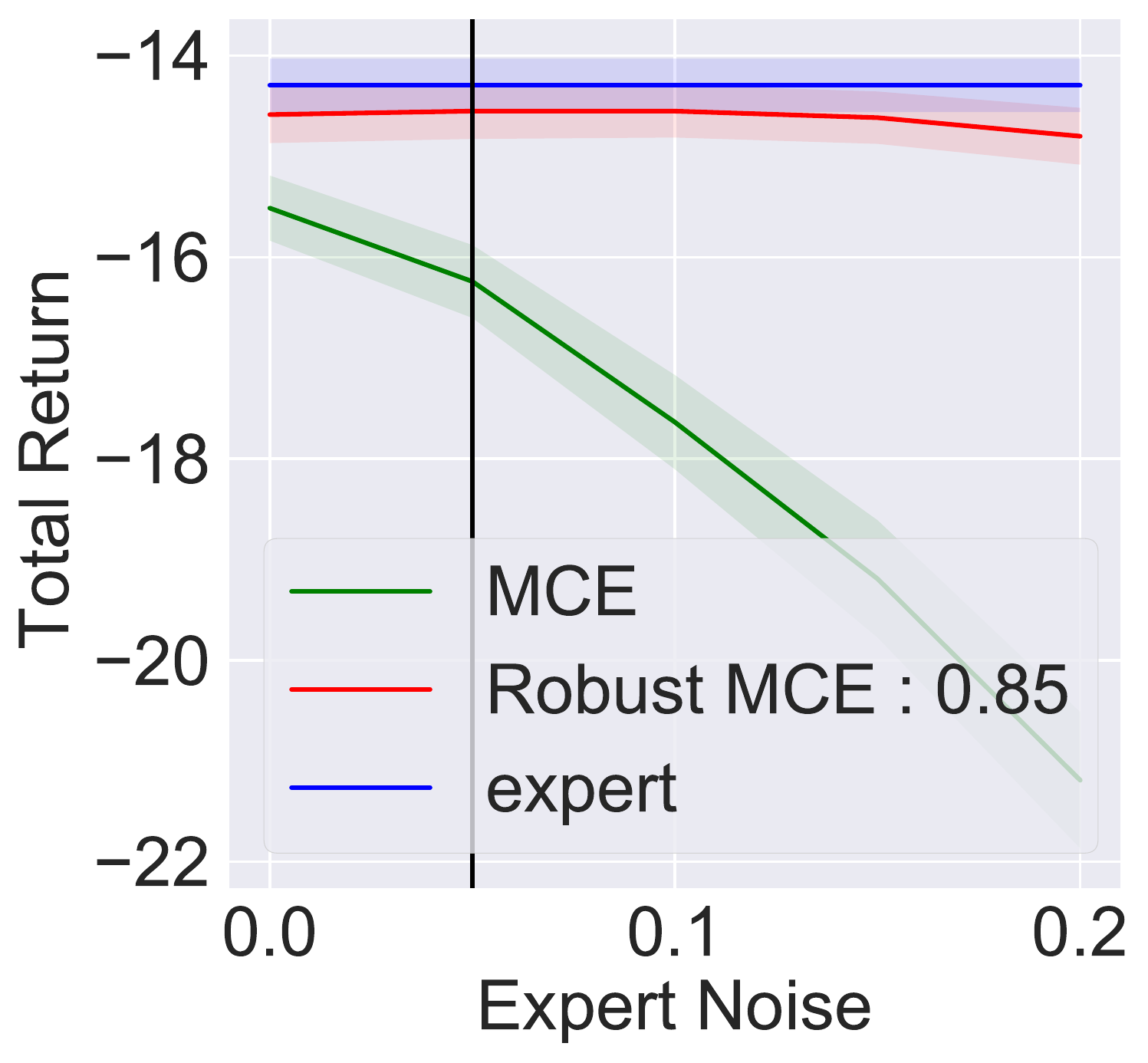}
\caption{$M^{L,\epsilon_L}$ with $\epsilon_L = 0.05$} \label{fig:etdw1l0.05pres}
\end{subfigure}\hspace*{\fill}
\begin{subfigure}{0.24\textwidth}
\includegraphics[width=\linewidth]{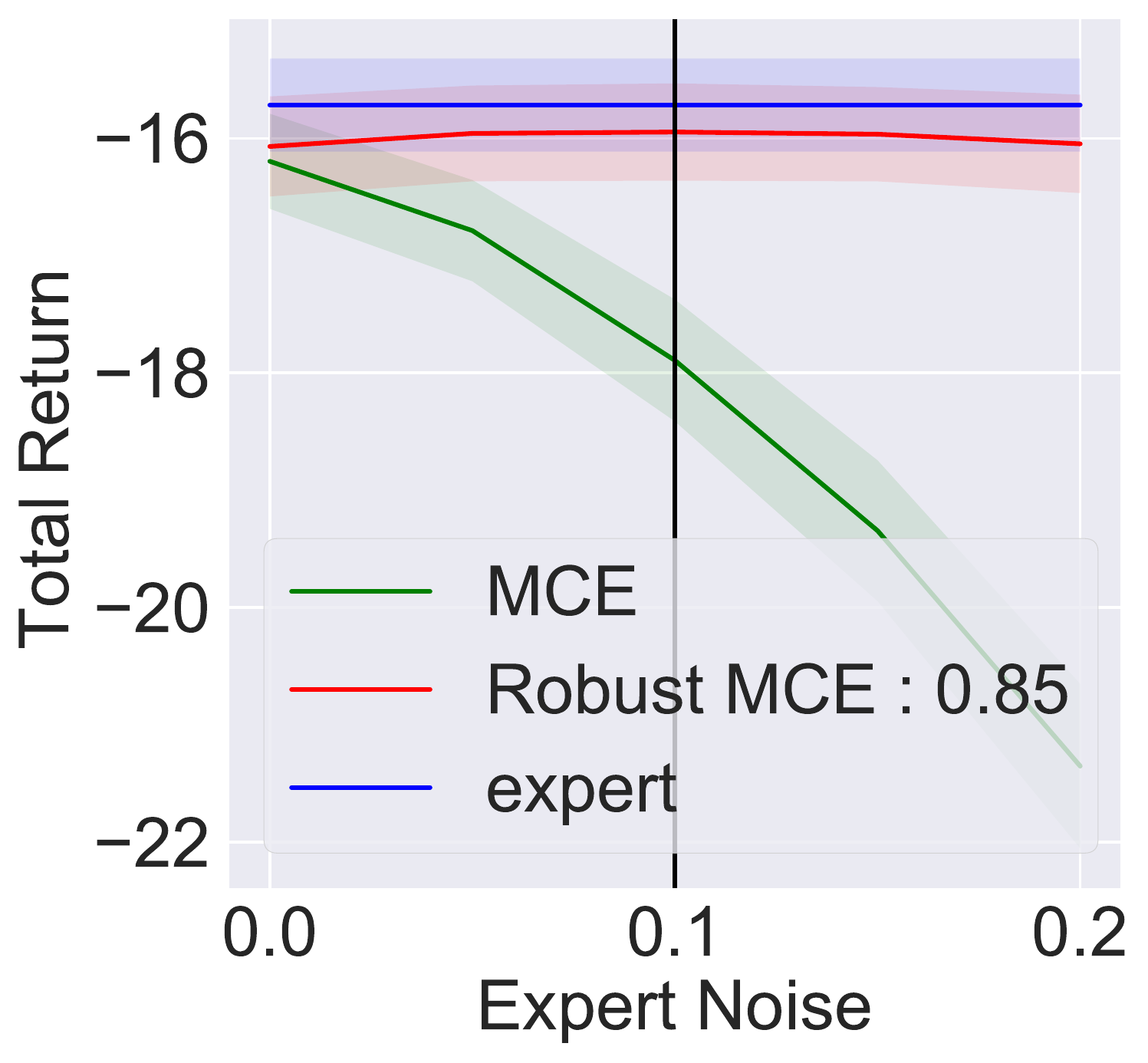}
\caption{$M^{L,\epsilon_L}$ with $\epsilon_L = 0.1$} \label{fig:etdw1l0.1pres}
\end{subfigure}
\caption{Comparison of the performance our Algorithm~\ref{alg:MaxEntIRL} against the baselines, under different levels of mismatch: $\br{\epsilon_E, \epsilon_L} \in \bc{0.0, 0.05, 0.1, 0.15, 0.2} \times \bc{ 0.0, 0.05, 0.1}$. Each plot corresponds to a fixed leaner environment $M^{L,\epsilon_L}$ with $\epsilon_L \in \bc{ 0.0, 0.05, 0.1}$. The values of $\alpha$ used for our Algorithm~\ref{alg:MaxEntIRL} are reported in the legend. The vertical line indicates the position of the learner environment in the x-axis.} 
\label{fig:low-dim-exp-short}
\end{figure*}

\begin{figure}[h!] 
\centering
\begin{subfigure}{0.24\textwidth}
\includegraphics[width=0.87\linewidth]{plots/TDW_reward1.pdf}
\caption{\textsc{GridWorld-L}} \label{fig:trtdw1abl}
\end{subfigure}
\begin{subfigure}{0.24\textwidth}
\includegraphics[width=\linewidth]{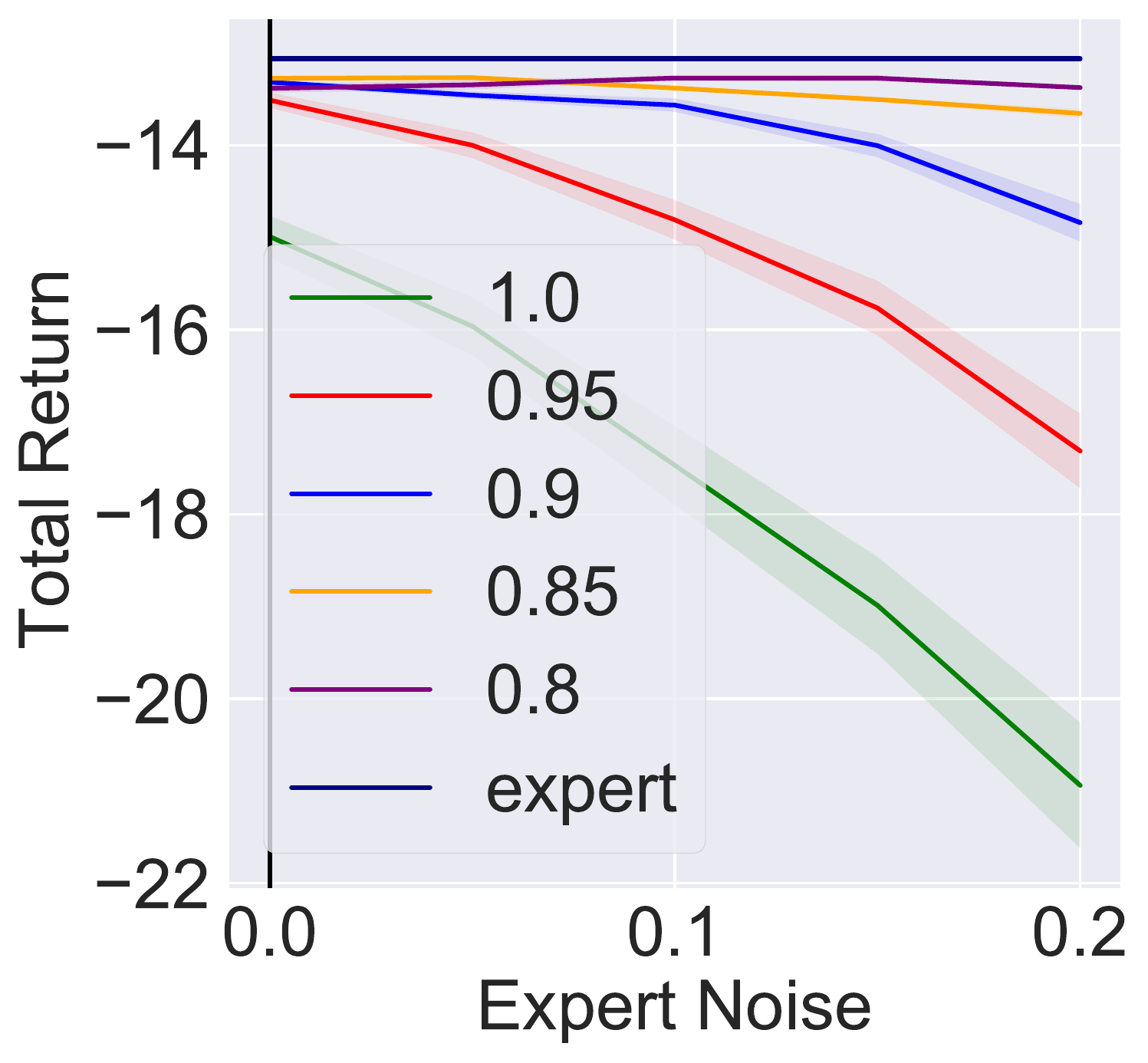}
\caption{$M^{L,\epsilon_L}$ with $\epsilon_L = 0$} \label{fig:etdw1l0.0abl}
\end{subfigure}\hspace*{\fill}
\begin{subfigure}{0.24\textwidth}
\includegraphics[width=\linewidth]{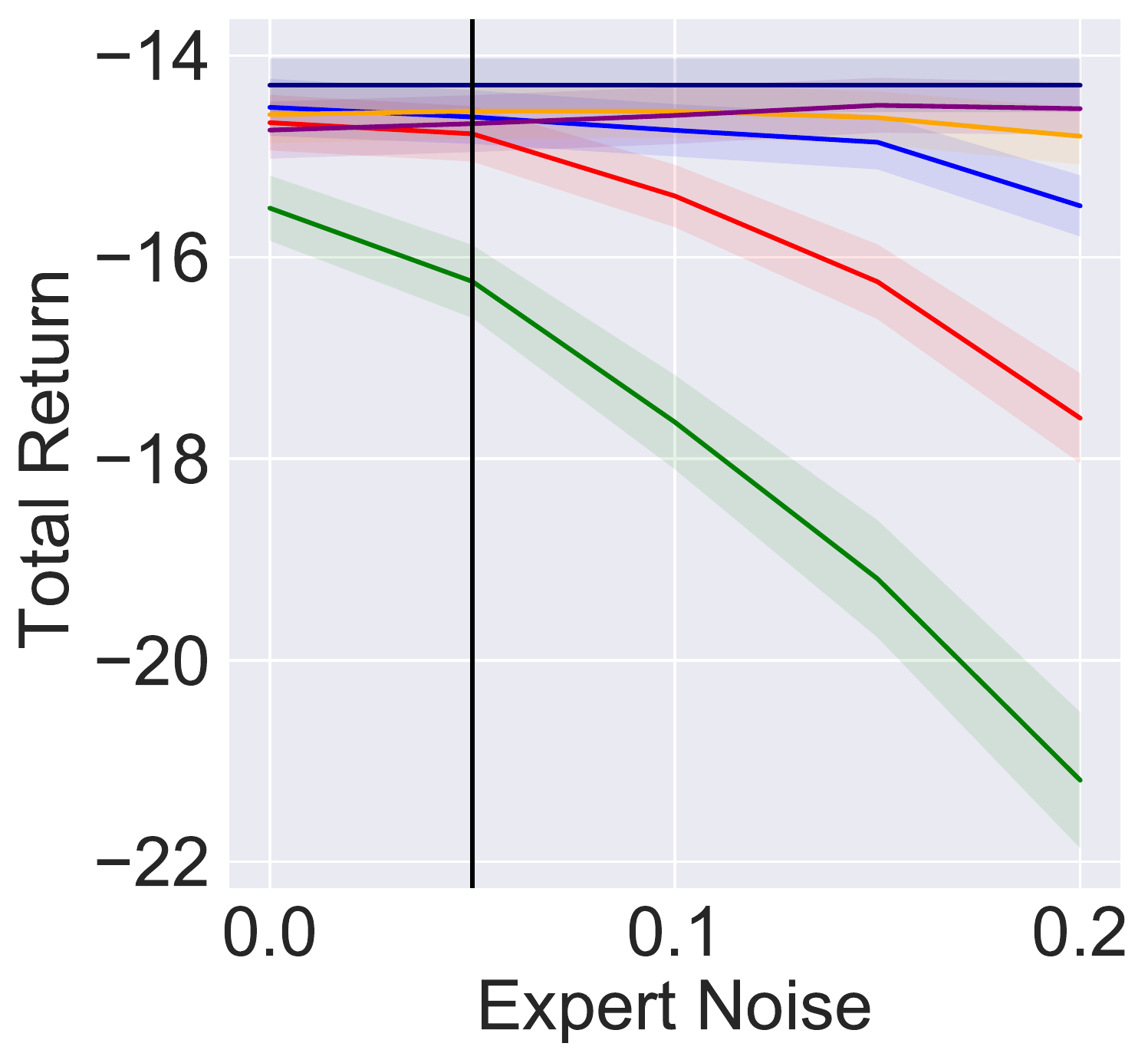}
\caption{$M^{L,\epsilon_L}$ with $\epsilon_L = 0.05$} \label{fig:etdw1l0.05abl}
\end{subfigure}\hspace*{\fill}
\begin{subfigure}{0.24\textwidth}
\includegraphics[width=\linewidth]{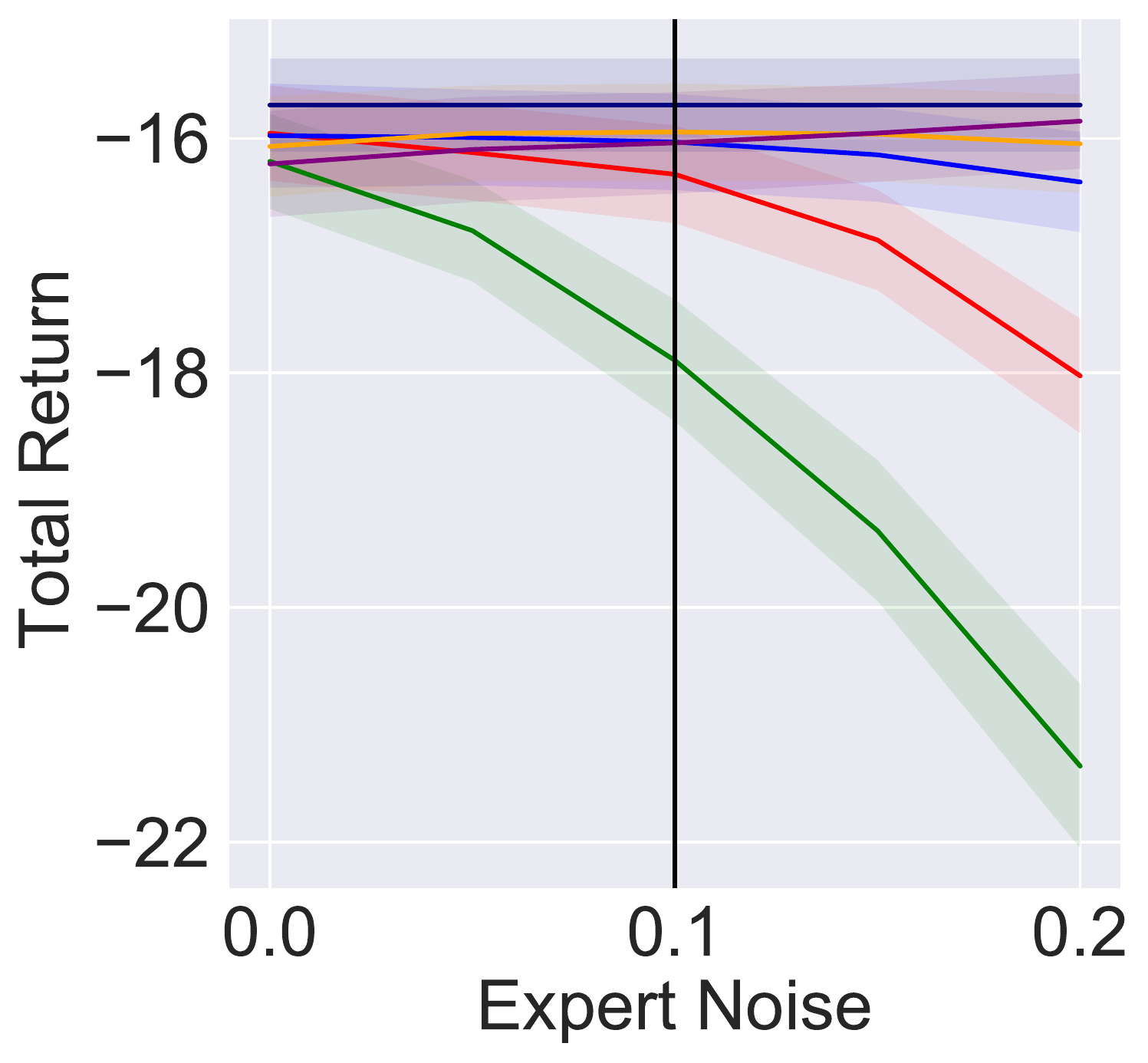}
\caption{$M^{L,\epsilon_L}$ with $\epsilon_L = 0.1$} \label{fig:etdw1l0.1abl}
\end{subfigure}
\caption{Comparison of the performance our Algorithm~\ref{alg:MaxEntIRL} with different values of $\alpha$, under different levels of mismatch: $\br{\epsilon_E, \epsilon_L} \in \bc{0.0, 0.05, 0.1, 0.15, 0.2} \times \bc{ 0.0, 0.05, 0.1}$. Each plot corresponds to a fixed leaner environment $M^{L,\epsilon_L}$ with $\epsilon_L \in \bc{ 0.0, 0.05, 0.1}$. The values of $\alpha$ used for our Algorithm~\ref{alg:MaxEntIRL} are reported in the legend. The vertical line indicates the position of the learner environment in the x-axis.} 
\label{fig:low-dim-exp-full}
\end{figure}

\newpage
\subsection{Impact of the Opponent Strength Parameter $1 - \alpha$ on Robust MCE IRL}
\label{app:alpha-choice}

\begin{figure*}[h!]
\centering
\begin{subfigure}{0.3\textwidth}
\includegraphics[width=\linewidth]{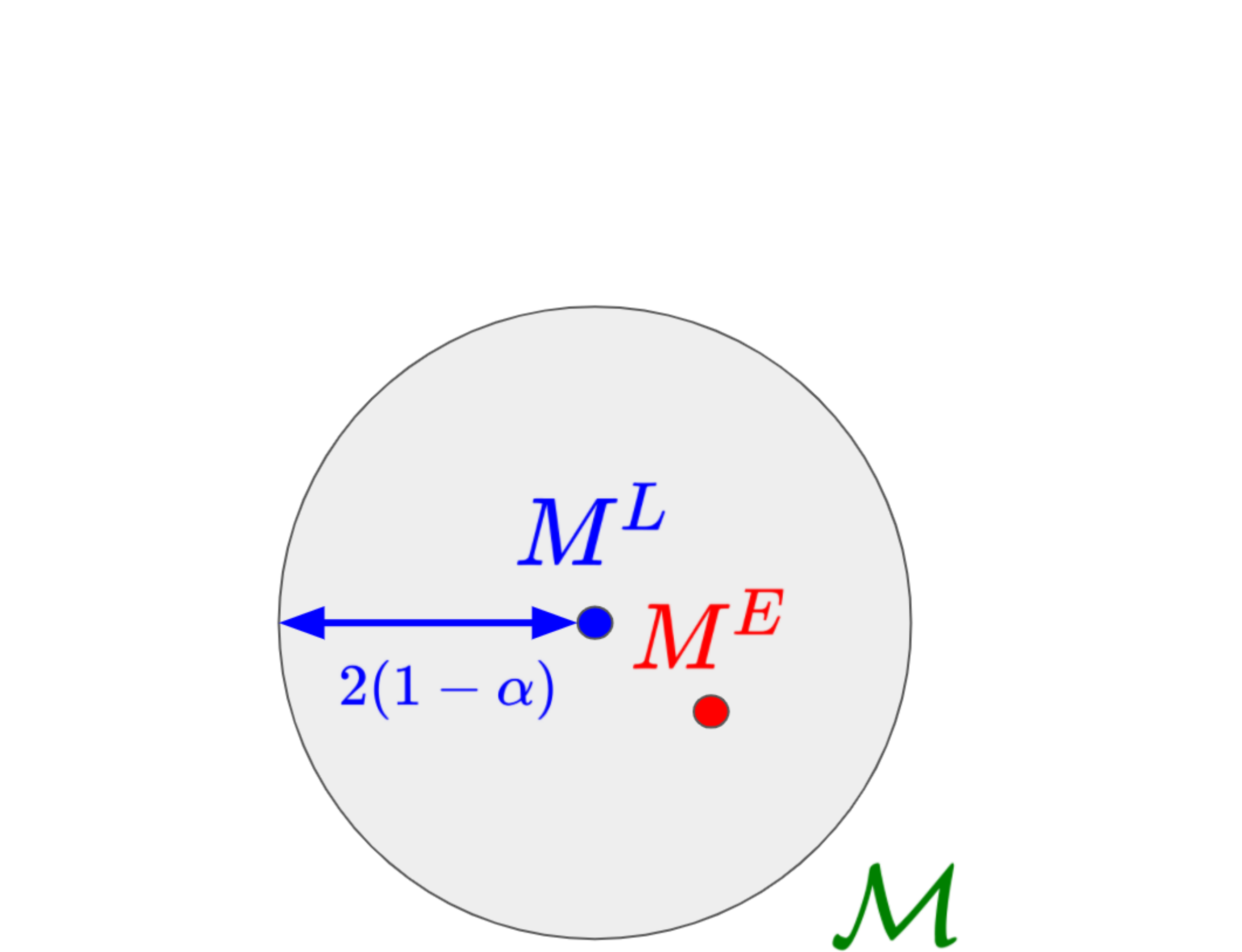}
\caption{Overestimating $1-\alpha$} 
\end{subfigure}\hspace*{\fill}
\begin{subfigure}{0.3\textwidth}
\includegraphics[width=\linewidth]{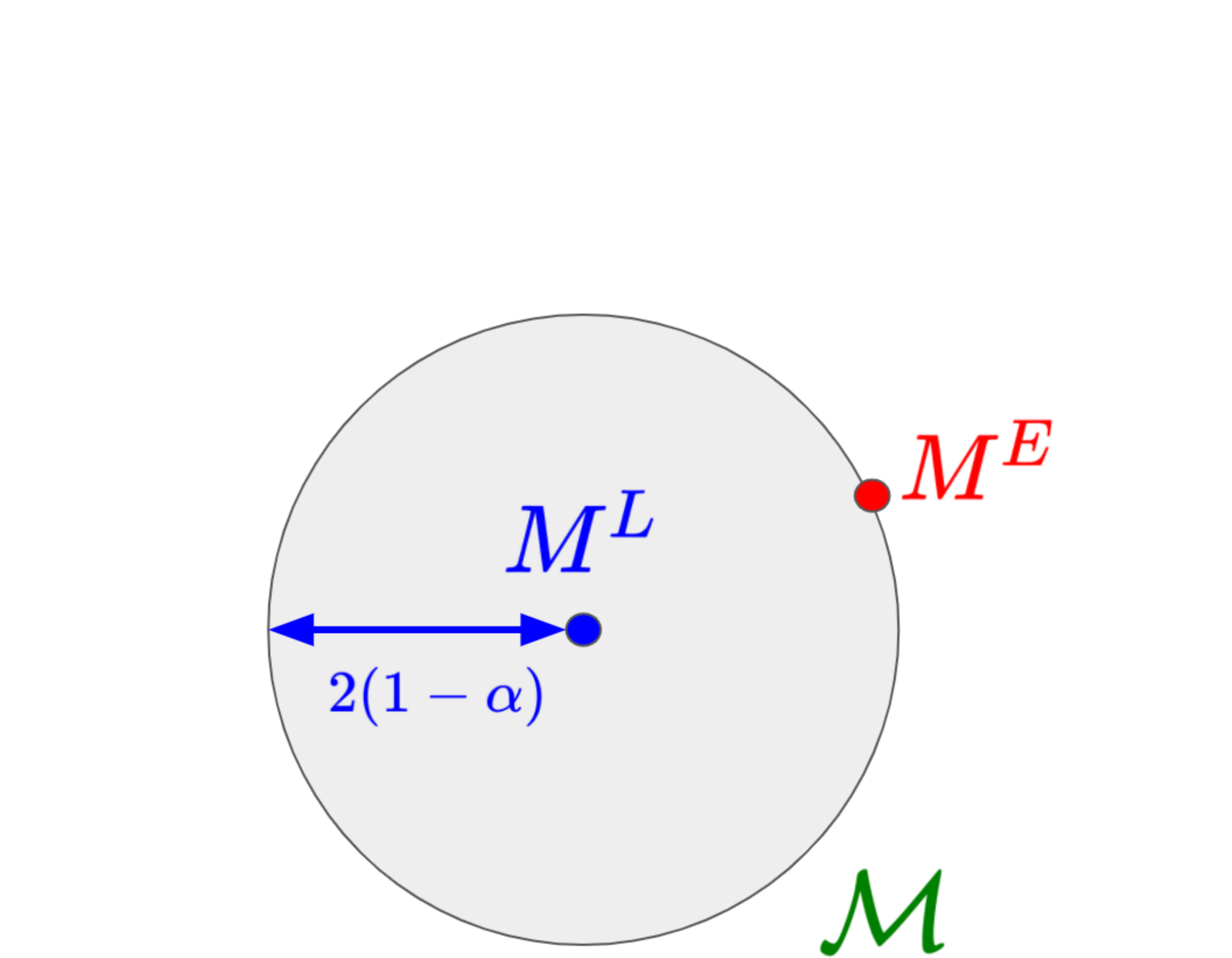}
\caption{Perfect estimation of $1-\alpha$} 
\end{subfigure}\hspace*{\fill}
\begin{subfigure}{0.3\textwidth}
\includegraphics[width=\linewidth]{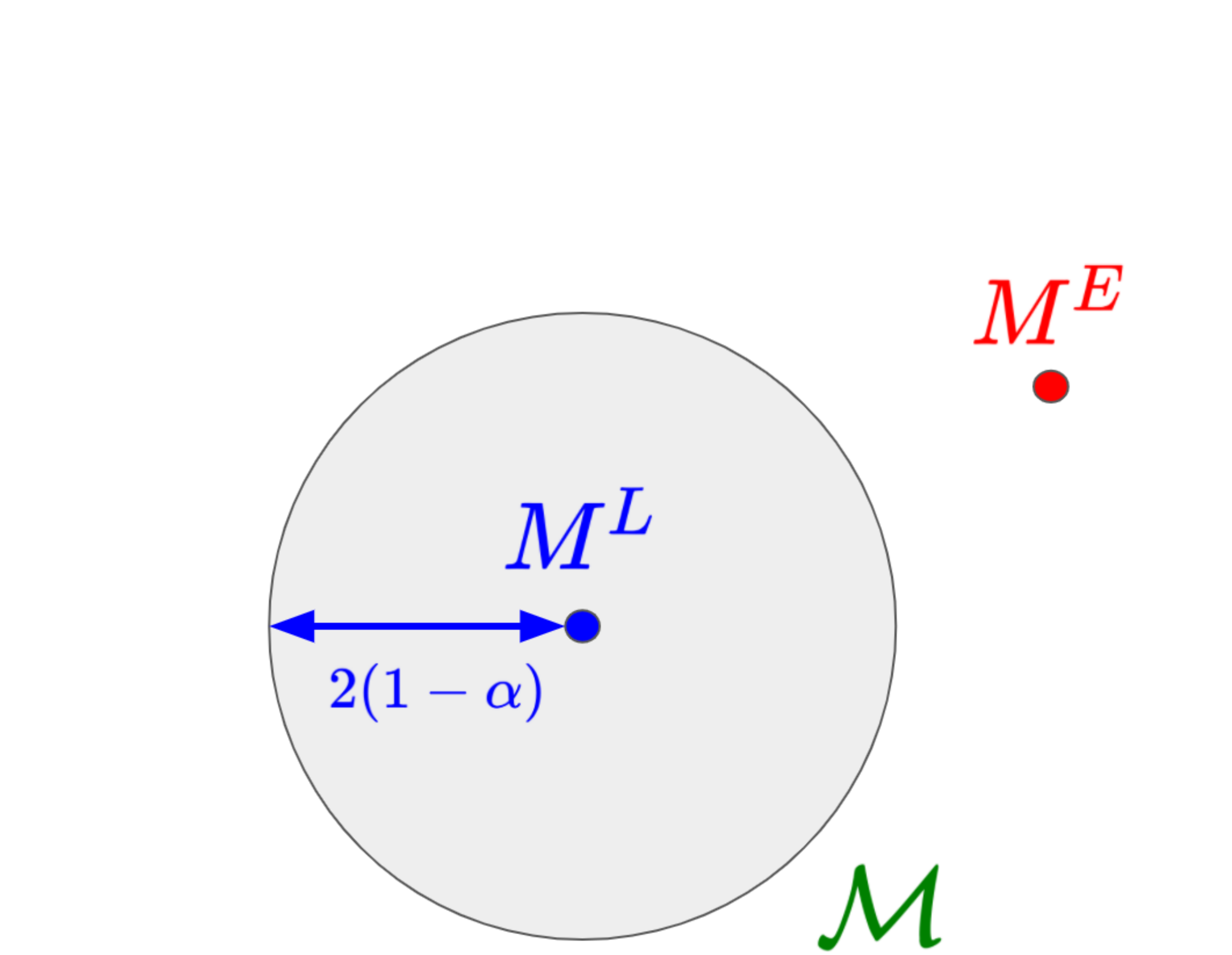}
\caption{Underestimating $1-\alpha$} 
\end{subfigure}
\caption{Illustration of the three cases related to the choice of the opponent strength parameter $1-\alpha$.}
\label{fig:select_alpha}
\end{figure*}

Here, we study the effect of the opponent strength parameter $(1 - \alpha)$ on the performance of our Algorithm~\ref{alg:MaxEntIRL}. Consider the uncertainty set associated with our Algorithm~\ref{alg:MaxEntIRL}:
\[
\mathcal{T}^{L,\alpha} = \bc{T: d_\mathrm{dyn} \br{T, T^L} \leq 2 (1-\alpha)} .
\]
Ideally, we prefer to choose the smallest set $\mathcal{T}^{L,\alpha}$ s.t. $T^E \in \mathcal{T}^{L,\alpha}$. To this end, we consider the following three cases (see Figure~\ref{fig:select_alpha}):
\begin{enumerate}
\item overestimating the opponent strength, i.e., $1-\alpha > \frac{d_\mathrm{dyn} \br{T^E, T^L}}{2}$. 
\item perfect estimation of the opponent strength, i.e., $1-\alpha = \frac{d_\mathrm{dyn} \br{T^E, T^L}}{2}$.
\item underestimating the opponent strength, i.e., $1-\alpha < \frac{d_\mathrm{dyn} \br{T^E, T^L}}{2}$. 
\end{enumerate}

\begin{figure*}[h!]
\centering
\begin{subfigure}{0.4\textwidth}
\includegraphics[width=\linewidth]{plots/1.pdf}
\caption{\textsc{GridWorld-1}} \label{fig:maintr1-new}
\end{subfigure}
\begin{subfigure}{0.4\textwidth}
\includegraphics[width=\linewidth]{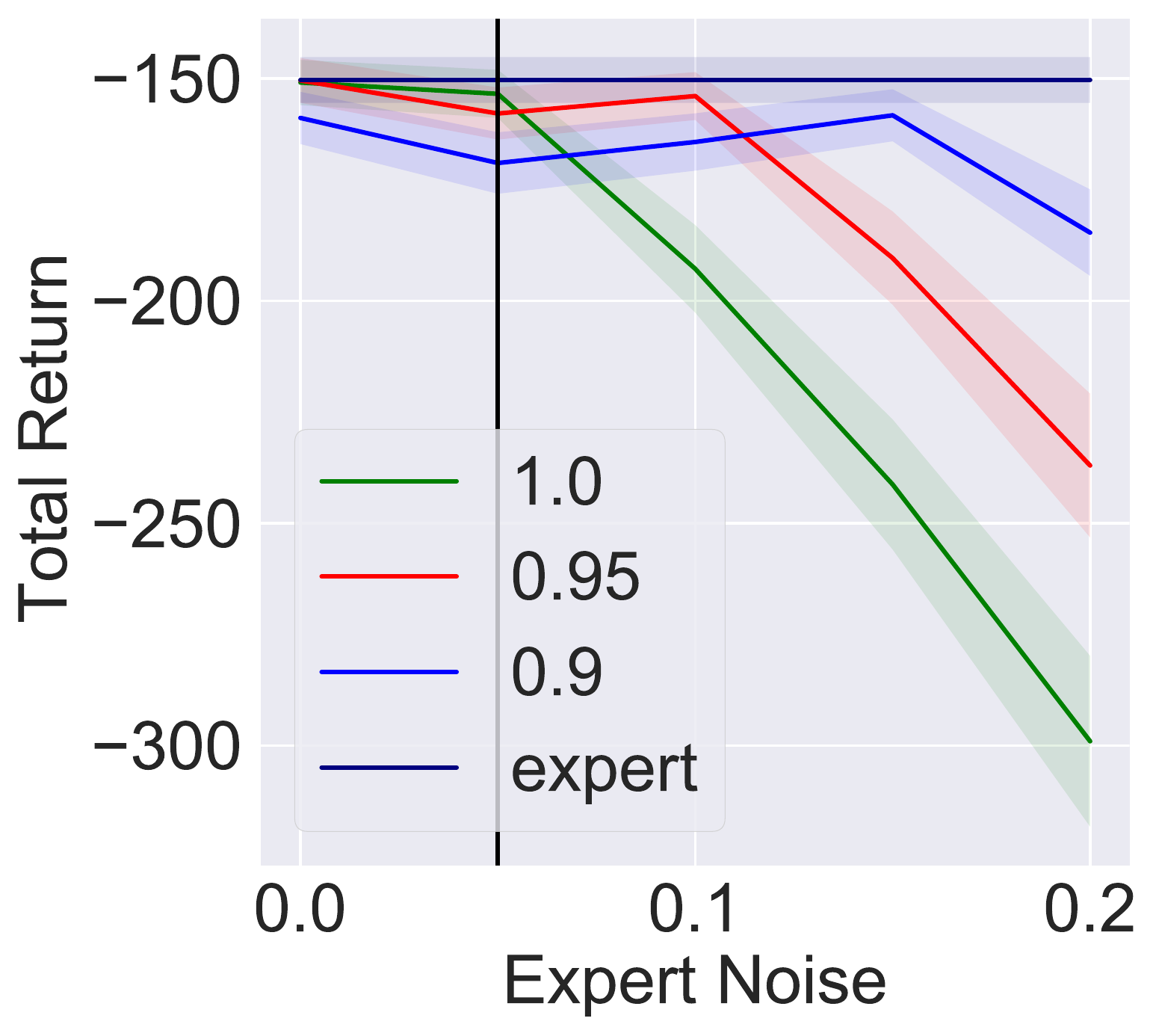}
\caption{$M^{L,\epsilon_L}$ with $\epsilon_L = 0.05$} \label{fig:maine1l0.05-new}
\end{subfigure}
\caption{Comparison of the performance our Algorithm~\ref{alg:MaxEntIRL} with different values of the player strength parameter $\alpha \in \bc{ 0.9, 0.95, 1.0}$, under different levels of mismatch: $\br{\epsilon_E, \epsilon_L} \in \bc{0.0, 0.05, 0.1, 0.15, 0.2} \times \bc{ 0.05}$. The values of $\alpha$ used for our Algorithm~\ref{alg:MaxEntIRL} are reported in the legend. Every point in the x-axis denotes an expert environment $M^{E,\epsilon_E}$ with the corresponding $\epsilon_E$. The vertical line indicates the position of the learner environment $M^{L,\epsilon_L}$ in the x-axis. Note that moving away from the vertical line increases the mismatch between the learner and the expert, i.e., $\abs{\epsilon_L - \epsilon_E}$.}
\label{fig:select_alpha_experiment}
\end{figure*}

Now, consider the experimental setup described in Section~\ref{sec:experiments}. Recall that, in this setup, the distance between the learner and the expert environment is given by $d_\mathrm{dyn} \br{T^{L,\epsilon_L}, T^{E, \epsilon_E}} = 2\br{1 - \frac{1}{\abs{\mathcal{S}}}}\abs{\epsilon_L - \epsilon_E}$. Thus, a reasonable choice for the opponent strength would be $1-\alpha \approx \abs{\epsilon_L - \epsilon_E}$. We note the following behavior in Figure~\ref{fig:select_alpha_experiment}:
\begin{itemize}
\item For $\alpha = 1.0$ ({\color{kagreen}---}), we observe a linear decay in the performance when moving away from the vertical line, i.e, with the increase of mismatch. Note that this curve corresponds to the MCE IRL algorithm. 
\item For $\alpha = 0.95$ ({\color{red}---}), we observe a linear decay in the performance when moving away from the vertical line, after $\epsilon_E = 0.10$. Note that, for $1 - \alpha \approx 0.05$, beyond $\epsilon_L \pm 0.05$ is underestimation region (here, $\epsilon_L = 0.05$). 
\item For $\alpha = 0.9$ ({\color{blue}---}), we observe a linear decay in the performance when moving away from the vertical line, after $\epsilon_E = 0.15$. Note that, for $1 - \alpha \approx 0.1$, beyond $\epsilon_L \pm 0.1$ is underestimation region (here, $\epsilon_L = 0.05$).
\item Within the overestimation region, choosing the larger value of $1-\alpha$ hinders the performance. For example, the region $\epsilon_L \pm 0.05$ is overestimation region for both $1 - \alpha \approx 0.05$ ({\color{red}---}) and $1 - \alpha \approx 0.1$ ({\color{blue}---}). Within this region, the performance of ({\color{blue}---}) curve is lower than that of ({\color{red}---}) curve. 
\end{itemize}

In addition, in Figure~\ref{fig:gridworld_diff_alpha}, we note the following:
\begin{itemize}
\item In general, the curves $\alpha = 1.0$ ({\color{kagreen}---}), $\alpha = 0.95$ ({\color{red}---}), and $1 - \alpha \approx 0.1$ demonstrated the above discussed behavior on the right hand side of the vertical line. Note that the right hand side of the vertical line represents the setting where the expert environment is more stochastic/noisy than the learner environment.
\item In general, the curves $\alpha = 1.0$ ({\color{kagreen}---}), $\alpha = 0.95$ ({\color{red}---}), and $1 - \alpha \approx 0.1$ demonstrated a stable and good performance on the left hand side of the vertical line. Note that the left hand side of the vertical line represents the setting where the expert environment is more deterministic than the learner environment. 
\end{itemize}

To choose the right value of $\alpha$, that depends on $d_\mathrm{dyn} \br{T^E, T^L}$, we need to have an estimate $\widehat T^E$ of the expert environment $T^E$. A few recent works~\cite{reddy2018you,Gong2020WhatII,Herman2016InverseRL} attempt to infer the expert's transition dynamics from the demonstration set or via additional information. Our robust IRL approach can be incorporated into this research vein to improve the IRL agent's performance further.

\newpage

\begin{table}[h!]
\caption{Hyperparameters for the \textsc{GridWorld} experiments}
\centering
\begin{tabular}{ll}
\hline
\hline
\textbf{Hyperparameter}              & \textbf{Value} \\ \hline
IRL Optimizer                        & Adam           \\
Learning rate                        & $0.5$          \\
Weight decay                         & $0.0$          \\
First moment exponential decay rate  & $0.9$          \\
Second moment exponential decay rate & $0.99$         \\
Numerical stabilizer                 & $1e-7$         \\
Number of steps                      & $200$          \\ 
Discount factor $\gamma$             & $0.99$          \\ \hline
\end{tabular}
\label{tab:Adamtable}
\end{table}

\begin{table}[h!]
\caption{Hyperparameters for the \textsc{ObjectWorld} experiments}
\centering
\begin{tabular}{ll}
\hline
\hline
\textbf{Hyperparameter}              & \textbf{Value} \\ \hline
IRL Optimizer                        & Adam           \\
Learning rate                        & $1e-3$          \\
Weight decay                         & $0.01$          \\
First moment exponential decay rate  & $0.9$          \\
Second moment exponential decay rate & $0.999$         \\
Numerical stabilizer                 & $1e-8$         \\
Number of steps                      & $200$          \\ 
Reward network & two 2D-CNN layers; layers size = number of input features; ReLu \\
Discount factor $\gamma$             & $0.7$          \\ \hline
\end{tabular}
\label{tab:InfHorNonlinearAdamtable}
\end{table}

\begin{table}[h!] 
\caption{Hyperparameters for the MDP solvers}
\centering
\begin{tabular}{ll}
\hline
\hline
\textbf{Hyperparameter}              & \textbf{Value} \\ \hline
Two-Player soft value iteration tolerance & $1e-10$           \\
Soft value iteration tolerance & $1e-10$           \\
Value iteration tolerance & $1e-10$           \\
Policy propagation tolerance & $1e-10$           \\
\hline
\end{tabular}
\label{tab:MDPtable}
\end{table}


\newpage

\begin{figure}[h!]  
\centering
\begin{subfigure}{0.2\textwidth}
\includegraphics[width=\linewidth]{plots/1.pdf}
\caption{\textsc{GridWorld-1}} \label{fig:grid1-repeat}
\end{subfigure}\hspace*{\fill}
\begin{subfigure}{0.2\textwidth}
\includegraphics[width=\linewidth]{plots/fillBetweenNotebookCompare_Alphas_Env_1_noise_L_0.0dim10alphaE1.0fix_startFalselegendTruepresentation.pdf}
\caption{$M^{L,\epsilon_L}$ with $\epsilon_L = 0$} 
\end{subfigure}\hspace*{\fill}
\begin{subfigure}{0.2\textwidth}
\includegraphics[width=\linewidth]{plots/fillBetweenNotebookCompare_Alphas_Env_1_noise_L_0.05dim10alphaE1.0fix_startFalselegendTruepresentation.pdf}
\caption{$M^{L,\epsilon_L}$ with $\epsilon_L = 0.05$} 
\end{subfigure}\hspace*{\fill}
\begin{subfigure}{0.2\textwidth}
\includegraphics[width=\linewidth]{plots/fillBetweenNotebookCompare_Alphas_Env_1_noise_L_0.1dim10alphaE1.0fix_startFalselegendTruepresentation.pdf}
\caption{$M^{L,\epsilon_L}$ with $\epsilon_L = 0.1$} 
\end{subfigure}
\medskip
\begin{subfigure}{0.2\textwidth}
\includegraphics[width=\linewidth]{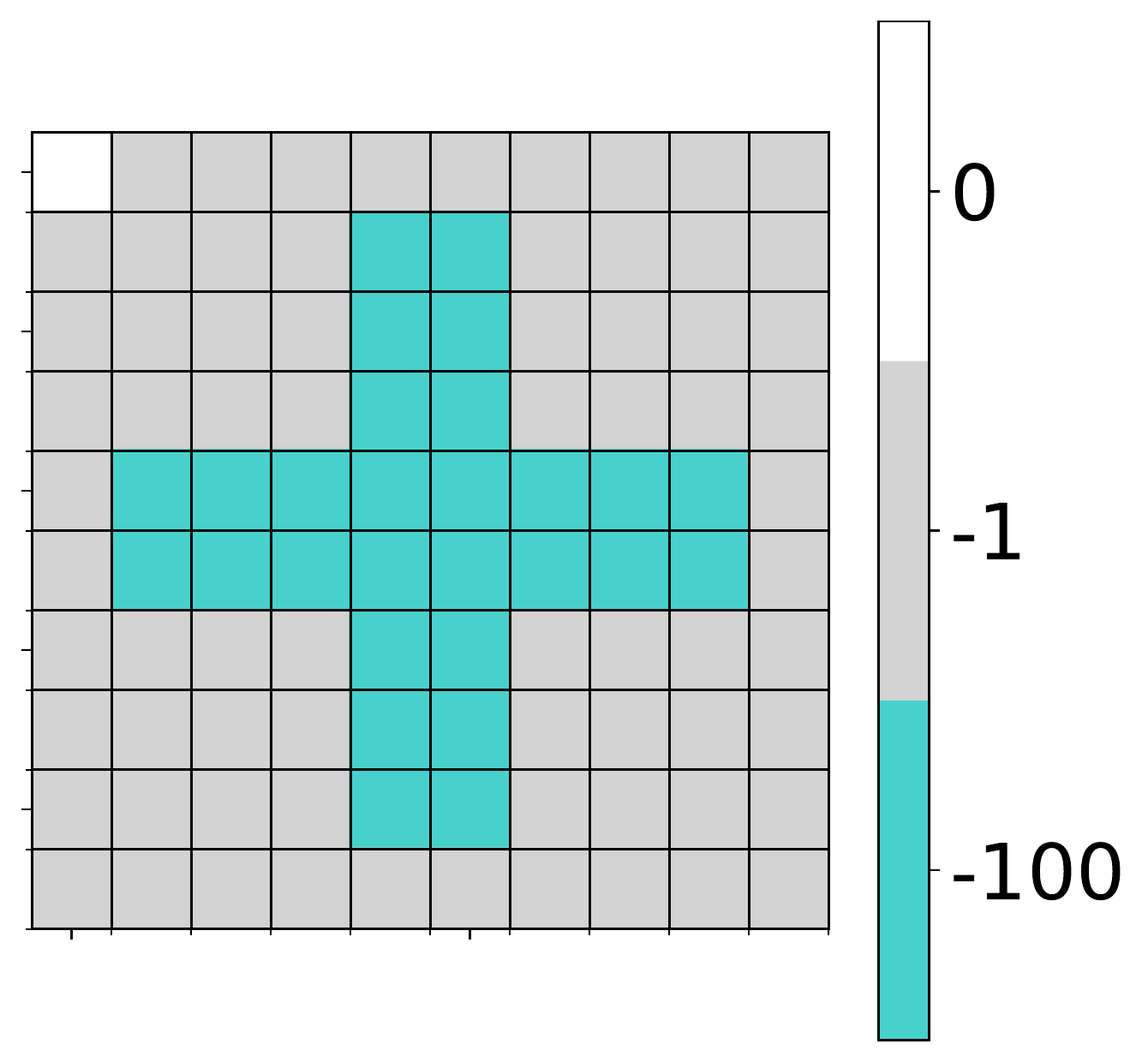}
\caption{\textsc{GridWorld-2}} \label{fig:grid2}
\end{subfigure}\hspace*{\fill}
\begin{subfigure}{0.2\textwidth}
\includegraphics[width=\linewidth]{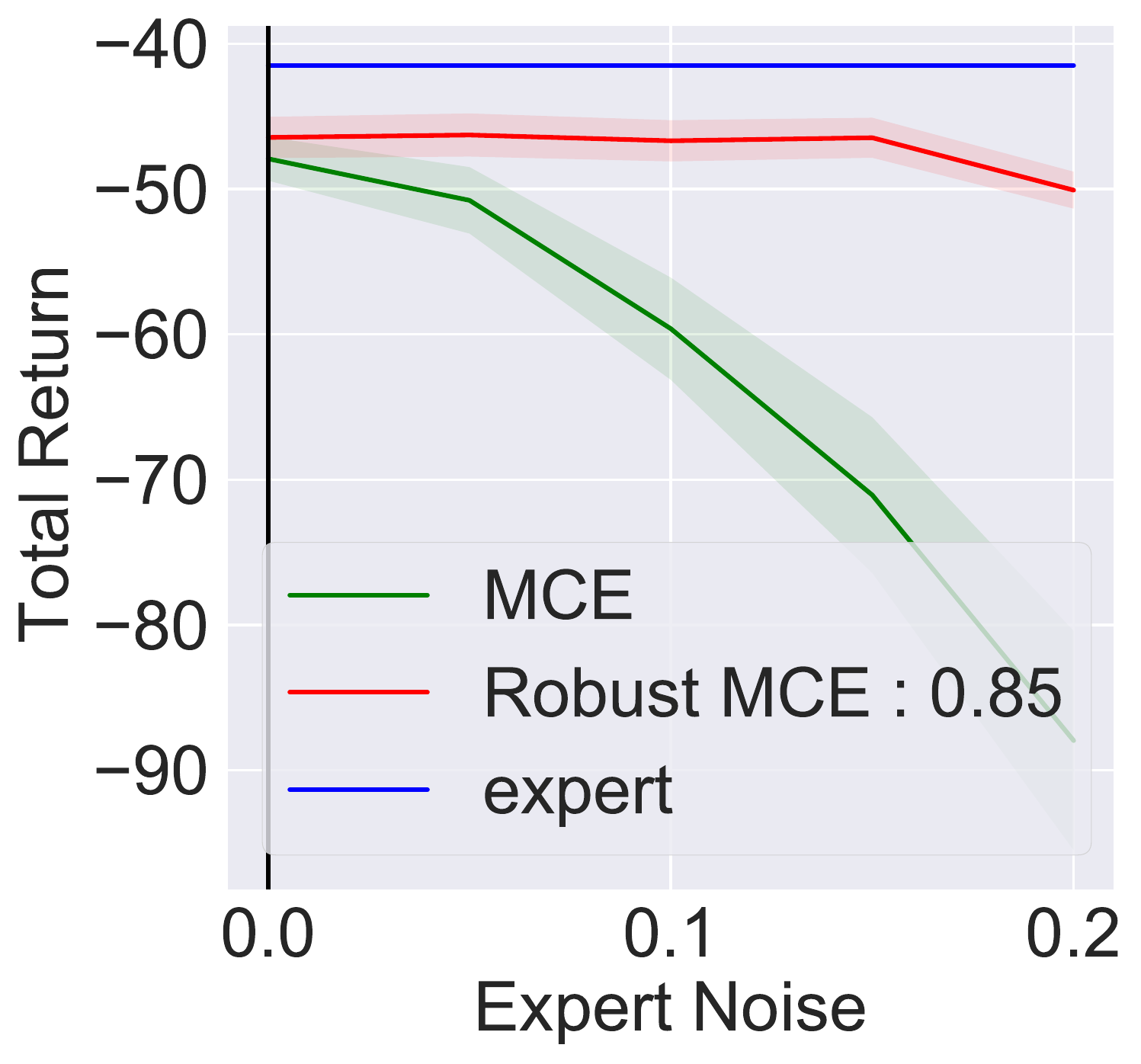}
\caption{$M^{L,\epsilon_L}$ with $\epsilon_L = 0$} \label{fig:e2l0.0pres}
\end{subfigure}\hspace*{\fill}
\begin{subfigure}{0.2\textwidth}
\includegraphics[width=\linewidth]{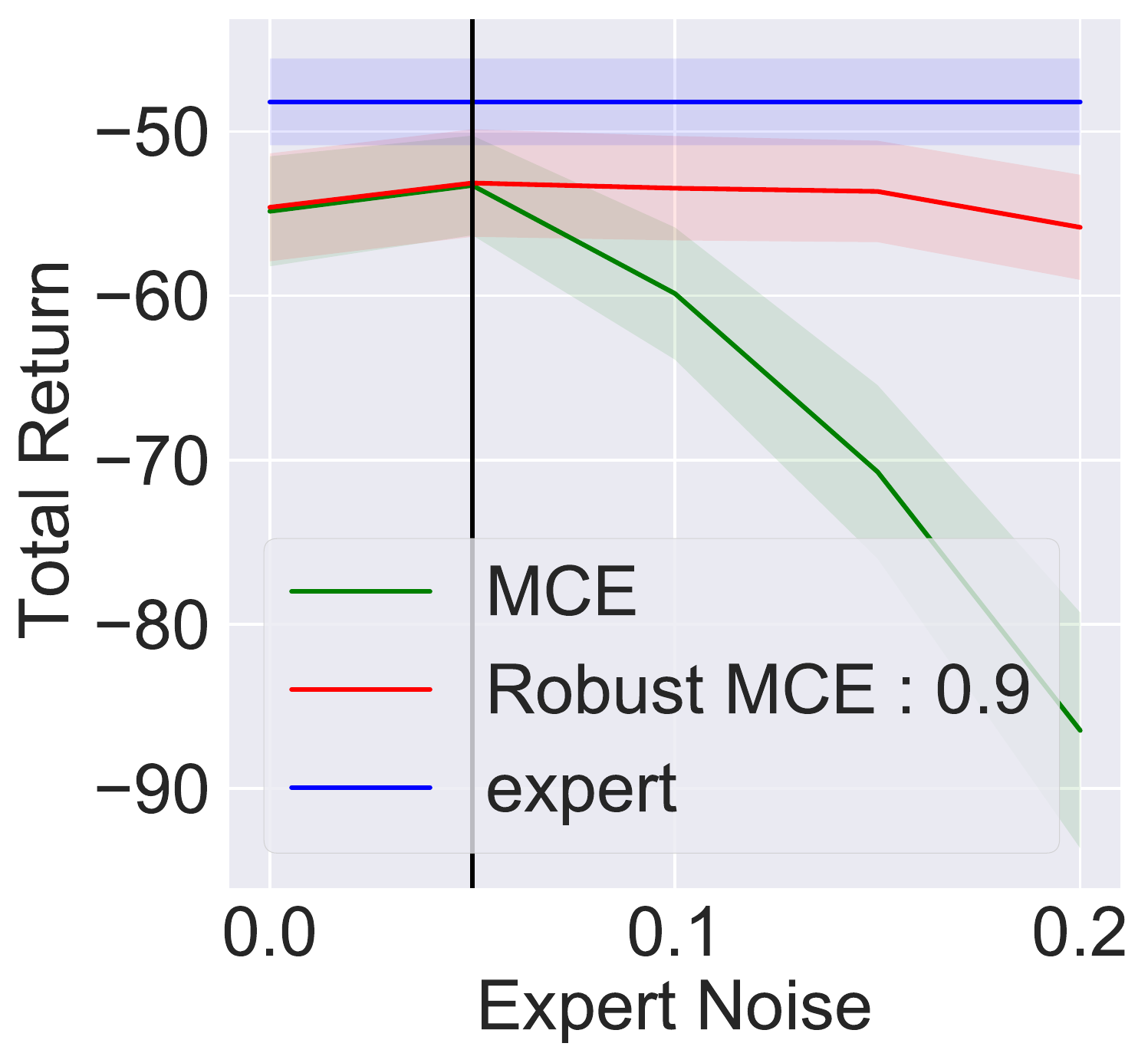}
\caption{$M^{L,\epsilon_L}$ with $\epsilon_L = 0.05$} \label{fig:e2l0.05pres}
\end{subfigure}\hspace*{\fill}
\begin{subfigure}{0.2\textwidth}
\includegraphics[width=\linewidth]{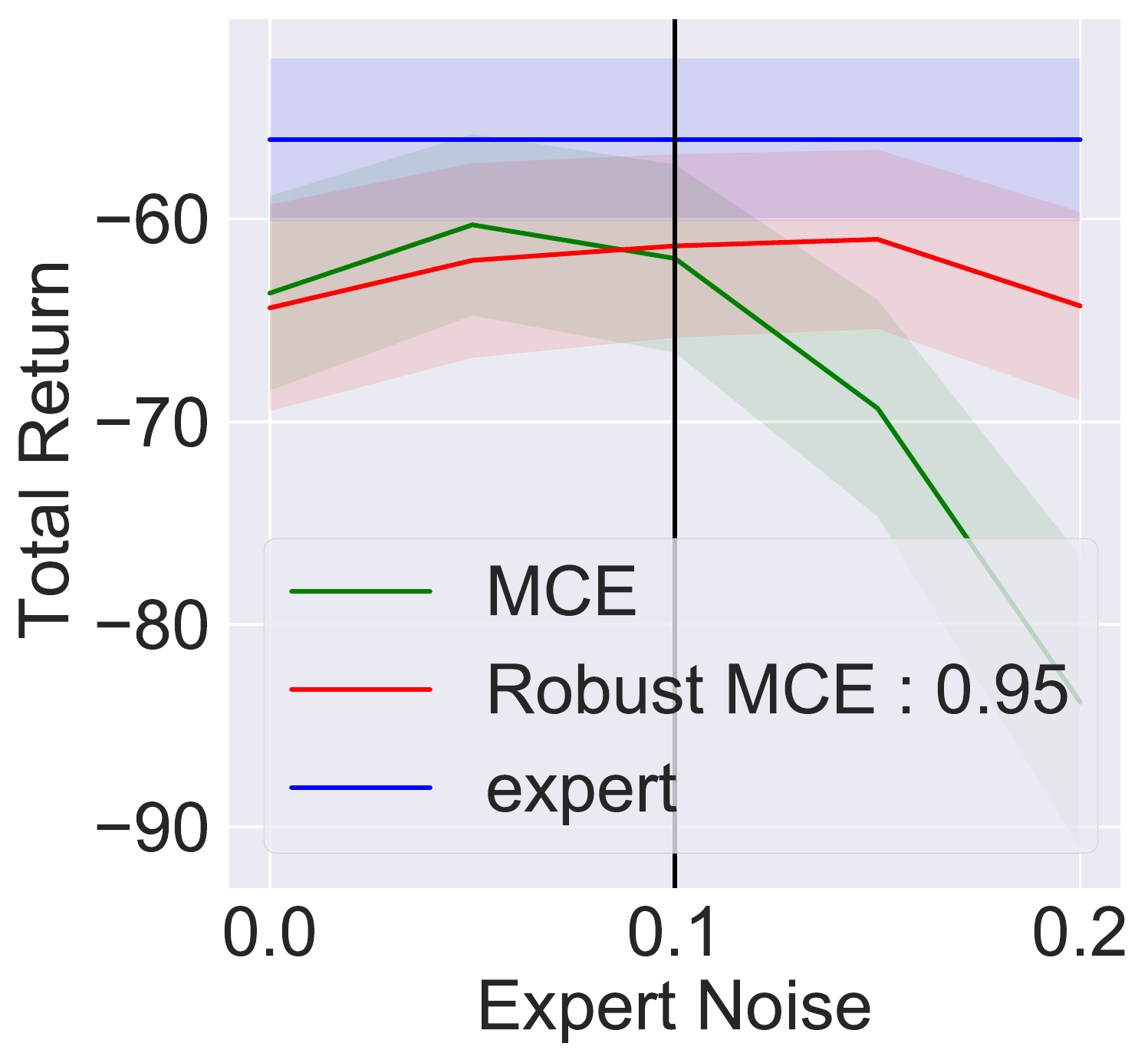}
\caption{$M^{L,\epsilon_L}$ with $\epsilon_L = 0.1$} \label{fig:e2l0.1pres}
\end{subfigure}
\medskip
\begin{subfigure}{0.2\textwidth}
\includegraphics[width=\linewidth]{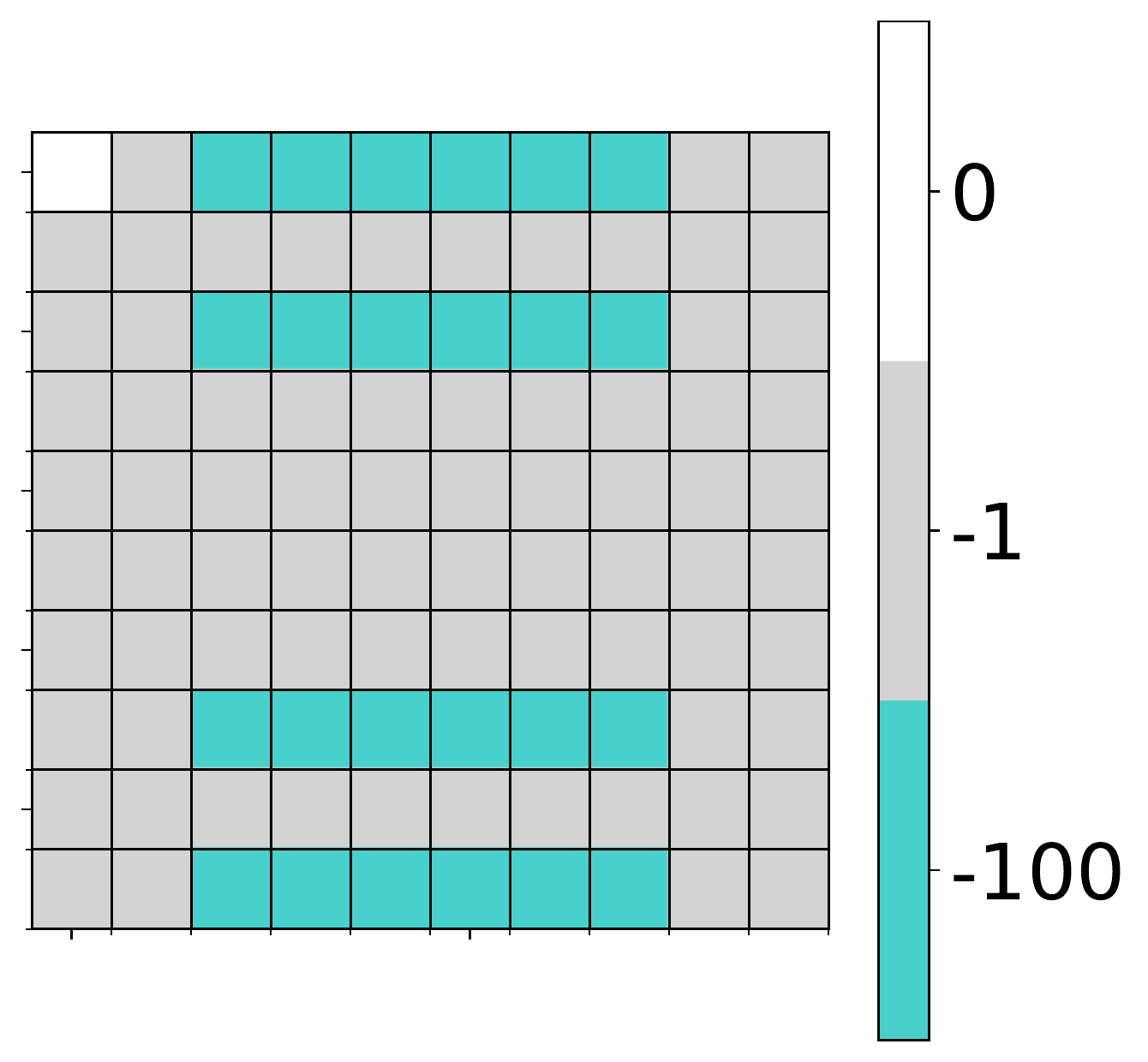}
\caption{\textsc{GridWorld-3}} \label{fig:grid3}
\end{subfigure}\hspace*{\fill}
\begin{subfigure}{0.2\textwidth}
\includegraphics[width=\linewidth]{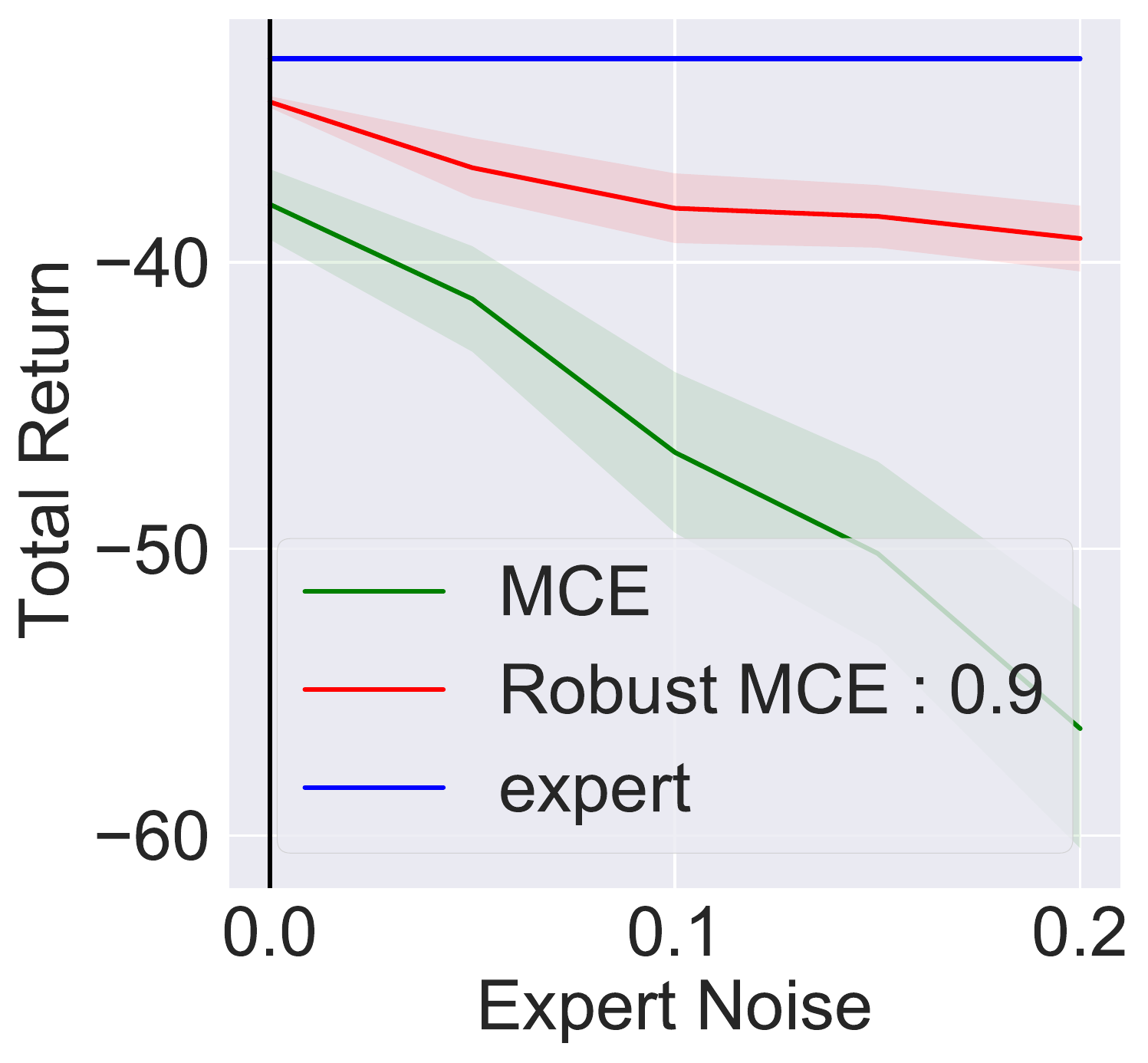}
\caption{$M^{L,\epsilon_L}$ with $\epsilon_L = 0$} \label{fig:e9l0.0pres}
\end{subfigure}\hspace*{\fill}
\begin{subfigure}{0.2\textwidth}
\includegraphics[width=\linewidth]{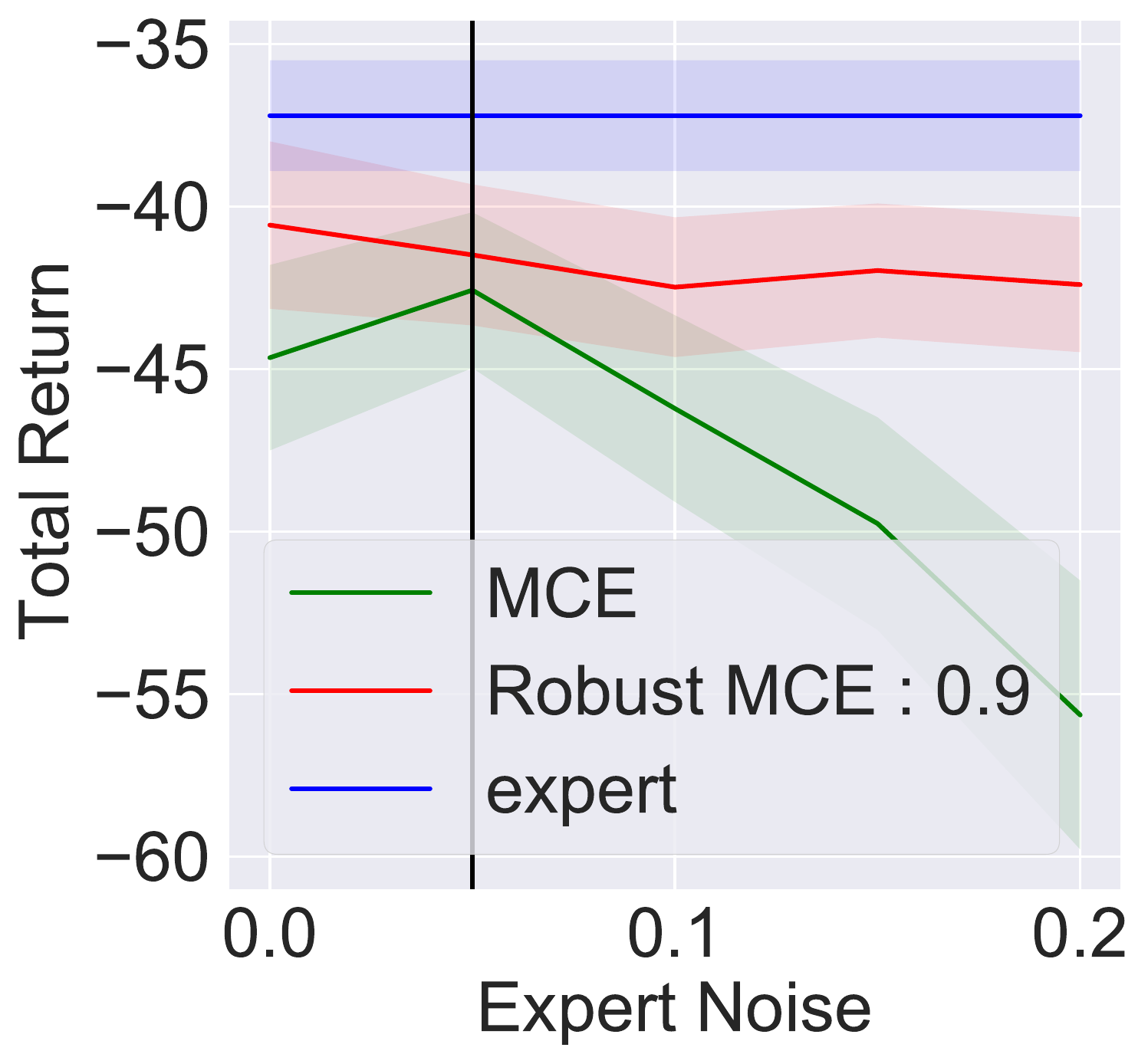}
\caption{$M^{L,\epsilon_L}$ with $\epsilon_L = 0.05$} \label{fig:e9l0.05pres}
\end{subfigure}\hspace*{\fill}
\begin{subfigure}{0.2\textwidth}
\includegraphics[width=\linewidth]{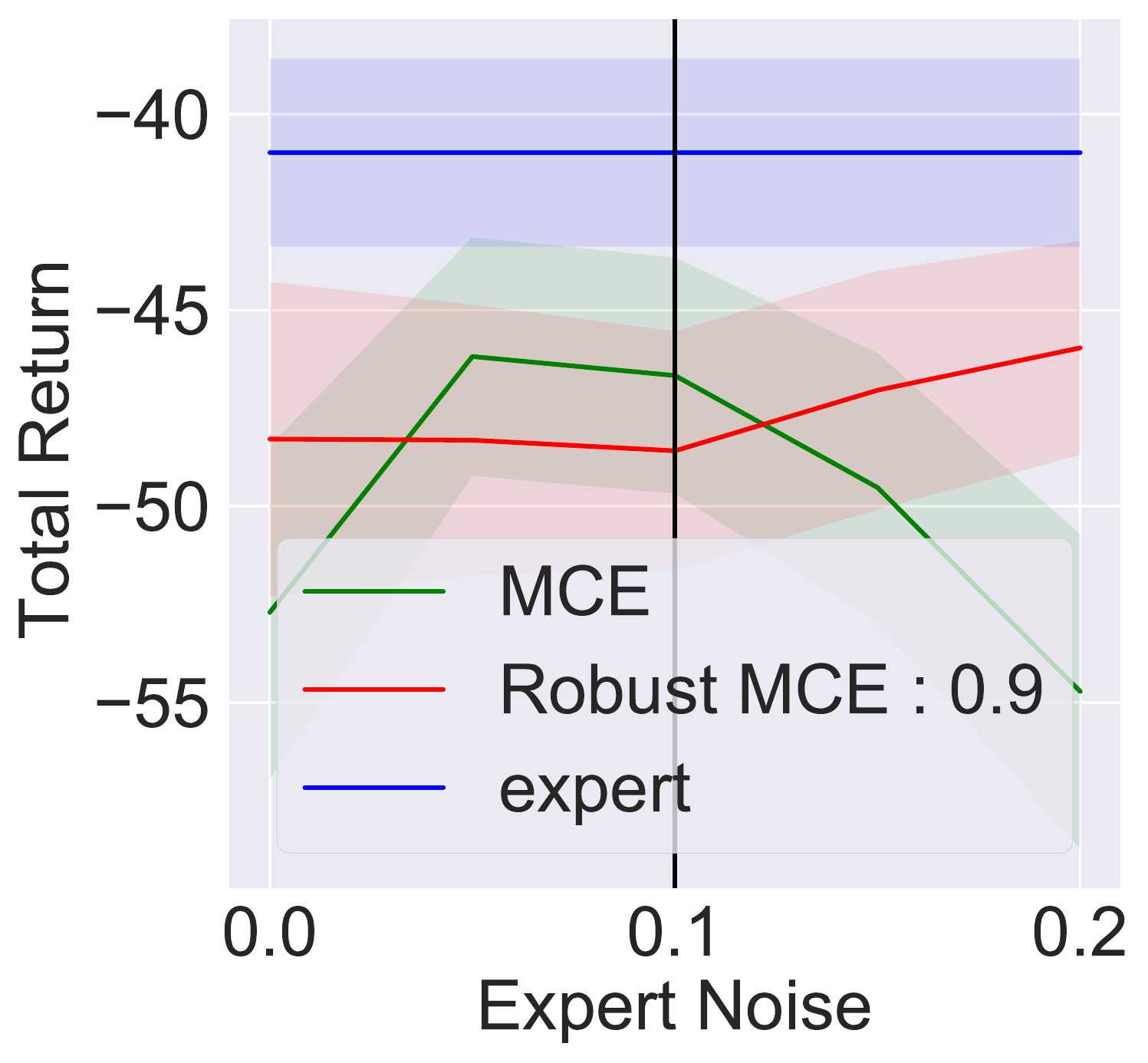}
\caption{$M^{L,\epsilon_L}$ with $\epsilon_L = 0.1$} \label{fig:e9l0.1pres}
\end{subfigure}
\medskip
\begin{subfigure}{0.2\textwidth}
\includegraphics[width=\linewidth]{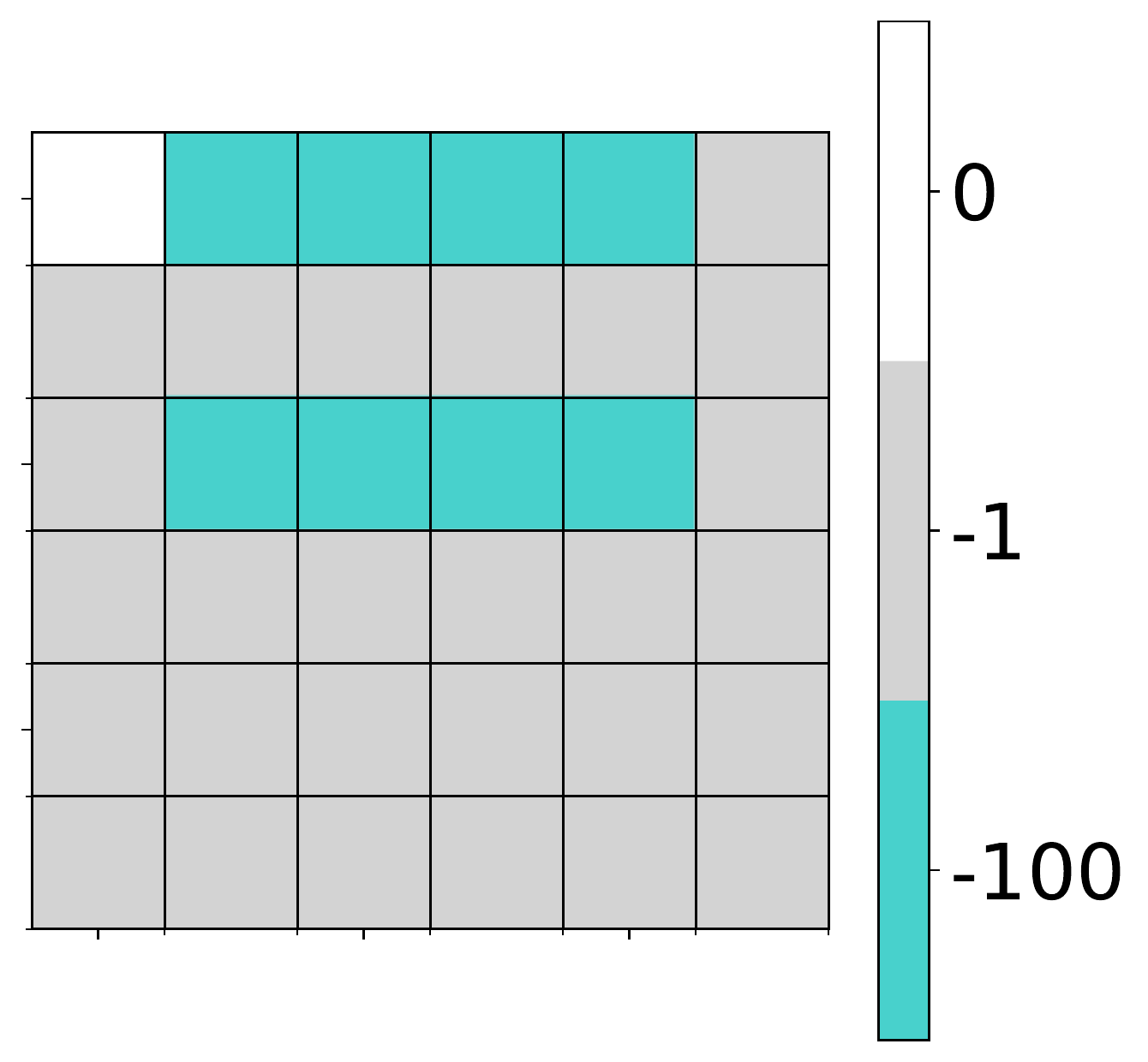}
\caption{\textsc{GridWorld-4}} \label{fig:grid4}
\end{subfigure}\hspace*{\fill}
\begin{subfigure}{0.2\textwidth}
\includegraphics[width=\linewidth]{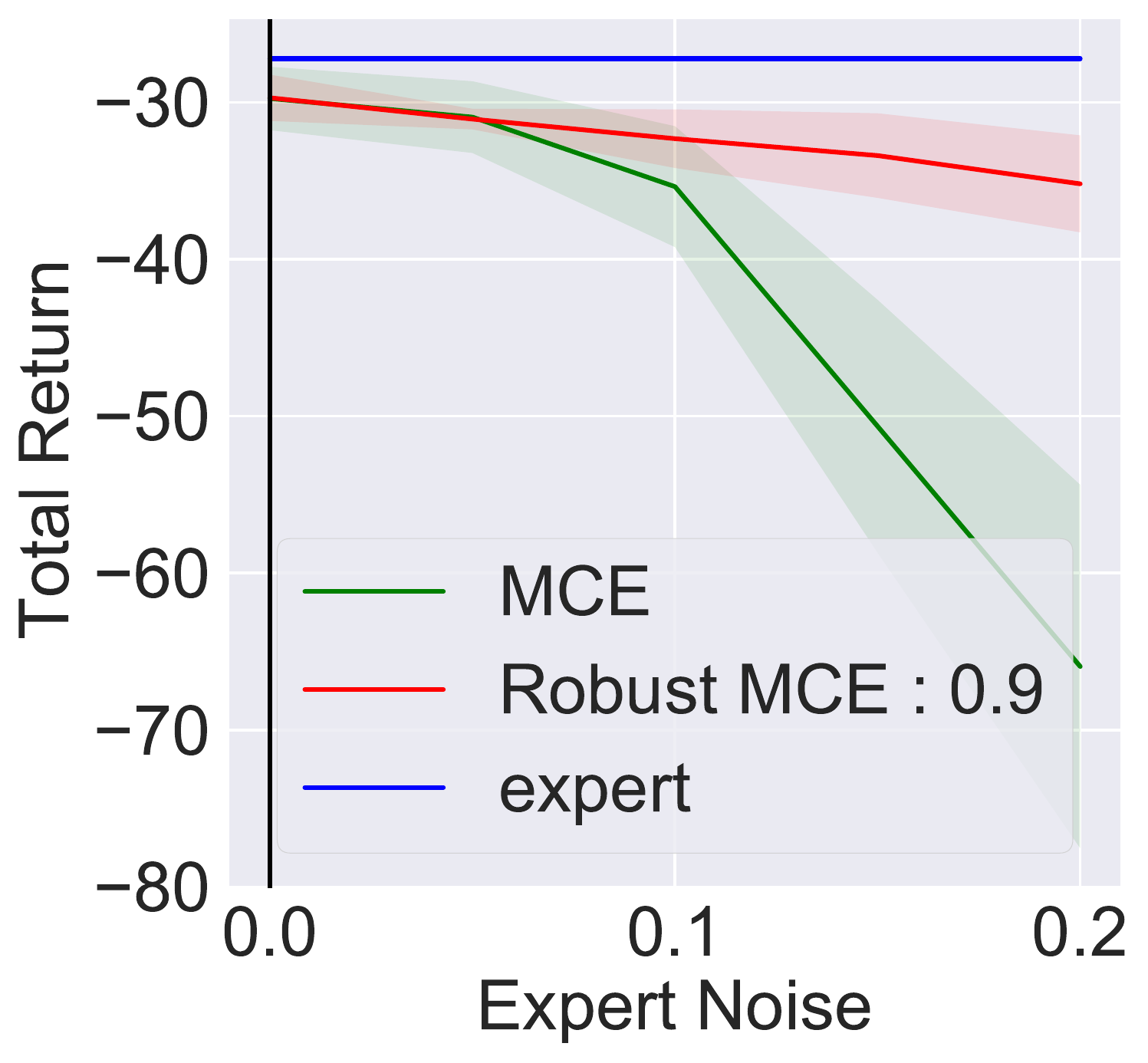}
\caption{$M^{L,\epsilon_L}$ with $\epsilon_L = 0$} \label{fig:e10l0.0pres}
\end{subfigure}\hspace*{\fill}
\begin{subfigure}{0.2\textwidth}
\includegraphics[width=\linewidth]{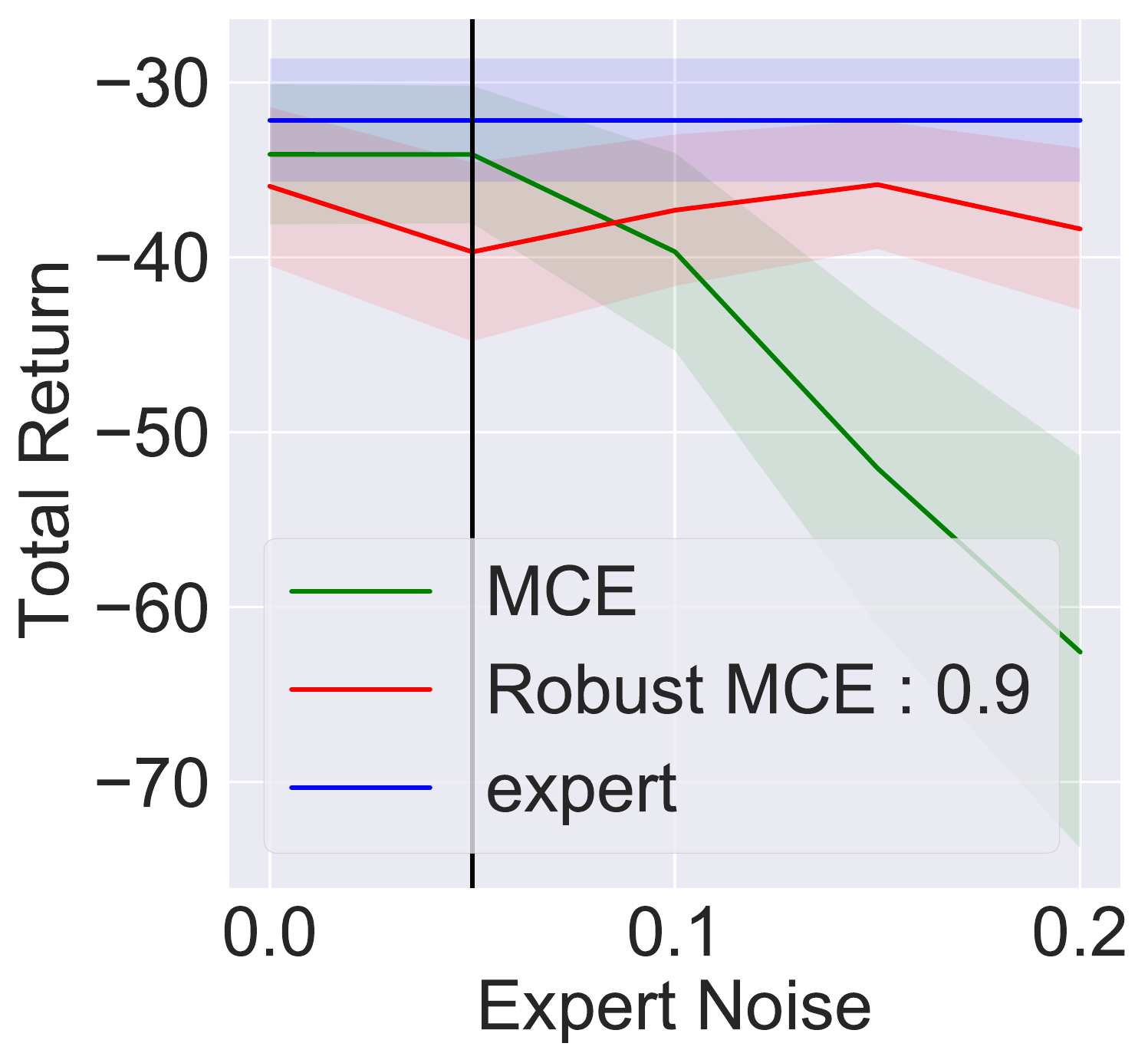}
\caption{$M^{L,\epsilon_L}$ with $\epsilon_L = 0.05$} \label{fig:e10l0.05pres}
\end{subfigure}\hspace*{\fill}
\begin{subfigure}{0.2\textwidth}
\includegraphics[width=\linewidth]{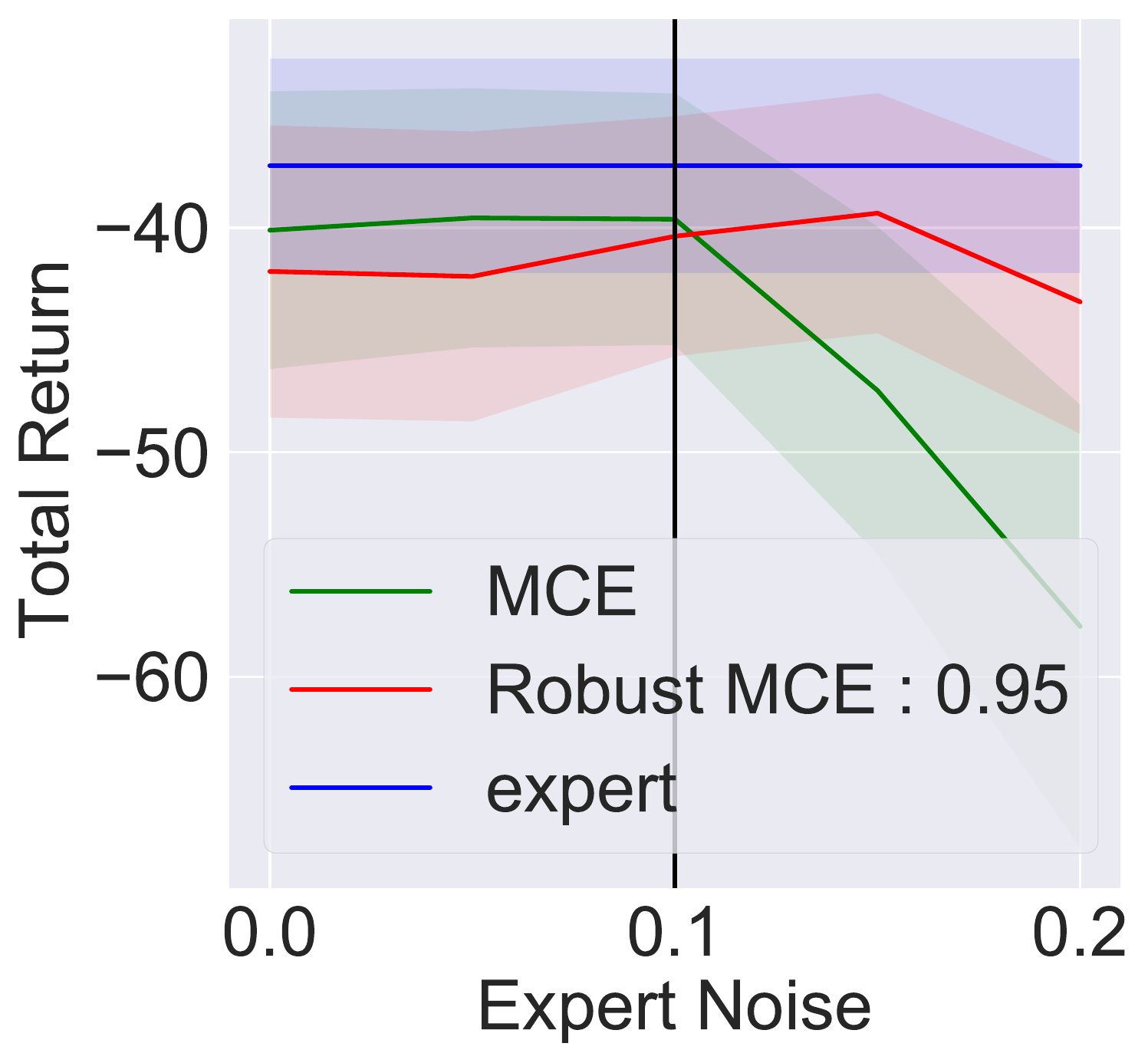}
\caption{$M^{L,\epsilon_L}$ with $\epsilon_L = 0.1$} \label{fig:e10l0.1pres}
\end{subfigure}
\medskip
\begin{subfigure}{0.2\textwidth}
\includegraphics[width=\linewidth]{plots/Obj10.pdf}
\caption{\textsc{ObjectWorld}} 
\end{subfigure}\hspace*{\fill}
\begin{subfigure}{0.2\textwidth}
\includegraphics[width=\linewidth]{plots/fillBetweenNotebookCompareAlphasEnvOWnoiseL0.0dim10alphaE1.0fix_startFalselegendTruepresentationFalse.pdf}
\caption{$M^{L,\epsilon_L}$ with $\epsilon_L = 0$} 
\end{subfigure}\hspace*{\fill}
\begin{subfigure}{0.2\textwidth}
\includegraphics[width=\linewidth]{plots/fillBetweenNotebookCompareAlphasEnvOWnoiseL0.05dim10alphaE1.0fix_startFalselegendTruepresentationFalse.pdf}
\caption{$M^{L,\epsilon_L}$ with $\epsilon_L = 0.05$} 
\end{subfigure}\hspace*{\fill}
\begin{subfigure}{0.2\textwidth}
\includegraphics[width=\linewidth]{plots/fillBetweenNotebookCompareAlphasEnvOWnoiseL0.1dim10alphaE1.0fix_startFalselegendTruepresentationFalse.pdf}
\caption{$M^{L,\epsilon_L}$ with $\epsilon_L = 0.1$} 
\end{subfigure}
\caption{Comparison of the performance our Algorithm~\ref{alg:MaxEntIRL} against the baselines, under different levels of mismatch: $\br{\epsilon_E, \epsilon_L} \in \bc{0.0, 0.05, 0.1, 0.15, 0.2} \times \bc{ 0.0, 0.05, 0.1}$. Each plot corresponds to a fixed leaner environment $M^{L,\epsilon_L}$ with $\epsilon_L \in \bc{ 0.0, 0.05, 0.1}$. The values of $\alpha$ used for our Algorithm~\ref{alg:MaxEntIRL} are reported in the legend. The vertical line indicates the position of the learner environment in the x-axis.}
\label{fig:all_gridworld_best_alpha}
\end{figure}

\begin{figure}[h!] 
\centering
\begin{subfigure}{0.2\textwidth}
\includegraphics[width=\linewidth]{plots/1.pdf}
\caption{\textsc{GridWorld-1}} \label{fig:maintr1}
\end{subfigure}\hspace*{\fill}
\begin{subfigure}{0.2\textwidth}
\includegraphics[width=\linewidth]{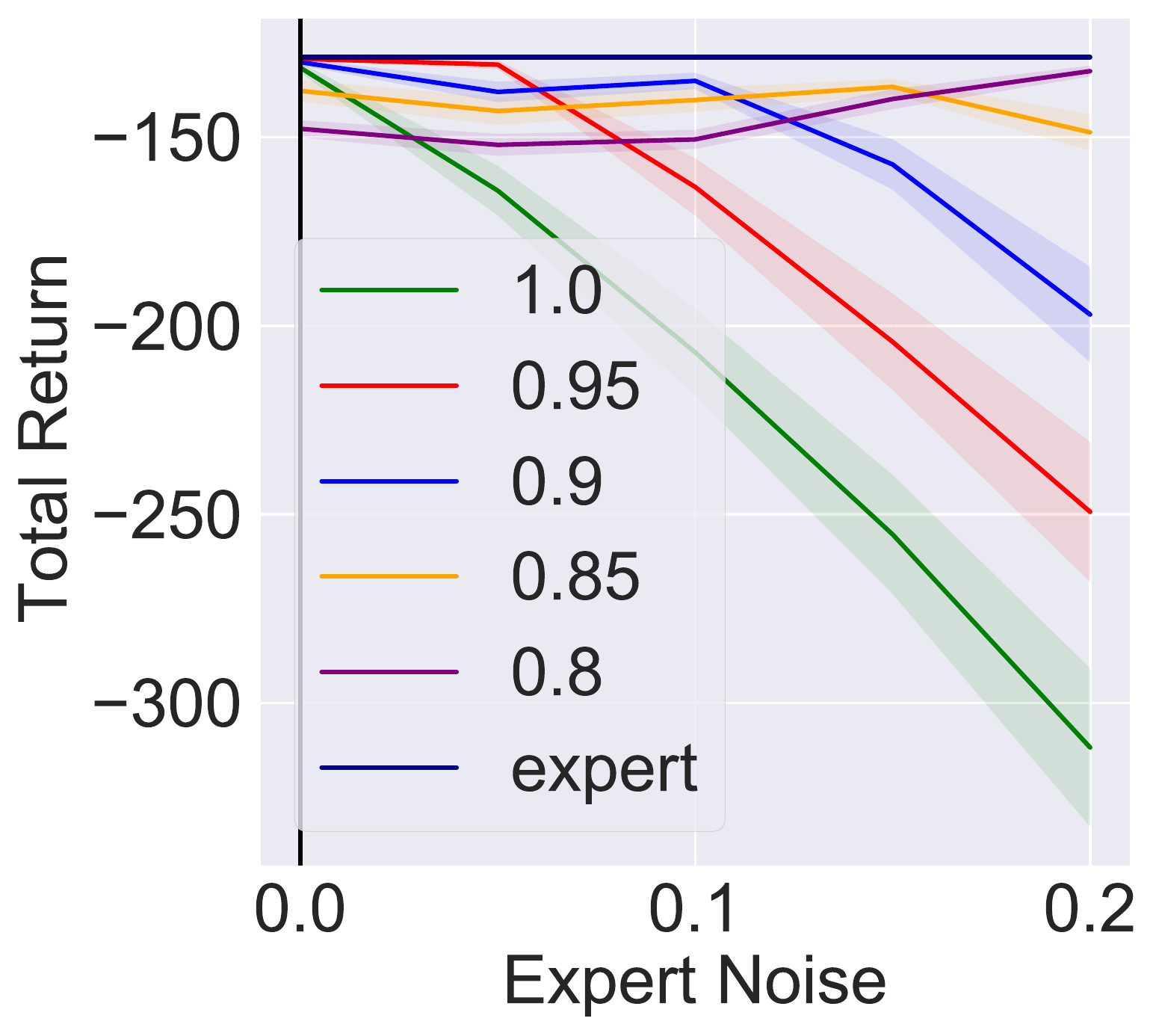}
\caption{$M^{L,\epsilon_L}$ with $\epsilon_L = 0$} \label{fig:maine1l0.0}
\end{subfigure}\hspace*{\fill}
\begin{subfigure}{0.2\textwidth}
\includegraphics[width=\linewidth]{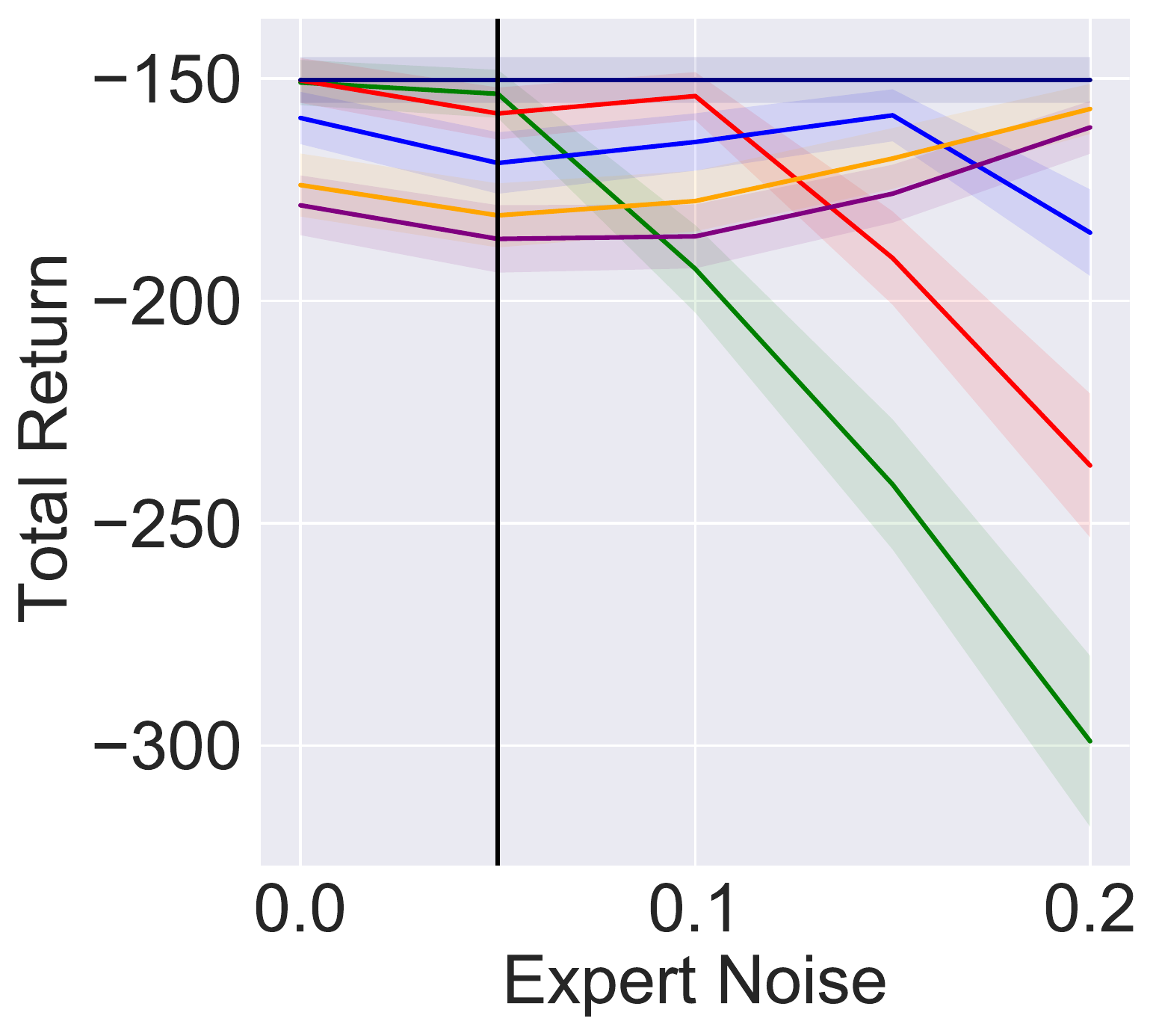}
\caption{$M^{L,\epsilon_L}$ with $\epsilon_L = 0.05$} \label{fig:maine1l0.05}
\end{subfigure}\hspace*{\fill}
\begin{subfigure}{0.2\textwidth}
\includegraphics[width=\linewidth]{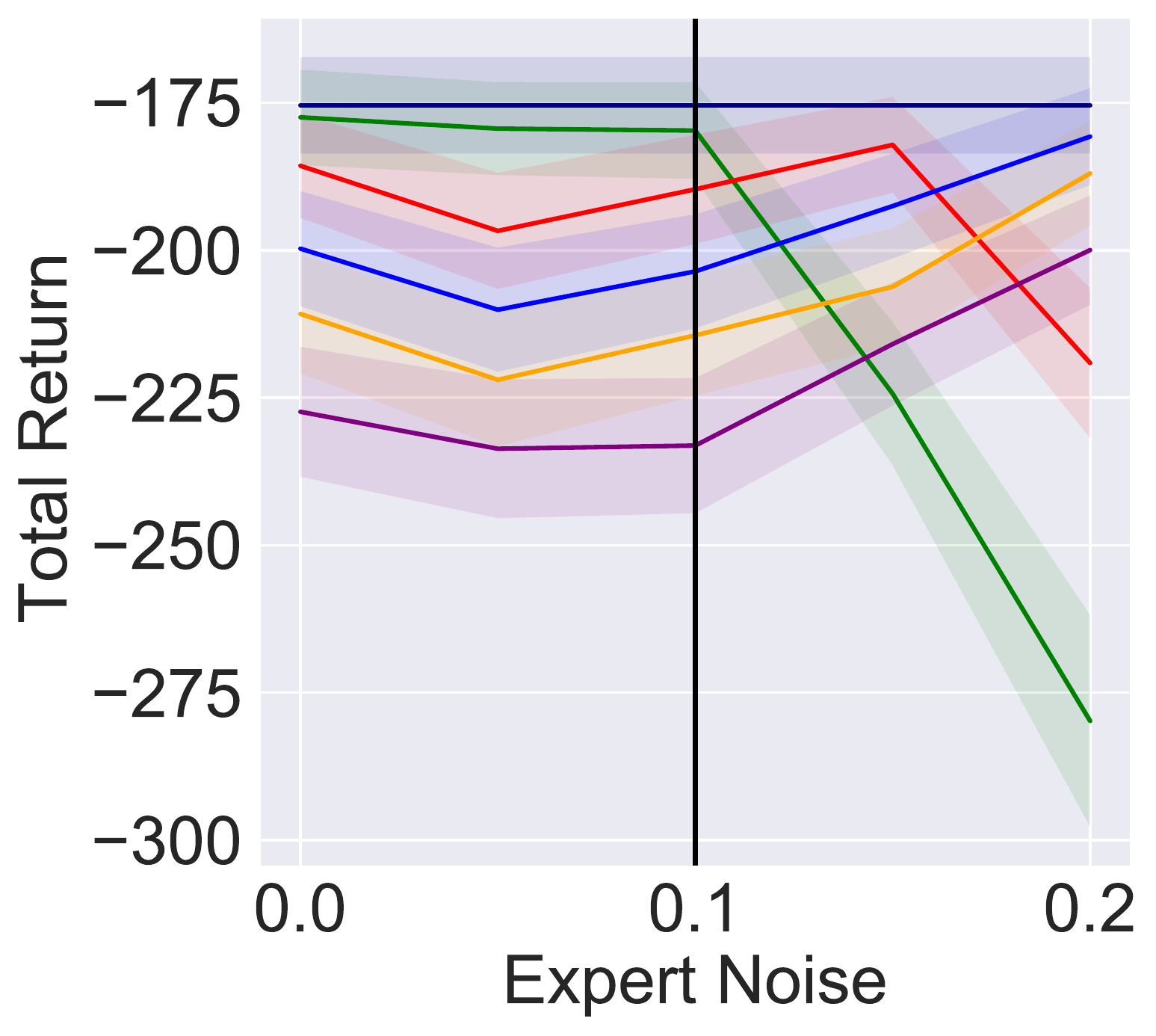}
\caption{$M^{L,\epsilon_L}$ with $\epsilon_L = 0.1$} \label{fig:maine1l0.1}
\end{subfigure}
\medskip
\begin{subfigure}{0.2\textwidth}
\includegraphics[width=\linewidth]{plots/2.pdf}
\caption{\textsc{GridWorld-2}} \label{fig:tr2}
\end{subfigure}\hspace*{\fill}
\begin{subfigure}{0.2\textwidth}
\includegraphics[width=\linewidth]{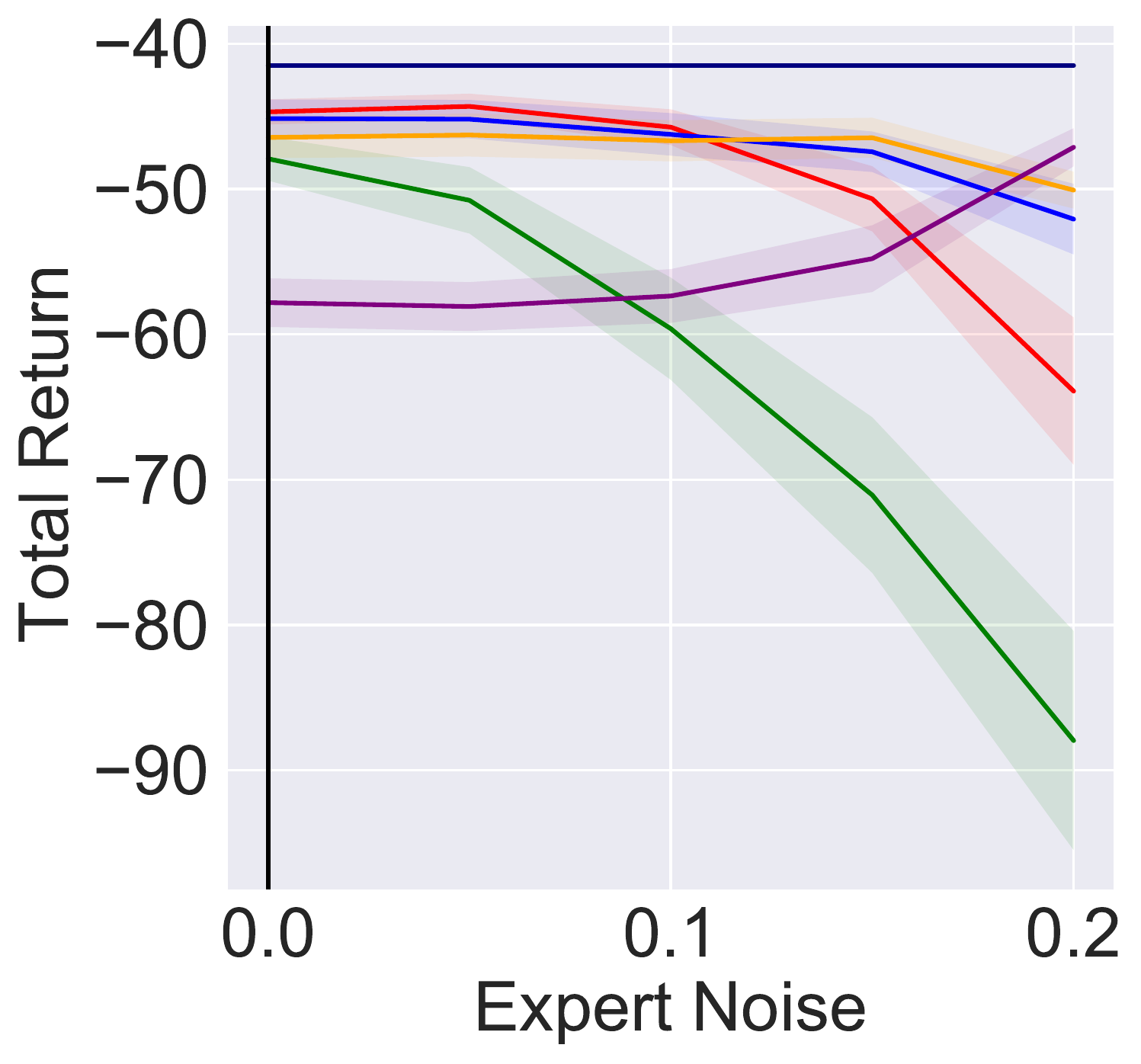}
\caption{$M^{L,\epsilon_L}$ with $\epsilon_L = 0$} \label{fig:e2l0.0abl}
\end{subfigure}\hspace*{\fill}
\begin{subfigure}{0.2\textwidth}
\includegraphics[width=\linewidth]{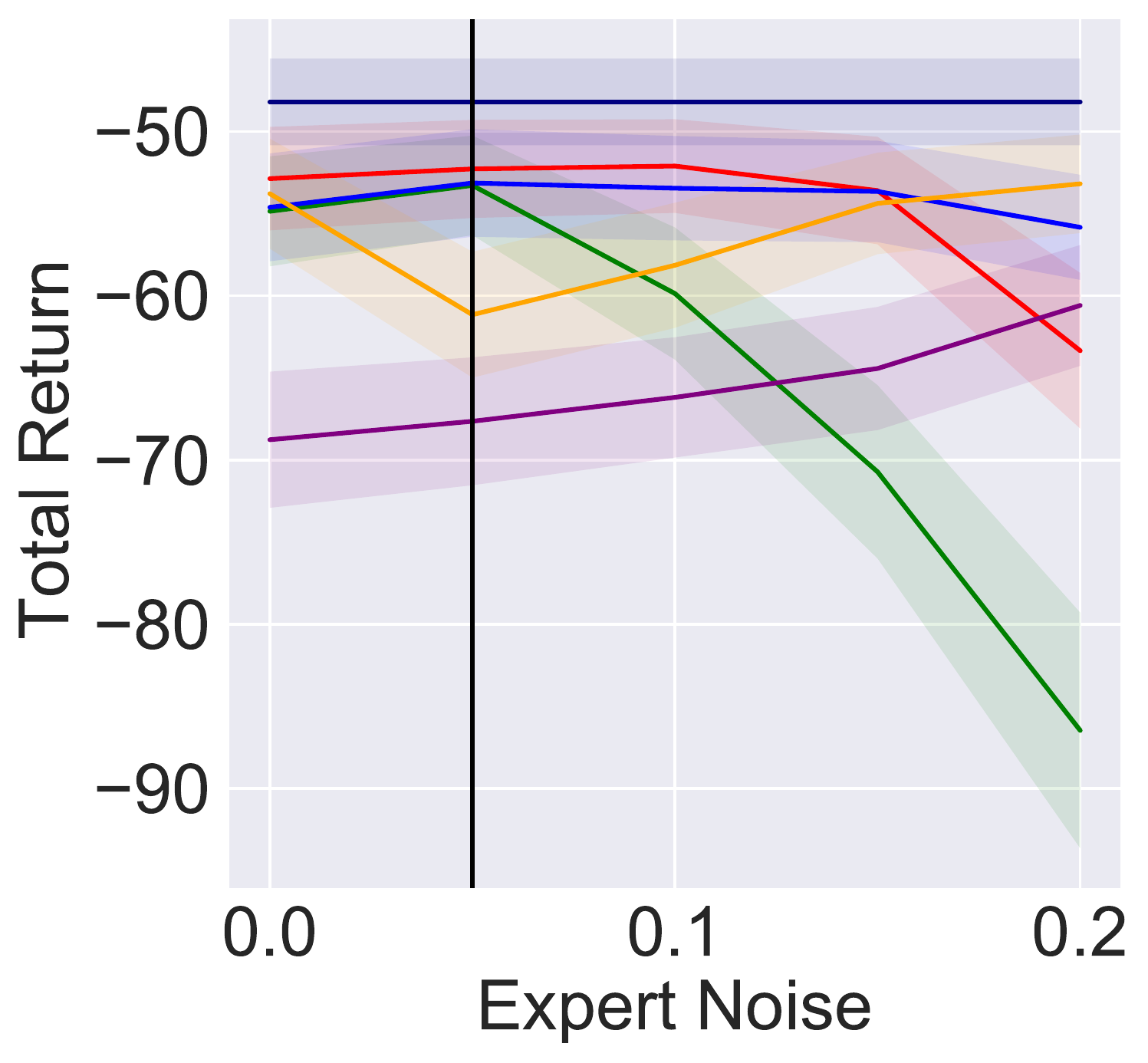}
\caption{$M^{L,\epsilon_L}$ with $\epsilon_L = 0.05$} \label{fig:e2l0.05abl}
\end{subfigure}\hspace*{\fill}
\begin{subfigure}{0.2\textwidth}
\includegraphics[width=\linewidth]{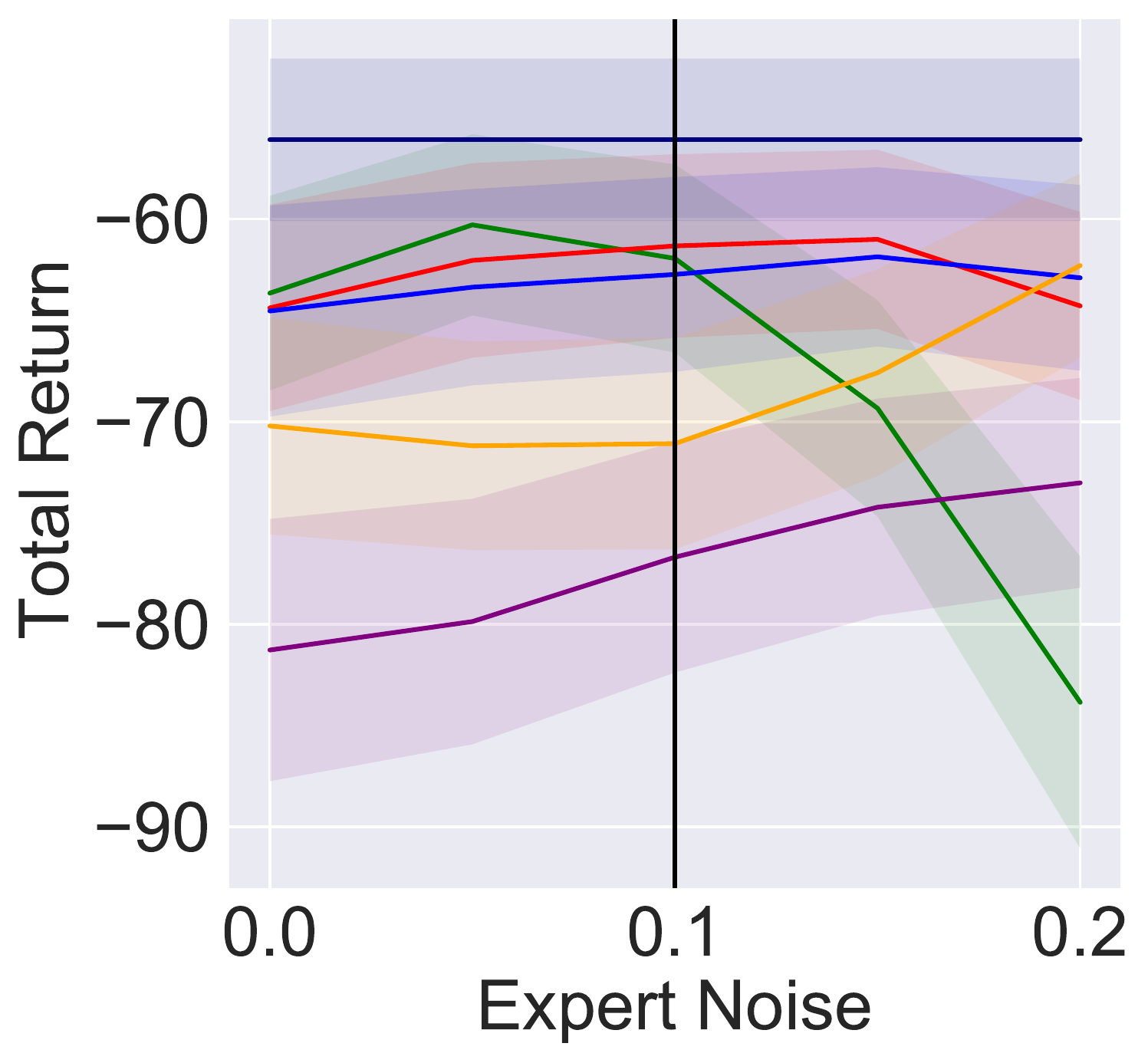}
\caption{$M^{L,\epsilon_L}$ with $\epsilon_L = 0.1$} \label{fig:e2l0.1abl}
\end{subfigure}
\medskip
\begin{subfigure}{0.2\textwidth}
\includegraphics[width=\linewidth]{plots/9.pdf}
\caption{\textsc{GridWorld-3}} \label{fig:tr9}
\end{subfigure}\hspace*{\fill}
\begin{subfigure}{0.2\textwidth}
\includegraphics[width=\linewidth]{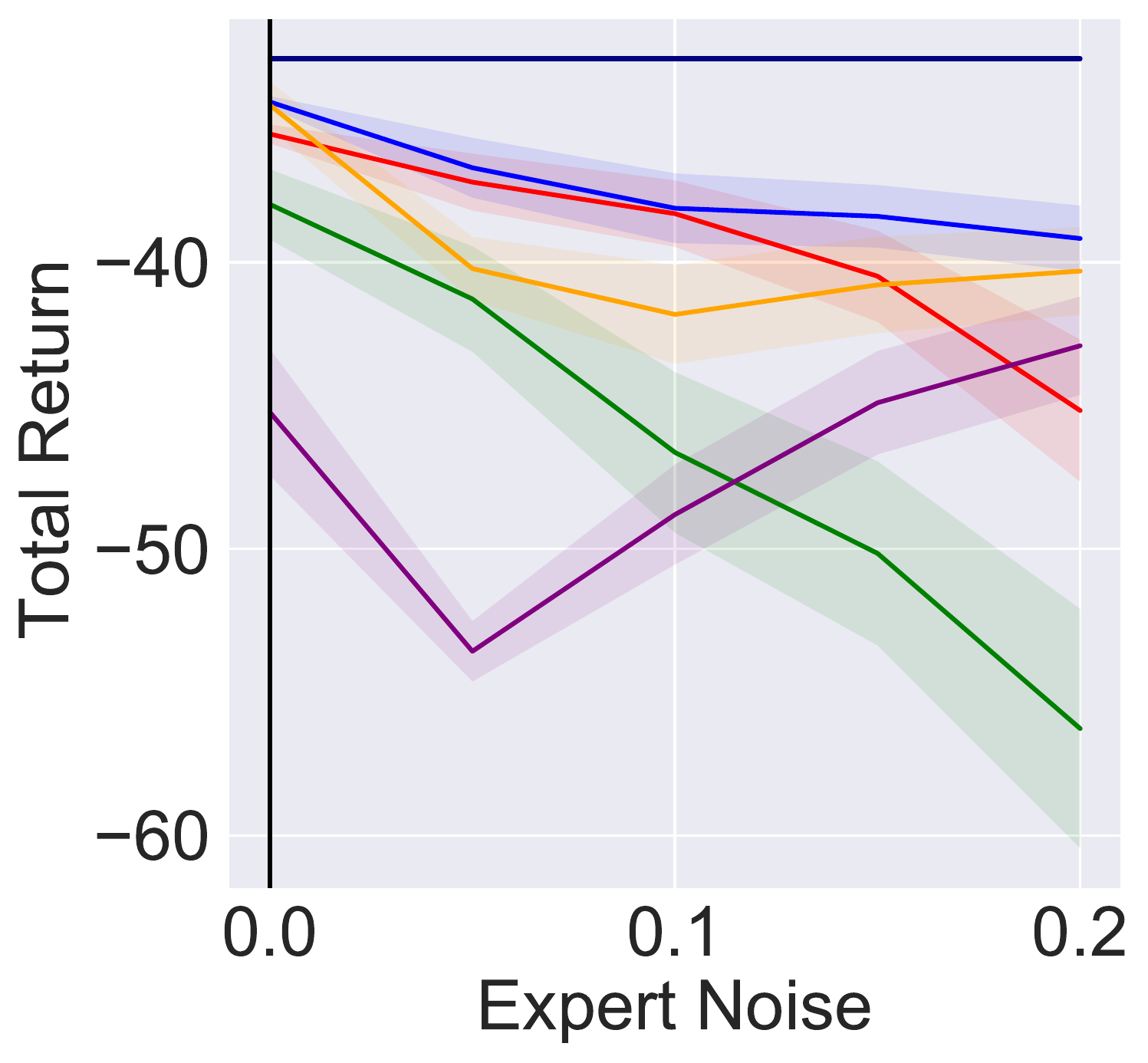}
\caption{$M^{L,\epsilon_L}$ with $\epsilon_L = 0$} \label{fig:e9l0.0abl}
\end{subfigure}\hspace*{\fill}
\begin{subfigure}{0.2\textwidth}
\includegraphics[width=\linewidth]{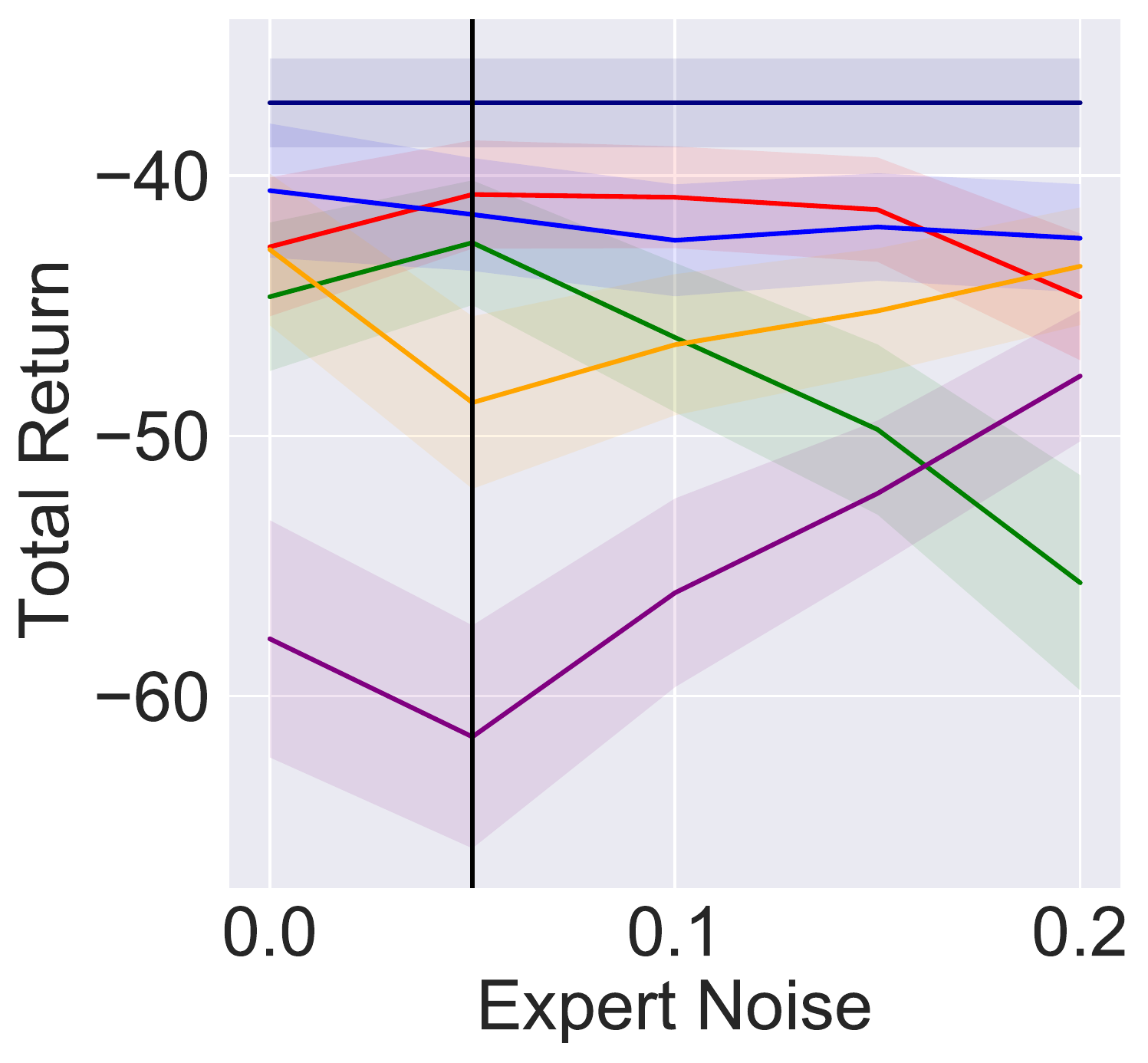}
\caption{$M^{L,\epsilon_L}$ with $\epsilon_L = 0.05$} \label{fig:e9l0.05abl}
\end{subfigure}\hspace*{\fill}
\begin{subfigure}{0.2\textwidth}
\includegraphics[width=\linewidth]{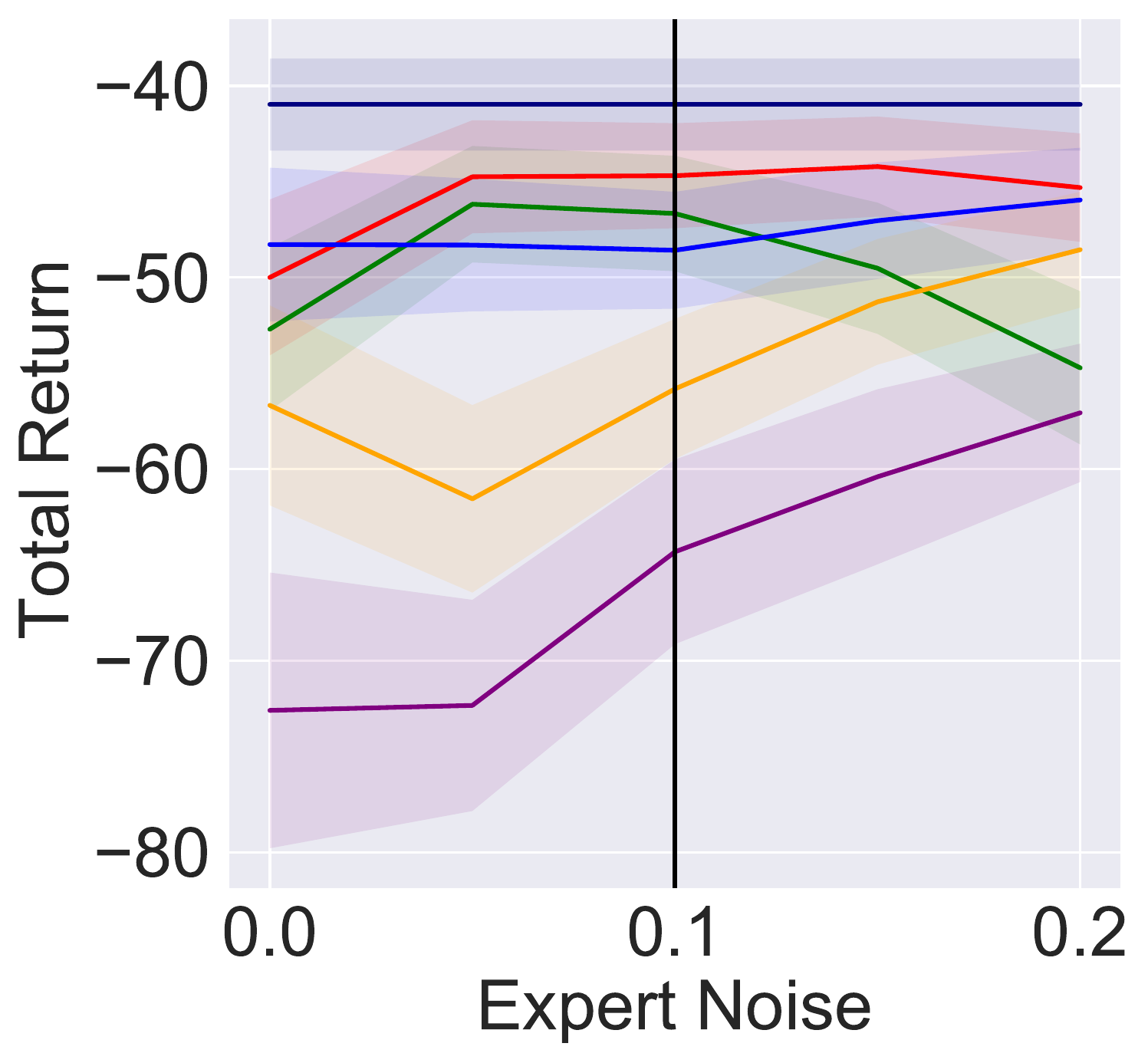}
\caption{$M^{L,\epsilon_L}$ with $\epsilon_L = 0.1$} \label{fig:e9l0.1abl}
\end{subfigure}
\medskip
\begin{subfigure}{0.2\textwidth}
\includegraphics[width=\linewidth]{plots/10.pdf}
\caption{\textsc{GridWorld-4}} \label{fig:tr10}
\end{subfigure}\hspace*{\fill}
\begin{subfigure}{0.2\textwidth}
\includegraphics[width=\linewidth]{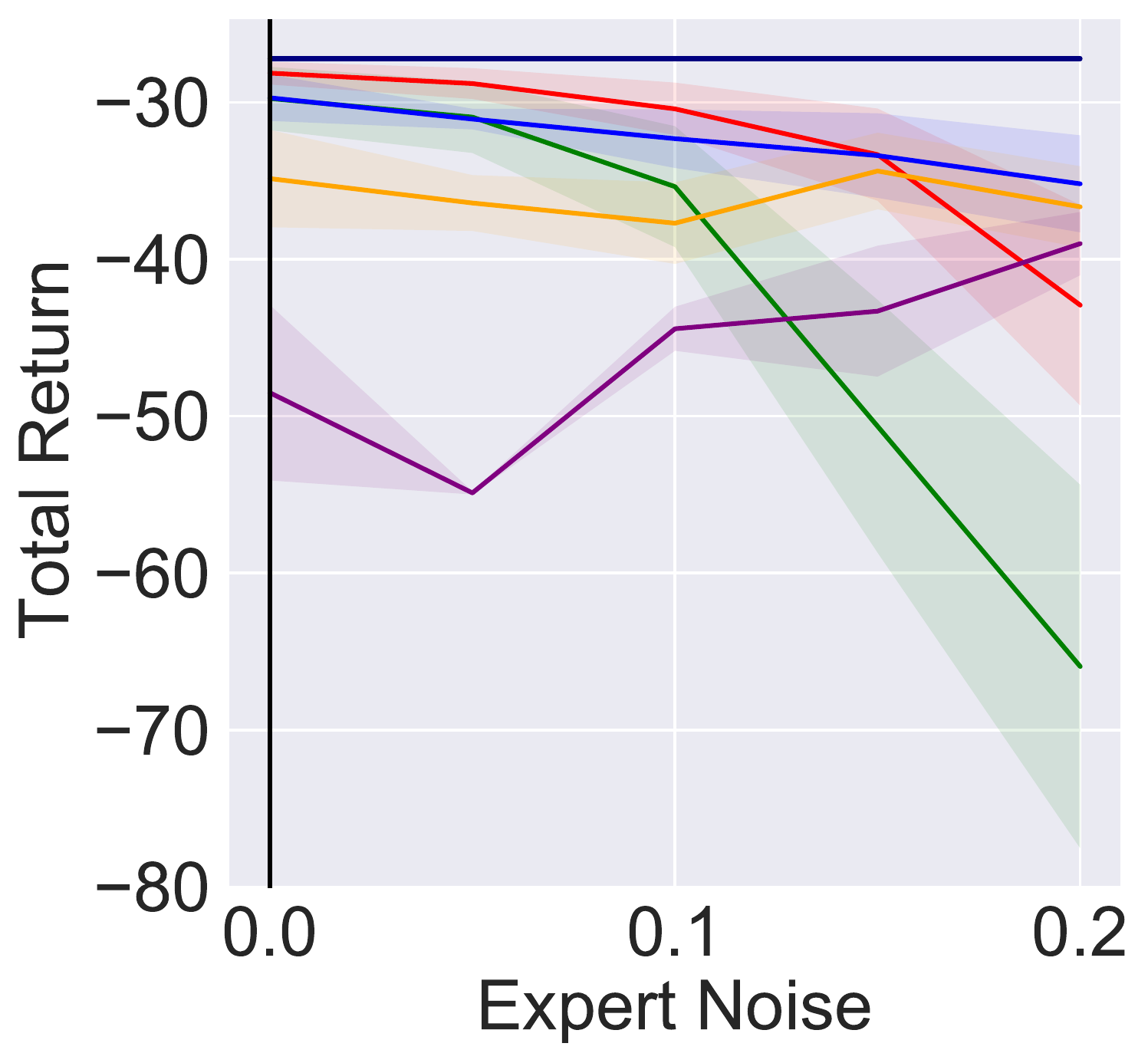}
\caption{$M^{L,\epsilon_L}$ with $\epsilon_L = 0$} \label{fig:e10l0.0abl}
\end{subfigure}\hspace*{\fill}
\begin{subfigure}{0.2\textwidth}
\includegraphics[width=\linewidth]{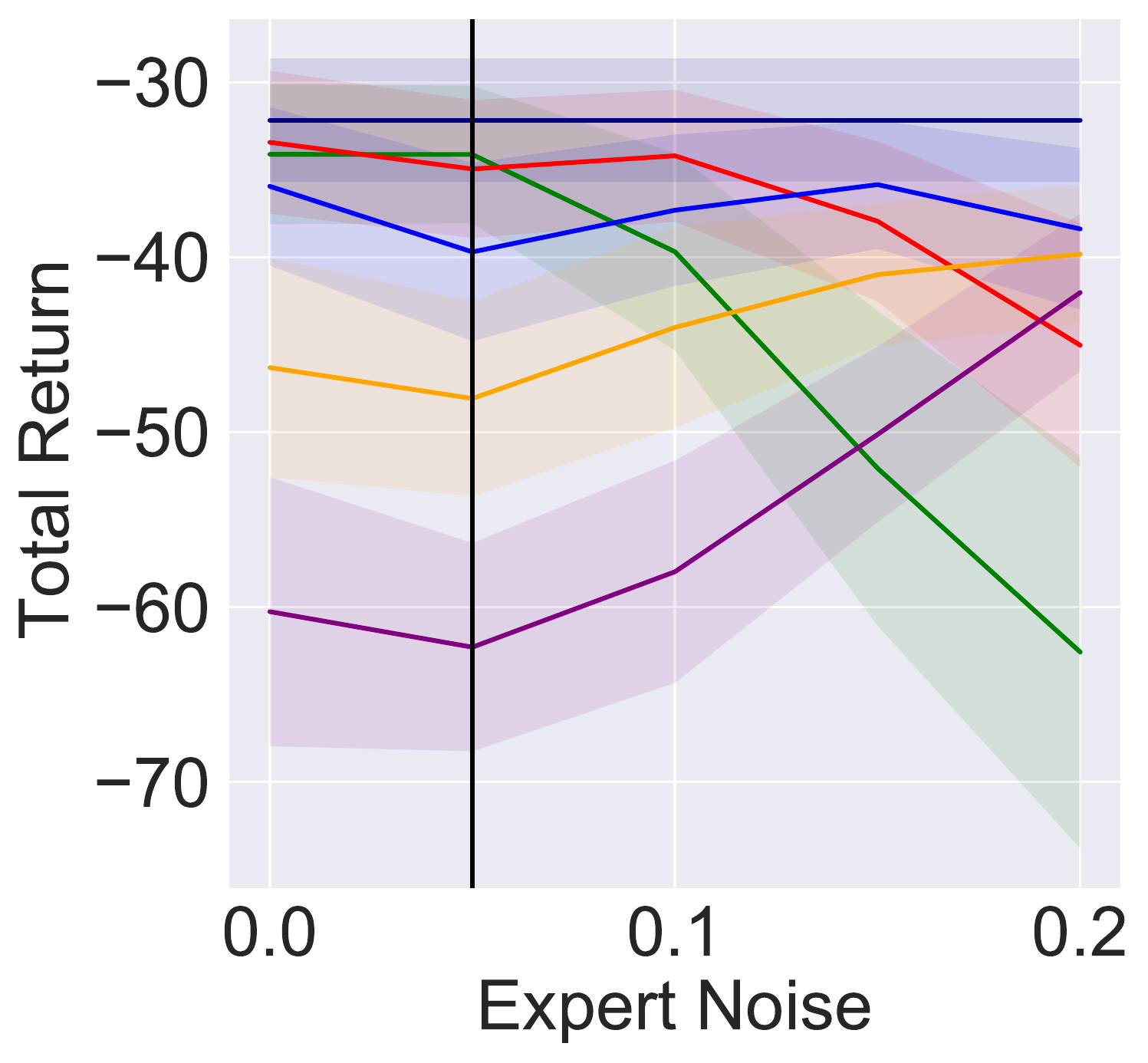}
\caption{$M^{L,\epsilon_L}$ with $\epsilon_L = 0.05$} \label{fig:e10l0.05abl}
\end{subfigure}\hspace*{\fill}
\begin{subfigure}{0.2\textwidth}
\includegraphics[width=\linewidth]{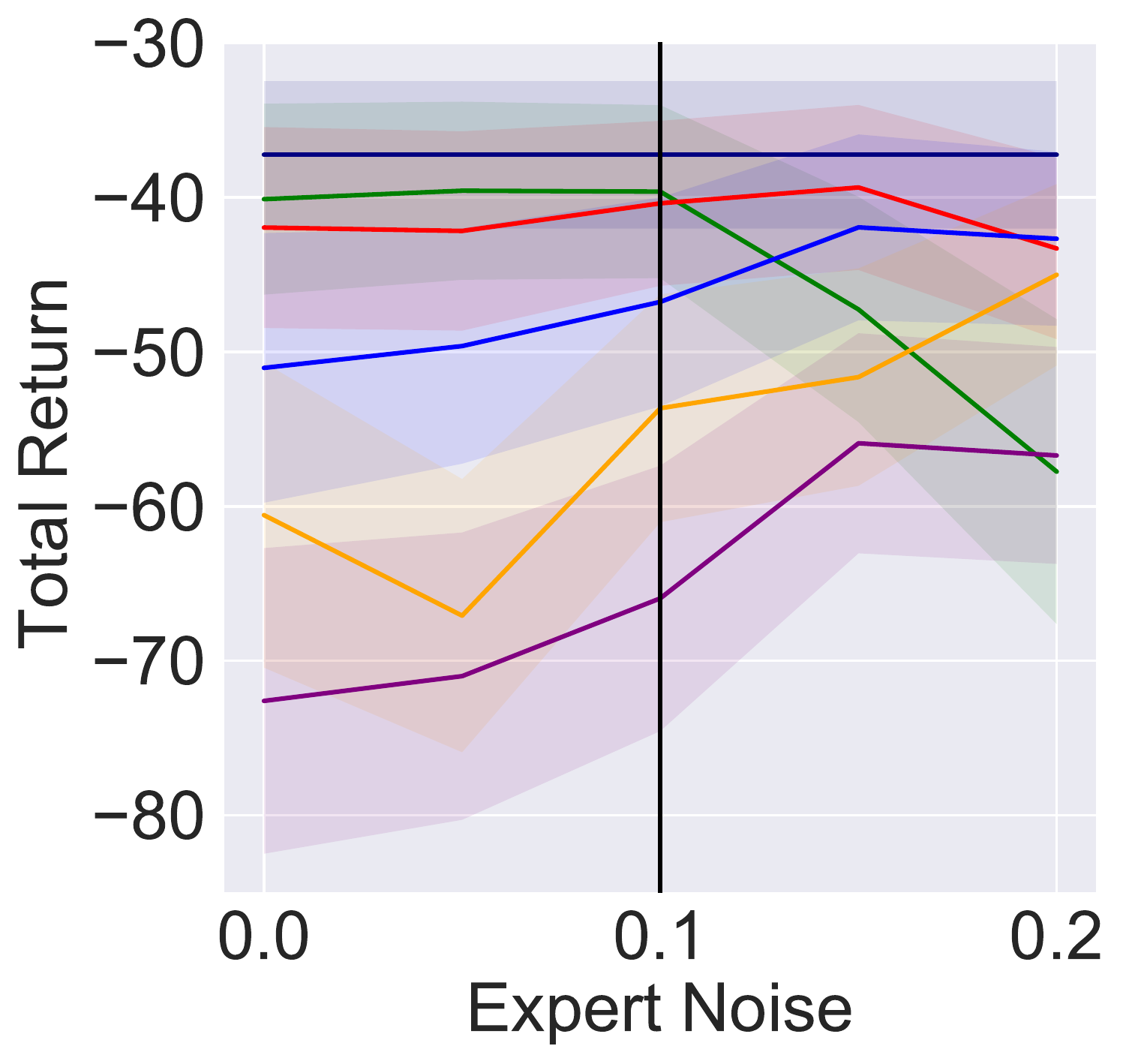}
\caption{$M^{L,\epsilon_L}$ with $\epsilon_L = 0.1$} \label{fig:e10l0.1abl}
\end{subfigure}
\medskip
\begin{subfigure}{0.2\textwidth}
\includegraphics[width=\linewidth]{plots/Obj10.pdf}
\caption{\textsc{ObjectWorld}} \label{fig:mainball_ow_reward}
\end{subfigure}\hspace*{\fill}
\begin{subfigure}{0.2\textwidth}
\includegraphics[width=\linewidth]{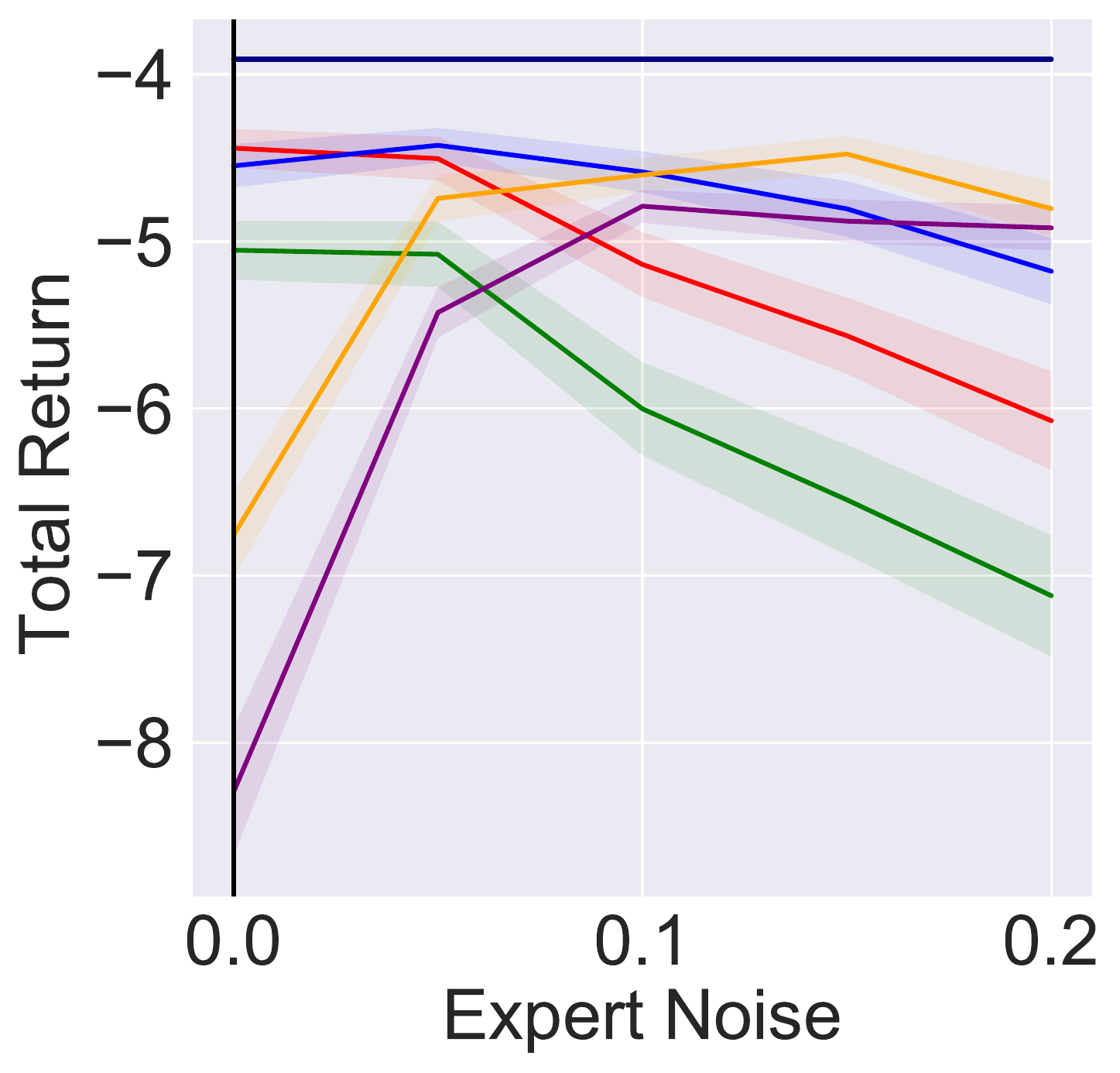}
\caption{$M^{L,\epsilon_L}$ with $\epsilon_L = 0$} \label{fig:mainowl0.0ball}
\end{subfigure}\hspace*{\fill}
\begin{subfigure}{0.2\textwidth}
\includegraphics[width=\linewidth]{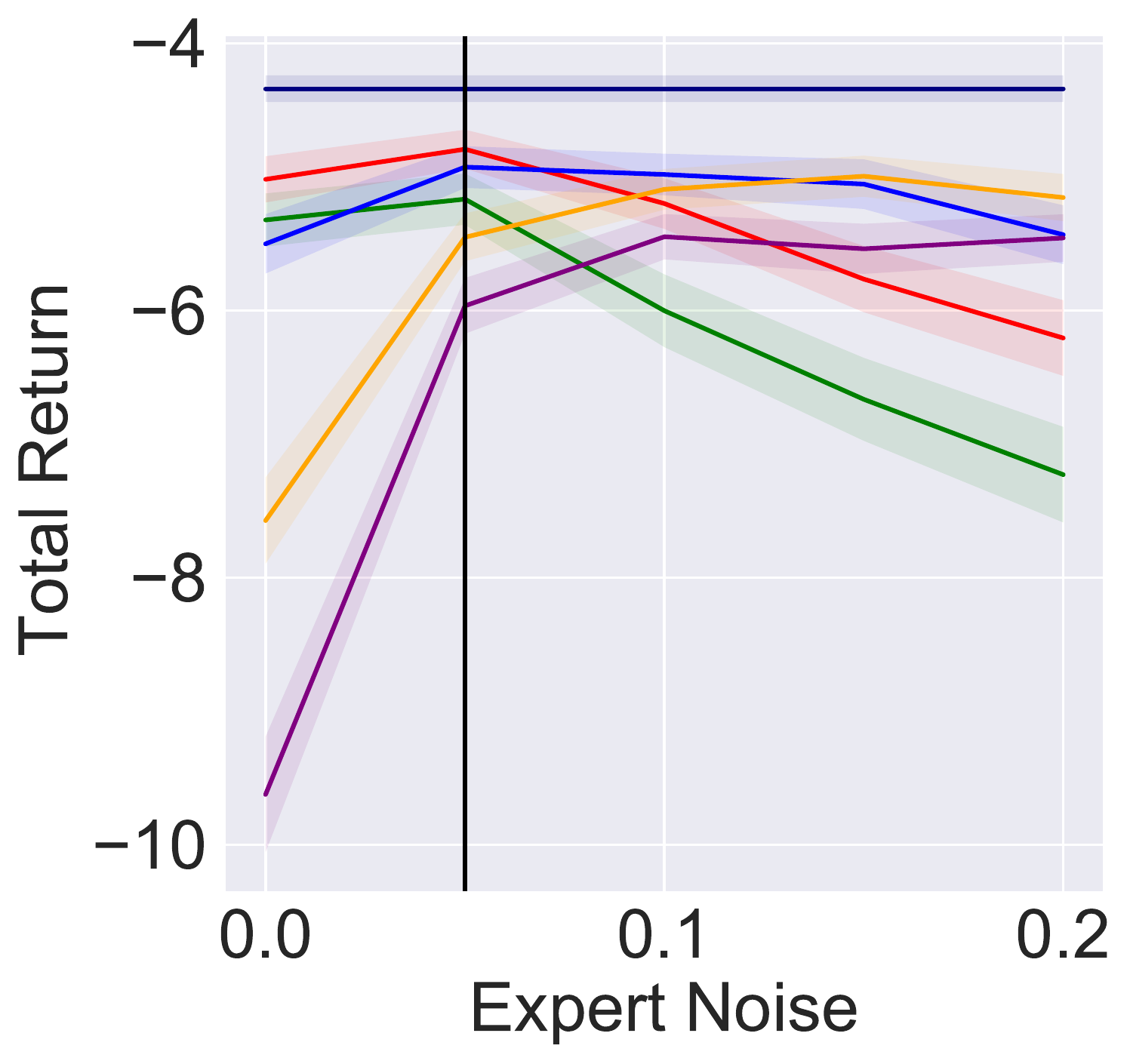}
\caption{$M^{L,\epsilon_L}$ with $\epsilon_L = 0.05$} \label{fig:mainowl0.05ball}
\end{subfigure}\hspace*{\fill}
\begin{subfigure}{0.2\textwidth}
\includegraphics[width=\linewidth]{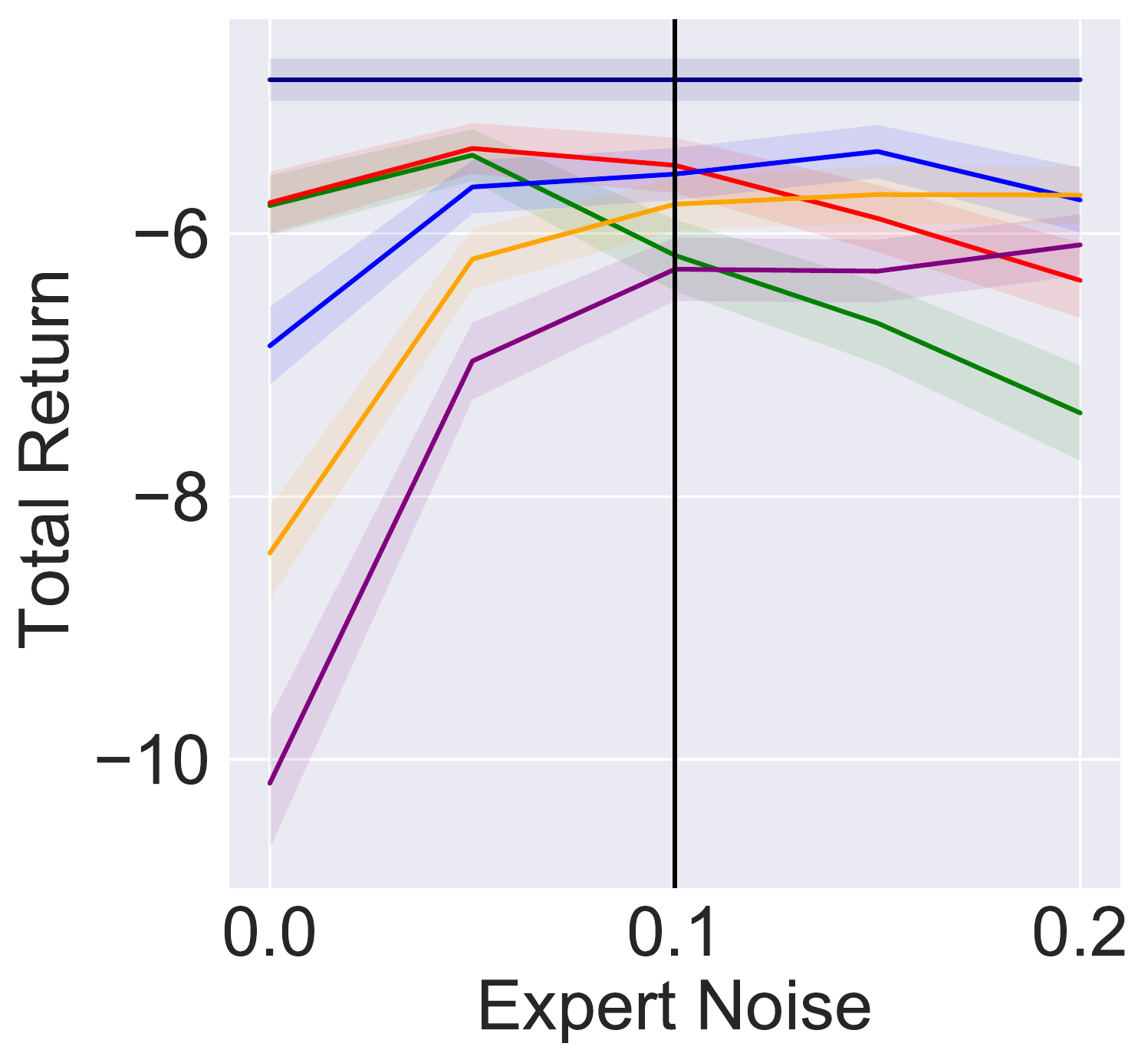}
\caption{$M^{L,\epsilon_L}$ with $\epsilon_L = 0.1$} \label{fig:mainowl0.1ball}
\end{subfigure}
\caption{Comparison of the performance our Algorithm~\ref{alg:MaxEntIRL} with different values of $\alpha$, under different levels of mismatch: $\br{\epsilon_E, \epsilon_L} \in \bc{0.0, 0.05, 0.1, 0.15, 0.2} \times \bc{ 0.0, 0.05, 0.1}$. Each plot corresponds to a fixed leaner environment $M^{L,\epsilon_L}$ with $\epsilon_L \in \bc{ 0.0, 0.05, 0.1}$. The values of $\alpha$ used for our Algorithm~\ref{alg:MaxEntIRL} are reported in the legend. The vertical line indicates the position of the learner environment in the x-axis.}
\label{fig:gridworld_diff_alpha}
\end{figure}

\clearpage
\section{Further Details of Section~\ref{sec:experiments_continuous_control_main}}
\label{sec:experiments_continuous_control}

\begin{algorithm}[h]
    \caption{Robust RE IRL via Markov Game}
    \label{alg:RobustREIRL}
    \begin{spacing}{0.8}
    \begin{algorithmic}
    \STATE \textbf{Input:} opponent strength $1-\alpha$, the expert's empirical feature occupancy measure $\bar{\phi}^E$
	\STATE \textbf{Initialize:} player policy parameters $\boldsymbol{w}^{\mathrm{pl}}$, opponent policy parameters $\boldsymbol{w}^{\mathrm{op}}$, reward parameters $\boldsymbol{\theta}$
    \STATE \textbf{Initialize:} uniform sampling policy $\pi$
        \WHILE{not converged}
            \STATE collect trajectories dataset $\mathcal{D}^{\pi}$ with the sampling policy $\pi$.
            \STATE estimate the features occupancy measure for each trajectory $\tau \in \mathcal{D}^{\pi}$ as $\bar{\phi}^{\tau} = \frac{1}{|\tau|}\sum_{s \in \tau} \phi(s)$.
            \FOR{$t = 1, \dots, N^\theta$}
            \STATE update the distribution over trajectories as:
            \begin{equation*}
                 P(\tau | \boldsymbol{\theta}) \propto \exp \br{{\ip{\boldsymbol{\theta}}{\bar{\phi}^{\tau}}}}
            \end{equation*}
            \STATE compute the gradient estimate for updating $\boldsymbol{\theta}$ as proposed in~\cite{boularias2011relative} (to tackle the unknown transition dynamics case):
            \begin{equation*}
     \nabla_{\boldsymbol{\theta}} g(\boldsymbol{\theta}) = \bar{\phi}^E - \sum_{\tau \in \mathcal{D}^{\pi}} P(\tau | \boldsymbol{\theta}) \cdot \bar{\phi}^{\tau}
            \end{equation*}
        \STATE update the reward parameter $\boldsymbol{\theta}$ with Adam~\cite{kingma2014adam} using the gradient estimate $\nabla_{\boldsymbol{\theta}} g(\boldsymbol{\theta})$.
        \ENDFOR
        
        \STATE use Algorithm~\ref{alg:TwoplayersPolicyGrad} with $R = R_{\boldsymbol{\theta}}$ to update $\pi^{\mathrm{pl}}$ and $\pi^{\mathrm{op}}$ s.t. they solve the following Markov Game approximately with policy gradient:
        \begin{equation*}
\max_{\pi^{\mathrm{pl}} \in \Pi} \min_{\pi^{\mathrm{op}} \in \Pi} \E{G \bigm| \pi^{\mathrm{pl}}, \pi^{\mathrm{op}}, M^{\mathrm{two},L,\alpha}}
        \end{equation*}
        \STATE update the sampling policy:
    \begin{equation*}
        \pi = \alpha \pi^{\mathrm{pl}} + (1-\alpha)\pi^{\mathrm{op}}
    \end{equation*}
        \ENDWHILE
        \STATE \textbf{Output:} player policy $\pi^{\mathrm{pl}}$
    \end{algorithmic}
    \end{spacing}
\end{algorithm}

\begin{algorithm}[h]
    \caption{Policy Gradient Method for Two-Player Markov Game}
    \label{alg:TwoplayersPolicyGrad}
    \begin{spacing}{0.8}
    \begin{algorithmic}
    \STATE \textbf{Input:} reward parameters $\boldsymbol{\theta}$
    \STATE \textbf{Initialize:} player policy parameters $\boldsymbol{w}^{\mathrm{pl}}$, opponent policy parameters $\boldsymbol{w}^{\mathrm{op}}$
            \FOR{$s = 1, \dots, N^\pi$}
            \STATE $\mathcal{D} = \bc{}$
            \FOR{$i = 1, \dots, N^{\mathrm{traj}}$}
        \STATE collect trajectory a with $a^{\mathrm{pl}}_t \sim \pi^{\mathrm{pl}}(\cdot|s_t)$, $a^{\mathrm{op}}_t \sim \pi^{\mathrm{op}}(\cdot|s_t)$, $s_{t+1} \sim T^{\mathrm{two}, L, \alpha}(\cdot |s_t, a^{\mathrm{pl}}_t, a^{\mathrm{op}}_t)$.
        \STATE store the trajectory $\tau^i := \bc{(s_t, a^{\mathrm{pl}}_t, a^{\mathrm{op}}_t)}_t$ in $\mathcal{D}$.
        \STATE compute the return-to-go at each step of the trajectory $\tau^i$ as $G_t^i = \sum^T_{k = t+1} \gamma^{k-t-1} R(s_k)$.
        \ENDFOR
        \STATE update the policy parameters (player and opponent) with the following gradient estimates:
        \begin{align*}	    
		\widehat{\nabla}_{\boldsymbol{w}^{\mathrm{pl}}} J(\boldsymbol{w}^{\mathrm{pl}}, \boldsymbol{w}^{\mathrm{op}})~=~& \frac{1}{\abs{\mathcal{D}}}  \sum_{\tau_i \in \mathcal{D}} \sum_{t}\gamma^t \nabla_{\boldsymbol{w}^{\mathrm{pl}}} \log \pi^\mathrm{pl}(a^{\mathrm{pl}}_t|s_t) G^i_t \\
		\widehat{\nabla}_{\boldsymbol{w}^{\mathrm{op}}} J(\boldsymbol{w}^{\mathrm{pl}}, \boldsymbol{w}^{\mathrm{op}})~=~& - \frac{1}{\abs{\mathcal{D}}} \sum_{\tau_i \in \mathcal{D}} \sum_{t}\gamma^t \nabla_{\boldsymbol{w}^{\mathrm{op}}} \log \pi^{\mathrm{op}}(a^{\mathrm{op}}_t|s_t) G^i_t
		\end{align*}
        \ENDFOR
        \STATE \textbf{Output:} player policy $\pi^{\mathrm{pl}} \gets \pi_{\boldsymbol{w}^{\mathrm{pl}}}$, opponent policy $\pi^{\mathrm{op}} \gets \pi_{\boldsymbol{w}^{\mathrm{op}}}$
    \end{algorithmic}
    \end{spacing}
\end{algorithm}

\paragraph{\textsc{GaussianGrid} Environment.}

We consider a 2D environment, where we denote the horizontal coordinate as $x \in [0,1]$ and vertical one as $y \in [0,1]$. The agent starts in the upper left corner, i.e., the coordinate $(0,1)$, and the episode ends when the agent reaches the lower right region defined by the indicator function $\mathbf{1}\{x\in [0.95, 1], y \in [-1, -0.95]\} $. The reward function is given by: $R(s) = R(x,y) = -(x-1)^2 -(y+1)^2 -80 \cdot e^{-8(x^2 + y^2)} + 10 \cdot \mathbf{1}\{x\in [0.95, 1], y \in [-1, -0.95]\}$. Note that the central region of the 2D environment represents a low reward area that should be avoided. The action space for the agent is given by $\mathcal{A} = [-0.5, 0.5]^2$, and the transition dynamics are given by:
\begin{equation*}
s_{t+1} = \begin{cases}
& s_t + \frac{a_t}{10} \quad \text{w.p.} \quad 1 - \epsilon \\
& s_t - \frac{s_t}{10 \norm{s_t}_2} \quad \text{w.p.} \quad \epsilon 
\end{cases}
\end{equation*}
Thus, with probability $\epsilon$, the environment does not respond to the action taken by the agent, but it takes a step towards the low reward area centered at the origin, i.e., $- \frac{s_t}{10 \norm{s_t}_2}$. The agent should therefore pass far enough from the origin. The parameter $\epsilon$ can be varied to create a dynamic mismatch, e.g., higher $\epsilon$ corresponds to a more difficult environment. We investigate the performance of our Robust RE IRL method with different choices of the parameter $\alpha$ under various mismatches given by pairs $(\epsilon_E, \epsilon_L)$. Let $\boldsymbol{\phi}(s) = \boldsymbol{\phi}(x,y) = \bs{x^2, y^2, x, y, e^{-8(x^2 + y^2)}, \mathbf{1}\bc{x\in [0.95, 1], y \in [-1, -0.95]}, 1}^T$. The parameterization for both the player and opponent policies are given by:
\begin{equation*}
    a_t^{\mathrm{pl}} \sim \mathcal{N}\br{(\boldsymbol{w}^{\mathrm{pl}})^T \phi(s_t), \Sigma^{\mathrm{pl}}}
\end{equation*}
\begin{equation*}
    a_t^{\mathrm{op}} \sim \mathcal{N}\br{(\boldsymbol{w}^{\mathrm{op}})^T \phi(s_t), \Sigma^{\mathrm{op}}}
\end{equation*}
The covariance matrices $\Sigma^{\mathrm{pl}}, \Sigma^{\mathrm{op}}$ are constrained to be diagonal, and the diagonal elements are included as part of the policy parameterization.

\begin{figure}[h!] 
\centering
\begin{subfigure}{0.24\textwidth}
\includegraphics[width=0.99\linewidth]{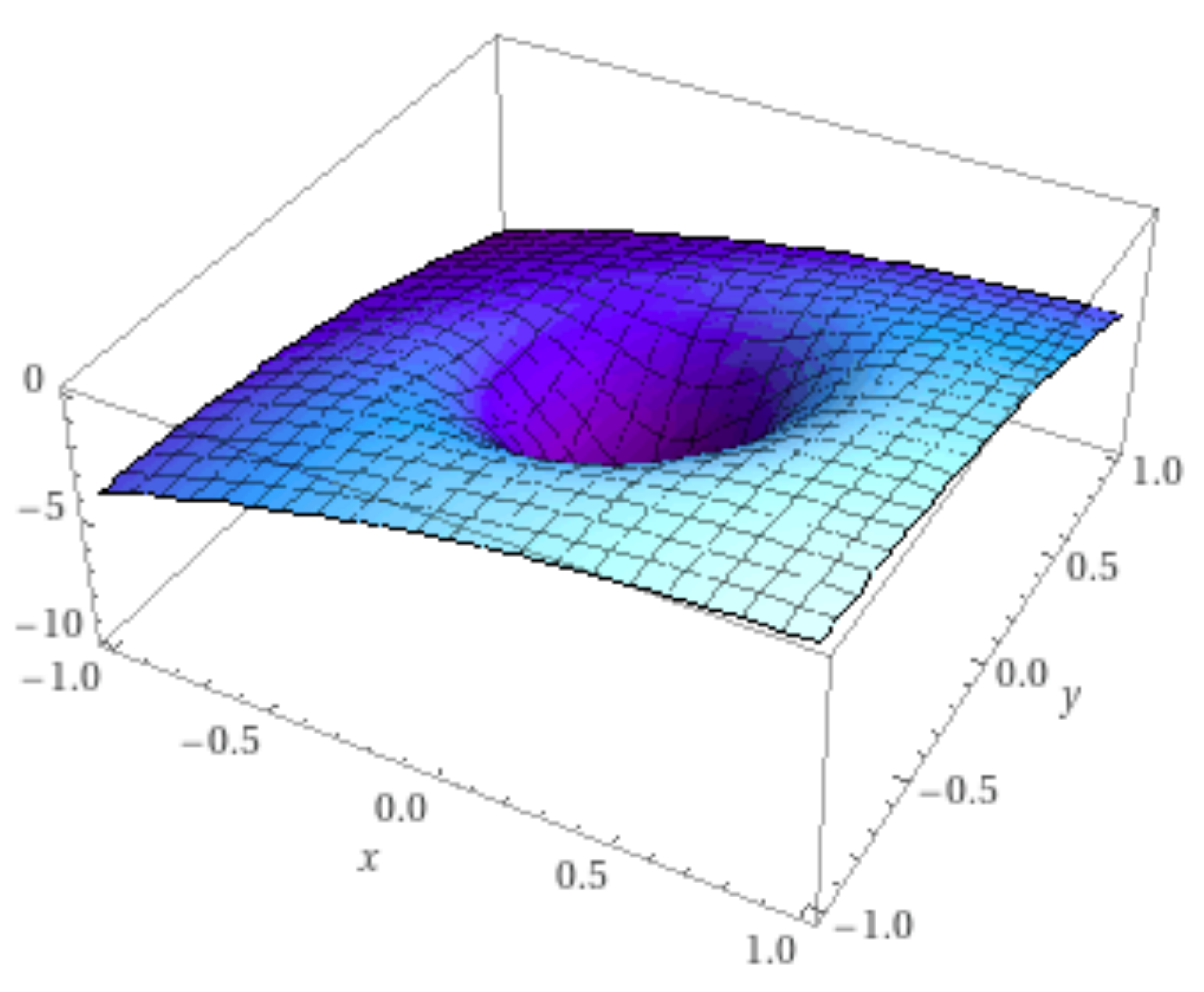}
\caption{\textsc{GaussianGrid}} \label{fig:gauss_3d}
\end{subfigure}
\begin{subfigure}{0.24\textwidth}
\includegraphics[width=\linewidth]{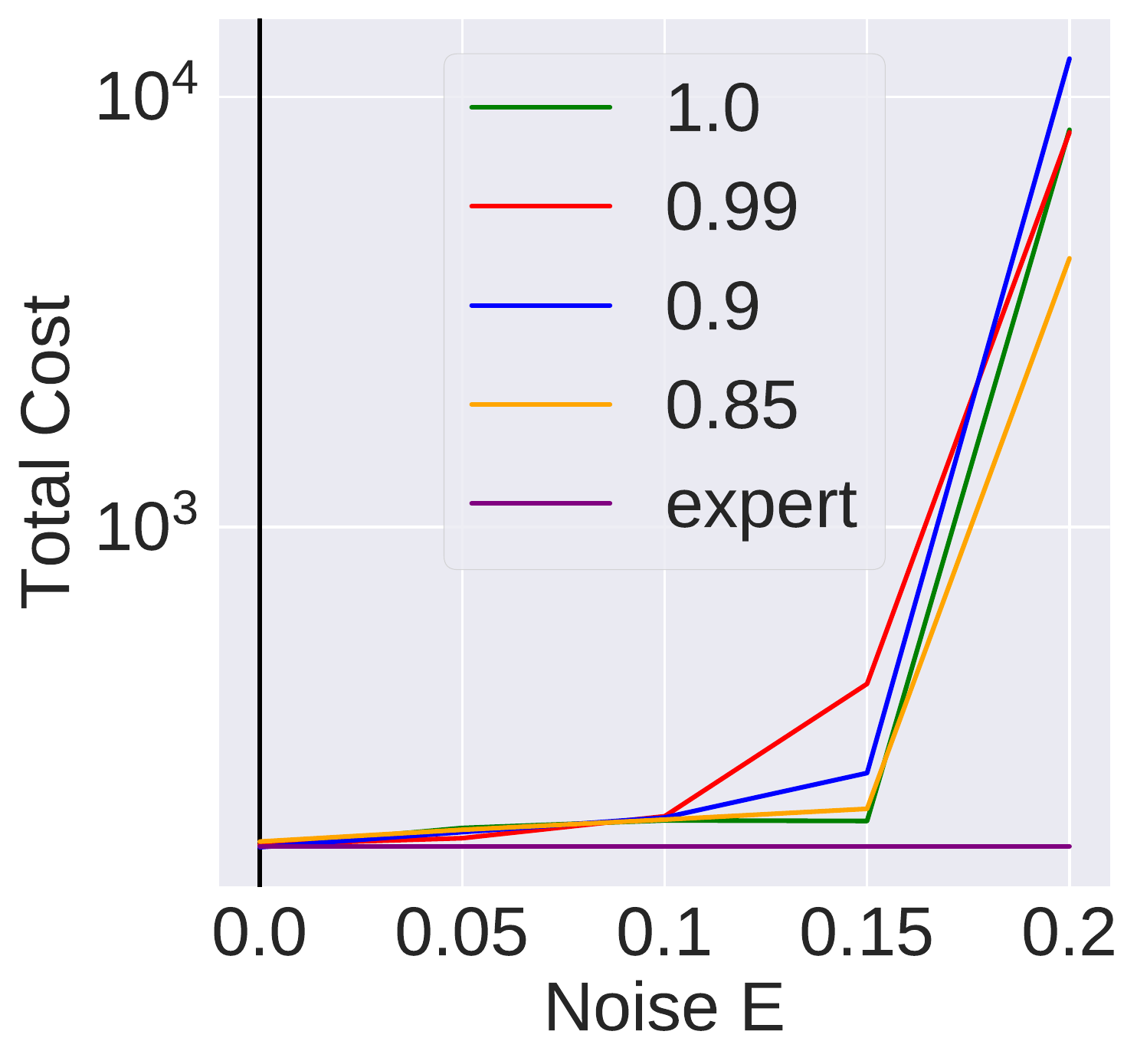}
\caption{$M^{L,\epsilon_L}$ with $\epsilon_L = 0$}
\label{fig:adversarial_gauss_gridl0.0}
\end{subfigure}\hspace*{\fill}
\begin{subfigure}{0.24\textwidth}
\includegraphics[width=\linewidth]{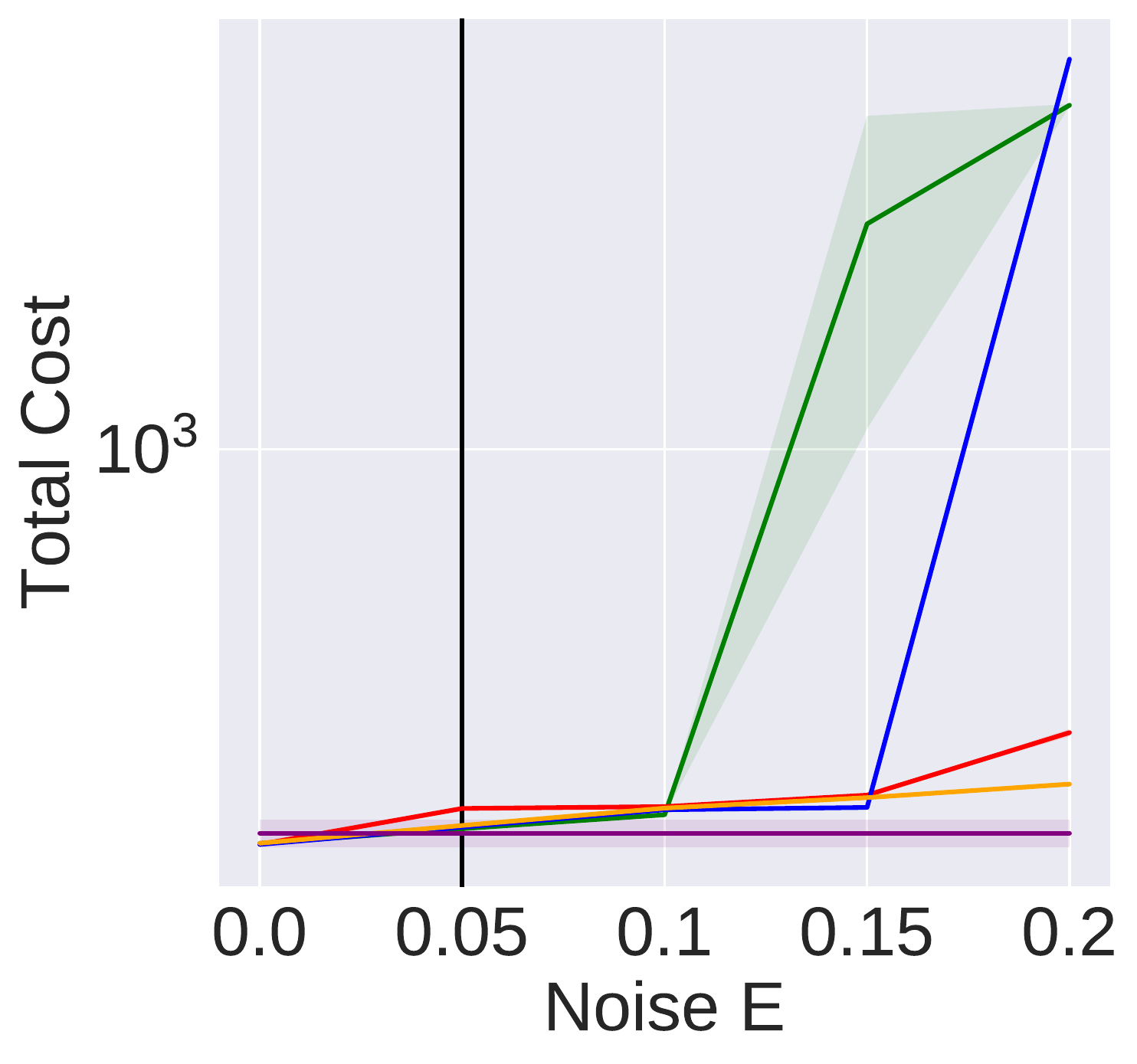}
\caption{$M^{L,\epsilon_L}$ with $\epsilon_L = 0.05$} \label{fig:adversarial_gauss_gridl0.05}
\end{subfigure}\hspace*{\fill}
\begin{subfigure}{0.24\textwidth}
\includegraphics[width=\linewidth]{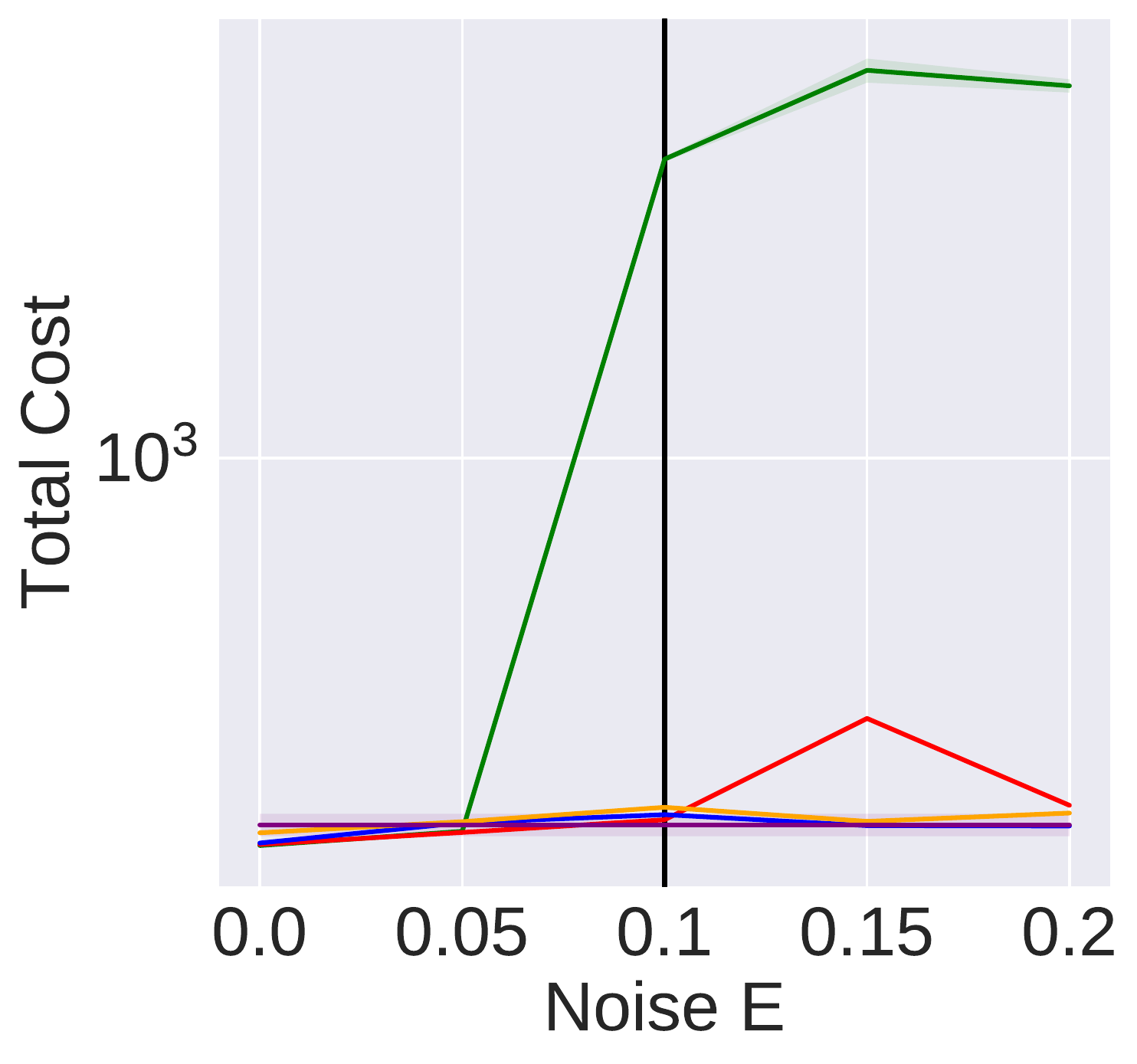}
\caption{$M^{L,\epsilon_L}$ with $\epsilon_L = 0.1$} \label{fig:adversarial_gauss_gridl0.1}
\end{subfigure}
\caption{Ablation of $\alpha$ in Algorithm~\ref{alg:RobustREIRL} under different levels of mismatch: $\br{\epsilon_E, \epsilon_L} \in \bc{0.0, 0.05, 0.1, 0.15, 0.2} \times \bc{ 0.0, 0.05, 0.1}$. Each plot corresponds to a fixed leaner environment $M^{L,\epsilon_L}$ with $\epsilon_L \in \bc{ 0.0, 0.05, 0.1}$. The values of $\alpha$ used in our Algorithm~\ref{alg:RobustREIRL} are reported in the legend. The vertical line indicates the position of the learner environment in the x-axis. The results are averaged across $5$ seeds.}
\label{fig:adversarial_gaussian-gridworld}
\end{figure}

\end{document}